\documentclass{article}






\PassOptionsToPackage{numbers, compress}{natbib}
\usepackage[final]{neurips_2020}

\usepackage[utf8]{inputenc} 
\usepackage[T1]{fontenc}    
\usepackage{hyperref}       
\usepackage{url}            
\usepackage{booktabs}       
\usepackage{amsfonts}       
\usepackage{nicefrac}       
\usepackage{microtype}      
\usepackage{amsmath}
\usepackage{amsthm} 
\usepackage{amssymb}
\usepackage{bbm}
\usepackage{thmtools, thm-restate}
\usepackage{graphicx}
\usepackage{subfigure}
\usepackage{wrapfig}
\usepackage{caption}
\usepackage{mathbbol} 
\usepackage{color}
\usepackage{float}
\usepackage[shortlabels]{enumitem}
\usepackage{tabularx}
\usepackage{etoolbox}
\usepackage[normalem]{ulem}
\usepackage{comment}
\usepackage{multirow}
\usepackage[dvipsnames]{xcolor}

\usepackage[normalem]{ulem}

\newtheorem{lemma}{Lemma}

\DeclareSymbolFontAlphabet{\mathbb}{AMSb}
\DeclareSymbolFontAlphabet{\mathbbl}{bbold}



\title{The Value Equivalence Principle \\ for Model-Based Reinforcement Learning}

%

\author{%
Christopher Grimm \\
Computer Science \& Engineering \\
University of Michigan \\ 
\texttt{crgrimm@umich.edu}
\And 
Andr\'{e} Barreto, Satinder Singh, David Silver \\
DeepMind \\ 
\texttt{\{andrebarreto,baveja,davidsilver\}@google.com}
}

\begin{document}

\newcommand{\cp}{\citep}
\newcommand{\ca}{\citeauthor}
\newcommand{\cy}{\citeyear}
\newcommand{\ct}{\citet}
\newcommand{\ctp}[1]{\ca{#1}'s~\cp{#1}}
\newcommand{\cwp}[1]{\ca{#1}, \cp{#1}}
\newcommand{\ctt}[1]{\ca{#1}~\cp{#1}} 

\newcommand{\R}{\ensuremath{\mathbb{R}}}
\newcommand{\N}{\ensuremath{\mathbb{N}}}
\newcommand{\E}{\ensuremath{\mathbb{E}}}
\newcommand{\bs}[1]{\boldsymbol{#1}}

\newcommand{\set}[1]{\ensuremath{\mathcal{#1}}}
\renewcommand{\S}{\set{S}}
\newcommand{\A}{\set{A}}
\newcommand{\V}{\set{V}}
\newcommand{\T}{\set{T}}
\newcommand{\Tpi}{\ensuremath{\T_{\pi}}}
\newcommand{\D}{\set{D}}
\newcommand{\M}{\set{M}}
\newcommand{\RR}{\set{R}}
\newcommand{\PP}{\set{P}}
\renewcommand{\H}{\set{H}}
\newcommand{\X}{\set{X}}
\newcommand{\G}{\set{G}}
\newcommand{\C}{\set{C}}
\newcommand{\AV}{\ensuremath{\tilde{\V}}}
\newcommand{\APi}{\ensuremath{\tilde{\Pi}}}
\newcommand{\PS}{\ensuremath{\mathbbl{\Pi}}}
\newcommand{\FS}{\ensuremath{\mathbbl{{V}}}}
\newcommand{\MS}{\ensuremath{\mathbbl{{M}}}}

\newcommand{\ope}[1]{\ensuremath{\mathrm{#1}}}
\newcommand{\argmin}{\ope{argmin}}
\newcommand{\argmax}{\ope{argmax}}
\renewcommand{\min}{\ope{min}}
\renewcommand{\span}{\ope{span}}
\newcommand{\rank}{\ope{rank}}
\newcommand{\hdim}{\mathcal{H}\text{-}\ope{dim}} 
\newcommand{\cspan}{\ensuremath{cp}\text{-}\ope{span}}
\newcommand{\pspan}{\ensuremath{p}\text{-}\ope{span}}
\newcommand{\cmax}{\ope{cmax}}
\newcommand{\vpath}{\ope{vpath}}
\newcommand{\ind}{\ensuremath{\mathbbm{1}}}

\newcommand{\mat}[1]{\ensuremath{\boldsymbol{{#1}}}}
\newcommand{\vphi}{\mat{\phi}}
\newcommand{\w}{\mat{w}}
\renewcommand{\r}{\mat{r}}
\renewcommand{\P}{\mat{P}}
\newcommand{\ppi}{\mat{\pi}}
\renewcommand{\v}{\mat{v}}
\newcommand{\vpi}{\ensuremath{\v_{\pi}}}
\newcommand{\VV}{\mat{V}}
\renewcommand{\b}{\mat{b}}
\newcommand{\DD}{\mat{D}}
\newcommand{\KK}{\mat{K}}
\renewcommand{\AA}{\mat{A}}
\newcommand{\vtheta}{\mat{\theta}}
\newcommand{\I}{\mat{I}}
\newcommand{\Q}{\mat{Q}}
\newcommand{\Lm}{\mat{\Lambda}}
\newcommand{\F}{\mat{F}}

\newcommand{\defi}{\ensuremath{\equiv}}

\newcommand{\apx}[1]{\ensuremath{\tilde{#1}}}
\newcommand{\pt}{\apx{p}}
\newcommand{\rt}{\apx{r}}
\newcommand{\ft}{\apx{f}}
\newcommand{\mt}{\apx{m}}
\newcommand{\vt}{\apx{v}}
\newcommand{\Tt}{\apx{\T}}
\newcommand{\Tpit}{\ensuremath{\apx{\T}_{\pi}}}
\newcommand{\Pt}{\apx{\P}}
\newcommand{\rrt}{\apx{\r}}
\newcommand{\vvt}{\apx{\mat{v}}}
\newcommand{\Ph}{\ensuremath{\hat{\P}}}
\newcommand{\vh}{\ensuremath{\hat{\v}}}
\newcommand{\AAt}{\apx{\AA}}
\newcommand{\bt}{\apx{\b}}
\newcommand{\Qt}{\apx{\Q}}
\newcommand{\Lmt}{\apx{\Lm}}
\newcommand{\pit}{\apx{\pi}}

\renewcommand{\t}{\ensuremath{\top}}

\newcommand{\citeproxy}[1]{\textbf{[?]}}

\newcommand{\loss}{\ensuremath{\ell}}

\newcommand{\dave}[1]{{\color{BrickRed}[DS: #1]}}
\newcommand{\chris}[1]{{\color{ForestGreen}[CG: #1]}}

\newcommand{\idx}{\ensuremath{\delta}}

\newtheorem{corollary}{Corollary}
\newtheorem{definition}{Definition}
\newtheorem{proposition}{Proposition}
\newtheorem{remark}{Remark}
\newtheorem{property}{Property}

\maketitle

\begin{abstract}
Learning models of the environment from data is often viewed as an essential component to building intelligent reinforcement learning (RL) agents. The common practice is to separate the learning of the model from its use, by constructing a model of the environment’s dynamics that correctly predicts the observed state transitions. In this paper we argue that the limited representational resources of model-based RL agents are better used to build models that are directly useful for value-based planning. As our main contribution, we introduce the principle of value equivalence: two models are value equivalent with respect to a set of functions and policies if they yield the same Bellman updates. We propose a formulation of the model learning problem based on the value equivalence principle and analyze how the set of feasible solutions is impacted by the choice of policies and functions. Specifically, we show that, as we augment the set of policies and functions considered, the class of value equivalent models shrinks, until eventually collapsing to a single point corresponding to a model that perfectly describes the environment.  
In many problems, directly modelling state-to-state transitions may be both difficult and unnecessary. By leveraging the value-equivalence principle one may find simpler models without compromising performance, saving computation and memory.
We illustrate the benefits of value-equivalent model learning with experiments comparing it against more traditional counterparts like maximum likelihood estimation. More generally, we argue that the principle of value equivalence underlies a number of recent empirical successes in RL, such as Value Iteration Networks, the Predictron, Value Prediction Networks, TreeQN, and MuZero, and provides a first theoretical underpinning of those results.
\end{abstract}

\section{Introduction}
\label{sec:introduction}

Reinforcement learning (RL) provides a conceptual framework to tackle a fundamental challenge in artificial intelligence: how to design agents that learn while interacting with the environment~\cp{sutton2018reinforcement}. It has been argued that truly general agents should be able to learn a model of the environment that allows for fast re-planning and counterfactual reasoning~\cp{russel2003artificial}. Although this is not a particularly contentious statement, the question of \emph{how} to learn such a model is far from being resolved. 
The common practice in model-based RL is to conceptually separate the learning of the model from its use.
In this paper we argue that the limited representational capacity of model-based RL agents is better allocated if the future \emph{use} of the model
({\sl e.g.}, value-based planning)
is also taken into account during its construction~\cp{joseph2013reinforcement,farahmand2017value,farahmand2018iterative}.

Our primary contribution is to formalize and analyze a clear principle that underlies this new approach to model-based RL.
Specifically, we show that, when the model is to be used for value-based planning, requirements on the model
can be naturally captured by an equivalence relation 
induced by a set of policies and functions. This leads to the \emph{principle of value equivalence}: two models are value equivalent with respect to a set of
functions and a set of policies if they yield the same updates under corresponding Bellman operators.
The policies and functions then become the mechanism through which one incorporates information about the intended use of the model during its construction. We propose a formulation of the model learning problem based on the value equivalence principle and analyze how the set of feasible solutions is impacted by the choice of policies and functions. Specifically, we show that, as we augment the set of policies and functions considered, the class of value equivalent models shrinks, until eventually collapsing to a single point corresponding to a model that perfectly describes the environment. 

We also discuss cases in which one can meaningfully restrict the class of policies and functions used to tailor the model. 
One common case is when the construction of an optimal policy through value-based planning only requires that a model predicts a subset of value functions. We show that in this case the resulting value equivalent models can perform well under much more restrictive conditions than their traditional counterparts.
Another common case is when the agent has limited representational capacity. We show that in this scenario it suffices for a model to be value equivalent with respect to appropriately-defined bases of the spaces of representable policies and functions. 
This allows models to be found with less memory or computation than conventional model-based approaches that aim at predicting all state transitions, such as maximum likelihood estimation. We illustrate the benefits of value-equivalent model learning in experiments that compare it against more conventional counterparts. More generally, we argue that the principle of value equivalence underlies a number of recent empirical successes in RL and provides a first theoretical underpinning of those results~\cp{tamar2016value,silver2017predictron,oh2017value,farquhar2018treeqn,schrittwieser2019mastering}.

\section{Background} 
\label{sec:background} 

As usual, we will model the agent's interaction with the environment using a \emph{Markov Decision Process} (MDP) $\mathcal{M} \defi \langle \S, \A, r, p, \gamma \rangle$ where $\S$ is the state space, $\A$ is the action space, $r(s,a,s')$ is the reward associated with a transition to state $s'$ following the execution of action $a$ in state $s$, $p(s' | s, a)$ is the transition kernel and $\gamma \in [0,1)$ is a discount factor~\cp{puterman94markov}. For convenience we also define $r(s,a) = \E_{S' \sim p(\cdot|s,a)}[r(s,a,S')]$.
 
 A \emph{policy} is a mapping $\pi: \S \mapsto \PP(\A)$, where $\PP(\A)$ is the space of probability distributions over \A. We define $\PS \defi \{\pi \,|\,  \pi: \S \mapsto \PP(\A)\}$ as the set of all possible policies. The agent's goal is to find a policy $\pi \in \PS$ that maximizes the \emph{value} of every state, defined as
\begin{equation} 
\label{eq:v}
v_{\pi}(s) \defi \E_{\pi} \left[ \sum_{i=0}^{\infty} \gamma^{i} r(S_{t+i}, A_{t+i}) \,|\, S_{t} = s \right],
\end{equation}
where $S_t$ and $A_t$ are random variables indicating the state occupied and the action selected by the agent at time step $t$ and $\E_{\pi}[\cdot]$ denotes expectation over the trajectories induced by $\pi$. 

Many methods are available to carry out the search for a good policy~\cp{sutton2018reinforcement,szepesvari2010algorithms}. Typically, a crucial step in these methods is the computation of the value function of candidate policies---a process usually referred to as \emph{policy evaluation}. One way to evaluate a policy $\pi$ is through its \emph{Bellman operator}:
\begin{equation}
\label{eq:bo}
\Tpi[v](s) \defi \E_{A \sim \pi(\cdot | s), S' \sim  p(\cdot | s, A)} \left[r(s,A)  + \gamma v(S')\right],
\end{equation}
where $v$ is any function in the space $\FS \defi \{f \,|\, f: \S \mapsto \R\}$. It is known that $\lim_{n \rightarrow \infty} (\Tpi)^n v = v_{\pi}$, that is, starting from any $v \in \FS$, the repeated application of \Tpi\ will eventually converge to $v_{\pi}$~\cp{puterman94markov}.

In RL it is generally assumed that the agent does not know $p$ and $r$, and thus cannot directly 
compute~(\ref{eq:bo}). In \emph{model-free} RL this is resolved by replacing $v_{\pi}$ with an action-value function and estimating the expectation on the right-hand-side of~(\ref{eq:bo}) through sampling~\cp{sutton88learning}. In \emph{model-based RL}, the focus of this paper, the agent learns approximations $\rt \approx r$ and $\pt \approx p$ and use them to compute~(\ref{eq:bo}) with $p$ and $r$ replaced by \pt\ and \rt~\cp{sutton2018reinforcement}.



\section{Value equivalence}   

Given a state space \S\ and an action space \A, we call the tuple $m \defi (r,p)$ a \emph{model}. Note that a model plus a discount factor $\gamma$ induces a Bellman operator~(\ref{eq:bo}) for every policy $\pi \in \PS$. In this paper we are interested in computing an approximate model $\mt = (\rt, \pt)$ such that the induced operators $\Tpit$, defined analogously to~(\ref{eq:bo}), are good approximations of the true $\Tpi$. 
Our main argument is that models should only be distinguished with respect to the policies and functions they will actually be applied to. This leads to the following definition:

\begin{definition}[Value equivalence]
\label{def:ve}
 Let $\Pi \subseteq \PS$ be a set of policies and let $\V \subseteq \FS$ be a set of functions. We say that models $m$ and $\mt$ are \emph{value equivalent} with respect to $\Pi$ and $\V$ if and only if
 \begin{equation*}
\Tpi v = \Tpit v \text{ for all } \pi \in \Pi \text{ and all } v \in \V,
\end{equation*}
where \Tpi\ and \Tpit\ are the Bellman operators induced by $m$ and \mt, respectively.
\end{definition}

Two models are value equivalent with respect to $\Pi$ and \V\ if the effect of the Bellman operator induced by any policy $\pi \in \Pi$ on any function $v \in \V$ is the same for both models. Thus, if we are only interested in $\Pi$ and \V, value-equivalent models are functionally identical. This can be seen as an equivalence relation that partitions the space of models conditioned on  $\Pi$ and \V:

\begin{definition}[Space of value-equivalent models]
\label{def:vem}
Let $\Pi$ and \V\ be defined as above and let \M\ be a space of models. Given a model $m$, the \emph{space of value-equivalent models} $\M_m(\Pi, \V) \subseteq \M$ is the set of all models $\mt \in \M$ that are value equivalent to $m$ with respect to $\Pi$ and \V.
\end{definition}

Let \MS\ be a space of models containing at least one model $m^*$ which perfectly describes the interaction of the agent with the environment. More formally, $m^*$ induces the true Bellman operators \Tpi\ defined in~(\ref{eq:bo}).
Given a space of models $\M \subseteq \MS$, often one is interested in models $m \in \M$ that are value equivalent to $m^*$. We will thus simplify the notation by defining $\M(\Pi, \V) \defi \M_{m^*}(\Pi, \V)$.

\subsection{The topology of the space of value-equivalent models}

The space $\M(\Pi,\V)$ contains all the models in \M\ that are value equivalent to the true model $m^*$ with respect to $\Pi$ and \V. Since any two models $m, m' \in \M(\Pi,\V)$ are equally suitable for value-based planning using $\Pi$ and $\V$, we are free to use other criteria to choose between them. For example, if $m$ is much simpler to represent or learn than $m'$, it can be preferred without compromises. 

Clearly, the principle of value equivalence can be useful if leveraged in the appropriate way. In order for that to happen, it is important to understand the space of value-equivalent models $\M(\Pi,\V)$. We now provide intuition for this space by analyzing some of its core properties. We refer the reader to Figure~\ref{fig:ve_pert_diagram} for an illustration of the concepts to be discussed in this section. We start with a property that follows directly from Definitions~\ref{def:ve} and~\ref{def:vem}:

\begin{wrapfigure}{r}{0.45\textwidth}
\begin{center}
\includegraphics[scale=0.5]{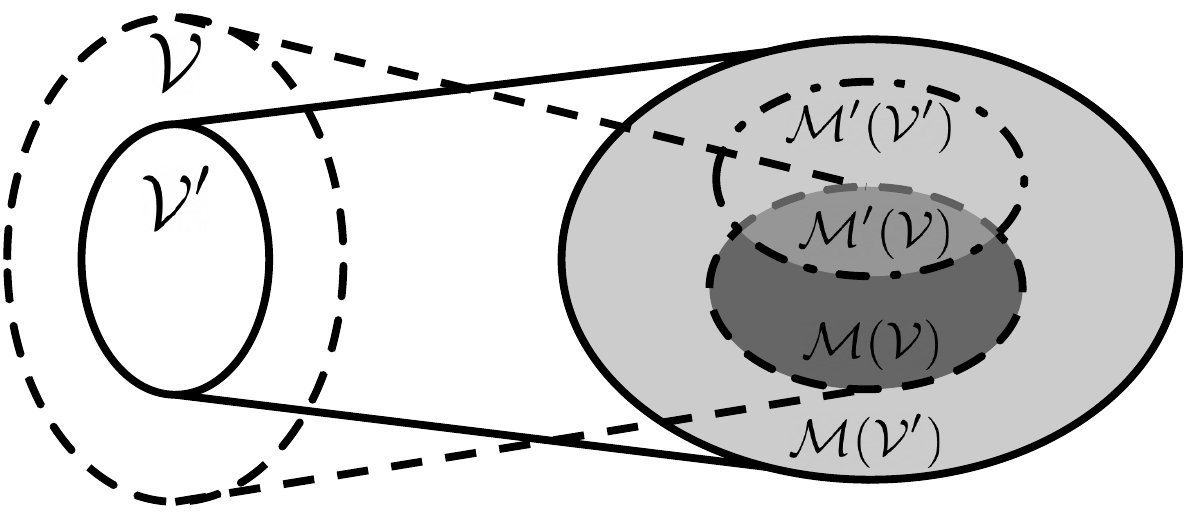}
\end{center}
\caption{Understanding the space of value-equivalent models for a fixed $\Pi$, $\M' \subset \M$ and $\V' \subset \V$. Denote $\M(\V) \defi \M(\V, \Pi)$. 
\textbf{Property~\ref{thm:pertm}}: $\M'(\V) \subset \M(\V)$ and $\M'(\V') \subset \M(\V')$. \textbf{Property~\ref{thm:pertpiv}}: $\M(\V) \subset \M(\V')$ and $\M'(\V) \subset \M'(\V')$. \textbf{Property~\ref{thm:cover}}: if $m^* \in \M$, then $m^* \in \M(\V)$.}
\label{fig:ve_pert_diagram}
\vspace{-10mm}
\end{wrapfigure}

\begin{restatable}{property}{proppertm}
\label{thm:pertm}
Given $\M' \subseteq \M$, we have that $\M'(\Pi, \V) \subseteq \M(\Pi, \V)$.
\end{restatable}


The proofs of all theoretical results are in Appendix~\ref{sec:proofs}. Property~\ref{thm:pertm} states that, given a set of policies $\Pi$ and a set of functions \V, reducing the size of the space of models \M\ also reduces the space of value-equivalent models $\M(\Pi, \V)$. One immediate consequence of this property is that, if we consider the space of all policies \PS\ and the space of all functions \FS, we have one of two possibilities: either we end up with a perfect model or we end up with no model at all. Or, more formally:

\begin{restatable}{property}{proppindown}
\label{thm:pindown}
$\M(\PS, \FS)$ either contains $m^*$ or is the empty set.
\end{restatable}

Property~\ref{thm:pertm} describes what happens to $\M(\Pi,\V)$ when we vary \M\ with fixed $\Pi$ and \V. It is also interesting to ask what happens when we fix the former and vary the latter. This leads to the next property:

\begin{restatable}{property}{proppertpiv}
\label{thm:pertpiv} 
Given $\Pi' \subseteq \Pi$ and $\V' \subseteq \V$, we have that $\M(\Pi, \V) \subseteq \M(\Pi', \V')$.
\end{restatable}

According to Property~\ref{thm:pertpiv}, as we increase the size of $\Pi$ or $\V$ the size of $\M(\Pi, \V)$ \emph{decreases}. Although this makes intuitive sense, it is reassuring to know that value equivalence is a sound principle for model selection, since by adding more policies to $\Pi$ or more values to \V\ we can progressively restrict the set of feasible solutions. Thus, if \M\ contains the true model, we eventually pin it down. Indeed, in this case the true model belongs to \emph{all} spaces of value equivalent models, as formalized below: 

\begin{restatable}{property}{propcover}
\label{thm:cover}
If $m^* \in \M$, then $m^* \in \M(\Pi, \V)$ for all $\Pi$ and all \V.
\end{restatable}
   
\subsection{A basis for the space of value-equivalent models} 
\label{eq:ve_basis}

As discussed, it is possible to use the sets $\Pi$ and \V\ to control the size of $\M(\Pi, \V)$. But what exactly is the effect of $\Pi$ and \V\ on $\M(\Pi, \V)$? How much does $\M(\Pi, \V)$ decrease in size when we, say, add one function to \V? In this section we address this and similar questions.

We start by showing that, whenever a model is value equivalent to $m^*$ with respect to discrete $\Pi$ and \V, it is automatically value equivalent to $m^*$ with respect to much larger sets. In order to state this fact more concretely we will need two definitions. Given a discrete set \H, we define $\span(\H)$ as the set formed by all linear combinations of the elements in \H. Similarly, given a discrete set \H\ in which each element is a function defined over a domain \X, we define the \emph{pointwise span} of \H\ as
\begin{equation}
\label{eq:cspan}
\pspan(\H) \defi \left\{ h: h(x) = \sum_i \alpha_{xi} h_i(x) \right\}, \text{ with } \alpha_{xi} \in \mathbb{R} \text{ for all } x \in \X, i \in \{1, \ldots, |\H|\}
\end{equation}
where $h_i \in \H$. 
Pointwise span can alternatively be characterized by considering each element in the domain separately: $g \in \pspan(\H) \iff g(x) \in \span \{ h(x) : h \in \H \}$ for all $x \in \X$.
Equipped with these concepts we present the following result:

\begin{restatable}{proposition}{proplinearspan}
\label{thm:linear_span}
For discrete $\Pi$ and $\V$, we have that $\M(\Pi, \V) = \M(\pspan(\Pi) \cap \PS, \span(\V))$.
\end{restatable}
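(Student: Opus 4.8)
The plan is to split the asserted equality into its two inclusions. The inclusion $\M(\pspan(\Pi)\cap\PS,\span(\V)) \subseteq \M(\Pi,\V)$ is immediate: since every $\pi\in\Pi$ lies in $\PS$ and is the trivial pointwise linear combination of itself, we have $\Pi\subseteq\pspan(\Pi)\cap\PS$, and likewise $\V\subseteq\span(\V)$, so Property~\ref{thm:pertpiv} yields this inclusion at once. All the work is in the reverse inclusion: given $\mt\in\M(\Pi,\V)$, i.e.\ $\Tpi v=\Tpit v$ for every $\pi\in\Pi$ and $v\in\V$, I must show $\T_{\pi'} v' = \Tt_{\pi'} v'$ for every $\pi'\in\pspan(\Pi)\cap\PS$ and every $v'\in\span(\V)$.

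I would establish this by exploiting the (bi-)linearity of the Bellman operator $\Tpi[v](s)=\sum_a \pi(a\,|\,s)\,\bigl[r(s,a)+\gamma\,\E_{S'\sim p(\cdot|s,a)}[v(S')]\bigr]$ in its two arguments, enlarging one set at a time. Step~1, enlarging the function set: fix $\pi\in\Pi$ and $v'=\sum_j\beta_j v_j$ with each $v_j\in\V$; writing $\Tpi[v]=\Tpi[0]+\gamma P_\pi v$ with $P_\pi$ the $\pi$-averaged transition operator, substitute this decomposition, collect terms, and cancel using $\Tpi v_j=\Tpit v_j$ to obtain $\Tpi v'=\Tpit v'$; hence $\mt\in\M(\Pi,\span(\V))$. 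Step~2, enlarging the policy set: fix $v'\in\span(\V)$ and $\pi'\in\pspan(\Pi)\cap\PS$, so $\pi'(\cdot\,|\,s)=\sum_i\alpha_{si}\,\pi_i(\cdot\,|\,s)$ with each $\pi_i\in\Pi$; since $\Tpi[v'](s)$ is linear in the row $\pi(\cdot\,|\,s)$, it follows that $\T_{\pi'}[v'](s)=\sum_i\alpha_{si}\,\T_{\pi_i}[v'](s)$ and, identically, $\Tt_{\pi'}[v'](s)=\sum_i\alpha_{si}\,\Tt_{\pi_i}[v'](s)$; applying Step~1 (which gives $\T_{\pi_i}v'=\Tt_{\pi_i}v'$ for $v'\in\span(\V)$) yields $\T_{\pi'}v'=\Tt_{\pi'}v'$, so $\mt\in\M(\pspan(\Pi)\cap\PS,\span(\V))$.

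The step I expect to require the most care is Step~1, since $\Tpi$ is affine rather than linear in $v$: the reward term $\Tpi[0]$ does not rescale with the coefficients $\beta_j$, so a direct expansion of $\Tpi v'-\Tpit v'$ leaves a residual proportional to $\bigl(1-\sum_j\beta_j\bigr)\bigl(\Tpi[0]-\Tpit[0]\bigr)$. Disposing of this residual is the crux: it vanishes automatically for affine combinations ($\sum_j\beta_j=1$), and more generally one must argue that value equivalence with respect to $\V$ already forces the $\pi$-expected rewards to coincide, $\Tpi[0]=\Tpit[0]$, after which the cancellation goes through for arbitrary linear combinations. The policy enlargement of Step~2, by contrast, is benign: there the coefficients $\alpha_{si}$ multiply the entire bracketed Bellman term, so no reward residual appears, and the normalization $\sum_i\alpha_{si}=1$ required for $\pi'$ to be a valid policy is supplied for free by the intersection with $\PS$.
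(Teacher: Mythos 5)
Your decomposition is exactly the paper's: the easy inclusion $\M(\pspan(\Pi)\cap\PS,\span(\V))\subseteq\M(\Pi,\V)$ via Property~\ref{thm:pertpiv}, and the reverse inclusion via two linearity computations, one in the policy argument and one in the function argument. Your Step~2 matches the paper's first displayed computation in substance (and, as you note, the normalization $\sum_i\alpha_{si}=1$ is not even needed there, since the coefficients multiply the entire bracket). Your Step~1 is where the real content lies, and you have correctly put your finger on the crux: $\Tpi$ is affine, not linear, in $v$, so expanding $\Tpi v'-\Tpit v'$ for $v'=\sum_j\beta_j v_j$ leaves the residual $\bigl(1-\sum_j\beta_j\bigr)\bigl(\Tpi 0-\Tpit 0\bigr)$. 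The paper's own proof silently steps over this: its second chain of equalities, $\E\bigl[\rt(s,A)+\gamma\sum_i\beta_i v_i(S')\bigr]=\sum_i\beta_i\,\E\bigl[\rt(s,A)+\gamma v_i(S')\bigr]$, is valid only when $\sum_i\beta_i=1$, i.e., for the affine span rather than the linear span.

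However, your proposed way of killing the residual --- that value equivalence with respect to $\V$ ``already forces'' $\Tpi 0=\Tpit 0$ --- is asserted rather than proved, and it is false in general. Take two states and one action, $r\equiv 0$, $p(s_1\,|\,\cdot)=1$, and the model $\rt\equiv\gamma$, $\pt(s_2\,|\,\cdot)=1$, with $\V=\{v_1\}$, $v_1=(1,0)^{\t}$: then $\Tpi v_1=\Tpit v_1=\gamma\mathbf{1}$, yet $\Tpi(2v_1)\ne\Tpit(2v_1)$ and $\Tpi 0=0\ne\gamma=\Tpit 0$. So the residual genuinely survives, and the equality as literally stated requires an extra hypothesis --- e.g., that the reward is shared or known ($\rt=r$, as the paper assumes in the analysis immediately following this proposition), that $0\in\V$, or that $\span$ and $\pspan$ are read as affine spans. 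Your write-up is more careful than the paper's proof about where the difficulty sits, but neither argument closes it; to be complete you should add one of these hypotheses (or restrict to affine combinations) rather than claim $\Tpi 0=\Tpit 0$ follows from value equivalence on $\V$ alone.
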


Proposition~\ref{thm:linear_span} provides one possible answer to the question posed at the beginning of this section: the  contraction of $\M(\Pi, \V)$ resulting from the addition of one policy to $\Pi$ or one function to \V\ depends on their effect on $\pspan(\Pi)$ and $\span(\V)$. For instance, if a function $v$ can be obtained as a linear combination of the functions in \V, adding it to this set will have no effect on the space of equivalent models $\M(\Pi, \V)$. More generally, Proposition~\ref{thm:linear_span} suggests a strategy to \emph{build} the set \V: one should find a set of functions that form a basis for the space of interest. When \S\ is finite, for example, having \V\ be a basis for $\R^{|\S|}$ means that the value equivalence principle will apply to every function $\v \in \R^{|S|}$. The same reasoning applies to $\Pi$. In fact, because $\pspan(\Pi)$ grows independently pointwise, it is relatively simple to build a set $\Pi$ that covers the space of policies one is interested in. In particular, when \A\ is finite, it is easy to define a set $\Pi$ for which $\pspan(\Pi) \supseteq \PS$: it suffices to have for every state-action pair $(s,a) \in \S \times \A$ at least one policy $\pi \in \Pi$ such that $\pi(a|s) = 1$. This means that we can apply the value equivalence principle to the entire set \PS\ using $|A|$ policies only.

Combining Proposition~\ref{thm:linear_span} and Property~\ref{thm:pindown} we see that by defining $\Pi$ and \V\ appropriately we can focus on the subset of \M\ whose models perfectly describe the environment:

\begin{restatable}{remark}{corpindownspan}
\label{thm:pin_down_span}
If $\PS \subseteq \pspan(\Pi)$ and $\FS = \span(\V)$, then $\M(\Pi, \V) = m^*$ or $\M(\Pi, \V) = \emptyset$.
\end{restatable}

We have shown how $\Pi$ and \V\ have an impact on the number of value equivalent models in $\M(\Pi, \V)$; to make the discussion more concrete, we now focus on a specific model space \M\ and analyze the rate at which this space shrinks as we add more elements to $\Pi$ and \V. 
Before proceeding we define a set of functions $\mathcal{H}$ as \textit{pointwise linearly independent} if $h \notin \pspan(\mathcal{H} \setminus \{ h \})$ for all $h \in \mathcal{H}$.

Suppose both $\S$ and $\A$ are finite.
In this case a model can be defined as $m = (\r, \P)$, where $\r \in \R^{|\S||\A|}$ and $\P \in \R^{|\S| \times |\S| \times |\A|}$.
A policy can then be thought of as a vector $\ppi \in \R^{|\S||\A|}$. 
We denote the set of all transition matrices induced by transition kernels as $\mathbb{P}$. 
To simplify the analysis we will consider that $\r$ is known and we are interested in finding a model $\Pt \in \mathbb{P}$. 
In this setting, we write $\mathbb{P}(\Pi, \V)$ to denote the set of transition matrices that are value equivalent to the true transition matrix $\boldsymbol{P}^*$.
We define the dimension of a set $\mathcal{X}$ as the lowest possible Hamel dimension of a vector-space enclosing some translated version of it:
$\dim[\mathcal{X}]~=~\min_{\mathcal{W}, c \in W(\mathcal{X})}~\hdim[\mathcal{W}]$
where $W(\mathcal{X}) = \{ (\mathcal{W}, c) : \mathcal{X} + c \subseteq \mathcal{W} \}$,
$\mathcal{W}$ is a vector-space, $c$ is an offset and $\hdim[\cdot]$ denotes the Hamel dimension. 
Recall that the Hamel dimension of a vector-space is the size of the smallest set of mutually linearly independent vectors that spans the space (this corresponds to the usual notion of dimension, that is, the minimal number of coordinates required to uniquely specify each point). So, under no restrictions imposed by $\Pi$ and \V, we have that $\dim[\mathbb{P}] = (|\S| - 1)|\S||\A|$. We now show how fast the size of $\mathbb{P}(\Pi, \V)$ decreases as we extend the ranges of $\Pi$ and \V:

\begin{restatable}{proposition}{proplineargrowth}
\label{thm:linear_growth}
 Let $\Pi$ be a set of $m$ pointwise linearly independent policies $\ppi_i \in \R^{|\S||\A|}$ and let \V\ be a set of $k$ linearly independent vectors $\v_i \in \R^{|\S|}$. Then, 
 \begin{equation*}
\dim\left[\mathbb{P}(\Pi, \V)\right] \leq |\S|\left(|\S||\A| - m k\right).
\end{equation*}
\end{restatable}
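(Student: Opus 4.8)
The plan is to recast value equivalence with respect to $\Pi$ and $\V$ as a homogeneous linear system in the difference $\mat{\Delta} \defi \P^* - \Pt$, count its linearly independent equations, and read off the dimension of its solution set; the nonnegativity and normalization constraints defining $\mathbb{P}$ only make this set smaller, so they may be dropped for an upper bound. Since $\r$ is assumed known, both $\mt = (\r,\Pt)$ and $m^* = (\r,\P^*)$ carry the same reward, so for each state $s$ the equality $\Tpit v = \Tpi v$ reduces --- after cancelling the factor $\gamma$, which we take to be nonzero (for $\gamma = 0$ the Bellman operator does not depend on $p$ at all and every transition model is value equivalent) --- to $\sum_{a} \pi(a|s) \sum_{s'} \Delta(s'|s,a)\, v(s') = 0$. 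Writing $\mat{\Delta}_s \in \R^{|\A| \times |\S|}$ for the slice $(\mat{\Delta}_s)_{a,s'} = \Delta(s'|s,a)$, $\ppi(s) = (\pi(a|s))_{a \in \A} \in \R^{|\A|}$ for the restriction of a policy $\pi$ to state $s$, and $\langle \cdot, \cdot \rangle$ for the Frobenius inner product, this reads $\ppi(s)^\top \mat{\Delta}_s\, \v = \langle \ppi(s)\,\v^\top,\, \mat{\Delta}_s \rangle = 0$. Hence $\mathbb{P}(\Pi,\V)$ is contained in the affine subspace $L$ of all $\Pt$ whose blocks $\mat{\Delta}_s$ satisfy the $mk$ equations $\langle \ppi_i(s)\,\v_j^\top, \mat{\Delta}_s \rangle = 0$ for $i \in \{1,\dots,m\}$ and $j \in \{1,\dots,k\}$, and since $\dim[\cdot]$ is monotone under set inclusion it suffices to bound $\dim[L]$.

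Because the equations decouple across states, $\dim[L] = \sum_{s \in \S} \bigl( |\S||\A| - r_s \bigr)$, where $r_s \defi \dim\span\{ \ppi_i(s)\,\v_j^\top : i \leq m,\, j \leq k \}$ is the number of independent equations at $s$. The key algebraic observation is that this span is the tensor product $\span\{\ppi_i(s)\}_i \otimes \span\{\v_j\}_j$: if $\ppi_{i_1}(s), \dots, \ppi_{i_{d_s}}(s)$ is a basis of the first factor and $\v_1, \dots, \v_k$ (linearly independent by hypothesis) a basis of the second, then the $d_s k$ rank-one matrices $\ppi_{i_p}(s)\,\v_q^\top$ lie in the span, are linearly independent, and span all of it. Hence $r_s = d_s k$ with $d_s \defi \dim\span\{\ppi_i(s) : i \leq m\}$, so that $\dim[L] = |\S|^2|\A| - k \sum_{s \in \S} d_s$.

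It remains to verify $\sum_{s \in \S} d_s \geq m$. Stack the policy vectors as the rows of a matrix $\mat{\Pi} \in \R^{m \times |\S||\A|}$ and let $\mat{\Pi}_s \in \R^{m \times |\A|}$ be the block of columns indexed by $s$, whose rows are the restrictions $\ppi_i(s)^\top$, so $\rank(\mat{\Pi}_s) = d_s$. As the column space of $\mat{\Pi}$ is the sum of the column spaces of its horizontal blocks, $\rank(\mat{\Pi}) \leq \sum_s \rank(\mat{\Pi}_s) = \sum_s d_s$. Conversely, pointwise linear independence of $\Pi$ implies ordinary linear independence of $\{\ppi_i\}$ in $\R^{|\S||\A|}$ --- a constant-coefficient null combination being a special case of a pointwise one --- hence $\rank(\mat{\Pi}) = m$. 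Therefore $\sum_s d_s \geq m$, and
\[
\dim[\mathbb{P}(\Pi,\V)] \;\leq\; \dim[L] \;=\; |\S|^2|\A| - k \sum_{s \in \S} d_s \;\leq\; |\S|\bigl(|\S||\A| - mk\bigr).
\]

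I expect the main obstacle to be the second step --- showing that the $mk$ raw equations at a fixed state genuinely collapse to exactly $d_s k$ independent ones. Everything there rests on the dimension identity $\dim\bigl( \span\{u_i\} \otimes \span\{w_j\} \bigr) = (\dim\span\{u_i\})(\dim\span\{w_j\})$ for spans of rank-one matrices, and the delicate point is that the tensor products formed from bases of the two factors carry no spurious linear relations; this should be written out with some care (e.g.\ by testing $\sum_{p,q} c_{pq}\, u_p v_q^\top$ against vectors $w$ and using independence of the $u_p$ and then of the $v_q$). Once it is in hand, the rank-subadditivity of horizontal concatenation, the implication from pointwise to ordinary linear independence, and the initial reduction of value equivalence to a linear system are all routine.
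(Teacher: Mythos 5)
Your reduction to a homogeneous linear system, the per-state decoupling, and the tensor-product count $r_s = d_s k$ of independent constraints at each state are all sound, and they mirror the paper's argument (which packages the same computation into a single matrix equation $\mat{\Pi}(\Pt-\P)\mat{V}=\mat{0}$ and a lemma stating $\hdim[\{\mat{B}:\mat{A}\mat{B}\mat{C}=\mat{0}\}]=nm-\rank(\mat{A})\rank(\mat{C})$). The problem is your last step. You arrive at $\dim[L]=|\S|^2|\A|-k\sum_{s}d_s$ and then claim this is at most $|\S|(|\S||\A|-mk)=|\S|^2|\A|-|\S|mk$; that inequality requires $\sum_{s}d_s\ge m|\S|$, i.e.\ $d_s=m$ at \emph{every} state (since always $d_s\le m$). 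What you actually prove is only $\sum_{s}d_s\ge m$, via ordinary linear independence of the stacked vectors $\ppi_i\in\R^{|\S||\A|}$, which yields the much weaker bound $|\S|^2|\A|-mk$. The two differ by a factor of $|\S|$ in the subtracted term, so the final display does not follow from what precedes it.

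The missing ingredient is that pointwise linear independence must be used \emph{at each state separately}: one needs the restrictions $\ppi_1(s),\ldots,\ppi_m(s)\in\R^{|\A|}$ to be linearly independent for every $s\in\S$, so that $d_s=m$ identically and $\sum_s d_s=m|\S|$. This is exactly what the paper's proof uses when it asserts $\rank(\mat{\Pi})=\min\{|\S||\A|,\,m|\S|\}$ for its block-structured $m|\S|\times|\S||\A|$ policy matrix (whose rank equals $\sum_s d_s$ because rows attached to different states have disjoint supports). Be aware that the paper's literal definition of $\pspan$ only guarantees that for each $i$ there is \emph{some} state at which $\ppi_i(s)$ escapes the span of the others, not that this holds at all states, so the per-state independence is really an implicit strengthening of the hypothesis; but under the reading the proposition intends, your argument closes once you replace ``$\sum_s d_s\ge m$'' by ``$d_s=m$ for all $s$.'' Your observation that pointwise independence implies global independence, while true, is simply not the relevant direction here.
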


Interestingly, Proposition~\ref{thm:linear_growth} shows that the elements of $\Pi$ and \V\ interact in a multiplicative way: when there are $m$  pointwise linearly independent policies, enlarging \V\ with a single function $v$ that is linearly independent of its counterparts will decrease the bound on the dimension of $\mathbb{P}(\Pi, \V)$ by a factor of $m$. This makes intuitive sense if we note that by definition $m \le |\A|$: for an expressive enough $\Pi$, each $v \in \V$ will provide information about the effect of all actions in $a \in \A$. Conversely, because $\span(\V) = k \le |\S|$, we can only go so far in pinning down the model when $m < |\A|$---which also makes sense, since in this case we cannot possibly know about the effect of all actions, no matter how big \V\ is. Note that when $m = |\A|$ and $k = |\S|$ the space $\mathbb{P}(\Pi, \V)$ reduces to $\{\P^*\}$.

\section{Model learning based on the value-equivalence principle}
\label{sec:ve_model_learning}

We now discuss how the principle of value equivalence can be incorporated into model-based RL. Often in model-based RL one learns a model $\mt = (\rt, \pt)$ without taking the space $\M(\Pi, \V)$ into account. The usual practice is to cast the approximations $\rt \approx r$ and $\pt \approx p$ as optimization problems over a model-space $\mathcal{M}$ that do not involve the sets $\Pi$ and \V. Given a space \RR\ of possible approximations \rt, we can formulate the approximation of the rewards as $\argmin_{\rt \in \RR} \loss_r(r,\rt)$, where $\loss_r$ is a loss function that measures the dissimilarity between $r$ and \rt. The approximation of the transition dynamics can be formalized in an analogous way: $\argmin_{\pt \in \PP} \loss_p(p,\pt)$, where \PP\ is the space of possible approximations \pt.

A common choice for $\loss_r$ is
\begin{equation}
\label{eq:cml_r}
\loss_{r, \D}(r, \rt) \defi \E_{(S, A) \sim \D}\left[ (r(S, A) - \rt(S, A))^2 \right],
\end{equation}
where \D\ is a distribution over $\S \times \A$. The loss $\loss_p$ is usually defined based on the principle of \emph{maximum likelihood estimation} (MLE):
\begin{equation}
\label{eq:cml_p} 
\loss_{p, \D}(p, \pt) \defi \E_{(S,A) \sim \D}\left[ \mathrm{D}_{\mathrm{KL}}(p(\cdot | S, A) \, || \, \pt(\cdot | S, A)) \right],
\end{equation}
where $\mathrm{D}_{\mathrm{KL}}$ is the Kullback-Leibler (KL) divergence. 
Since we normally do not have access to $r$ and $p$, the losses~(\ref{eq:cml_r}) and~(\ref{eq:cml_p}) are usually minimized using transitions sampled from the environment~\cp{sutton2008dyna}.
There exist several other criteria to approximate $p$ based on state transitions, such as maximum {\sl a posteriori} estimation, maximum
entropy estimation, and Bayesian posterior inference~\cp{farahmand2018iterative}. Although we focus on MLE for simplicity, our arguments should extend to these other criteria as well.

Both~(\ref{eq:cml_r}) and~(\ref{eq:cml_p}) have desirable properties that justify their widespread adoption~\cp{milar2003maximum}. However, we argue that ignoring the future use of \rt\ and \pt\ may not always be the best choice~\cp{joseph2013reinforcement,farahmand2017value}. To illustrate this point, we now show that, by doing so, one might end up with an approximate model when an exact one were possible. 
Let $\PP(\Pi, \V)$ be the set of value equivalent transition kernels in $\PP$. Then,
%
\begin{restatable}{proposition}{propmle}
\label{thm:mle}
The maximum-likelihood estimate of $p^*$ in \PP\ may not belong to a $\PP(\Pi, \V) \ne \emptyset$.
\end{restatable}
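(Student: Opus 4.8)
The plan is to prove the proposition by exhibiting a single explicit counterexample: a true kernel $p^*$, a (misspecified) model space $\PP$, and sets $\Pi$ and $\V$ for which $\PP(\Pi,\V)\neq\emptyset$ yet the unique maximum-likelihood estimate of $p^*$ in $\PP$ fails to be value equivalent to $p^*$. The conceptual reason such an example must exist is a mismatch of objectives: the loss~(\ref{eq:cml_p}) penalizes \emph{every} discrepancy between $\pt(\cdot|s,a)$ and $p^*(\cdot|s,a)$ through the KL divergence, whereas value equivalence with respect to a proper subset $\V\subsetneq\FS$ only constrains the expectations $\E_{S'\sim\pt(\cdot|s,a)}[v(S')]$ for $v\in\V$ (and their $\Pi$-averages); so the KL-closest element of $\PP$ need not be one that reproduces those expectations. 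Observe first that $p^*\notin\PP$ is necessary for the statement: if $p^*\in\PP$ then Property~\ref{thm:cover} gives $p^*\in\PP(\Pi,\V)$, and since $p^*$ is the unique minimizer of the KL loss it is also the MLE. The model space must therefore be genuinely misspecified.

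Concretely, I would take $\S=\{s_1,s_2,s_3\}$ and a single action $\A=\{a\}$ (so that $\Pi=\PS$ is a single policy and the expectation over actions in~(\ref{eq:bo}) is vacuous), a fixed reward $\r$, and any fixed $\gamma\in(0,1)$. Let $\V=\{v\}$ with $v=(0,1,-1)^\t$, and let $p^*(\cdot|s_1,a)=(\tfrac12,\tfrac14,\tfrac14)$, so that $\E_{p^*(\cdot|s_1,a)}[v]=0$. For the model space take $\PP=\{\pt_1,\pt_2\}$, where $\pt_1$ and $\pt_2$ both coincide with $p^*$ at $(s_2,a)$ and $(s_3,a)$ and differ from $p^*$ (and from each other) only in their first row: $\pt_1(\cdot|s_1,a)=(\tfrac13,\tfrac13,\tfrac13)$ and $\pt_2(\cdot|s_1,a)=(0.45,0.30,0.25)$. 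Taking $\D$ to be the point mass on $(s_1,a)$, we get $\loss_{p,\D}(p^*,\pt_j)=\mathrm{D}_{\mathrm{KL}}(p^*(\cdot|s_1,a)\,\|\,\pt_j(\cdot|s_1,a))$.

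Two elementary checks then finish the argument. First, since $\Pi$ and $\V$ are singletons and $\gamma\neq 0$ is a constant, the condition $\Tpi v=\Tpit v$ reduces to $\sum_{s'}p^*(s'|s,a)v(s')=\sum_{s'}\pt(s'|s,a)v(s')$ for every $s$; for $\pt_1$ this holds at $s_2,s_3$ by equality with $p^*$ and at $s_1$ because $\tfrac13-\tfrac13=0=\E_{p^*(\cdot|s_1,a)}[v]$, hence $\pt_1\in\PP(\Pi,\V)$ and in particular $\PP(\Pi,\V)\neq\emptyset$. Second, a direct computation gives $\mathrm{D}_{\mathrm{KL}}(p^*\,\|\,\pt_2)=\tfrac12\ln(10/9)+\tfrac14\ln(5/6)\approx0.0071$ and $\mathrm{D}_{\mathrm{KL}}(p^*\,\|\,\pt_1)=\tfrac12\ln(9/8)\approx0.0589$ (nats), so, $\PP$ being a two-element set, the unique MLE is $\pt_2$; but $\E_{\pt_2(\cdot|s_1,a)}[v]=0.30-0.25=0.05\neq0$, so $\pt_2\notin\PP(\Pi,\V)$. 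This proves the claim.

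The only point requiring any care is the second check: one must pick the three numbers defining $\pt_2$ so that it is \emph{strictly} closer to $p^*$ in KL than the value-equivalent kernel $\pt_1$ while having a \emph{strictly} nonzero $v$-expectation. This is easy precisely because $p^*$ itself lies on the hyperplane $\{q:\E_q[v]=0\}$, so a small perturbation of $p^*$ toward $\pt_2$ stays KL-close while generically leaving that hyperplane, whereas $\pt_1$ --- though it lies on the hyperplane --- is a comparatively large KL perturbation of $p^*$. If a smooth rather than finite model space is preferred, the same construction works with $\PP$ the segment joining $\pt_1$ and $\pt_2$ (other rows fixed): $\mathrm{D}_{\mathrm{KL}}(p^*\,\|\,\cdot)$ is strictly convex along the segment and its value at $\pt_1$ exceeds its value at $\pt_2$, so its minimizer $q_{t^*}$ has parameter $t^*>0$ and hence $v$-expectation $0.05\,t^*\neq0$.
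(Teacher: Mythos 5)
Your proof is correct, and it establishes the proposition by the same high-level strategy as the paper---exhibiting a misspecified model class $\PP$ (you rightly note, via Property~\ref{thm:cover}, that $p^*\notin\PP$ is forced) in which a value-equivalent kernel exists but the KL-minimizer is a different element---though the concrete construction is genuinely different. The paper works with a one-parameter-per-row family $\pt_{ii}=\theta_i$, $\pt_{ij}=(1-\theta_i)/(n-1)$, solves the likelihood equation in closed form to obtain $\theta_i=p_{ii}$, and then takes $v$ to be an indicator function of a single state, so that value equivalence pins down one column of $\Pt$ and is achievable by a different setting of the $\theta_j$ whenever the off-diagonal entries of $p^*$ are not all equal (its condition~(\ref{eq:mle_inexact})). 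You instead take $\PP$ to be a two-point set (or a segment), a signed test function $v=(0,1,-1)^{\top}$, and check numerically that the KL-closer candidate leaves the hyperplane $\{q:\E_q[v]=\E_{p^*}[v]\}$ while the KL-farther one lies on it. Your version is more elementary---no likelihood equation, everything reduces to a finite verification---and it makes the geometric mechanism transparent (value equivalence constrains only a hyperplane, whereas KL penalizes all directions of discrepancy); the paper's version buys a smoothly parameterized, more ``natural'' family and an exact characterization of when the failure occurs rather than a single numerical instance. Your computations check out: $\E_{p^*}[v]=0=\E_{\pt_1}[v]$, $\E_{\pt_2}[v]=0.05\neq 0$, and $\mathrm{D}_{\mathrm{KL}}(p^*\,\|\,\pt_2)\approx 0.0071<0.0589\approx\mathrm{D}_{\mathrm{KL}}(p^*\,\|\,\pt_1)$, so the minimizer of the loss~(\ref{eq:cml_p}) over $\PP$ is $\pt_2\notin\PP(\Pi,\V)$ while $\pt_1\in\PP(\Pi,\V)\neq\emptyset$.
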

Proposition~\ref{thm:mle} states that, even when there exist models in \PP\ that are value equivalent to $p^*$ with respect to $ \Pi$ and \V, the minimizer of~(\ref{eq:cml_p}) may not be in $\mathcal{P}(\Pi, \mathcal{V})$. In other words, even when it is possible to perfectly handle the policies in $\Pi$ and the values in \V, the model that achieves the smallest MLE loss will do so only approximately. This is unsurprising since the loss~(\ref{eq:cml_p}) is agnostic of $\Pi$ and \V, providing instead a model that represents a compromise across all policies \PS\ and all functions \FS.

We now define a value-equivalence loss that explicitly takes into account the sets $\Pi$ and \V:
\begin{equation}
\label{eq:ve_loss}
\loss_{\Pi, \V}(m^*,\mt) \defi \sum_{\pi \in \Pi} \sum_{v \in \V} \| \Tpi v - \Tpit v||,
\end{equation}
where $\Tpit $ are Bellman operators induced by \mt\ and $|| \cdot ||$ is a norm. Given~(\ref{eq:ve_loss}), the problem of learning a model based on the value equivalence principle can be formulated as $\argmin_{\mt \in \M} \loss_{\Pi, \V}(m^*,\mt)$.

As noted above, we usually do not have access to \Tpi, and thus the loss~(\ref{eq:ve_loss}) will normally be minimized based on sample transitions. Let $\S_{\pi} \defi \{(s^{\pi}_{i}, a^{\pi}_{i}, r^{\pi}_{i}, \hat{s}^{\pi}_{i}) | i = 1, 2, ..., n^{\pi}\}$ be $n^{\pi}$ sample transitions associated with policy $\pi \in \Pi$. 
We assume that the initial states $s^{\pi}_i$ were sampled according to some distribution $\D'$ over \S\ and the actions were sampled according to the policy $\pi$, $a^{\pi}_i \sim \pi(\cdot | s^{\pi}_i)$ (note that $\D'$ can be the distribution resulting from a direct interaction of the agent with the environment). When $\|\cdot\|$ appearing in~(\ref{eq:ve_loss}) is a $p$-norm , we can write its empirical version as
\begin{equation}
\vspace{-1mm}
\label{eq:ve_loss_emp}
\loss_{\Pi, \V, \D'}(m^*,\mt) \defi 
\sum_{\pi \in \Pi} \sum_{v \in \V}  \sum_{s \in {\S}_{\pi}^{'}}
\left[\dfrac{\sum_{i=1}^{n^\pi} \ind\{s^\pi_i = s\} (r^{\pi}_i + \gamma v(\hat{s}^{\pi}_i))} {\sum_{i=1}^{n^\pi} \ind\{s^\pi_i = s\}} - \Tpit v[s] \right]^p,
\end{equation}
where ${\S}_{\pi}^{'}$ is a set containing only the initial states $s^{\pi}_i \in \S_{\pi}$ and $\ind\{\cdot\}$ is the indicator function.
We argue that, when we know policies $\Pi$ and functions \V\ that are sufficient for planning, the appropriate goal for model-learning is to minimize the value-equivalence loss (\ref{eq:ve_loss}). As shown in Proposition~\ref{thm:mle}, the model \mt\ that minimizes~(\ref{eq:cml_r}) and~(\ref{eq:cml_p}) may not achieve zero loss on~(\ref{eq:ve_loss}) even when such a model exists in \M. In general, though, we should not expect there to be a model $\mt \in \M$ that leads to zero value-equivalence loss. 
Even then, value equivalence may lead to a better model than conventional counterparts (see Figure~\ref{fig:ve_app_diagram} for intuition and Appendix~\ref{sec:subspace_example} for a concrete example).

\begin{wrapfigure}{r}{0.45\textwidth}
\vspace{-13mm}
\begin{center}
\includegraphics[width=0.45\textwidth]{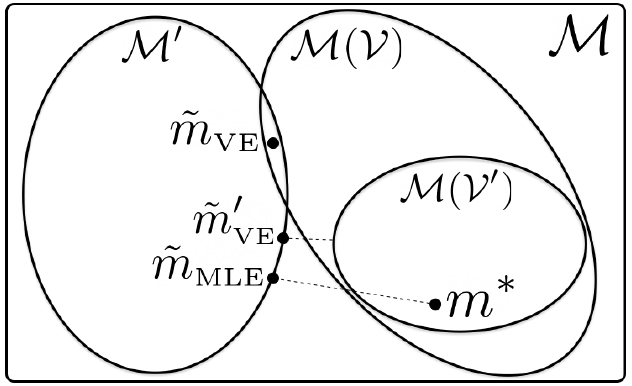}
\end{center}
\caption{When the hypothesis space $\M'$ and the space of value-equivalent models $\M(\V)$ intersect, the resulting model $\mt_{{\scriptscriptstyle \mathrm{VE}}}$ has zero loss~(\ref{eq:ve_loss}), while the corresponding MLE model $\mt_{{\scriptscriptstyle \mathrm{MLE}}}$ may not (Proposition~\ref{thm:mle}). But even when $\M'$ and $\M(\V')$ do not intersect, the resulting $\mt'_{{\scriptscriptstyle \mathrm{VE}}}$ can outperform $\mt_{{\scriptscriptstyle \mathrm{MLE}}}$ with the appropriate choices of $\Pi$ and $\V$, as we illustrate in the experiments of Section~\ref{sec:experiments}.}
\label{fig:ve_app_diagram}
\vspace{-7mm}
\end{wrapfigure}

\vspace{-2mm}

\subsection{Restricting the sets of policies and functions}\label{sec:restrict}

The main argument of this paper is that, rather than learning a model that suits all policies \PS\ and all functions \FS, we should instead focus on the sets of policies $\Pi$ and functions \V\ that are necessary for planning.
But how can we know these sets {\sl a priori}? We now show that it is possible to exploit structure on both the \emph{problem} and the \emph{solution} sides. 


First, we consider structure in the \emph{problem}. Suppose we had access to the true model $m^*$. Then, given an initial function $v$, a value-based planning algorithm that makes use of $m^*$ will generate a sequence of functions $\vec{\V}_v \defi \{ v_1, v_2, ... \}$~\cp{dabney2020valueimprovement}. Clearly, if we replace $m^*$ with any model in $\M(\PS, \vec{\V}_v)$, the behavior of the algorithm starting from $v$ remains unaltered. This allows us to state the following:

\begin{restatable}{proposition}{propvip}
Suppose $v \in \V' \implies \Tpi v \in \V'$ for all $\pi \in \PS$. Let $\pspan(\Pi) \supseteq \PS$ and $\span(\V) = \V'$. Then, starting from any $v' \in \V'$, any $\mt \in \M(\Pi, \V)$ yields the same solution as $m^*$.
\label{thm:vip}
\vspace{-1mm}
\end{restatable}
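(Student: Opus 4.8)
The plan is to show that the planning sequence generated by any $\mt \in \M(\Pi,\V)$ starting from $v'$ coincides, term by term, with the one generated by $m^*$. First I would observe that, by Proposition~\ref{thm:linear_span}, the hypotheses $\pspan(\Pi) \supseteq \PS$ and $\span(\V) = \V'$ imply $\M(\Pi,\V) = \M(\PS, \V')$: enlarging $\Pi$ to its pointwise span (intersected with $\PS$) and $\V$ to its linear span does not change the space of value-equivalent models, so any $\mt$ value equivalent to $m^*$ with respect to $\Pi$ and $\V$ is in fact value equivalent with respect to the full policy set $\PS$ and the full set $\V'$. Concretely, for every $\pi \in \PS$ and every $v \in \V'$ we have $\Tpi v = \Tpit v$.

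Next I would use the forward-invariance assumption $v \in \V' \implies \Tpi v \in \V'$ for all $\pi \in \PS$ to run an induction on the planning iterates. A value-based planning algorithm using a model produces, from a seed $v'$, a sequence $v_0 = v', v_1, v_2, \ldots$ in which each $v_{n+1}$ is obtained from $v_n$ by applying Bellman operators of the model (e.g.\ $v_{n+1} = \Tpi v_n$ for some $\pi$ chosen by the algorithm, possibly the greedy policy, or a combination thereof). I would argue by induction that the iterates produced under $\mt$ equal those produced under $m^*$ and that each lies in $\V'$. The base case is $v_0 = v' \in \V'$ by hypothesis. For the inductive step, assume $v_n$ is the same for both models and $v_n \in \V'$; since $v_n \in \V'$ and $\mt$ is value equivalent to $m^*$ with respect to $(\PS, \V')$, applying any Bellman operator used by the algorithm (note the algorithm's choice of policy depends only on $v_n$ and the model's Bellman backups, which agree) gives the same result $v_{n+1}$ under both models, and forward-invariance gives $v_{n+1} \in \V'$, closing the induction. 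Hence the full trajectories agree, and in particular so do their limits (or fixed points), i.e.\ $\mt$ yields the same solution as $m^*$.

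The main obstacle I anticipate is making precise what ``value-based planning algorithm'' and ``solution'' mean, since the statement is deliberately generic: one must ensure that every operation the algorithm performs on an iterate $v_n$ is expressible through Bellman operators $\Tpi$ with $\pi \in \PS$ (so that value equivalence applies) and that the policy selected at each step is a function of the current iterate's Bellman backups only (so that $m^*$ and $\mt$ select the same policy and stay in lockstep). For the standard cases this is routine: value iteration uses $v_{n+1}(s) = \max_{a} \, \E_{s' \sim p(\cdot|s,a)}[r(s,a) + \gamma v_n(s')]$, which is $\T_{\pi_n} v_n$ for the greedy policy $\pi_n \in \PS$ induced by $v_n$ under the model, and policy iteration alternates policy evaluation (repeated application of a fixed $\T_\pi$) with greedy improvement; in both the selected policies lie in $\PS$ and depend only on model backups of the current iterate. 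I would state the invariance-of-$\V'$ assumption as covering exactly these operators and then the induction goes through verbatim; the convergence of $(\Tpi)^n v \to v_\pi$ recalled in the background section handles the evaluation sub-loops, and the limit agreeing for both models follows because the entire sequences agree.
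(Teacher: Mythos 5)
Your proposal is correct and follows essentially the same route as the paper's proof: use the span conditions (via Proposition~\ref{thm:linear_span}) to upgrade value equivalence to all of $\PS$ and $\V'$, then induct on the planning iterates, using forward-invariance of $\V'$ to keep each iterate in the set where the two models' Bellman backups agree. The paper simply instantiates the generic "planning algorithm" as value iteration with the greedy Bellman operator, observing that $\T v = \T_{\pi_v} v$ for the greedy policy $\pi_v \in \PS$ so that the invariance assumption covers it—exactly the point you flag and resolve in your final paragraph.
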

Because \Tpi\ are contraction mappings, it is always possible to define a $\V' \subset \FS$ such that the condition of Proposition~\ref{thm:vip} holds: we only need to make $\V'$ sufficiently large to encompass $v$ and the operators' fixed points. 
But in some cases there exist more structured $\V'$: in Appendix~\ref{sec:proofs} we give an example of a finite state-space MDP in which a sequence $\v_1, \v_2 = \Tpi \v_1, \v_3 = \T_{\pi'} \v_2, ...$ that reaches a specific $k$-dimensional subspace of $\R^{|S|}$ stays there forever. The value equivalence principle provides a mechanism to exploit this type of structure, while conventional model-learning approaches, like MLE, are oblivious to this fact. Although in general we do not have access to $\V'$, in some cases this set will be revealed through the very process of enforcing value equivalence. For example, if \mt\ is being learned online based on a sequence $v_1, v_2 = \Tpi v_1, v_3 = \T_{\pi'} v_2, ...$, as long as the sequence reaches a $v_i \in \V'$ we should expect \mt\ to eventually specialize to $\V'$~\cp{farahmand2018iterative,schrittwieser2019mastering}.

Another possibility is to exploit geometric properties of the \emph{value functions} $\vec{\V}_v$. It is known that the set of all value functions of a given MDP forms a polytope $\ddot{\V} \subset  \FS$~\cp{dadashi2019value}. Even though the sequence $\vec{\V}_v$ an algorithm generates may not be strictly inside the polytope $\ddot{\V}$, this set can still serve as a reference in the definition of $\V$. For example, based on Proposition~\ref{thm:linear_span}, we may want to define a \V\ that spans as much of the polytope $\ddot{\V}$ as possible~\cp{bellemare2019geometric}. This suggests that the functions in \V\ should be actual value functions $v_{\pi}$ associated with policies $\pi \in \PS$. In Section~\ref{sec:experiments} we show experiments that explore this idea. 
    
We now consider structure in the \emph{solution}. Most large-scale applications of model-based RL use function approximation. 
Suppose the agent can only represent policies $\pi \in \APi$ and value functions $v \in \AV$. Then, a value equivalent model $\mt \in \M(\APi, \AV)$ is as good as any model. To build intuition, suppose the agent uses state aggregation to approximate the value function.
In this case two models with the same transition probabilities between clusters of states are indistinguishable from the agent's point of view. It thus makes sense to build \V\ using piecewise-constant functions that belong to the space of function representable by the agent, $v \in \AV$. The following remark generalises this intuition:
     
\begin{remark}
\label{thm:linear_app}
Suppose the agent represents the value function using a linear function approximation: $\AV = \{ \vt \, | \, \vt(s) = \sum_{i =1}^d \phi_i(s) w_i \}$, where $\phi_i: \S \mapsto \R$ are fixed features and $\w \in \R^d$ are learnable parameters. In addition, suppose the agent can only represent policies $\pi \in \APi$. Then, Proposition~\ref{thm:linear_span} implies that if we use the features themselves as the functions adopted with value equivalence, $\V = \{ \phi_i \}_{i=1}^d$, we have that $\M(\APi, \{\phi_i\}_{i=1}^d) = \M(\APi, \AV)$. 
In other words, models that are value equivalent with respect to the features are indiscernible to the agent.
\end{remark}

According to the remark above, when using linear function approximation, a model that is value equivalent with respect to the approximator's features will perform no worse than any other model. This prescribes a concrete way to leverage the value equivalence principle in practice, since the set of functions \V\ is automatically defined by the choice of function approximator. Note that, although the remark is specific to linear value function approximation, it applies equally to linear and non-linear models (this is in contrast with previous work showing the equivalence between model-free RL using linear function approximation and model-based RL with a linear model for expected features~\cp{parr2008analysis,sutton2008dyna}). The principle of finding a basis for \AV\ also extends to non-linear value function approximation, though in this case it is less clear how to define a set \V\ that spans \AV. One strategy is to \emph{sample} the functions to be included in \V\ from the set \AV\ of (non-linear) functions the agent can represent. Despite its simplicity, this strategy can lead to good performance in practice, as we show next.

\section{Experiments}
\label{sec:experiments}
\vspace{-3mm}

We now present experiments illustrating the usefulness of the value equivalence principle in practice. Specifically, we compare models computed based on value equivalence (VE) with models resulting from maximum likelihood estimation (MLE). All our experiments followed the same protocol: $(i)$ we collected sample transitions from the environment using a policy that picks actions uniformly at random, $(ii)$ we used this data to learn an approximation \rt\ using~(\ref{eq:cml_r}) as well as approximations \pt\ using either MLE (\ref{eq:cml_p}) or VE~(\ref{eq:ve_loss_emp}), $(iii)$ we learned a policy \pit\ based on $\mt = (\rt,\pt)$, and $(iv)$ we evaluated \pit\ on the actual environment. The specific way each step was carried out varied according to the characteristics of the environment and function approximation used; see App.~\ref{sec:details_exp} for details.

One of the central arguments of this paper is that the value equivalence principle can yield a better allocation of the limited resources of model-based agents. In order to verify this claim, we varied the representational capacity of the agent's models \mt\ and assessed how well MLE and VE performed under different constraints. As discussed, VE requires the definition of two sets: $\Pi$ and \V. It is usually easy to define a set of policies $\Pi$ such that $\pspan(\Pi) \supseteq \PS$; since all the environments used in our experiments have a finite action space \A, we accomplished that by defining $\Pi = \{ \pi^a \}_{a \in \A}$ where $\pi^a(a | s) = 1$ for all $s \in \S$. We will thus restrict our attention to the impact of the set of functions \V. 

As discussed, one possible strategy to define \V\ is to use actual value functions in an attempt to span as much as possible of the value polytope $\ddot{\V}$~\cp{bellemare2019geometric}. Figure~\ref{fig:res_value} shows results of VE when using this strategy. Specifically, we compare VE's performance with MLE's on two well known domains: ``four rooms''~\cp{sutton1999between} and ``catch''~\cp{mnih2014recurrent}. For each domain, we show two types of results: we either fix the capacity of the model \pt\ and vary the size of \V\ or vice-versa (in the Appendix we show results with all possible combinations of model capacities and sizes of \V). Note how the models produced by VE outperform MLE's counterparts across all scenarios, and especially so under stricter restrictions on the model. This corroborates our hypothesis that VE yields models that are tailored to future use.

\begin{figure}
\centering
\subfigure[Four rooms (fixed \V) \label{fig:four_rooms_value}]{
\includegraphics[scale=0.1825]{./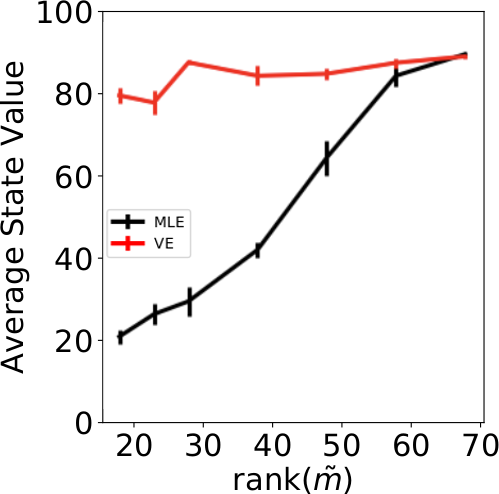}
}
\subfigure[Four rooms (fixed \mt) \label{fig:four_rooms_value_transpose}]{
\includegraphics[scale=0.25]{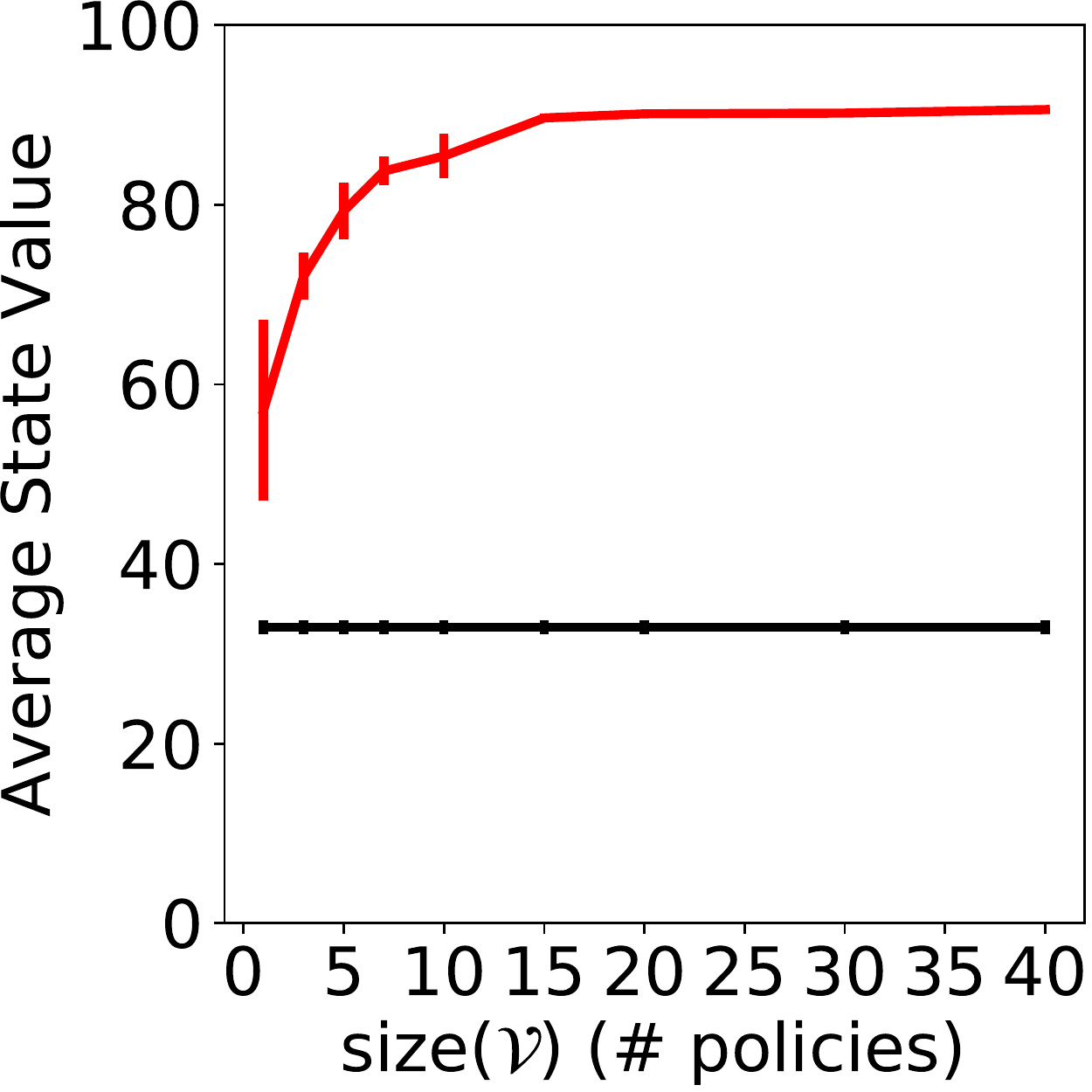}
}
\subfigure[Catch (fixed \V) \label{fig:catch_value}]{
\includegraphics[scale=0.25]{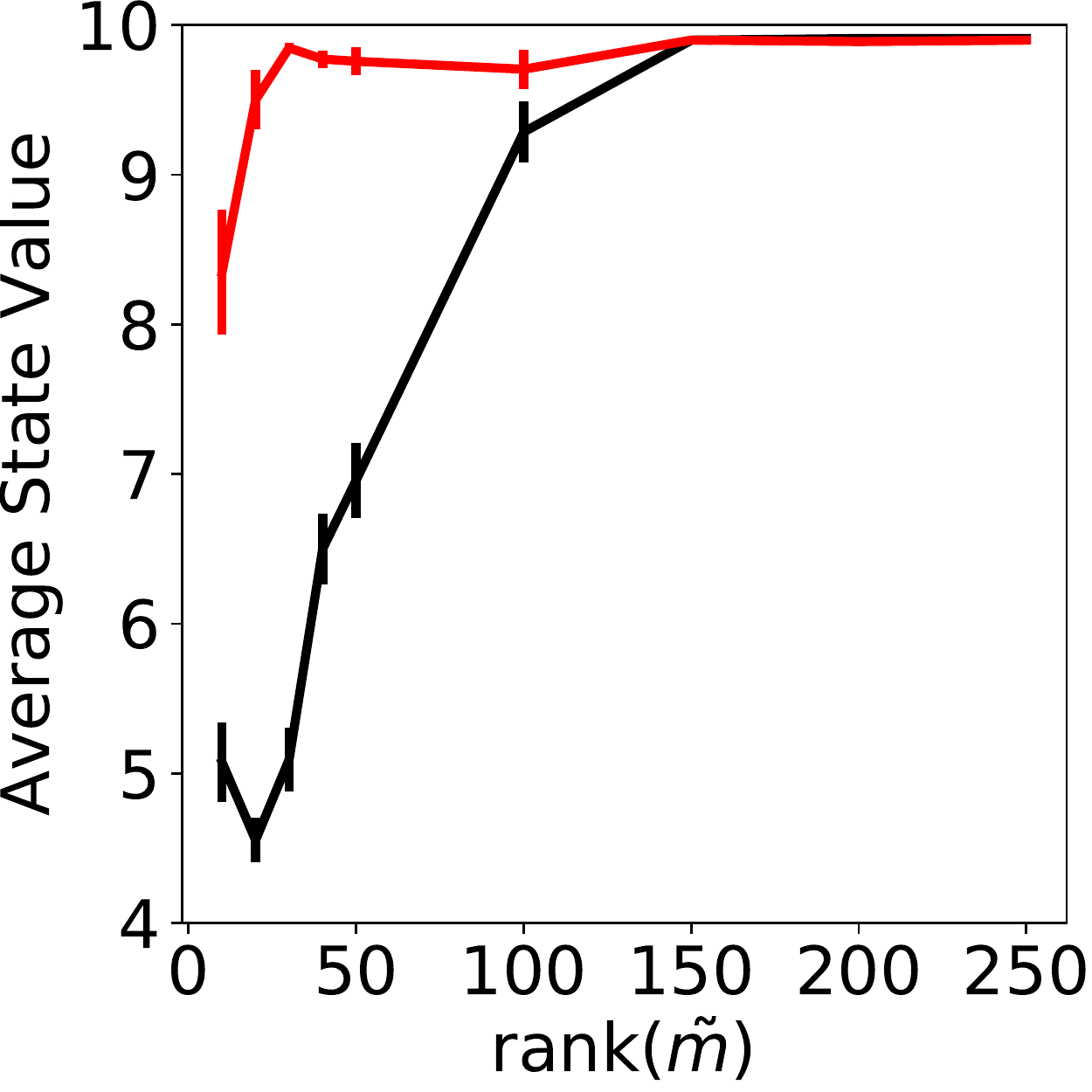}
 }
\subfigure[Catch (fixed \mt) \label{fig:catch_value_transpose}]{
\includegraphics[scale=0.25]{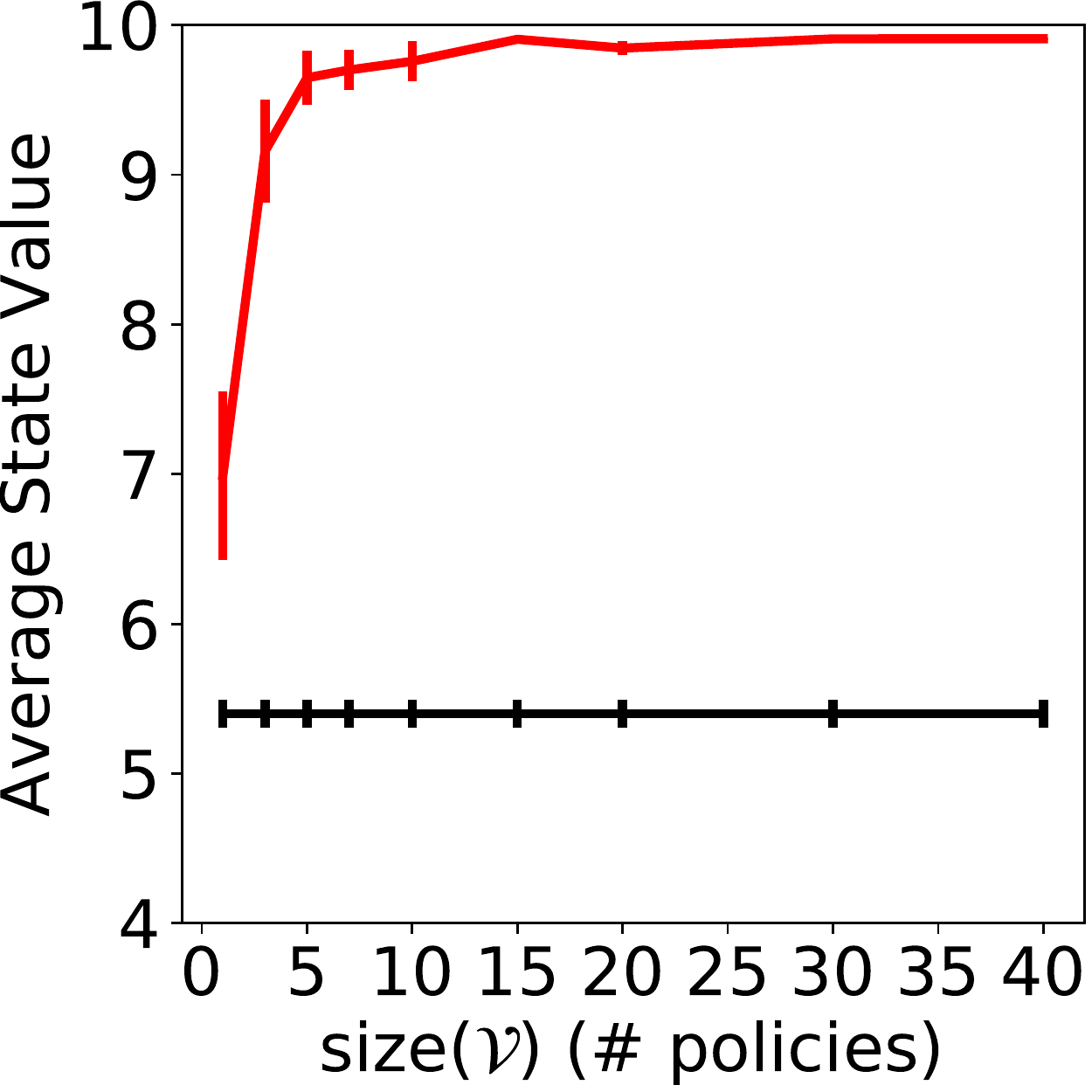}
}
\caption{ Results with \V\ composed of true value functions of randomly-generated policies. The models \pt\ are rank-constrained transition matrices $\Pt = \DD \KK$, with $\DD \in \R^{|\S| \times k}$, $\KK \in \R^{k \times |\S|}$, and $k < |\S|$. Error bars are one standard deviation over $30$ runs. \label{fig:res_value}}
\vspace{-5mm}
\end{figure}

Another strategy to define \V\ is to use functions from \AV, the space of functions representable by the agent, in order to capture as much as possible of this space. In Figure~\ref{fig:res_basis} we compare VE using this strategy with MLE. Here we use as domains catch and ``cart-pole''~\citep{barto1983neuronlike} (but see Appendix for the same type of result on the four-rooms environment). As before, VE largely outperforms MLE, in some cases  with a significant improvement in performance. We call attention to the fact that in cart-pole we used neural networks to represent both the transition models \pt\ and the value functions \vt, which indicates that VE can be naturally applied with nonlinear function approximation. 

It is important to note the broader significance of our experiments. While our theoretical analysis of value equivalence focused on the case where $\M$ contained a value equivalent model, this is not guaranteed in practice.
Our experiments illustrate that, in spite of lacking such a guarantee, we see a considerable gap in performance between VE and MLE, indicating that VE models still offer a strong benefit. 
Our goal here was to provide insight into the value equivalence principle; in the next section we point to prior work to demonstrate the utility of value equivalence in large-scale settings.

\begin{figure}
\centering
\subfigure[Catch (fixed \V) \label{fig:catch_basis}]{
\includegraphics[scale=0.25]{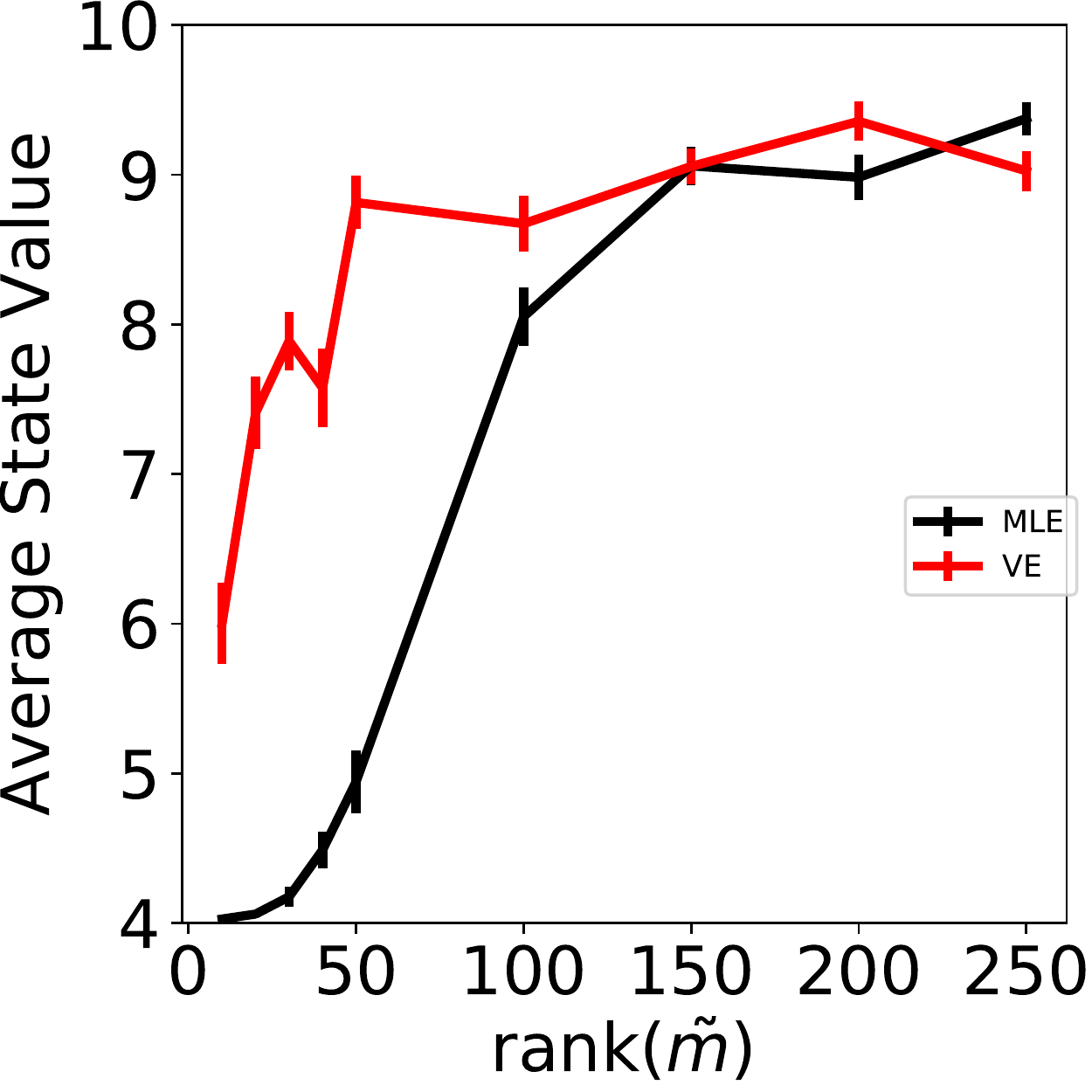}
 }
\subfigure[Catch (fixed \mt) \label{fig:catch_basis_transpose}]{
\includegraphics[scale=0.25]{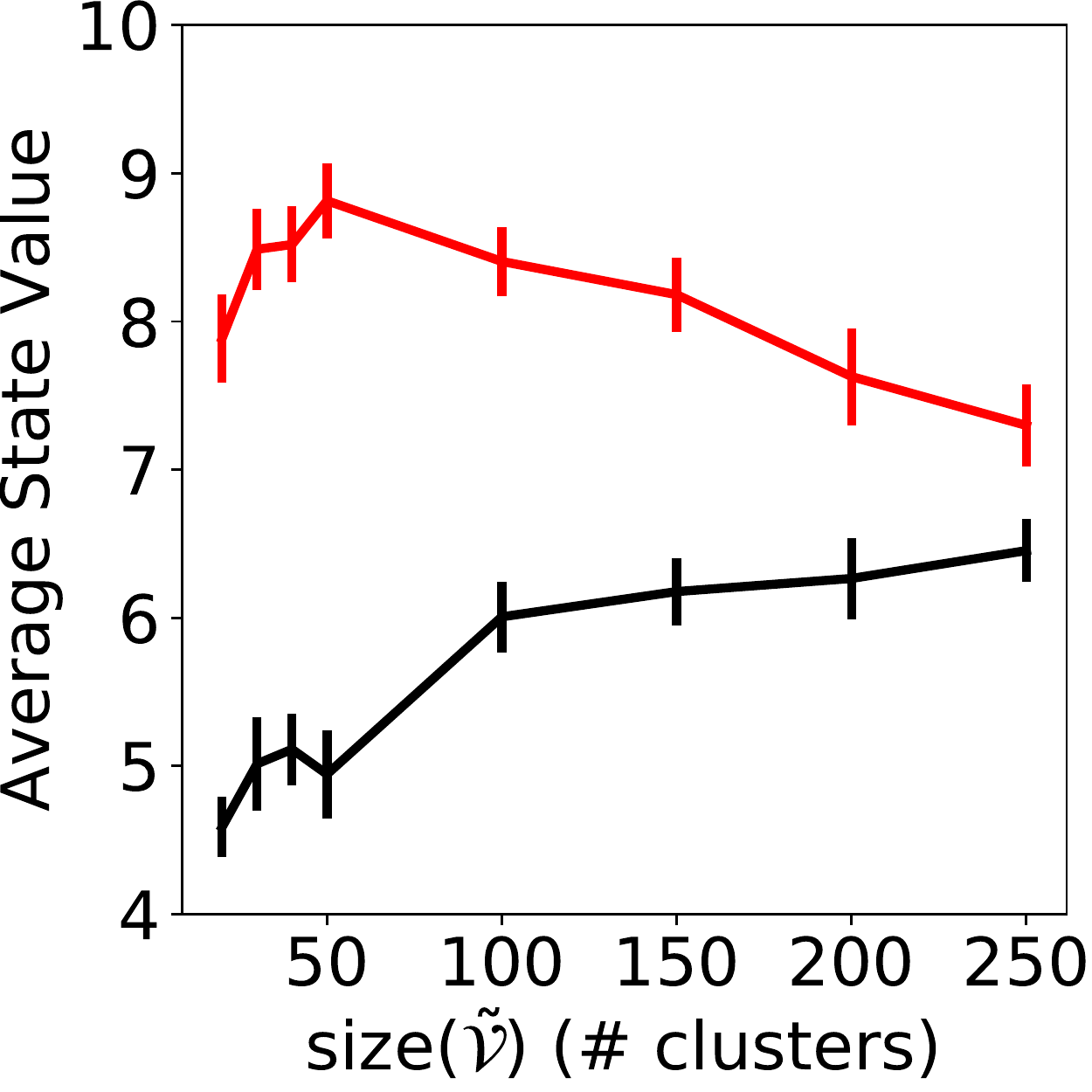}
}
\subfigure[Cart-pole (fixed \V) \label{fig:cartpole_basis}]{
\includegraphics[scale=0.25]{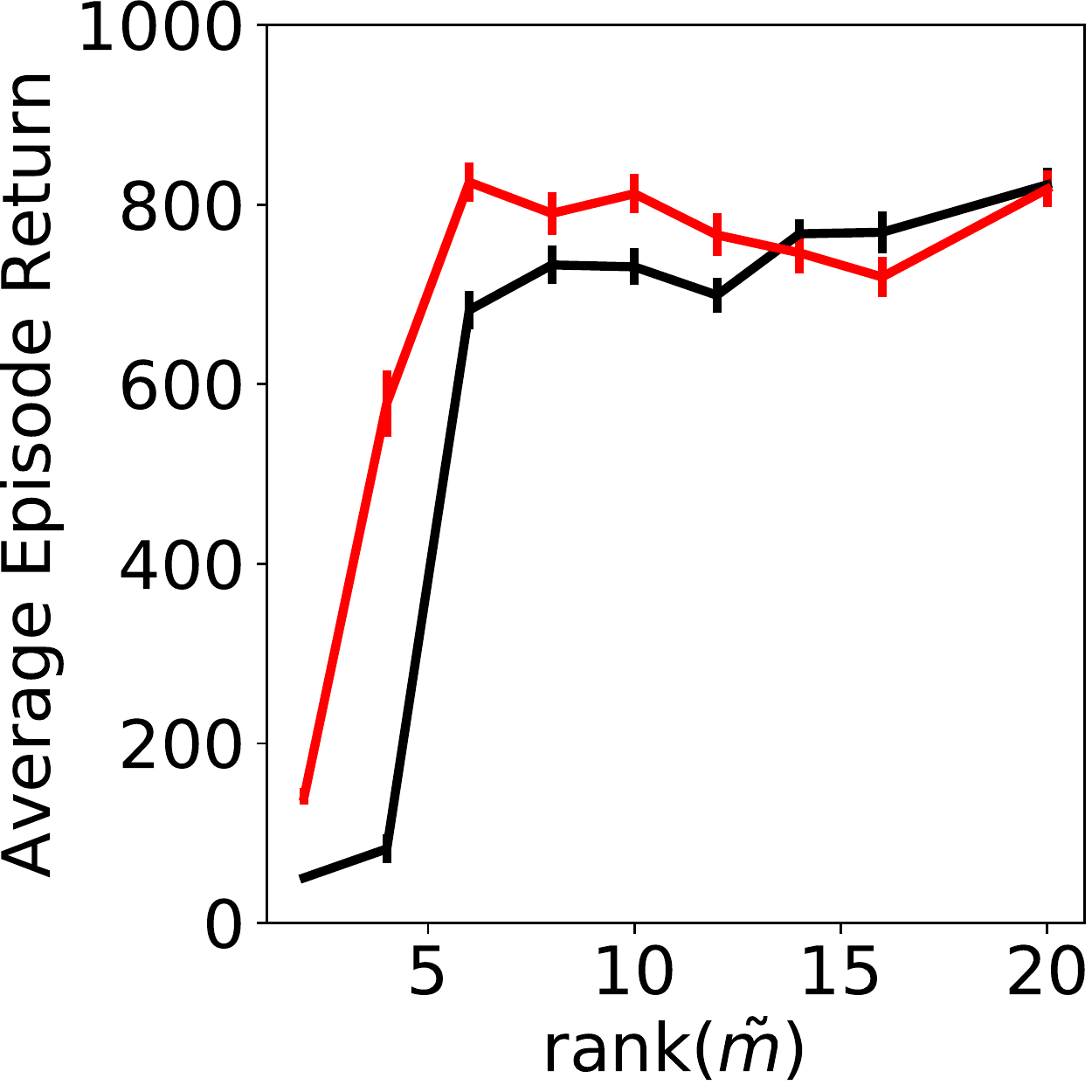}
}
\subfigure[Cart-pole (fixed \mt) \label{fig:cartpole_basis_transpose}]{
\includegraphics[scale=0.25]{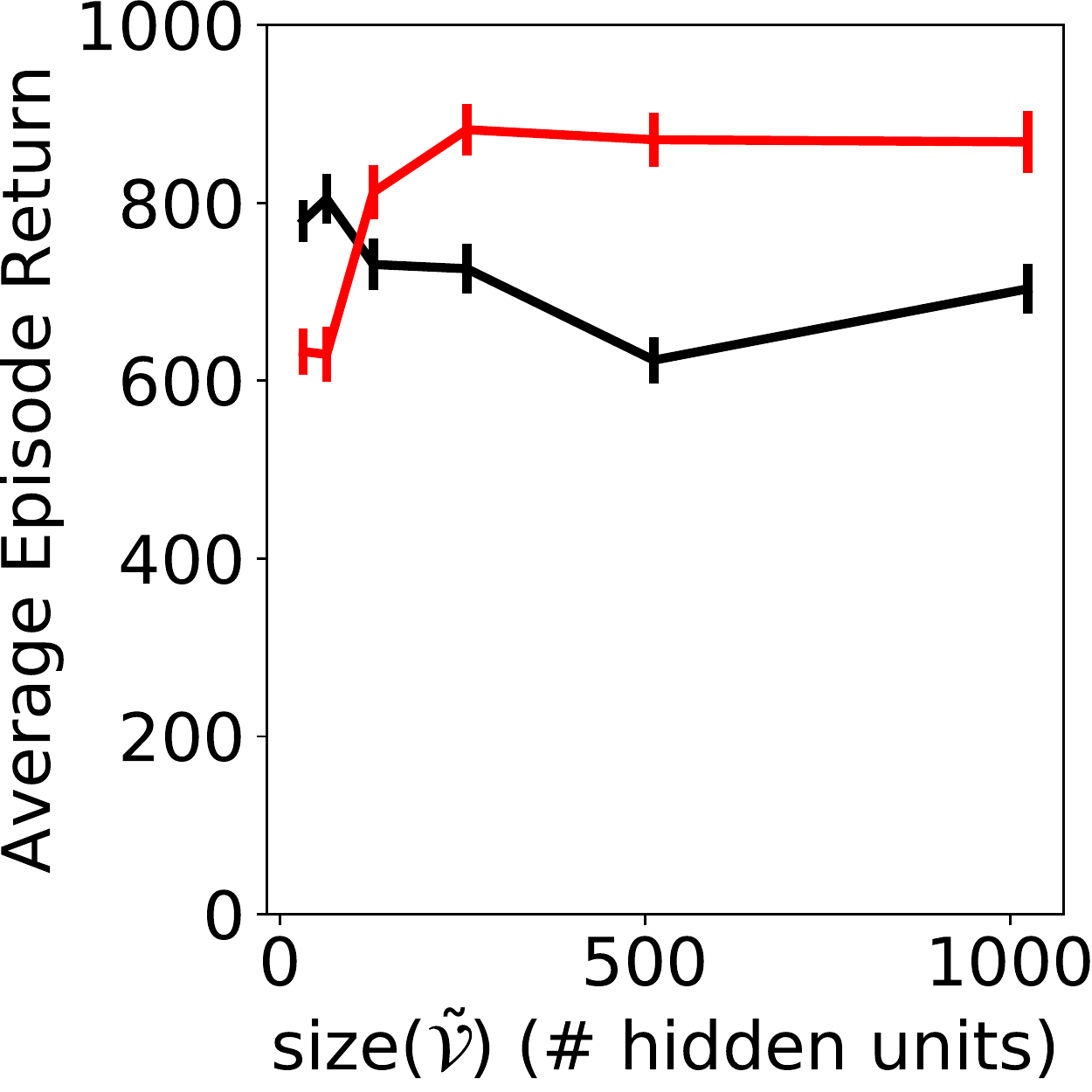}
}
\caption{ Results with \V\ composed of functions sampled from the agent's representational space \AV. \textbf{(a--b)} Functions in \V\ are the features of the linear function approximation (state aggregation), as per Remark~\ref{thm:linear_app}. Models \pt\ are rank-constrained transition matrices ({\sl cf.} Figure~\ref{fig:res_value}). \textbf{(c--d)}  Functions in \V\ are randomly-generated neural networks. Models \pt\ are neural networks with rank-constrained linear transformations between layers (Appendix~\ref{sec:details_exp}). Error bars are one standard deviation over $30$ runs. \label{fig:res_basis}
\vspace{-5mm}
}
\end{figure}

\vspace{-3mm}
\section{The value-equivalence principle in practice}
\label{sec:ve_practice}
\vspace{-3mm}

Recently, there have been several successful empirical works that can potentially be understood as applications of the value-equivalence principle, like \ctp{silver2017predictron} \emph{Predictron}, \ctp{oh2017value} \emph{Value Prediction Networks}, \ctp{farquhar2018treeqn} \emph{TreeQN}, and \ctp{schrittwieser2019mastering} \emph{MuZero}. Specifically, the model-learning aspect of these prior methods can be understood, with some abuse of notation, as a value equivalence principle of the form $\T v = \Tt v$, where $\mathcal{T}$ is a Bellman operator applied with the true model $m^*$ and $\tilde{\mathcal{T}}$ is a Bellman operator applied with an approximate model $\tilde{m}$.  

There are many possible forms for the operators \T\ and \Tt. First, value equivalence can be applied to an uncontrolled Markov reward process; the resulting operator $\Tpi$ is analogous to having a single policy in $\Pi$. Second, it can be applied over $n$ steps, using a Bellman operator $\Tpi^n$ that rolls the model forward $n$ steps: $\Tpi^n [v](s) = \E_{\pi}[R_{t+1} + ... + \gamma^{n-1} R_{t+n} + \gamma^n v_{\pi}(S_{t+n}) | S_t = s]$, or a $\lambda$-weighted average $\Tpi^\lambda$~\cp{bertsekas96neuro-dynamic}. Third, a special case of the $n$-step operator $\T_{a_1...a_n}$ can be applied to an open-loop action sequence $\{a_1, ..., a_n\}$. Fourth, it can be applied to the Bellman optimality operator, $\T_{G_v}$, where $G_v$ is the ``greedy'' policy induced by $v$ defined as $G_v(a|s) = \ind\{a = \argmax_{a'} \mathbb{E}[R + \gamma v(S’) | s, a']\}$. This idea can also be extended to an $n$-step greedy search operator, $\T_{G_v^n} [v](s) = \max_{a_1, ..., a_n} \mathbb{E}[R_{t+1} + ... + \gamma^{n-1} R_{t+n} + \gamma^n v(S_{t+n}) | S_t = s, A_{t} = a_1, ..., A_{t+n} = a_n]$. Finally, instead of applying value equivalence over a fixed set of value functions \V, we can have a set $\V_t$ that varies over time---for example, $\V_t$ can be a singleton with an estimate of the value function of the current greedy policy.

The two operators \T\ and \Tt\ can also differ. For example, on the environment side we can use the optimal value function, which can be interpreted as $\T^\infty v = v^*$ \cite{tamar2016value,silver2017predictron}, while the approximate operator can be $\Tpit^\lambda$ \cite{silver2017predictron} or $\Tt_{G_{v_t}}^n$ \cite{tamar2016value}. We can also use approximate values $\Tt v \approx \T v'$ where $v' \approx v$, for example by applying $n$-step operators to approximate value functions, $\Tt^n v \approx \T^n v' = \T^n \T^k v = \T^{n+k} v$ \cite{oh2017value,schrittwieser2019mastering} or $\Tt^n v \approx \T^n v' = \T^n \Tt^k v$ \cite{farquhar2018treeqn}, or even to approximate policies, $\Tt^n v_a \approx \T^n v'_a$ where $v_a = \pi(a| s) \approx \pi'(a | s) = v'_a$ for all $a \in \A$  \cite{schrittwieser2019mastering}.
The table below characterises the type of value equivalence principle used in prior work. We conjecture that this captures the essential idea underlying each method for model-learning, acknowledging that we ignore many important details.

\begin{tabular}{llll}
\label{tab:related}
Algorithm & Operator $\tilde{\mathcal{T}}$ & Policies $\Pi$ & Functions $\mathcal{V}$ \\
\hline 
Predictron \cite{silver2017predictron} & $\tilde{\mathcal{T}}^\lambda v_t$ & None & Value functions for pseudo-rewards \\
VIN \cite{tamar2016value} & $\tilde{\mathcal{T}}_{G_{v_t}}^n v_t$ & $G_{v_t}$ & Value function \\
TreeQN \cite{farquhar2018treeqn} & $\tilde{\mathcal{T}}_{G_{v_t}^n}^n v_t$ & $G_{v_t}^n$ & Value function \\
VPN \cite{oh2017value} & $\tilde{\mathcal{T}}_{a_1..a_n}^n v_t$ & $\{a_1, ..., a_n\} \sim \pi_t$ & Value function \\
MuZero \cite{schrittwieser2019mastering} & $\tilde{\mathcal{T}}_{a_1...a_n}^n v_t$ & $\{a_1, ..., a_n\} \sim \pi_t$ & Distributional value bins, policy components \\
\end{tabular}

All of these methods, with the exception of VIN, sample the Bellman operator, rather than computing full expectations ({\sl c.f.}~(\ref{eq:ve_loss_emp})). In addition, all of the above methods jointly learn the state representation alongside a value-equivalent model based upon that representation. 
Only MuZero includes both many policies and many functions, which may be sufficient to approximately span the policy and function space required to plan in complex environments; this perhaps explains its stronger performance.

\vspace{-3mm} 
\section{Related work}
\label{sec:relation_prior}
\vspace{-3mm} 


\ca{farahmand2013value}'s \cp{farahmand2013value,farahmand2017value} \emph{value-aware model learning} (VAML) is based on a premise similar to ours. They study a robust variant of~(\ref{eq:ve_loss}) that considers the worst-case choice of $v \in \V$ and provide the gradient when the value-function approximation \vt\ is linear and the model \pt\ belongs to the exponential family. Later, \ct{farahmand2018iterative} also considered the case where the model is learned iteratively. Both versions of VAML come with finite sample-error upper bound guarantees~\cp{farahmand2013value,farahmand2017value,farahmand2018iterative}. More recently,  \ct{asadi2018equivalence} showed that minimizing the VAML objective is equivalent to minimizing the Wasserstein metric. \ct{abachi2020policy} applied the VAML principle to policy gradient methods. The theory of VAML is complementary to ours: we characterise the space of value-equivalent models, while VAML focuses on the solution and analysis of the induced optimization problem.

 \ct{joseph2013reinforcement} note that minimizing prediction error is not the same as maximizing the performance of the resulting policy, and propose an algorithm that optimizes the parameters of the model rather than the policy's. \ct{ayoub2020model} proposes an algorithm that keeps a set of models that are consistent with the most recent value function
estimate.
They derive regret bounds for the algorithm which suggest that value-targeted regression estimation is both sufficient and efficient for model-based RL.

More broadly, other notions of equivalence between MDPs have been proposed in the literature~\cp{dean1997model,poupart2002value,givan2003equivalence,ravindran2004approximate,ferns2004metrics,li2006towards,taylor2009bounding,poupart2013value,castro2020scalable,van2020plannable}. 
Any notion of equivalence over states can be recast as a form of state aggregation; in this case the functions mapping states to clusters can (and probably should) be used to enforce value equivalence (Remark~\ref{thm:linear_app}). But the principle of value equivalence is more general: it can be applied with function approximations other than state aggregation and can be used to exploit structure in the problem even when there is no clear notion of state abstraction (Appendix~\ref{sec:subspace_example}).

In this paper we have assumed that the agent has access to a well-defined notion of state $s \in \S$. More generally, the agent only receives observations from the environment and must construct its own state function---that is, a mapping from histories of observations to features representing states. This is an instantiation of the problem known as \emph{representation learning}~\cp{watter2015embed,igl18a,corneil18a,franccois2019combined,zhang2019solar,zhang2019learning,gelada19a,biza2020learning}.  An intriguing question which arises in this context is whether a model learned through value equivalence induces a space of ``compatible'' state representations, which would suggest that the loss~(\ref{eq:ve_loss}) could also be used for representation learning. This may be an interesting direction for future investigations.

\vspace{-2mm} 
\section{Conclusion}
\vspace{-2mm}
We introduced the principle of value equivalence: two models are value equivalent with respect to a set of functions and a set of policies if they yield the same updates of the former on the latter. Value equivalence formalizes the notion that models should be tailored to their future use and provides a mechanism to incorporate such knowledge into the model learning process. It also unifies some important recent work in the literature, shedding light on their empirical success. Besides helping to explain some past initiatives, we believe the concept of value equivalence may also give rise to theoretical and algorithmic innovations that leverage the insights presented.


\newpage 

\section*{Broader impact}

The bulk of the research presented in this paper consists of foundational theoretical results about the learning of models for model-based reinforcement learning agents.
While applications of these agents can have social impacts depending upon their use, our results merely serve to illuminate desirable properties of models and facilitate the subsequent training of agents using them.
In short, this work is largely theoretical and does not present any foreseeable societal impact, except in the general concerns over progress in artificial intelligence.

\section*{Acknowledgements}

We would like to thank Gregory Farquhar and Eszter Vertes for the great discussions regarding the value equivalence principle. We also thank the anonymous reviewers for their comments and suggestions to improve the paper.

\bibliographystyle{plainnat}
\bibliography{rl}

\newpage 
\appendix 

\vspace{7mm} 
\begin{center}
\vspace{7mm} 
\noindent\makebox[\textwidth]{\rule{\textwidth}{2.0pt}} \\
\vspace{3mm} 
{\bf {\LARGE  The Value-Equivalence Principle \vspace{2mm} \\ for Model-Based Reinforcement Learning} \\
\vspace{2mm} {\Large Supplementary Material} }
\vspace{5mm}
\noindent\makebox[\textwidth]{\rule{\textwidth}{1.0pt}}
\begin{minipage}[t]{0.4\textwidth}
\centering 
\textbf{Christopher Grimm} \\
Computer Science \& Engineering \\
University of Michigan \\
\texttt{crgrimm@umich.edu}
\end{minipage}%
\begin{minipage}[t]{0.6\textwidth}
\centering 
\textbf{Andr\'{e} Barreto, Satinder Singh, David Silver} \\
DeepMind \\
\texttt{\{andrebarreto,baveja,davidsilver\}@google.com}
\end{minipage}
\newline 
\end{center}

\maketitle

In this supplement we give details of our theoretical results and experiments that had to be left out of the main paper due to space constraints. We prove our theoretical results and provide a detailed description of our experimental procedure. Importantly, we present an illustrative example showing how value equivalence (VE) may lead to a better solution for a Markov decision process (MDP) than maximum-likelihood estimate (MLE). \emph{We show this to be true both in the exact case, when there exist a value-equivalent model in the model class considered, and in the approximate case, when such a model does not exist in the model class.}
Our appendix is organized as follows:
\begin{itemize}
\item Section~\ref{sec:proofs2} contains derivations of the properties and propositions presented in the main text.
\item Section~\ref{sec:subspace_example} contains a sequence of examples using a toy MDP that illustrate points made in the discussion surrounding Propositions~\ref{thm:mle}~and~\ref{thm:vip}. 
Moreover, we include an additional result which illustrates a situation in which approximate VE models can outperform the MLE model.
\item Section~\ref{sec:details_exp} provides a detailed outline of the pipeline used across our experiments in the main text. We also report several additional results that had to be left out of the main paper due to space constraints. 
\end{itemize}

The numbering of equations, figures and citations resume from what is used in the main paper.

\section{Appendix}

\subsection{Proofs of theoretical results and illustrative examples}
\label{sec:proofs}

\subsubsection{Proofs}
\label{sec:proofs2}

\proppertm*
\begin{proof} 
 This result directly follows from Definitions~\ref{def:ve} and~\ref{def:vem}.
\end{proof}

\proppindown*
\begin{proof}
$\M(\PS, \FS) \subseteq \MS(\PS, \FS) = \{ m^*\}$ (Property~\ref{thm:pertm}). 
\end{proof}

\proppertpiv*
\begin{proof}
 We will show the result by contradiction. Suppose there is a model $\mt \in \M(\Pi, \V)$ such that $\mt \notin \M(\Pi', \V')$. This means that there exists a $\pi \in \Pi'$ and a $v \in \V'$ for which $\Tpit v \ne \Tpi v$. But since $\Pi' \subseteq \Pi$ and $\V' \subseteq \V$, it must be the case that $\pi \in \Pi$ and $v \in \V$, which contradicts the claim that $\mt \in \M(\Pi, \V)$. 
\end{proof}

\propcover*
\begin{proof}
$m^* \in \M(\PS, \FS) \subseteq \M(\Pi, \V)$ (Property~\ref{thm:pertpiv}).
\end{proof}


\proplinearspan*
\begin{proof}
Let $\pi \in \pspan(\Pi) \cap \PS$. Based on~(\ref{eq:cspan}), we know that there exists an $\alpha_s \in \R^{|\Pi|}$ such that $\pi(\cdot | s) = \sum_i \alpha_{si} \pi_i(\cdot|s)$, where $\pi_i \in \Pi$. Thus, for $\mt \in \M(\Pi, \V)$, we can write
\begin{equation*}
\begin{array}{rl}
\Tpit[v](s) 
& = \E_{A \sim \pi(\cdot | s), S' \sim  \tilde{p}(\cdot | s, A)} \left[\tilde{r}(s,A)  + \gamma v(S')\right] \\
& = \int \pi(a| s) \E_{S' \sim  \tilde{p}(\cdot | s, a)} \left[\tilde{r}(s,a)  + \gamma v(S')\right] da \\
& = \int \sum_i \alpha_{si} \pi_i(a|s) \E_{S' \sim  \tilde{p}(\cdot | s, a)} \left[\tilde{r}(s,a)  + \gamma v(S')\right] da \\
& = \sum_i \alpha_{si} \int  \pi_i(a|s) \E_{S' \sim  \tilde{p}(\cdot | s, a)} \left[\tilde{r}(s,a)  + \gamma v(S')\right] da \\
& = \sum_i \alpha_{si} \E_{A \sim \pi_i(\cdot | s), S' \sim  \tilde{p}(\cdot | s, a)} \left[\tilde{r}(s,a)  + \gamma v(S')\right] \\
& = \sum_i \alpha_{si} \T^{\pi_i}[v](s). \\
\end{array}
\end{equation*}
Let  $v \in \span(\V)$. We know there is a $\beta \in \R^{|\V|}$ such that $v = \sum_i \beta_i \v_i$, with $v_i \in \V$.
\begin{equation*}
\begin{array}{rl}
\Tpit[v](s) 
& = \E_{A \sim \pi(\cdot | s), S' \sim  \tilde{p}(\cdot | s, A)} \left[\tilde{r}(s,A)  + \gamma \sum_i \beta_i v_i(S')\right] \\
& = \sum_i \beta_i \E_{A \sim \pi(\cdot | s), S' \sim  \tilde{p}(\cdot | s, A)} \left[\tilde{r}(s,A)  + \gamma v_i(S')\right] \\
& = \sum_i \beta_i \Tpit[v_i](s).  \\
\end{array}
\end{equation*}
\end{proof}

In order to prove Proposition~\ref{thm:pindown} we will need four lemmas which we state and prove below.

\begin{lemma}
\label{lemma:prod_rank}
For arbitrary matrices $\bs{A} \in \mathbb{R}^{k \times n}, \bs{C} \in \mathbb{R}^{m \times \ell}$, we can construct a vector-space
$\mathcal{B} = \{ \bs{B} \in \mathbb{R}^{n \times m} : \bs{A}\bs{B}\bs{C} = \boldsymbol{0} \}$
where $\boldsymbol{0}$ denotes a $k \times \ell$ matrix of zeros. 
It follows that 
\begin{equation}
\hdim[\mathcal{B}] = nm - \rank(\bs{A})\cdot \rank(\bs{C}). 
\end{equation}
\end{lemma}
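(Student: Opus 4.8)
\textbf{Proof proposal for Lemma~\ref{lemma:prod_rank}.}
The plan is to recognize $\mathcal{B}$ as the kernel of a linear map and compute its nullity. First I would observe that the map $\Phi : \mathbb{R}^{n \times m} \to \mathbb{R}^{k \times \ell}$ defined by $\Phi(\bs{B}) = \bs{A}\bs{B}\bs{C}$ is linear, so $\mathcal{B} = \ker \Phi$ is a vector-space (this also justifies the phrasing in the statement). By the rank–nullity theorem, $\hdim[\mathcal{B}] = nm - \rank(\Phi)$, so the whole lemma reduces to showing $\rank(\Phi) = \rank(\bs{A})\cdot\rank(\bs{C})$.

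To compute $\rank(\Phi)$ I would pass to canonical (Smith-style) forms for $\bs{A}$ and $\bs{C}$. Let $r_A = \rank(\bs{A})$ and $r_C = \rank(\bs{C})$. Choose invertible matrices $\bs{P} \in \mathbb{R}^{k\times k}$, $\bs{Q} \in \mathbb{R}^{n\times n}$ with $\bs{P}\bs{A}\bs{Q} = \left(\begin{smallmatrix} \I_{r_A} & 0 \\ 0 & 0 \end{smallmatrix}\right)$, and invertible $\bs{R} \in \mathbb{R}^{m\times m}$, $\bs{S} \in \mathbb{R}^{\ell\times\ell}$ with $\bs{R}\bs{C}\bs{S} = \left(\begin{smallmatrix} \I_{r_C} & 0 \\ 0 & 0 \end{smallmatrix}\right)$. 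Then the substitution $\bs{B}' = \bs{Q}^{-1}\bs{B}\bs{R}^{-1}$ is a linear isomorphism of $\mathbb{R}^{n\times m}$, and since $\bs{P}, \bs{S}$ are invertible, $\bs{A}\bs{B}\bs{C} = \bs{0}$ holds if and only if $(\bs{P}\bs{A}\bs{Q})\,\bs{B}'\,(\bs{R}\bs{C}\bs{S}) = \bs{0}$. Multiplying out the two canonical forms, this condition says exactly that the top-left $r_A \times r_C$ block of $\bs{B}'$ vanishes, while all other entries are free. Hence the solution set, in the $\bs{B}'$ coordinates, has dimension $nm - r_A r_C$, and transporting back through the isomorphism gives $\hdim[\mathcal{B}] = nm - r_A r_C = nm - \rank(\bs{A})\cdot\rank(\bs{C})$.

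As an alternative I would keep in mind the vectorization identity $\mathrm{vec}(\bs{A}\bs{B}\bs{C}) = (\bs{C}^\top \otimes \bs{A})\,\mathrm{vec}(\bs{B})$, so that $\Phi$ is represented by the Kronecker product $\bs{C}^\top \otimes \bs{A}$, whose rank is $\rank(\bs{C}^\top)\rank(\bs{A}) = \rank(\bs{C})\rank(\bs{A})$; this yields the same conclusion but presupposes the rank-of-Kronecker-product fact. I expect no serious obstacle here: the only thing to be careful about is bookkeeping of the matrix sizes and checking that $\bs{B}\mapsto\bs{B}'$ really is a bijection on the whole space $\mathbb{R}^{n\times m}$ (it is, being composition of left/right multiplication by invertible matrices), so that dimensions are preserved. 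The mild subtlety worth spelling out is why zeroing the top-left $r_A\times r_C$ block is the \emph{exact} constraint — i.e., that the product of the two block-diagonal canonical forms with a general $\bs{B}'$ selects precisely that block — which is a direct block-multiplication check.
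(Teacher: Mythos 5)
Your proposal is correct and complete, but it takes a genuinely different route from the paper. The paper's proof vectorizes $\bs{B}$ into $\bs{b}\in\R^{nm}$ and hand-builds the coefficient matrix $\bs{D}$ whose rows $\bs{d}^{ij}$ are the entrywise products of row $\bs{a}^i$ of $\bs{A}$ with column $\bs{c}^j$ of $\bs{C}$ (i.e.\ it constructs the Kronecker product explicitly), applies rank--nullity to $\bs{D}\bs{b}=\boldsymbol{0}$, and then argues directly that $\rank(\bs{D})=\rank(\bs{A})\cdot\rank(\bs{C})$ by exhibiting scaled copies of the $\bs{a}^i$ and $\bs{c}^j$ inside the $\bs{d}^{ij}$ and deducing linear independence --- an argument that is really only spelled out when the relevant rows of $\bs{A}$ and columns of $\bs{C}$ are themselves linearly independent. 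Your change-of-basis argument via the rank normal forms $\bs{P}\bs{A}\bs{Q}$ and $\bs{R}\bs{C}\bs{S}$ reduces the constraint to the vanishing of the top-left $r_A\times r_C$ block of $\bs{B}'$, from which the dimension is read off immediately; this sidesteps the Kronecker-rank computation entirely and handles arbitrary (non-full-rank) $\bs{A}$ and $\bs{C}$ with no extra work, at the cost of invoking the existence of the rank normal form. Your parenthetical Kronecker-product alternative, $\mathrm{vec}(\bs{A}\bs{B}\bs{C})=(\bs{C}^\top\otimes\bs{A})\mathrm{vec}(\bs{B})$, is essentially a packaged version of what the paper does by hand. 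Either of your two routes establishes the lemma; the only step you flag as needing care --- that the product of the two canonical forms with a general $\bs{B}'$ selects exactly the top-left block --- is indeed a one-line block-multiplication check and poses no difficulty.
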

\begin{proof}
We begin by converting the condition $\bs{A}\bs{B}\bs{C} = \boldsymbol{0}$ into a  matrix-vector product.
Let $\bs{a}^i$ and $\bs{c}^j$ denote the i'th row of $\bs{A}$ and j'th column of $\bs{C}$ respectively. 
Observe that $(\bs{A}\bs{B}\bs{C})_{ij} = \bs{a}^i \bs{B} \bs{c}^j = \sum_{x,y} \bs{a}^i_x \bs{c}^j_y \bs{B}_{xy}$, which implies that 
\begin{equation}
\bs{A}\bs{B}\bs{C} = \boldsymbol{0} \iff \sum_{x, y} \bs{a}^i_x \bs{c}^j_y \bs{B}_{xy} = 0 \ \ \forall i \in [k], j \in [\ell]
\end{equation}
where $[k]$ denotes $\{1, \ldots, k\}$.

For each $(i, j)$ pair, the above expression is suggestive of a dot-product between two $n \times m$ vectors: a combination of $\bs{a}^i$ and $\bs{c}^j$, and a ``flattened'' version of $\bs{B}$.
Define the former combination of vectors as $\bs{d}^{ij} = [\bs{a}^i_1\bs{c}^j_1, \bs{a}^i_1 \bs{c}^j_2, \cdots,  \bs{a}^i_n \bs{c}^j_m]^\top \in \R^{nm \times 1}$, and stack them as rows as: $\bs{D} = [ \bs{d}^{11},  \bs{d}^{12}, \cdots, \bs{d}^{nm} ]^\top \in \mathbb{R}^{k\ell \times nm}$.
To flatten $\bs{B}$, simply define $
\bs{b} = [\bs{B}_{11}, \bs{B}_{12}, \cdots,  \bs{B}_{nm}]^\top \in \R^{nm \times 1}$.

We now have that $\bs{A}\bs{B}\bs{C} = \boldsymbol{0} \iff  \bs{D}\bs{b} = \boldsymbol{0}$. Moreover, unravelling the matrices in $\mathcal{B}$ does not change the dimension of the space, thus:
\begin{equation}
\hdim[\mathcal{B}] = \hdim[\{ \bs{b} \in \mathcal{R}^{nm \times 1} : \bs{D}\bs{b} = \boldsymbol{0} \}] = nm - \rank(\bs{D})
\end{equation} 
where the last equality comes from a application of the rank-nullity theorem.

Finally notice that the construction of $\bs{d}^{ij}$ can be thought of as vertically stacking $n$ copies of $\bs{c}^j$ each scaled by a different entry in $\bs{a}^i$. We can also find scaled copies of $\bs{a}^i$ by $\bs{c}^j_k$ in $\bs{d}^{ij}$ by selecting indices from the combined vector at regular intervals of $m$:
$\bs{d}^{ij}_{k + (\ell-1)m} = \bs{c}^j_k \cdot \bs{a}^i_\ell$ for $\ell \in \{1, \ldots n\}$.

This means that scaled copies of both $\bs{a}^i $ and $\bs{c}^j$ can be found by selecting specific groups of  indices in $\bs{d}^{ij}$.
It follows that if $\bs{a}^1, \ldots, \bs{a}^n$ are linearly independent then so are $\bs{d}^{1j}, \ldots, \bs{d}^{nj}$ for any $j$.
And similarly, if $\bs{c}^1, \ldots, \bs{c}^m$ are linearly independent then so are $\bs{d}^{i1}, \ldots, \bs{d}^{im}$ for any $i$. 
Hence if $\bs{a}^1, \ldots \bs{a}^n$ and $\bs{c}^1, \ldots, \bs{c}^m$ are both linearly independent sets, then so is $\bs{d}^{11}, \bs{d}^{12}, \ldots, \bs{d}^{nm}$. 
Since these $\bs{a}^i$ and $\bs{c}^j$ vectors form the rows and columns of rank $n$ and $m$ matrices: $\bs{A}$ and $\bs{C}$, their corresponding sets of row and column vectors are linearly independent.
Thus we have that $\rank(\bs{D}) = \rank(\bs{A}) \cdot \rank(\bs{C})$, completing the proof.
\end{proof}

\begin{lemma}
\label{lemma:dim_translational_invariance}
For any $\bs{c}$ and $\mathcal{Y} + \bs{c} = \{y + \bs{c} : y \in \mathcal{Y} \}$ it follows that $\dim[\mathcal{Y} + \bs{c}] = \dim[\mathcal{Y}]$.
\end{lemma}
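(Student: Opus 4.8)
The plan is to work directly from the definition $\dim[\mathcal{X}] = \min_{(\mathcal{W},c)\in W(\mathcal{X})} \hdim[\mathcal{W}]$, where $W(\mathcal{X}) = \{(\mathcal{W},c) : \mathcal{X}+c \subseteq \mathcal{W}\}$. The key observation is that the objective being minimized, $\hdim[\mathcal{W}]$, depends only on the vector-space component $\mathcal{W}$ and not on the offset $c$. So it suffices to exhibit a bijection between the feasible sets $W(\mathcal{Y}+\bs{c})$ and $W(\mathcal{Y})$ that preserves the first coordinate $\mathcal{W}$; the two minimizations then range over identical collections of achievable values of $\hdim[\mathcal{W}]$, and hence have the same minimum.

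The first step is to unfold the membership condition. For an arbitrary vector space $\mathcal{W}$ and offset $c'$, using only associativity of Minkowski translation ($(\mathcal{Y}+\bs{c})+c' = \mathcal{Y}+(\bs{c}+c')$) one gets the chain
\[
(\mathcal{W},c')\in W(\mathcal{Y}+\bs{c}) \iff (\mathcal{Y}+\bs{c})+c' \subseteq \mathcal{W} \iff \mathcal{Y}+(\bs{c}+c') \subseteq \mathcal{W} \iff (\mathcal{W},\,\bs{c}+c')\in W(\mathcal{Y}).
\]
Then I would define $\Phi : W(\mathcal{Y}+\bs{c}) \to W(\mathcal{Y})$ by $\Phi(\mathcal{W},c') = (\mathcal{W},\,\bs{c}+c')$; the equivalence above shows $\Phi$ is well defined, and the map $(\mathcal{W},c'') \mapsto (\mathcal{W},\,c''-\bs{c})$ is a two-sided inverse, so $\Phi$ is a bijection that leaves $\mathcal{W}$ fixed. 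Consequently $\{\hdim[\mathcal{W}] : (\mathcal{W},c')\in W(\mathcal{Y}+\bs{c})\} = \{\hdim[\mathcal{W}] : (\mathcal{W},c'')\in W(\mathcal{Y})\}$ as sets of (possibly infinite) cardinals, and taking the minimum of each side gives $\dim[\mathcal{Y}+\bs{c}] = \dim[\mathcal{Y}]$.

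There is no real obstacle here; the argument is essentially a relabeling. The only points worth a line of care are that both feasible sets are nonempty — e.g. taking $\mathcal{W}$ to be the ambient vector space containing $\mathcal{Y}$ and offset $0$ witnesses $(\mathcal{W},0)\in W(\mathcal{Y})$ and, by $\Phi^{-1}$, the analogous pair for $\mathcal{Y}+\bs{c}$ — so the two minima are genuinely taken over the same nonempty set, and that "$\min$" should be read as an infimum if one worries about attainment, a definitional subtlety already present in the statement of $\dim[\cdot]$ that does not affect the equality.
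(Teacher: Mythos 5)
Your proof is correct and takes essentially the same approach as the paper's: both arguments reindex the offset via $c' \mapsto \bs{c} + c'$, observing that the objective $\hdim[\mathcal{W}]$ depends only on $\mathcal{W}$, so the two minimizations range over the same set of values. Your version merely makes the implicit bijection and nonemptiness of the feasible sets explicit, which the paper's one-line chain of equalities leaves to the reader.
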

\begin{proof}
\begin{equation*}
\dim[\mathcal{Y}+\bs{c}] = \underset{(\mathcal{V}, \bs{c}') : \mathcal{Y} + (\bs{c} + \bs{c}') \subseteq \mathcal{W}}{\min} \hdim[\mathcal{W}] = \underset{(\mathcal{W}, \bs{c}') : \mathcal{Y} + \bs{c}' \subseteq \mathcal{W}}{\min} \hdim[\mathcal{W}] = \dim[\mathcal{Y}]
\end{equation*}
\end{proof}

\begin{lemma}
\label{lemma:vs_equality}
If $\mathcal{Y}$ is a vector-space then $\hdim[\mathcal{Y}] = \dim[\mathcal{Y}]$.
\end{lemma}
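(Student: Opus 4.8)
\textbf{Proof proposal for Lemma~\ref{lemma:vs_equality}.} The plan is to show the two inequalities $\dim[\mathcal{Y}] \le \hdim[\mathcal{Y}]$ and $\dim[\mathcal{Y}] \ge \hdim[\mathcal{Y}]$ separately, using the definition $\dim[\mathcal{Y}] = \min_{(\mathcal{W},c) \in W(\mathcal{Y})} \hdim[\mathcal{W}]$ with $W(\mathcal{Y}) = \{(\mathcal{W},c) : \mathcal{Y} + c \subseteq \mathcal{W}\}$.

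For the first inequality, the idea is simply to exhibit a feasible pair: take $\mathcal{W} = \mathcal{Y}$ and $c = 0$. Since $\mathcal{Y}$ is a vector space, $\mathcal{Y} + 0 = \mathcal{Y} \subseteq \mathcal{Y}$, so $(\mathcal{Y}, 0) \in W(\mathcal{Y})$, and hence $\dim[\mathcal{Y}] \le \hdim[\mathcal{Y}]$ because the minimum defining $\dim$ is taken over a set that includes this pair.

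For the reverse inequality, I would show that every feasible pair $(\mathcal{W}, c) \in W(\mathcal{Y})$ satisfies $\hdim[\mathcal{W}] \ge \hdim[\mathcal{Y}]$, from which $\dim[\mathcal{Y}] \ge \hdim[\mathcal{Y}]$ follows by taking the minimum. The key observation is that $0 \in \mathcal{Y}$ (since $\mathcal{Y}$ is a vector space), so $c = 0 + c \in \mathcal{Y} + c \subseteq \mathcal{W}$; thus $c \in \mathcal{W}$. Then for any $y \in \mathcal{Y}$ we have $y + c \in \mathcal{W}$ and $c \in \mathcal{W}$, and since $\mathcal{W}$ is closed under subtraction, $y = (y + c) - c \in \mathcal{W}$. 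Hence $\mathcal{Y} \subseteq \mathcal{W}$, and since any set of mutually linearly independent vectors in $\mathcal{Y}$ is also such a set in $\mathcal{W}$, monotonicity of Hamel dimension under subspace inclusion gives $\hdim[\mathcal{Y}] \le \hdim[\mathcal{W}]$.

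Combining the two inequalities yields $\dim[\mathcal{Y}] = \hdim[\mathcal{Y}]$. There is no serious obstacle here; the only point needing a word of justification is the monotonicity $\mathcal{Y} \subseteq \mathcal{W} \implies \hdim[\mathcal{Y}] \le \hdim[\mathcal{W}]$, which is a standard fact (a basis of $\mathcal{Y}$ is a linearly independent subset of $\mathcal{W}$, and every linearly independent set extends to a Hamel basis). The conceptual crux is recognizing that membership of $0$ in $\mathcal{Y}$ pins the offset $c$ inside $\mathcal{W}$, which collapses the translated-containment condition to ordinary subspace containment.
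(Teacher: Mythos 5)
Your proof is correct and follows essentially the same route as the paper's: both establish $\dim[\mathcal{Y}] \le \hdim[\mathcal{Y}]$ via the feasible pair $(\mathcal{Y}, 0)$, and both use the fact that $0 \in \mathcal{Y}$ forces $c \in \mathcal{W}$ and hence $\mathcal{Y} \subseteq \mathcal{W}$ for any feasible $(\mathcal{W}, c)$. The only cosmetic difference is that the paper packages the second half as a proof by contradiction while you argue it directly.
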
 

\begin{proof}
Recall the definition of $\dim[\mathcal{Y}]$:
\begin{equation*}
\dim[\mathcal{Y}] = \underset{(\mathcal{W}, \bs{c}) : \mathcal{Y} + \bs{c} \subseteq \mathcal{W}}{\min} \hdim[\mathcal{W}]
\end{equation*}
where $\mathcal{W}$ is a vector-space.
By choosing $\mathcal{W} = \mathcal{Y}$ and $\bs{c} = \bs{0}$ we see that $\dim[\mathcal{Y}] \leq \hdim[\mathcal{Y}]$. 

Suppose then that $\dim[\mathcal{Y}] < \hdim[\mathcal{Y}]$. 
This implies that there is a vector space $\mathcal{W}$ and offset $\bs{c}$ with $d = \hdim[\mathcal{W}] < \hdim[\mathcal{Y}]$ and $\mathcal{Y} + \bs{c} \subseteq \mathcal{W}$. 
This means that for every $\bs{y} \in \mathcal{Y}$: $\bs{y} + \bs{c} = \sum_{i=1}^d \alpha_i^{\bs{y}} \bs{w}_i$ for some $\alpha_{1:d}^y$ where $\bs{w}_{1:d}$ are a basis of $\mathcal{W}$. Since $\mathcal{Y}$ is a vector space it must contain the $\boldsymbol{0}$ vector, hence $\bs{c} = \sum_{i=1}^d \alpha_i^{\boldsymbol{0}} \bs{w}_i$. 
Accordingly any $\bs{y} \in \mathcal{Y}$ can be written as $\bs{y} = \sum_{i=1}^d (\alpha_i^{\bs{y}} - \alpha_i^{\boldsymbol{0}}) \bs{w}_i$. However, this is a contradiction since $\hdim[\mathcal{W}] < \hdim[\mathcal{Y}]$.
Hence $\dim[\mathcal{Y}] = \hdim[\mathcal{Y}]$.
\end{proof}

\begin{lemma}
\label{lemma:dim_subset}
If $\mathcal{X} \subseteq \mathcal{Y}$ then $\dim[\mathcal{X}] \leq \dim[\mathcal{Y}]$. 
\end{lemma}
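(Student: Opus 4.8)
The plan is to work directly from the definition of $\dim[\cdot]$ and exploit the monotonicity of an infimum with respect to its feasible set. Recall that $\dim[\mathcal{X}] = \min_{(\mathcal{W},c)\in W(\mathcal{X})} \hdim[\mathcal{W}]$, where $W(\mathcal{X}) = \{(\mathcal{W},c) : \mathcal{X} + c \subseteq \mathcal{W}\}$ ranges over pairs of a vector space $\mathcal{W}$ and an offset $c$. So the only thing to establish is a containment between the two feasible sets $W(\mathcal{Y})$ and $W(\mathcal{X})$.

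First I would observe that if $\mathcal{X} \subseteq \mathcal{Y}$, then for any fixed offset $c$ we have $\mathcal{X} + c \subseteq \mathcal{Y} + c$. Hence, whenever a pair $(\mathcal{W}, c)$ satisfies $\mathcal{Y} + c \subseteq \mathcal{W}$, it automatically satisfies $\mathcal{X} + c \subseteq \mathcal{Y} + c \subseteq \mathcal{W}$, i.e.\ $(\mathcal{W},c) \in W(\mathcal{X})$. This gives $W(\mathcal{Y}) \subseteq W(\mathcal{X})$.

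Then I would conclude: since $W(\mathcal{Y}) \subseteq W(\mathcal{X})$, the minimum of $\hdim[\mathcal{W}]$ over the larger set $W(\mathcal{X})$ is no greater than the minimum over the smaller set $W(\mathcal{Y})$, that is, $\dim[\mathcal{X}] = \min_{(\mathcal{W},c)\in W(\mathcal{X})}\hdim[\mathcal{W}] \le \min_{(\mathcal{W},c)\in W(\mathcal{Y})}\hdim[\mathcal{W}] = \dim[\mathcal{Y}]$. (One should also note that both feasible sets are nonempty, so these minima are well defined: e.g.\ the pair consisting of the full ambient space and offset $0$ always lies in $W(\mathcal{X})$ and in $W(\mathcal{Y})$.)

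There is essentially no main obstacle here — the result is a direct consequence of "minimizing over a bigger set can only decrease the value," and the only care needed is in checking that the membership $W(\mathcal{Y}) \subseteq W(\mathcal{X})$ holds for the \emph{same} offset $c$, which is immediate. If one wanted to be extra careful about whether the extremum is attained (since $\hdim$ may be infinite), one could phrase everything with infima instead of minima; the argument is unchanged.
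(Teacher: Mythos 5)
Your proof is correct and follows essentially the same route as the paper's: both arguments observe that $\mathcal{X} \subseteq \mathcal{Y}$ implies $\mathcal{X} + c \subseteq \mathcal{Y} + c$ for any offset $c$, deduce the containment of feasible sets $W(\mathcal{Y}) \subseteq W(\mathcal{X})$, and conclude by monotonicity of the minimum. Your added remark about nonemptiness of the feasible sets is a small but welcome extra check that the paper omits.
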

\begin{proof}
If $\mathcal{X} \subseteq \mathcal{Y}$ then for any $\bs{c}$,  $\mathcal{X} + \bs{c} \subseteq \mathcal{Y} + \bs{c}$.
Because of the above, for any vector-space $\mathcal{W}$: $\mathcal{W} \supseteq \mathcal{Y} + \bs{c} \implies \mathcal{W} \supseteq \mathcal{X} + \bs{c}$, hence: 
$\{(\mathcal{W}, c) : \mathcal{X} + \bs{c} \subset \mathcal{W} \} \supseteq \{ (\mathcal{W}, \bs{c}) : \mathcal{Y} + \bs{c} \subset \mathcal{W} \}$. 
Notice that this last set-relation corresponds the set of vector-spaces that $\dim[\cdot]$ is minimizing over for $\mathcal{X}$ and $\mathcal{Y}$ respectively. Hence $\dim[\mathcal{X}] \leq \dim[\mathcal{Y}]$.
\end{proof}

\proplineargrowth*
\begin{proof}
First note that if $\ppi_i \notin \pspan(\Pi \setminus \{ \ppi_i \})$ then $\ppi_i \notin \span(\Pi \setminus \{ \ppi_i \})$.
Hence, pointwise linear independence implies linear independence.

Since $|\S|$ and $|\A|$ are finite, we can assume that $\A = \{1, \ldots, |\A||\}$ and $\S = \{1, \ldots, |\S|\}$.
For any transition probability kernel $\tilde{p}(s' | s, a)$ we can construct matrix $\tilde{\boldsymbol{P}} \in \R^{|\S||\A| \times |\S|}$ with $\tilde{\P}_{(a-1)|\S| + s,s'} = \tilde{p}(s'|s,a)$.
Denote the constructed matrix corresponding to the true dynamics as $\boldsymbol{P}$.
For any $\boldsymbol{\pi}_i$ we can construct a matrix $\boldsymbol{\Pi}_i \in \R^{|S| \times |S||A|}$ with $(\boldsymbol{\Pi}_i)_{s, (a - 1)|\S| + s} = \pi_i(a | s)$. 
Vertically stack these $m$ \  $\boldsymbol{\Pi}_i$ matrices to construct $\boldsymbol{\Pi} \in \mathbb{R}^{m|S| \times |S||A|}$.
Additionally we construct $\boldsymbol{V} \in \R^{|S| \times k}$ with $\boldsymbol{V}_{j, \ell} = (\boldsymbol{\v_\ell})_j$.
Note that $\mathbb{P}(\Pi, \V) = \{ \tilde{\boldsymbol{P}} \in \mathbb{P} : \boldsymbol{\Pi}(\tilde{\boldsymbol{P}} - \boldsymbol{P})\boldsymbol{V} = \boldsymbol{0} \}$.
Define the sets 
$\mathcal{X} = \{ \boldsymbol{X} \in \mathbb{R}^{|S||A| \times |S|} : \boldsymbol{\P} \boldsymbol{X} \boldsymbol{V} = \boldsymbol{0} \}$
and
$\mathcal{Y} = \{ \tilde{\boldsymbol{P}} \in \mathbb{R}^{|S||A| \times |S|} : \boldsymbol{\Pi}(\tilde{\boldsymbol{P}} - \boldsymbol{P})\boldsymbol{V} = \boldsymbol{0} \}$.

Note the following three facts:
\begin{enumerate}
\item $\dim[\mathcal{X}] = \dim[\mathcal{Y}]$ since our notion of dimension is translation-invariant (Lemma \ref{lemma:dim_translational_invariance}).
\item $\dim[\mathcal{X}] = \hdim[\mathcal{X}]$ since $\mathcal{X}$ is a vector-space (Lemma \ref{lemma:vs_equality}). 
\item $\mathbb{P}(\Pi, \V) \subseteq \mathcal{Y}$ which implies that $\dim[\mathbb{P}(\Pi, \V)] \leq \dim[\mathcal{Y}]$ (Lemma \ref{lemma:dim_subset}).
\end{enumerate}

Taken together this gives us that
\begin{equation*}
\dim[\mathbb{P}(\Pi, \V)] \leq \dim[\mathcal{Y}] = \hdim[\mathcal{X}].
\end{equation*}

We can now apply Lemma \ref{lemma:prod_rank} to obtain $\dim[\mathcal{X}] = |S|^2|A| - k \cdot \rank(\boldsymbol{\Pi})$. Notice that $\rank(\boldsymbol{\Pi}) = \min \{ |S||A|, m|S| \}$. Thus $\dim[\mathbb{P}(\Pi, \mathcal{V})] \leq  |S|(|S||A| - mk)$ as needed.
\end{proof}

\propmle* 
\begin{proof}
 Suppose we are trying to estimate a transition matrix $\P \in \R^{n \times n}$ and choose to use one parameter $\theta_i \in \R$ per row. Specifically, we parametrize the distribution on the $i$-th row as 
\begin{equation*}
\pt_{ii} = \theta_i \text{ and } \pt_{ij} = (1-\theta_i)/(n-1), \text{ for } i \ne j, \text{ with } \theta_i \in [0,1],
\end{equation*}
where $p_{ij} = p(s_j | s_i)$.
We can then write the expected likelihood function for $\vtheta \in \R^n$ as 
\begin{equation*}
\begin{array}{cl}
m(\vtheta) 
& = \sum_{i} \left[ p_{ii} \ln \theta_i + \sum_{j \ne i} p_{ij} \ln (1-\theta_i) - \sum_{j \ne i} p_{ij} \ln (n-1) \right] \\ 
& = \sum_{i} \left[ p_{ii} \ln \theta_i + (1 - p_{ii}) \ln (1-\theta_i) - (1 - p_{ii} ) \ln (n-1) \right], \\
\end{array}
\end{equation*}
which leads to the likelihood equation
\begin{equation*}
\begin{array}{cl}
0 = \dfrac{\partial m(\vtheta)}{\theta_i}  = \dfrac{p_{ii}}{\theta_i} + \dfrac{1 - p_{ii}}{\theta_i - 1} = \dfrac{p_{ii}(\theta_i-1) + (1 - p_{ii}) \theta_i}{\theta_i (\theta_i - 1)}
= \dfrac{\theta _i - p_{ii}}{\theta_i (\theta_i - 1)}. \\
\end{array}
\end{equation*}
The MLE solution is thus to have $\theta_i = p_{ii}$ for $i = 1, 2,..., n$. This means that the solution provided by MLE will not be exact if and only if 
\begin{equation}
\label{eq:mle_inexact}
p_{ij} \ne p_{ik} \text{ for any } (i,j,k) \text{ such that } i \ne j \ne k.
\end{equation}
 Now, suppose we have $\V = \{ v \}$ with $v_i =1$ for some $i$ and $v_j = 0$ for $j \ne i$. In this case it is possible to get an exact value-equivalent solution---that is, $\P\v = \Pt \v$--- by making $\theta_i = p_{ii}$ and $\theta_j = 1 - (n-1) p_{ii}$ for $j \ne i$, regardless of whether~(\ref{eq:mle_inexact}) is true or not. 
\end{proof}

\propvip*
\begin{proof}
Denote the Bellman operator under a policy that always selects action $a$ as $\mathcal{T}_a$, 
the greedy Bellman operator as $\mathcal{T}v = \max_{a} \mathcal{T}_a v$ and the Bellman operator under a policy $\pi$ as $\mathcal{T}_\pi$, as before. Let $\mathcal{T}^{(n)}v$ represent $n$ successive applications of operator $\mathcal{T}$ on value $v$. 

Note that for any $v \in \FS$ we can construct a $\pi_v(s) = \text{argmax}_a (\mathcal{T}_a v)(s)$ such that $\mathcal{T}v = \max_{a}\mathcal{T}_a v =  \mathcal{T}_{\pi_v} v$. 
This implies that the greedy Bellman operator is included in the assumption of our proposition:
\begin{equation}\label{eq:greedy_op}
v \in \V' \implies \mathcal{T}v \in \V'.
\end{equation}

We now begin by showing that:
\begin{equation}
\label{eq:main_imp}
\mathcal{T}^{(n)}v = \tilde{\mathcal{T}}^{(n)} v \in \V' \implies \mathcal{T}^{(n+1)}v = \tilde{\mathcal{T}}^{(n+1)} v \in \V'
\end{equation}
for any $v \in \FS$ and any $n > 0$.
Assume that $\mathcal{T}^{(n)} v = \tilde{\mathcal{T}}^{(n)} v \in \V'$. 
Since $\mathcal{T}^{(n)} v \in \V'$ and $\V' = \span(\V)$, we can use use value equivalence to obtain:
\begin{equation*}
\mathcal{T}_a \mathcal{T}^{(n)} v = \tilde{\mathcal{T}}_a \mathcal{T}^{(n)} v.
\end{equation*}
for any $a \in \A$. Next, since $\mathcal{T}^{(n)}v = \tilde{\mathcal{T}}^{(n)}v$ we can write:
\begin{equation}
\label{eq:eq_ts}
\mathcal{T}_a \mathcal{T}^{(n)} v =  \tilde{\mathcal{T}}_a \tilde{\mathcal{T}}^{(n)} v. 
\end{equation}

Since~(\ref{eq:eq_ts}) holds for any $a \in \A$, we can write:
\begin{equation*}
\mathcal{T}^{(n+1)} v = \max_{a} \mathcal{T}_a \mathcal{T}^{(n)} v = \max_a \tilde{\mathcal{T}}_a \tilde{\mathcal{T}}^{(n)} v = \tilde{\mathcal{T}}^{(n+1)} v. 
\end{equation*}
We know from (\ref{eq:greedy_op}) that the fact that  $\mathcal{T}^{(n)}v \in \V'$ implies that $\mathcal{T}^{(n+1)}v \in \V'$.
Thus we have shown that~(\ref{eq:main_imp}) is true.

Finally, by choosing $v \in \V'$ and using analogous reasoning as as above, we can show that $\mathcal{T}_a v = \tilde{\mathcal{T}}_a v$ and $\mathcal{T}v = \max_a \mathcal{T}_a v = \max_a \tilde{\mathcal{T}}_a v = \tilde{\mathcal{T}} v$, and since $v \in \V'$, $\tilde{\mathcal{T}}v = \mathcal{T}v \in \V'$.
Thus $\mathcal{T}^{(n)}v = \mathcal{T}^{(n)}v$ for all $n \in \mathbb{N}$.
This is sufficient to conclude that
\begin{equation*}
\tilde{v}^* = \lim_{n \to \infty} \tilde{\mathcal{T}}^{(n)} v' = \lim_{n \to \infty} \mathcal{T}^{(n)} v' = v^*,
\end{equation*}
as needed.

\end{proof}

\subsubsection{Examples with a simple MDP}
\label{sec:subspace_example}
\begin{wrapfigure}{r}{0.4\textwidth}
\centering
\includegraphics[scale=0.7]{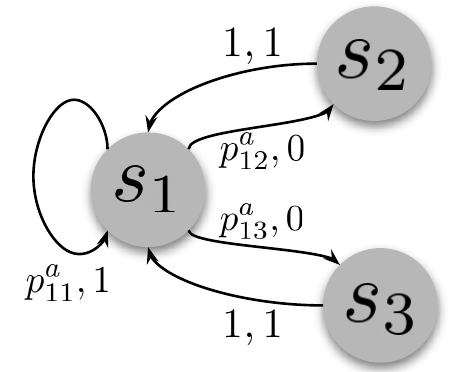}
\caption{} 
\label{fig:example_mdp}
\end{wrapfigure}
Consider the 3 state MDP with states $s_1, s_2, s_3$ and actions $\A = \{a_1, a_2\}$.
Transitioning to state $s_1$ always incurs a reward of $1$, taking any action in states $s_2$ and $s_3$ always results in transitioning to $s_1$ and taking action $a \in \A$ from $s_1$ transitions among the other states according to action-dependent distribution $(p_{11}^a, p_{12}^a, p_{13}^a)$.  
This MDP is depicted in Figure~\ref{fig:example_mdp}. 
We now use this MDP to illustrate several points made in the main text.

\paragraph{Closure under Bellman updates}
We now address the discussion surrounding Proposition~\ref{thm:vip} in the main text.
Consider a the following two-dimensional subspace of value functions $\mathcal{R} = \{ [x, y, y]^\top : x, y \in \R \}$.
We now show that, for the MDP described above, $\mathcal{R}$ exhibits closure under arbitrary Bellman updates.

For an arbitrary policy $\pi : \S \mapsto \mathcal{P}(\A)$ the Bellman update for a value function $\bs{v} \in \R^3$ is given by $\mathcal{T}^\pi \bs{v} = \bs{R}^\pi + \gamma \bs{P}^\pi \bs{v}$ where
\begin{equation*}
\bs{R}^\pi = \begin{bmatrix}
\sum_{a \in \A} \pi(a | s_1) p_{11}^a \\ 1 \\ 1
\end{bmatrix},\
\bs{P}^\pi = \begin{bmatrix}
\sum_{a \in \A} \pi(a | s_1) p_{11}^a &
\sum_{a \in \A} \pi(a | s_2) p_{12}^a & 
\sum_{a \in \A} \pi(a | s_3) p_{13}^a \\ 
1 & 0 & 0 \\ 
1 & 0 & 0
\end{bmatrix}
\end{equation*}

Suppose $\bs{v} \in \mathcal{R}$, then $\bs{v} = [a, b, b]^\top$ for some $a, b \in \R$. 
Notice that for such a value function the following holds:
\begin{equation*}
\mathcal{T}^\pi v = 
\begin{bmatrix}
\bs{R}^\pi_{1} + \gamma [a \bs{P}^\pi_{11} + b (1 - \bs{P}^\pi_{11})] \\
1 + \gamma a \\ 
1 + \gamma a
\end{bmatrix} \in \mathcal{R},
\end{equation*}
thus we have illustrated that the two-dimensional subspace $\mathcal{R}$ is closed under arbitrary Bellman updates in our 3 state MDP. This means that, once a sequence $\bs{v}_1, \bs{v}_2 = \T_{\pi}\bs{v}_1, \bs{v}_3 = \T_{\pi'} \bs{v}_2...$ reaches a $\bs{v}_i \in \mathcal{R}$, it stays in $\mathcal{R}$. We can then exploit this property finding value-equivalent models with respect to $\mathcal{R}$, as we show next.

\paragraph{A model class for which \emph{exact} VE outperforms MLE}
We now provide an example of the scenario discussed around Proposition~\ref{thm:mle} in the main text by examining the setting where a model, from a restricted class, must be learned to approximate the dynamics of our MDP. 
We restrict our model class by requiring that for each action $a \in \A$ we represent $(p_{11}^a, p_{12}^a, p_{13}^a)$ as $((1-\theta^a)/2, \theta^a, (1-\theta^a)/2)$.
Before continuing we note a few properties of value functions of our MDP.
Notice that for any $\bs{v}^\pi$ we can write:
\begin{equation*}
\begin{aligned}
&\bs{v}^\pi_1 = \sum_{a \in \A} \pi(a | s_1) [ p_{11}^a (1 + \gamma \bs{v}_1^\pi ) + (1 - p_{11}^a)(\gamma^2 \bs{v}_1^\pi)], \\
&\bs{v}^\pi_2 = 1 + \gamma \bs{v}^\pi_1, \\ 
&\bs{v}^\pi_3 = 1 + \gamma \bs{v}^\pi_1,
\end{aligned}
\end{equation*}
which illustrates that $v^\pi$ \textit{exclusively depends} on the value of $\bs{P}_{11}^\pi \defi \sum_{a \in \A} \pi(a | s_1) p_{11}^a$.

First we consider the MLE solution to this problem: it can be easily shown (see the proof of Proposition~\ref{thm:mle}) that, for the model class defined above, $\theta^a = p_{12}^a$ for all $a \in \A$ maximizes the likelihood.
However notice that this implies that our approximation of $p_{11}^a$ equals $(1 - p_{12}^a) / 2$ which is clearly not true in general.
Thus, there are settings of $(p_{11}^a, p_{12}^a, p_{13}^a)$ and policies for which the value function produced by MLE, $\tilde{\bs{v}}^\pi$, is not equivalent to the true value function $\bs{v}^\pi$. 

Next we consider learning a value-equivalent model with the same restricted model class. 
Suppose we wish our model to be value equivalent to value $\bs{v} = [1, 0, 0]^\top$ and all policies. 

Note that any VE model with respect to $\V = \{ \bs{v} \}$: $\{\tilde{\bs{P}}^a\}_{a \in \A}$, must satisfy $\tilde{\bs{P}}^a v = \bs{P}^a v$.
By requiring value equivalence with just $v$ we have:
\begin{equation*}
\tilde{\bs{P}}^a \bs{v} = 
\begin{bmatrix}
\tilde{p}^a_{11} \\ \tilde{p}^a_{21} \\ \tilde{p}^a_{31}
\end{bmatrix} = 
\begin{bmatrix}
p_{11}^a \\ 1 \\ 1
\end{bmatrix}
= 
\bs{P}^a \bs{v}
\end{equation*}
which implies that $\tilde{p}_{11}^a = p_{11}$, $\tilde{p}_{21}^a = \tilde{p}_{31}^a = 1$ and $\tilde{p}_{22}^a = \tilde{p}_{23}^a = \tilde{p}_{32}^a = \tilde{p}_{33}^a = 0$ for all $a \in \A$.

Taking these constraints together restricts the class of VE models to those of the form:
\begin{equation*}
\tilde{\bs{P}} =
\begin{bmatrix}
p_{11}^a & \tilde{p}_{12}^a & \tilde{p}_{13}^a \\
1 & 0 & 0 \\
1 & 0 & 0
\end{bmatrix}
\end{equation*}
where $\tilde{p}_{1i}^a$ are ``free variables'' for all $i = 2, 3$ and $a \in \A$. 

Notice that when $p_{11}^a \leq 0.5$ for all $a \in \A$, we can find a value equivalent model by setting: $(1 - \theta^a) / 2 = p_{11}^a$. 
This means that the values produced by these value equivalent models exactly match those of the environment: $\tilde{\bs{v}}^\pi = \bs{v}^\pi$ for all $\pi$ (and thus the solution of this model also coincides with the optimal value function, $\tilde{\bs{v}}^* = \bs{v}^*$).

\paragraph{A model class for which \emph{approximate} VE outperforms MLE}
In the previous example we showed that it is possible to have an MDP and a restricted model class such that VE models are able to perfectly estimate $\bs{v}^*$ while MLE models fail to do so.
Notice that in this example a value equivalent model \textit{actually existed}, which is not guaranteed in general.
We now show a related example where, in spite of an exactly value equivalent model not existing, an agent trained using an \textit{approximate} value equivalent model will outperform its MLE counterpart.

We use our example MDP from before, shown in Figure~\ref{fig:example_mdp}, and denote its actions $\A = \{a, b\}$ for later notational convenience.
We set our environment's transition dynamics accordingly: $p^a \defi (p_{11}^a, p_{12}^a, p_{13}^a) = (0.6, 0.4, 0.0)$ and $p^b \defi (p_{11}^b, p_{12}^b, p_{13}^b) = (0.4, 0.2, 0.4)$.
We also use the same model class as above: $(\tilde{p}_{11}^i, \tilde{p}_{12}^i, \tilde{p}_{13}^i) = (0.5(1-\theta^i), \theta^i, 0.5(1-\theta^i))$ for each $i \in \A$, being mindful of the boundary conditions $\theta^i \in [0, 1]$. 

As we saw in the previous example, the MLE estimator for this problem will produce the following approximations:
$p^a_{\text{MLE}} = (0.3, 0.4, 0.3)$, $p^b_{\text{MLE}} = (0.4, 0.2, 0.4)$.

We now consider what an approximate VE model will produce using the same value as before: $\bs{v} = [1, 0, 0]^\top$ and all policies.
Recall that we're optimizing the following loss:
\begin{equation*}
\begin{aligned}
\sum_{j \in \{a, b\}} \sum_{i=1}^3 ((\tilde{\bs{P}}^jv)_i - (\bs{P}^jv)_i))^2 &= \sum_{j \in \{ a, b\}} (\tilde{p}^j_{11} - p^j_{11})^2 + ((\tilde{p}^j_{12} + \tilde{p}^j_{13}) - (p^j_{12} + p^j_{13}))^2 \\ 
&= \sum_{j \in \{ a, b\}} (\tilde{p}^j_{11} - p^j_{11})^2 + ((1 - \tilde{p}^j_{11}) - (1 - p^j_{11}))^2 \\ 
&= \sum_{j \in \{ a, b\}} 2(\tilde{p}^j_{11} - p^j_{11})^2 \\
&= 2(\tilde{p}^a_{11} - p^a_{11})^2 + 2(\tilde{p}^b_{11} - p^b_{11})^2.
\end{aligned}
\end{equation*}

The form of this loss indicates that VE will attempt to minimize the MSE of $\tilde{p}^a_{11}$ and $\tilde{p}^b_{11}$ separately. 
Notice that for action $a$, we cannot perfectly estimate $p_{11}$ due to the boundary conditions on $\theta^a$.
However, VE will still find the closest possible $\tilde{p}_{11}$ that respects the boundary condition, giving:
$\tilde{p}^a_\text{VE} = (0.5, 0.0, 0.5)$, $\tilde{p}^b_\text{VE} = (0.4, 0.2, 0.4)$. 

We now display these models together in the following table:
\begin{center}
\begin{tabular}{|c | c | c | c || c | c | c |}
\hline 
& $\tilde{p}^a_{11}$ & $\tilde{p}^a_{12}$ & $\tilde{p}^a_{13}$ & $\tilde{p}^b_{11}$ & $\tilde{p}^b_{12}$ & $\tilde{p}^b_{13}$  \\
\hline
MDP & 0.6 & 0.4 & 0.0 & 0.4 & 0.2 & 0.4 \\
MLE & 0.3 & 0.4 & 0.3 & 0.4 & 0.2 & 0.4 \\ 
VE & 0.5 & 0.0 & 0.5 & 0.4 & 0.2 & 0.4 \\
\hline
\end{tabular}
\end{center}

Notice that when optimally planning on this MDP, an agent can obtain the most reward by transitioning from $s_1$ to $s_1$ as often as possible. 
The agent can do this taking the action among $\{a, b\}$ that is mostly likely to induce a self-transition each time it is at $s_1$. 
In the true environment and the VE model this action is $a$.
However, notice that the MLE model would instead prefer the sub-optimal action $b$, since $(\tilde{p}^b_{\text{MLE}})_{11} > (\tilde{p}^a_{\text{MLE}})_{11}$.

This is a concrete example where VE outperforms MLE even though there is no value-equivalent models in the model class considered (that is, VE can be enforced only approximately).

\subsection{Experimental details}
\label{sec:details_exp}

\begin{figure}[H]
\centering
\subfigure[Catch \label{env:catch}]{
\includegraphics[scale=0.25]{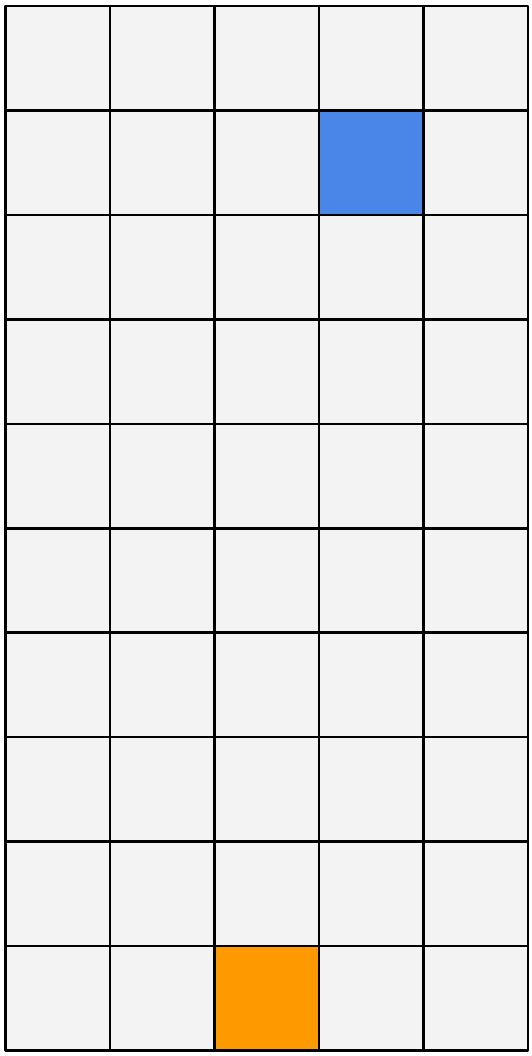}
 }
 \hspace{10mm}
\subfigure[Four Rooms \label{env:four_rooms}]{
\includegraphics[scale=0.25]{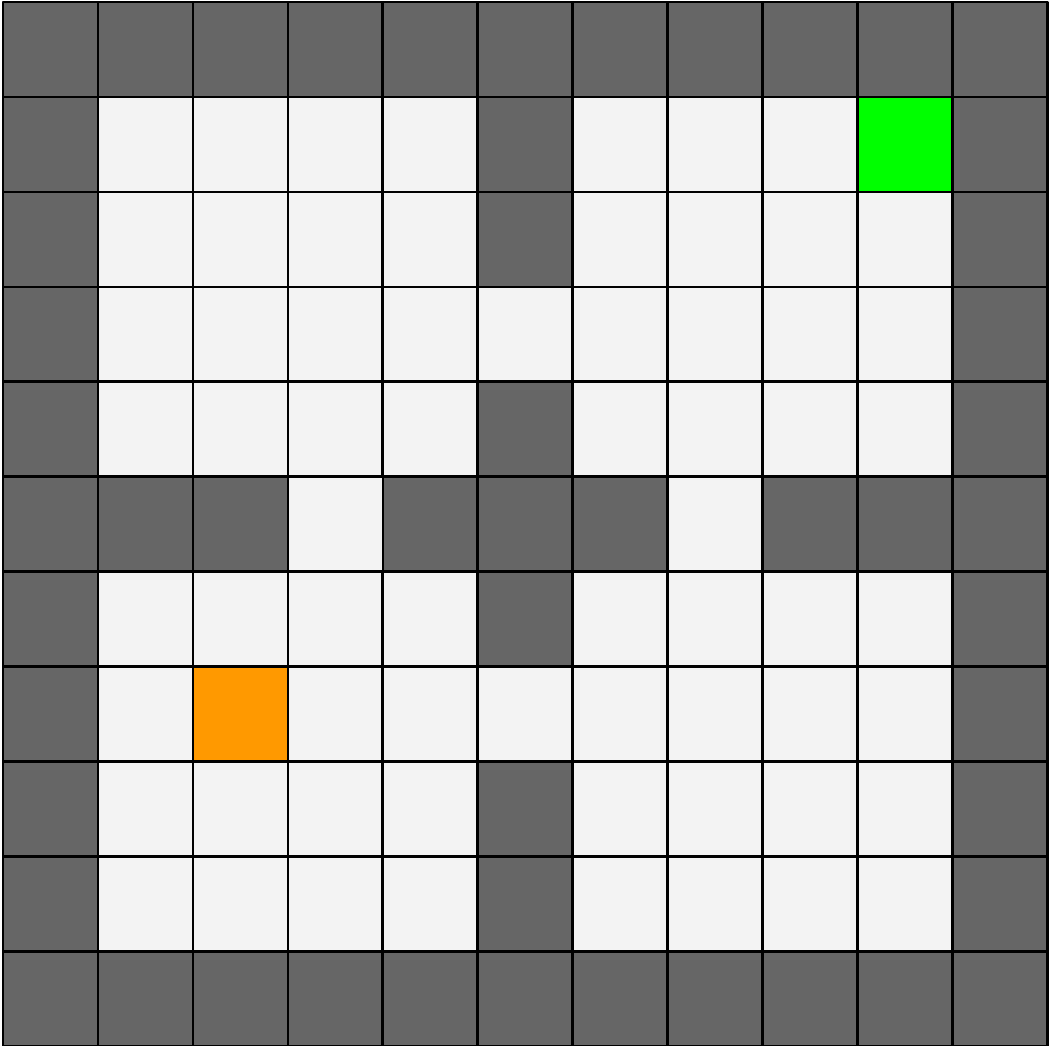}
}
\hspace{10mm}
\subfigure[Cart-pole \label{env:cartpole}]{
\includegraphics[scale=0.32]{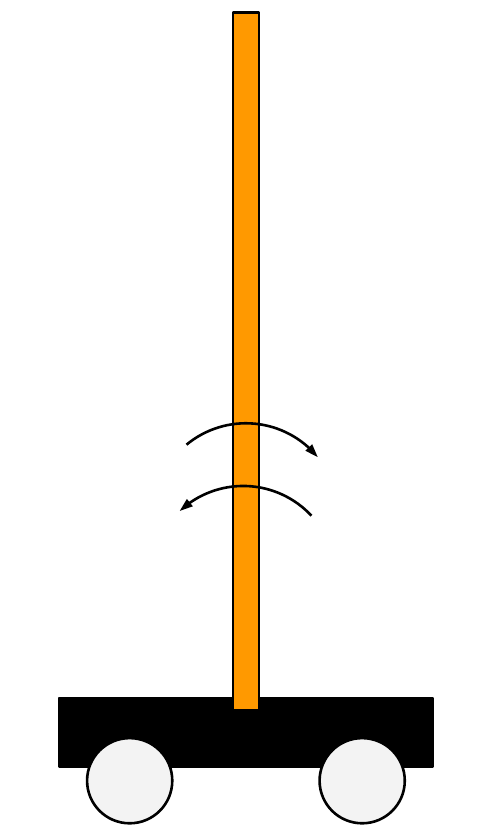}
}
\caption{
\textbf{(a) Catch:} the agent has three actions corresponding to moving a paddle (orange) left, right and staying in place. 
Upon initialization, a ball (blue) is placed at a random square at the top of the environment and at each step it descends by one unit. 
Upon reaching the bottom of the environment the ball is returned to a random square at the top.
The agent receives a reward of $1.0$ if it moves its paddle and intercepts the ball.
\textbf{(b) Four Rooms:} the agent (orange) has four actions corresponding to up, down, left and right movement.
When the agent takes an action, it moves in its intended direction with 90\% of the time and in an random other direction otherwise.
There is a rewarding square in the right top corner (green). If the agent transitions into this square it receives a reward of $1.0$. 
\textbf{(c) Cart-pole:} In Cart-pole, the agent may choose between three actions: pushing the cart to the left, right or not pushing the cart.
There is a pole balanced on top of the cart that is at risk of tipping over.
The agent is incentivized to keep the pole up-right through a reward of $\cos(\theta)$ at each step where $\theta$ is the angle of the pole ($\theta = 0$ implies the pole is perfectly up-right).
If the pole's height drops below a threshold, the episode terminates and the agent receives a reward of $0.0$.
The cart itself is resting on a table; if it falls off the table, the episode similarly terminates with a reward of $0.0$.
}
\label{fig:environments}
\end{figure}

\subsubsection{Environment description}

The environments used in our experiments are described in depth in Figure~\ref{fig:environments}. In both Catch and Four Rooms a tabular representation is employed in which each of the environment's finitely many states (250 and 68, respectively) is represented by an index. 
In Cart-pole we have a continuous state space $\S \subset \R^5$ (so $|\S| = \infty$). Each state $s \in \R^5$ consists of the cart position, cart velocity, sine / cosine of pole angle, and pole's angular velocity.

\subsubsection{Experimental pipeline}

As mentioned in the main text, a common experimental pipeline is used across all of our results, with slight variations depending upon the experiment type and environment. This pipeline is described at a high-level below:

\begin{enumerate}[(i)]
\item \label{it:data_collection} \textbf{Data collection:} Data is collected using a policy which selects actions uniformly at random. 
\item \label{it:model_training} \textbf{Model training:} The collected data is used to train a model.
\item \label{it:policy_construction}\textbf{Policy construction:} The model is used to produce a policy. 
\item \label{it:policy_evaluation}\textbf{Policy evaluation:} The policy is evaluated to assess the quality of the model.
\end{enumerate}
We now discuss steps~\ref{it:model_training}, \ref{it:policy_construction} and
\ref{it:policy_evaluation} in detail.

\paragraph{\ref{it:model_training} Model training}

All of our experiments involve restricting the capacity of the class of models that the agent can represent: $\M$.
In general we restrict the rank of the models in $\M$, but, depending upon the nature of the model, this restriction is carried in different ways.
\begin{enumerate}
\item \label{it:rest_tab} \textbf{Tabular models:} On domains with $|S| < \infty$, we employ tabular models.
In what follows, $n \times m$ matrices referred to as ``row-stochastic'' are ensured to be as such by the following parameterization:
\begin{enumerate}[label=(\alph*)]
\item A matrix $\F \in \R^{n \times m}$ is sampled with entries $\F_{ij} \sim \text{Uniform}([-1, 1])$. 
\item A new matrix $\P_F$ is produced by applying row-wise softmax operations with temperature $\tau = 1$ to \F: 
\begin{equation*}
(\P_F)_{ij} = \frac{\text{exp}(\F_{ij})}{\sum_{k} \text{exp}(\F_{ik})}.
\end{equation*}
Here, $\F$ can be thought of as the parameters of $\P_F$, which often will suppress as $\Pt$ for clarity. 
\end{enumerate}
That is, a model is represented by $|A|$\   $|S|\times|S|$ row-stochastic matrices: $\Pt^1, \ldots, \Pt^{|A|}$. 
We ensure that each of these matrices has rank $k$ by factoring it as follows: $\Pt^a = \DD^a \KK^a$ where $\DD^a \in \R^{|S| \times k}$, $\KK^a \in \R^{k \times |S|}$ and both are row-stochastic as well.
\item \textbf{Neural network models:} On domains with $|S| = \infty$ we instead use a neural network parameterized by $\theta$: $f_\theta : (\S, \A) \mapsto  (\S, \R)$. 
$f_\theta$ takes a state and action as input and outputs an approximation of the expected next state and next reward.
As an analogue to the rank restriction applied in the tabular case, we restrict the rank of weight matrices in all fully-connected layers in $f_\theta$.
Denote a fully-connected layer in $f_\theta$ as $L(x) = \sigma(W x + b)$ where $\sigma(\cdot)$ is an activation function, $W$ is a weight matrix and $b$ is a bias term. 
We restrict $f_\theta$ by replacing each $L(x)$ with $L_k(x) = \sigma((DK)x + b)$ where $D, K \in \R^{|S| \times k}, \R^{k \times |S|}$.  
\end{enumerate}

The models with the restrictions above are trained based on data collected by a policy that selects actions uniformly at random. With a small abuse of notation, denote the collected data as $\D = (s_i, a_i, r_i, s'_i)_{i=1}^N$. We will now describe how this data is used to train models in different contexts.

\begin{enumerate}
\item \textbf{Tabular models:} When training a tabular model with capacity restricted to rank $k$, we use the following expressions: 
\begin{enumerate}[label=(\alph*)]
\item \textbf{Reward}: In our experiments rewards are represented in the same way for both VE and MLE models:
\begin{equation*}
\tilde{R}_{s,a} = \frac{\sum_{i=1}^N r_i \ind\{ s_i = s, a_i = a \}}{\sum_{i=1}^N \ind\{ s_i = s, a_i = a \}}, \ \ \ \ 
\end{equation*}
where $\ind\{\cdot\}$ is the indicator function.
\item \textbf{Transition dynamics (MLE)}:
To learn the transition dynamics we first parameterize $\Pt^a = \DD^a \KK^a$ for all $a \in \A$, where $\DD^a$ and $\KK^a$ are row-stochastic matrices (see item~\ref{it:rest_tab} in the section ``Restricting Model Capacity'' above). Because we are assuming \S\ to be finite, we can identify each state $s \in \S$ by an index. Let $\idx(s) \in \{1, ..., |S|\}$  be an index that uniquely identifies state $s$. We then compute $\Pt^a = \DD^a \KK^a$ by minimizing the following loss with respect to $\DD^a$ and $\KK^a$:
\begin{equation*}
    \tilde{\loss}_{p, \D}(\P^a, \Pt^a) \defi - \sum_{i=1}^N \ind\{a_i = a\} \log \left[(\DD^a \KK^a)_{\idx(s_i) \idx(s_i')}\right],
\end{equation*}
where $(\DD^a \KK^a)_{ij}$ is the element in the $i$-th row and $j$-th column of matrix $\DD^a \KK^a$. Note that the expression above is the empirical version of expression~(\ref{eq:cml_p}) in the paper~\cp{farahmand2017value}.
\item \textbf{Transition dynamics (VE)}:
In the VE setting we have a set of value functions and policies: $\V$ and $\Pi$. We have one transition matrix $\Pt^\pi$ associated with each policy $\pi \in \Pi$. As discussed in Section~\ref{sec:experiments}, in our experiments we used $\Pi = \{ \pi^a \}_{a \in \A}$, where $\pi^a(a | s) = 1$ for all $s \in \S$. Thus, we end up with the same parameterized probability matrices as above: $\Pt^a = \DD^a \KK^a$. Let $\D_{ia} \subseteq \D$ be the sample transitions starting in state $i$ where action $a$ was taken, that is, $(s_j, a_j, r_j, s'_j) \in \D_{ia}$ if and only if $\idx(s_j) = i$ and $a_j = a$. We computed $\Pt^a = \DD^a \KK^a$ by minimizing the following loss with respect to $\DD^a$ and $\KK^a$:
\begin{equation*}
\label{eq:ve_loss_emp2}
\loss_{\pi^a, \V, \D}(\P^a,\Pt^a) \defi 
\sum_{i, a} \sum_{\v \in \V} \left( \dfrac{1}{|\D_{ia}|} \sum_{(s,a,r,s') \in \D_{ia}} v_{\idx(s')} - \sum_j (\DD^a \KK^a)_{ij} v_j  \right)^2.
\end{equation*}
Note that the expression above corresponds to equation~(\ref{eq:ve_loss_emp}) when learning transition matrices associated with policies $\{ \pi^a \}_{a \in \A}$ in an environment with finite state space \S\ (where states $s$ can be associated with an index $i$) and $p=2$.
\end{enumerate}

\item \textbf{Neural network models}: 
When training a neural network model with capacity restrictions construct a network $f_\theta : (\S, \A) \mapsto (\S, \R)$.
The network is fully connected and takes the concatenation of $\S$ with the one-hot representation of $\A$ as input.
For a given $(s, a)$ pair we denote it's output as $\tilde{s}'_{s,a}, \tilde{r}'_{s,a} = f_\theta(s,a)$.
In all cases we train the neural network model by sampling mini-batches uniformly from $\mathcal{D}$.
It is important to note that we only use these neural network models on deterministic domains (e.g., Cart-pole) meaning that the output of the model, $\tilde{s}'$ represents a single state rather than an expectation over states.
\begin{enumerate}[label=(\alph*)]
\item \textbf{Reward:} For both VE and MLE models we train our neural network models to accurately predict the reward associated with each state action transition:
\begin{equation*}
\ell_{r, \mathcal{D}}(\theta) = \sum_{i=1}^N (\tilde{r}_{s_i, a_i} - r_i)^2.
\end{equation*}
\item \textbf{Transition dynamics (MSE):}
We learn  models by encouraging $f_\theta$ to accurately predict the next state:
\begin{equation*}
\ell_{s',\mathcal{D}}(\theta) = \sum_{i=1}^N (\tilde{s}'_{s_i,a_i} - s'_i)^2.
\end{equation*}
\item \textbf{Transition dynamics (VE):}
For VE models use (\ref{eq:ve_loss_emp}), disregarding reward terms to give:
\begin{equation*}
\ell_{\V, \mathcal{D}}(\theta) = \sum_{i=1}^n \sum_{v \in \V} (v(\tilde{s}_{s_i, a_i}) - v(s'_i))^2.
\end{equation*}
\end{enumerate}
\end{enumerate}

\paragraph{\ref{it:policy_construction} Policy construction}
In each experiment we present, after a model is constructed, we subsequently use it to construct a policy.
The manner in which we do this varies based upon the type of the experiment and the nature of the environment. 
There are three mechanisms for constructing policies from models:
\begin{enumerate}
\item \textbf{Value iteration:} For experiments with $\V = \FS$ (which are performed only with tabular models), we use the learned model $\mt = (\rt,\pt)$ to perform value iteration until convergence, yielding $\tilde{v}^*$~\cp{puterman94markov}.
Here $\tilde{v}^*$ represents the optimal value function of the model \mt. We then produce a policy according to $\pi(s) = \text{argmax}_a (\tilde{r}(s,a) + \gamma \sum_{s'} \tilde{p}(s' | s, a) \tilde{v}^*(s'))$.
\item \textbf{Approximate policy iteration with least squares temporal-difference learning (LSTD):} For experiments on environments with finite $\S$ and $\V = \AV$ we used policy iteration combined with least square policy evaluation using basis $\{ \phi_i \}_{i=1}^d$.
Specifically, each iteration of policy iteration involved the following steps:
\begin{enumerate}[label=(\alph*)]
\item Collect experience tuples using the previous policy, $\pi$, leading to $\D = (s_i, a_i, r_i, s'_i)_{i=1}^n$. 
\item Replace the reward and next-states with those predicted by the model: $\tilde{r}_i, \tilde{s}'_i = f_\theta(s_i, a_i)$, leading to 
$\D' = (s_i, a_i, \tilde{r}_i, \tilde{s}'_i)_{i=1}^n$.
\item Learn $v_w(s) = \sum_{i=1}^d w_i \phi_i(s) \approx v_\pi$ using LSTD with $\D'$.
\item Construct a new policy $\pi(s) = \text{argmax}_a (\tilde{r}_{s,a} + \gamma v_w(\tilde{s}'_{s,a}))$ where $\tilde{r}_{s,a}, \tilde{s}'_{s,a}$ are sampled from the trained model conditioned on state $s$ and action $a$.
\end{enumerate}
This procedure is repeated for a fixed number of iterations. 
\item \textbf{Deep Q-networks (DQN):} For experiments with $\V = \AV$ and infinite $\S$ we use Double Q-Learning to produce policies. 
We incorporate our learned model, $f_\theta$, by replacing elements in the replay buffer $(s, a, r, s')$ with $(s, a, \tilde{r}_{s,a}, \tilde{s}'_{s,a})$ where $\tilde{r}_{s,a}, \tilde{s}'_{s,a} = f_\theta(s,a)$.
\end{enumerate}

\paragraph{\textbf{\ref{it:policy_evaluation} Policy evaluation}}
There are two methods to evaluate the policies resulting from the {policy construction} stage described above:
\begin{enumerate}
\item For policies produced using value iteration or policy iteration plus LSTD the ensuing policy, $\pi$, is exactly evaluated on the true environment, yielding $v_\pi(s)$. Then the average value of $v_\pi(s)$ over all states is reported. 
\item For policies produced using DQN, the average return over the last $100$ episodes of training is reported. 
\end{enumerate}

\subsubsection{Classes of experiments}

In addition to varying the capacity of $\M$, there are two primary classes of experiments that were run in our paper that assess different choices of $\V$. We distinguish between these two classes below:

\paragraph{$\bs{\span(\V) \approx \ddot{\V}, \AV = \FS, \Pi = \PS}$:}
In these experiments we consider that there is no limitation on the agent's ability to represent value functions, and focus on achieving value equivalence with respect to the polytope of value functions $\ddot{\V}$ induced by the environment.
We enable the agent to represent arbitrary functions in $\FS$ by restricting ourselves to tabular environments and using dynamic programming to perform exact value iteration in our Policy Construction step.
We approximate the value polytope by randomly sampling deterministic policies: $\{\pi_1, \ldots, \pi_n\}$ and evaluating them (again using dynamic programming) to produce $\{ v_{\pi_1}, \ldots, v_{\pi_n} \}$. 
We then choose $\V = \{ v_{\pi_1}, \ldots, v_{\pi_n}\}$. 
In this setting we vary the number of policies generated.

\textbf{Corresponding experiments:} the experiments in this class vary two dimensions: (1) the rank of the model and (2) the number of policies generated. 
In Figures~\ref{fig:four_rooms_value}~and~\ref{fig:four_rooms_value_transpose} we depict plots for the Four Rooms environment that fix the number of policies while varying the rank of the model and plots that fix the rank of the model while varying the number of policies, respectively. 
Figures~\ref{fig:catch_value}~and~\ref{fig:catch_value_transpose} are analogous plots for the Catch environment.

\paragraph{$\bs{\span(\V) \approx \AV}$, $\bs{\Pi = \PS}$:}
In these experiments we explore the setting described in Remark~\ref{thm:linear_app}.  
We assume that the agent has variable ability to represent value functions, $\AV$, and attempt to learn a model in $\M(\AV, \PS)$. 
From Proposition~\ref{thm:linear_span} we only need to find $\V$ such that $\span(\V) \supseteq \AV$. 
Experiments in this class can further be broken down into two settings based upon the nature of $\AV$:
\begin{enumerate}[label=(\alph*)]
\item \textbf{Linear function approximation:}
In certain experiments our agent uses a class of linear function approximators to represent value functions: $\AV= \{ \tilde{v} : \tilde{v}(s) = \sum_{i=1}^d \phi_i(s) w_i \}$ where $\phi_i(s) : \S \mapsto \R$ and $\boldsymbol{w} \in \R^d$.
In this setting achieving $\span(\V) \supseteq \AV$ can be satisfied by choosing $\V = \{\phi_i\}_{i=1}^d$. 
For experiments using linear function approximation, we select our features $\{\phi_i\}_{i=1}^d$ to correspond to state aggregations.
This entails the following procedure:
\begin{enumerate}[(i)]
\item Collect data using a policy that selects actions uniformly at random.
\item For tabular domains (e.g., Catch, Four Rooms), convert tabular state representations into coordinate-based representations.
For Catch we convert each tabular state into the positions of both the paddle and the ball: $(x_\text{paddle}, y_\text{paddle}, x_\text{ball}, y_\text{ball})$.
For Four Rooms we use the position of the agent: $(x_\text{agent}, y_\text{agent})$.
Denote the function that performs this conversion as: $f: \S \mapsto \R^n$ where $n = 2$ and $n = 4$ for Four Rooms and Catch respectively.
\item Perform k-means clustering on these converted states to produce $d$ centers $c_{1:d}$.
\item Define $\phi_i(s) = \ind\{ \text{argmin}_j\  \|f(s) - c_j \|_2 = i  \}$, which corresponds to aggregating states according to their proximity to the previously calculated centers. 
\end{enumerate}

\textbf{Corresponding experiments:} the experiments in this class vary two dimensions: (1) the rank of the model and (2) the number of basis functions in $\{ \phi_{i} \}_{i=1}^d$. 
In Figures~\ref{fig:catch_basis}~and~\ref{fig:catch_basis_transpose} we depict plots of ``slices'' of this two-dimensional set of results on the Catch domain: \ref{fig:catch_basis} depicts fixing the number of basis functions while varying model-rank and \ref{fig:catch_basis_transpose} depicts fixing the model-rank while varying the number of basis functions.

\item \textbf{Neural network function approximation:} When Neural Networks are used to approximate the agent's value functions we have $\AV = \{ \tilde{v} : \tilde{v}(s) = g_\theta(s) \}$ where $g_\theta$ represents a neural network with a particular architecture parameterized by $\theta$. 
In our experiments we choose the architecture of $g_\theta$ to be a 2 layer neural network with a tanh activation for its hidden layer. 
Unlike the linear function approximation setting, it is less obvious how to choose $\V$ such that $\span(\V) \supseteq \AV$.
One option is to use randomly initialized neural networks in $\AV$ as our basis.
To randomly initialize a given layer in some network $g_\theta$, we select weights from a truncated normal distribution where $\mu = 0$ and $\sigma = 1/\sqrt{\text{layer-input-size}}$ and initialize biases to $0$.

However, we found in practice that a large number of these randomly initialized networks were required to achieve reasonable performance.
Instead of maintaining a large set of initializations in $\V$, we allow the elements of $\V$ themselves to be stochastic.
Every time we apply an update of gradient descent we sample a new set of randomly initialized neural networks to function as $\V$.
This is equivalent to minimizing $\mathbb{E}_\V[\loss_{\Pi, \V, \D'}(m^*, \mt)]$ where $\loss_{\Pi, \V, D'}$ is defined in \ref{eq:ve_loss_emp}.
We find that having more random elements in $\V$ decreases the variance in the performance of VE models; $|\V| = 5$ in our experiments.

\textbf{Corresponding experiments:} the experiments in this class vary two dimensions: (1) the rank of the model and (2) the width of the neural networks in $\AV$. 
In Figures~\ref{fig:cartpole_basis}~and~\ref{fig:cartpole_basis_transpose} we depict plots of ``slices'' of this two-dimensional set of results on the Catch domain: \ref{fig:cartpole_basis} depicts fixing the network width while varying model-rank and \ref{fig:cartpole_basis_transpose} depicts fixing the model-rank while the network width varies. 

\end{enumerate}

\vspace{2mm}

\subsubsection{Additional results}
\label{sec:additional_results}

In the experimental section of the main text we showed that our theoretical claims about the value equivalence principle hold in practice through a series of bivariate experiments (e.g., varying model-rank and number of bases, varying model-rank and number of policies, varying model-rank and network width). 
We displayed our results as ``slices'' of these bivariate experiments, where one variable would be held fixed and the other would be allowed to vary. 
To keep the paper concise, we only displayed a subset of these slices where the ``fixed'' variable was selected as the median value over full set we experimented with.
In what follows, we present the complete set of the experimental results we acquired. 
We indicate that a plot was included in the main text by printing its caption in bold font.

\begin{figure}[H]
\centering
\subfigure[Catch (fixed \V)]{
\includegraphics[scale=0.25]{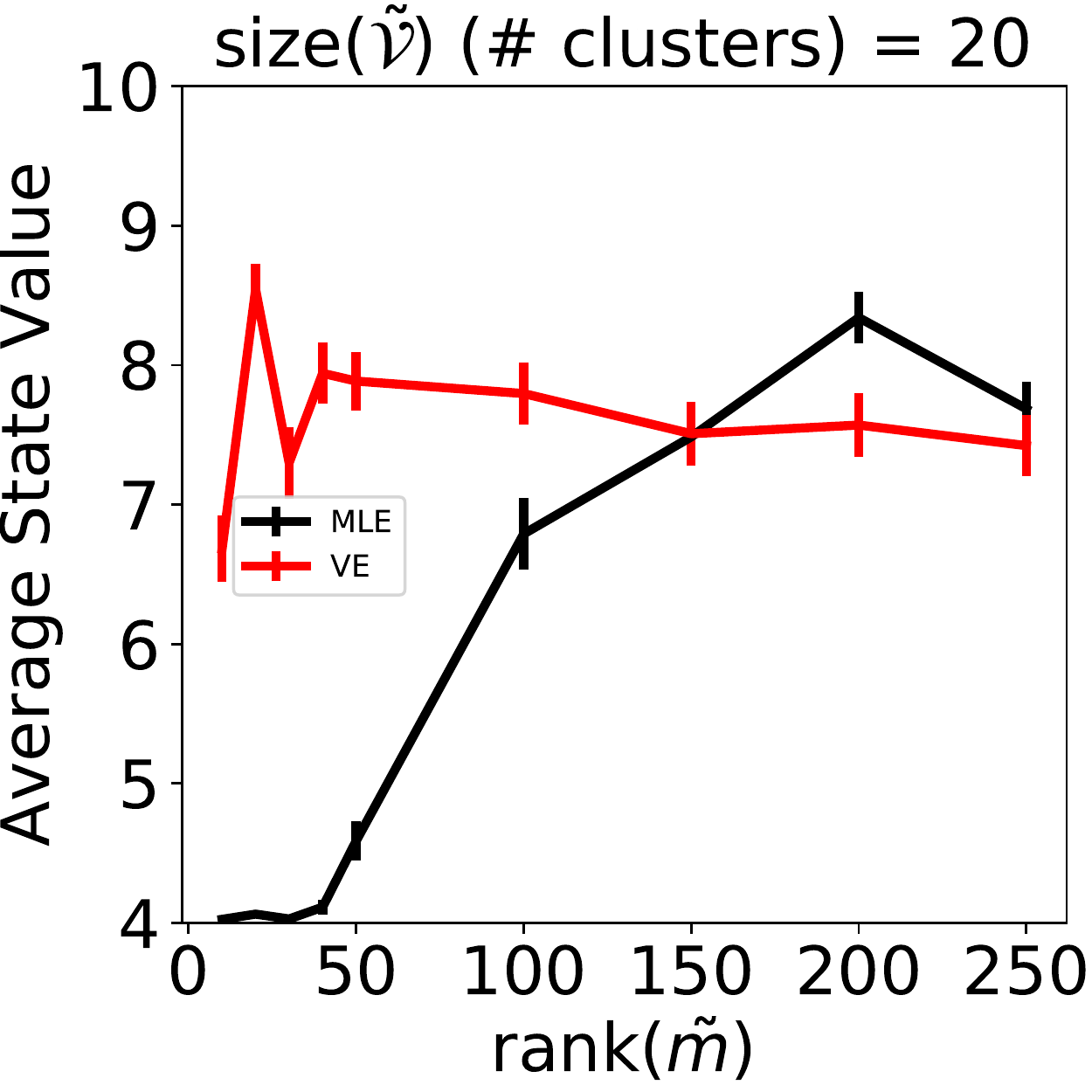}
 }
\subfigure[Catch (fixed \V)]{
\includegraphics[scale=0.25]{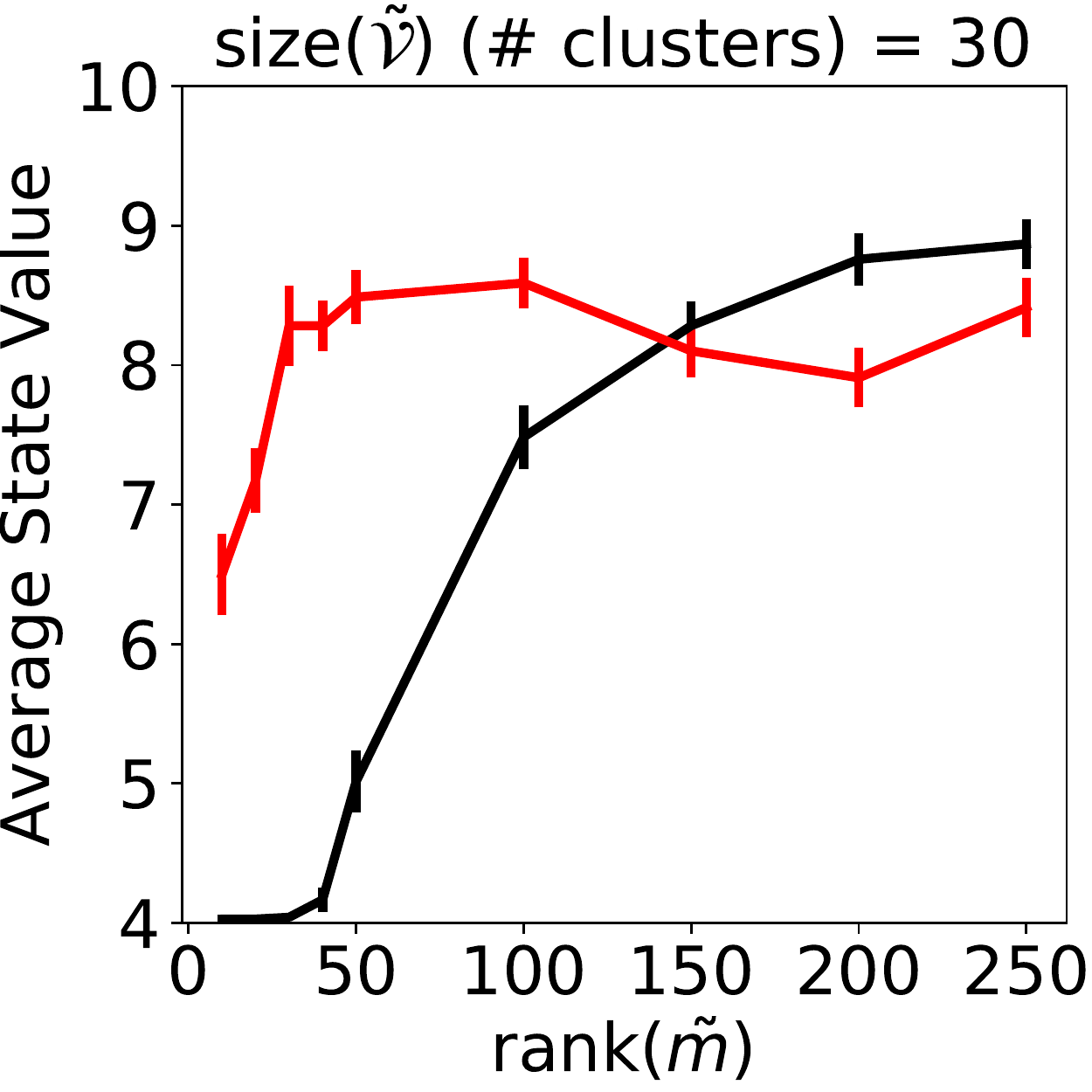}
 }
\subfigure[Catch (fixed \V)]{
\includegraphics[scale=0.25]{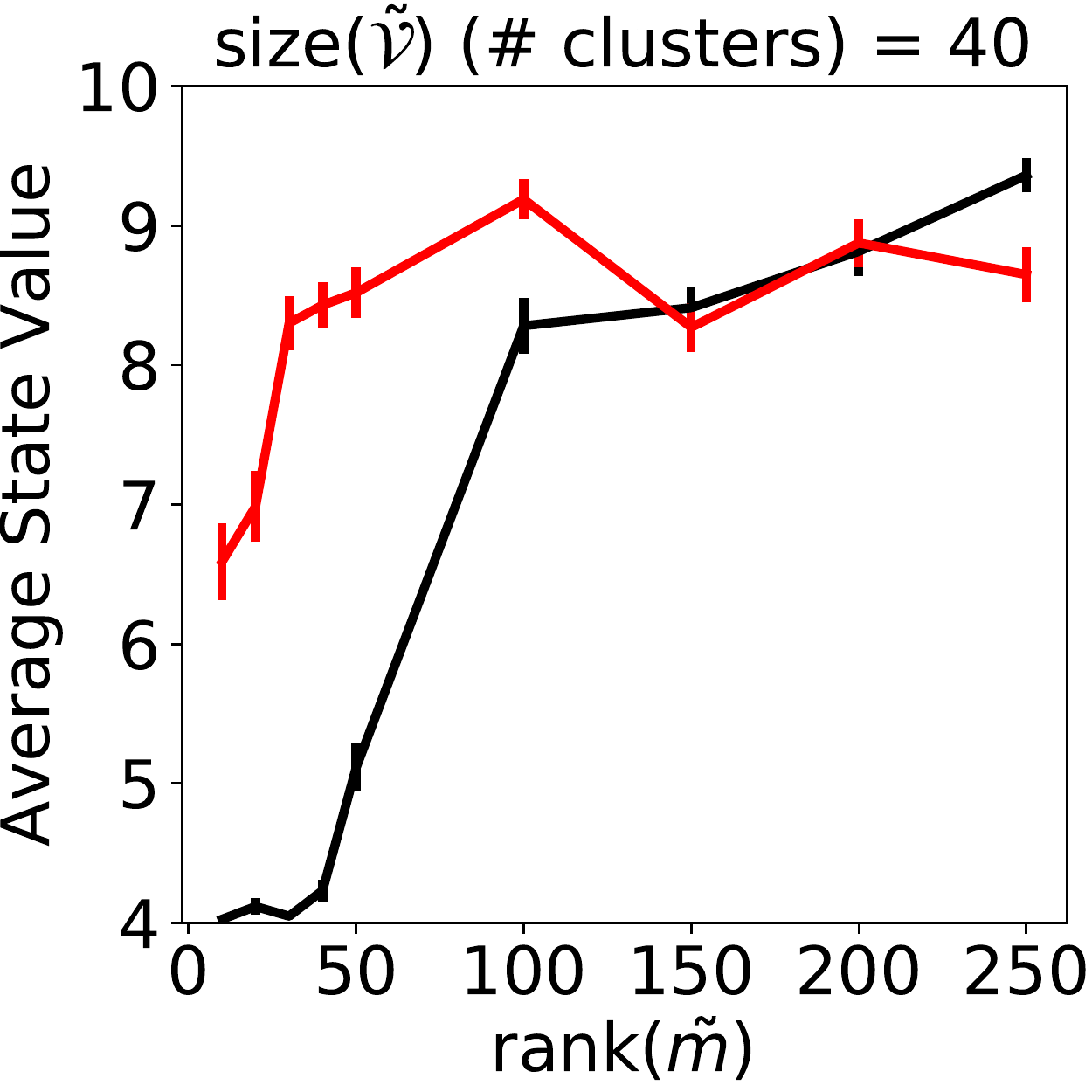}
 }
\subfigure[\textbf{Catch (fixed $\boldsymbol{\V}$)}]{
\includegraphics[scale=0.25]{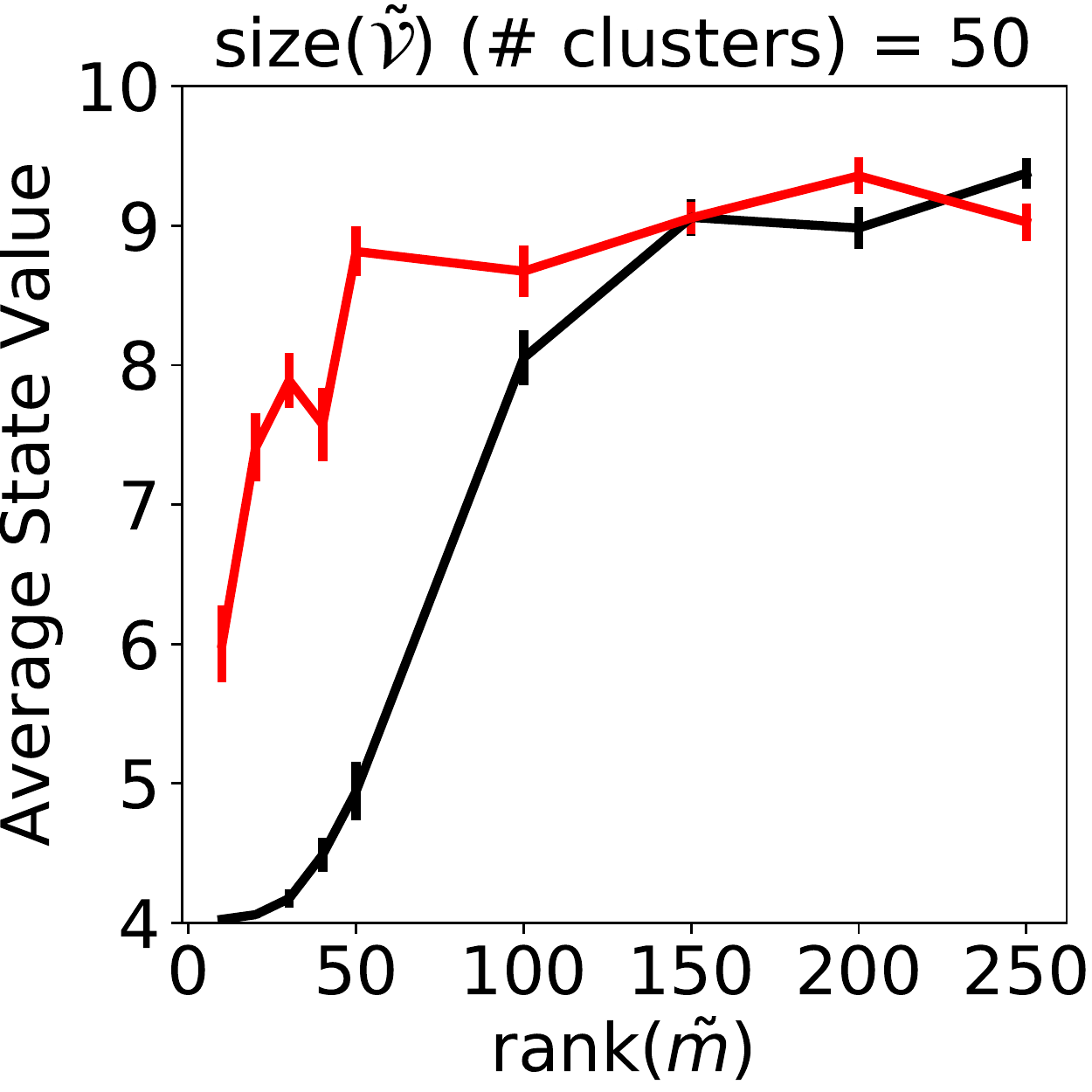}
 }
\subfigure[Catch (fixed \V)]{
\includegraphics[scale=0.25]{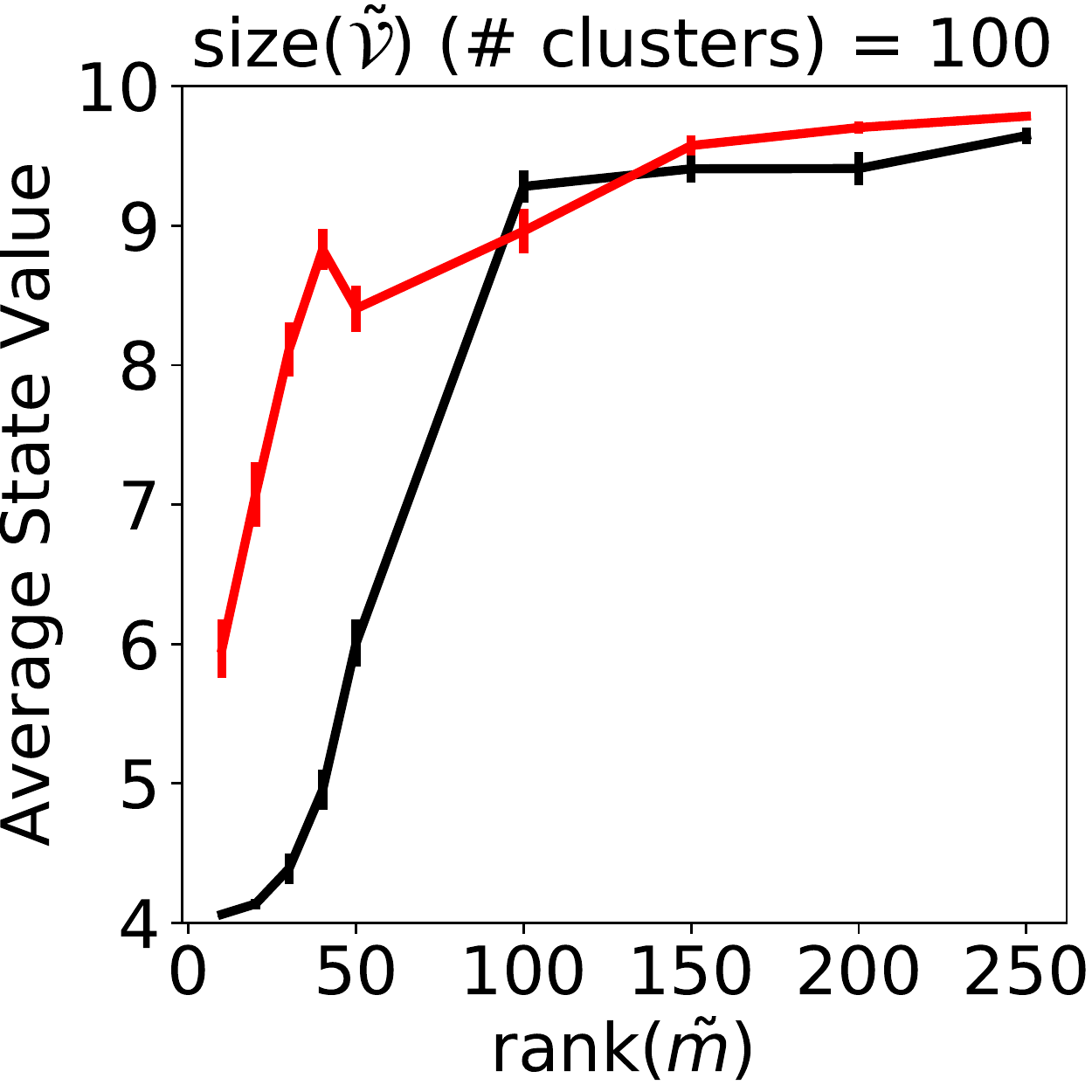}
 }
\subfigure[Catch (fixed \V)]{
\includegraphics[scale=0.25]{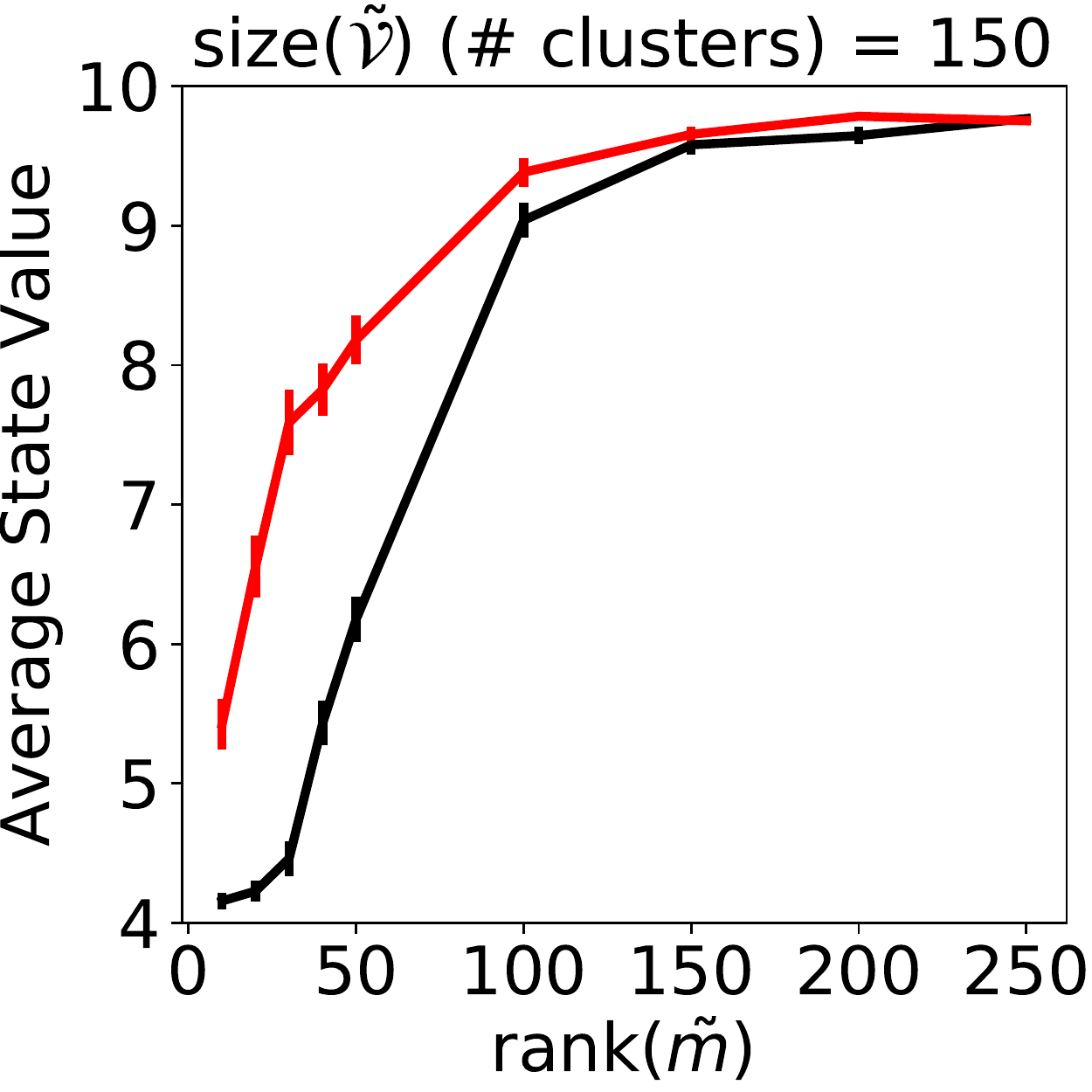}
 }
\subfigure[Catch (fixed \V)]{
\includegraphics[scale=0.25]{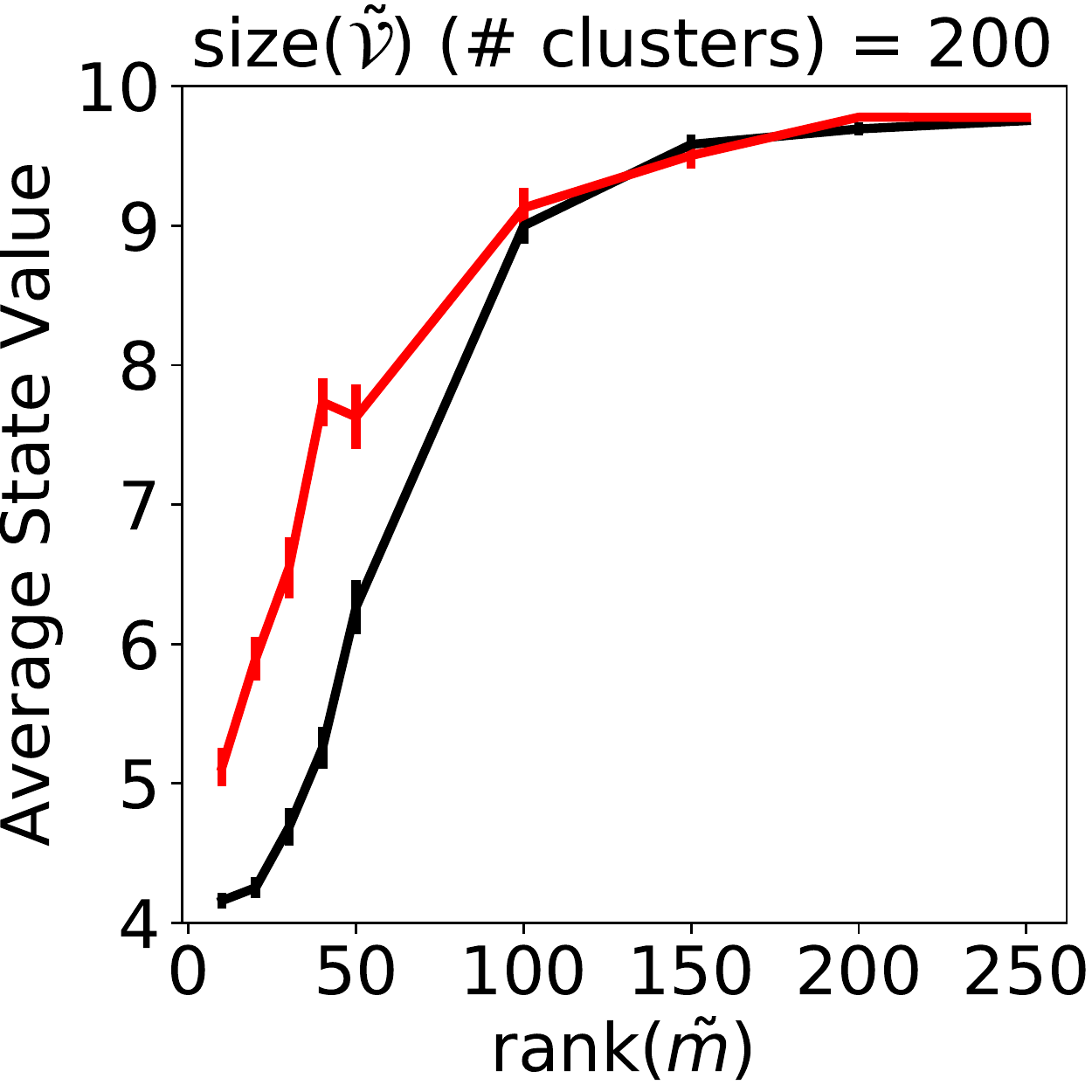}
 }
\subfigure[Catch (fixed \V)]{
\includegraphics[scale=0.25]{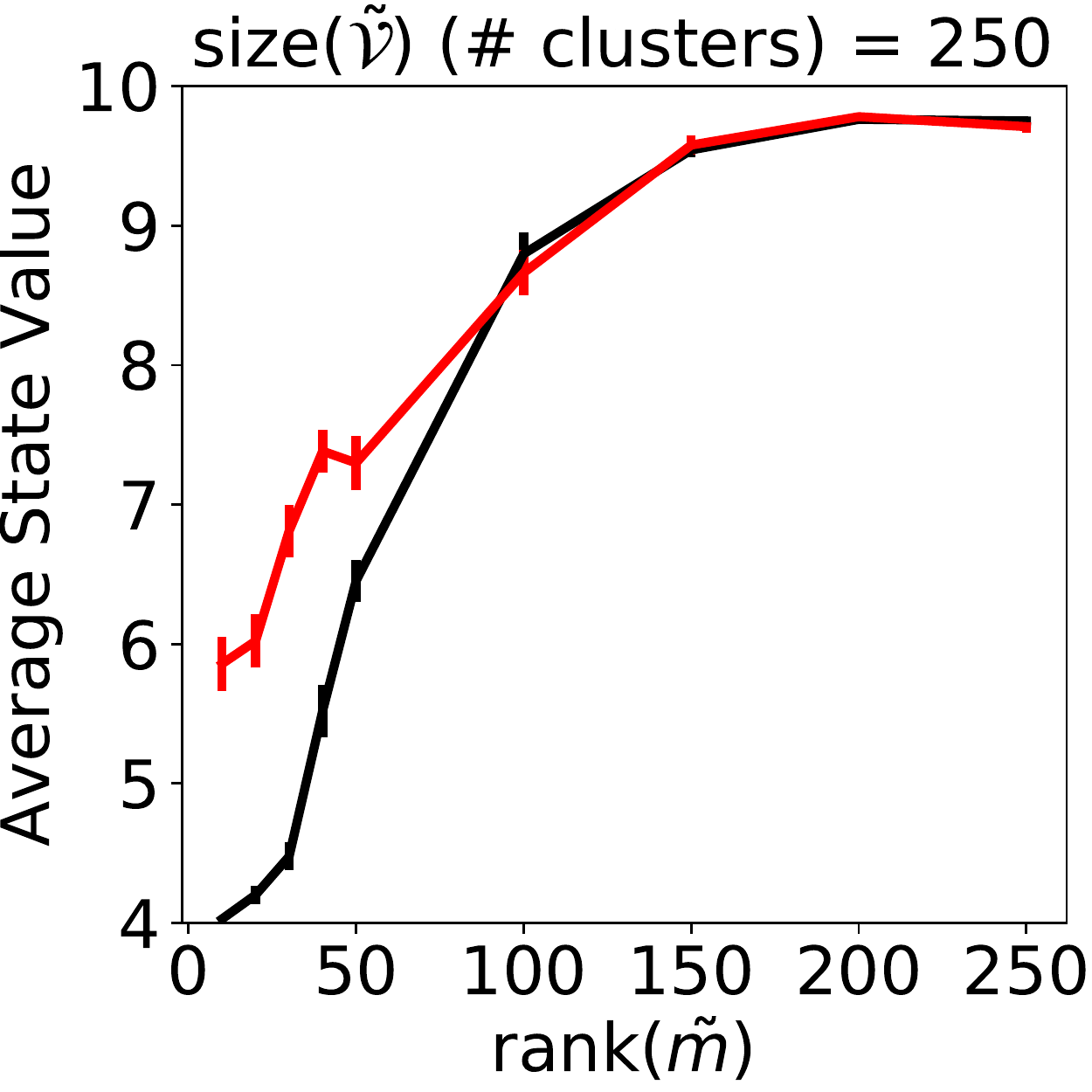}
 }
\caption{All Catch results with fixed $\V$ and $\span(\V) \approx \AV$.}
\end{figure}

\begin{figure}[H]
\centering
\subfigure[Catch (fixed $\tilde{m}$)]{
\includegraphics[scale=0.25]{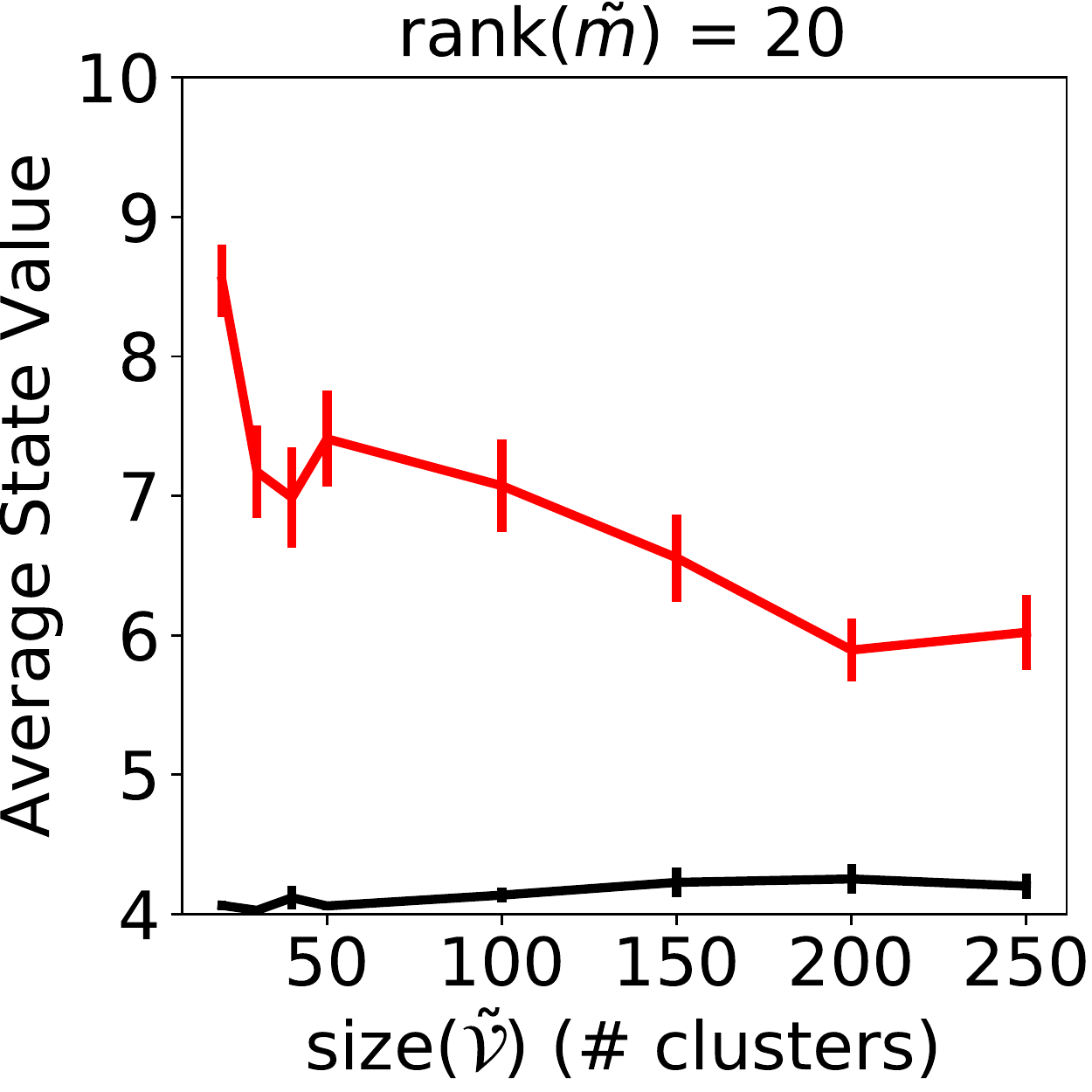}
 }
\subfigure[Catch (fixed $\tilde{m}$)]{
\includegraphics[scale=0.25]{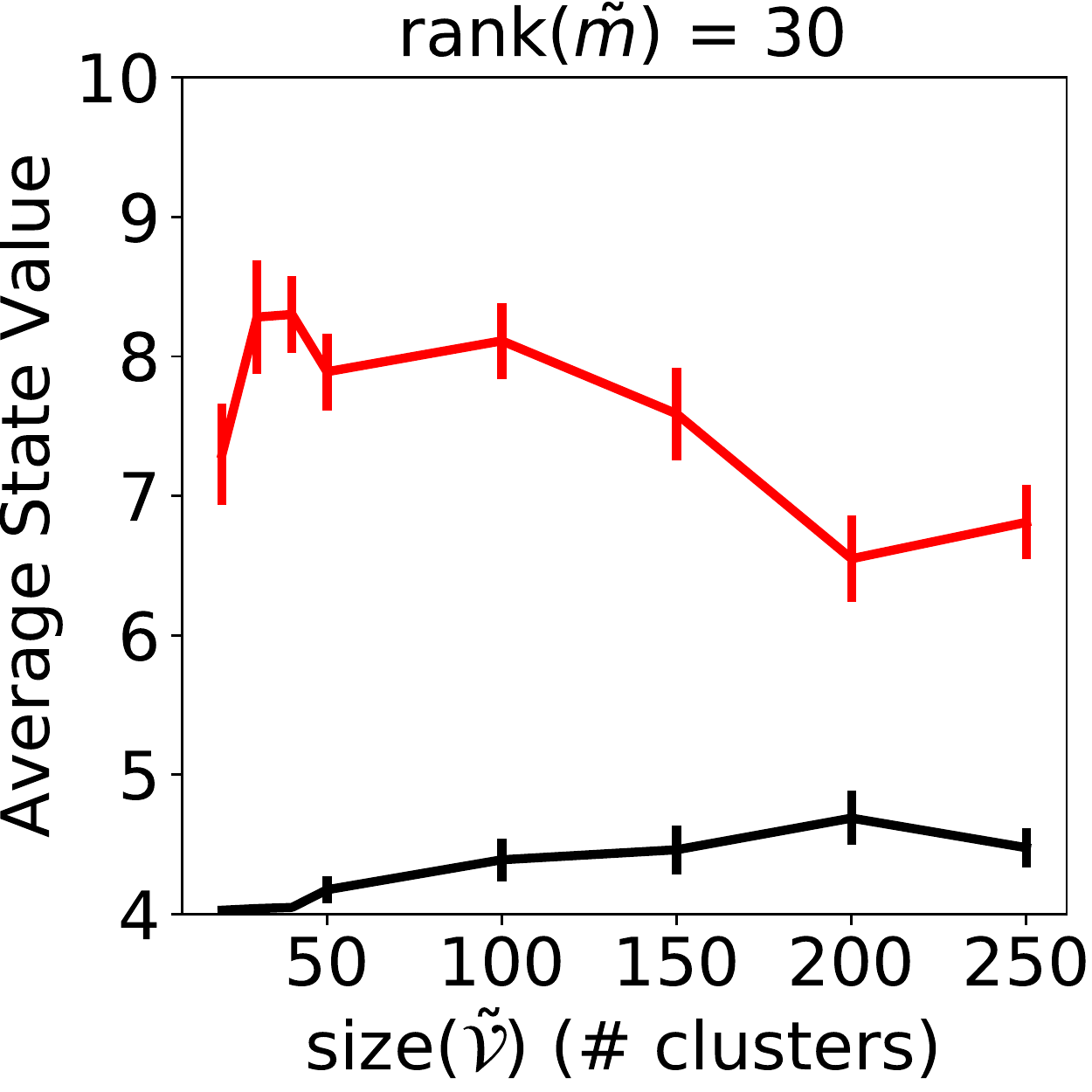}
 }
\subfigure[Catch (fixed $\tilde{m}$)]{
\includegraphics[scale=0.25]{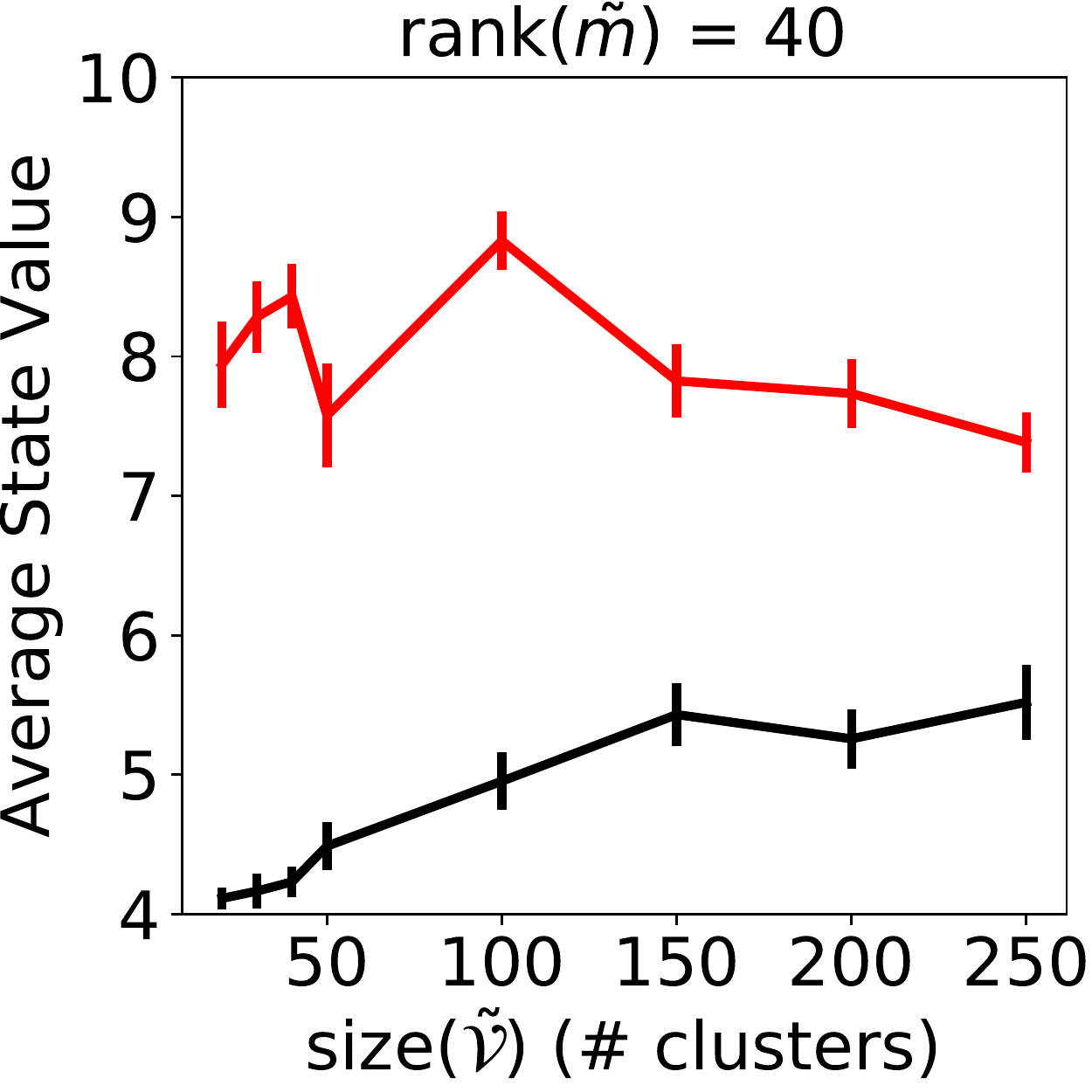}
 }
\subfigure[\textbf{Catch (fixed $\boldsymbol{\tilde{m}}$})]{
\includegraphics[scale=0.25]{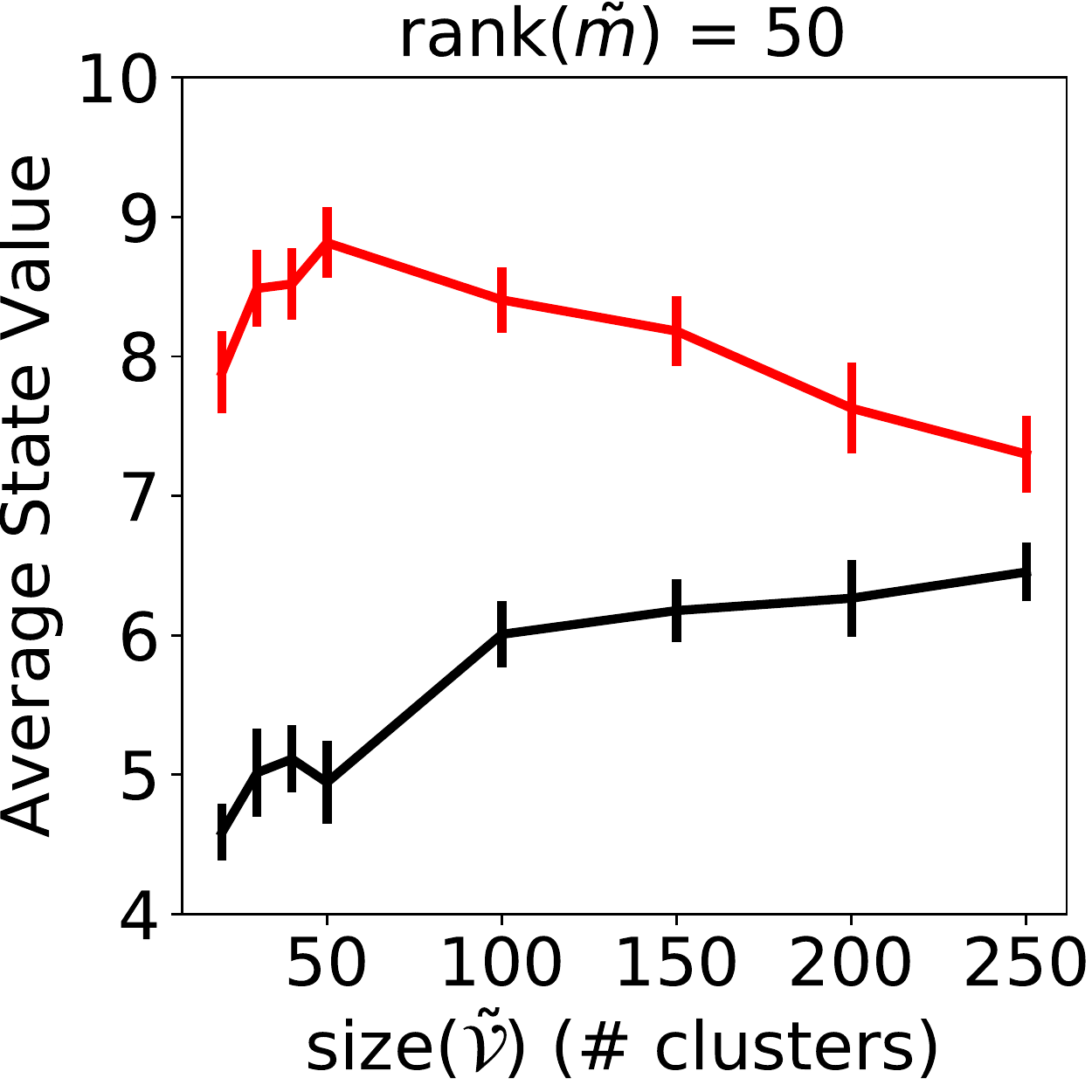}
 }
\subfigure[Catch (fixed $\tilde{m}$)]{
\includegraphics[scale=0.25]{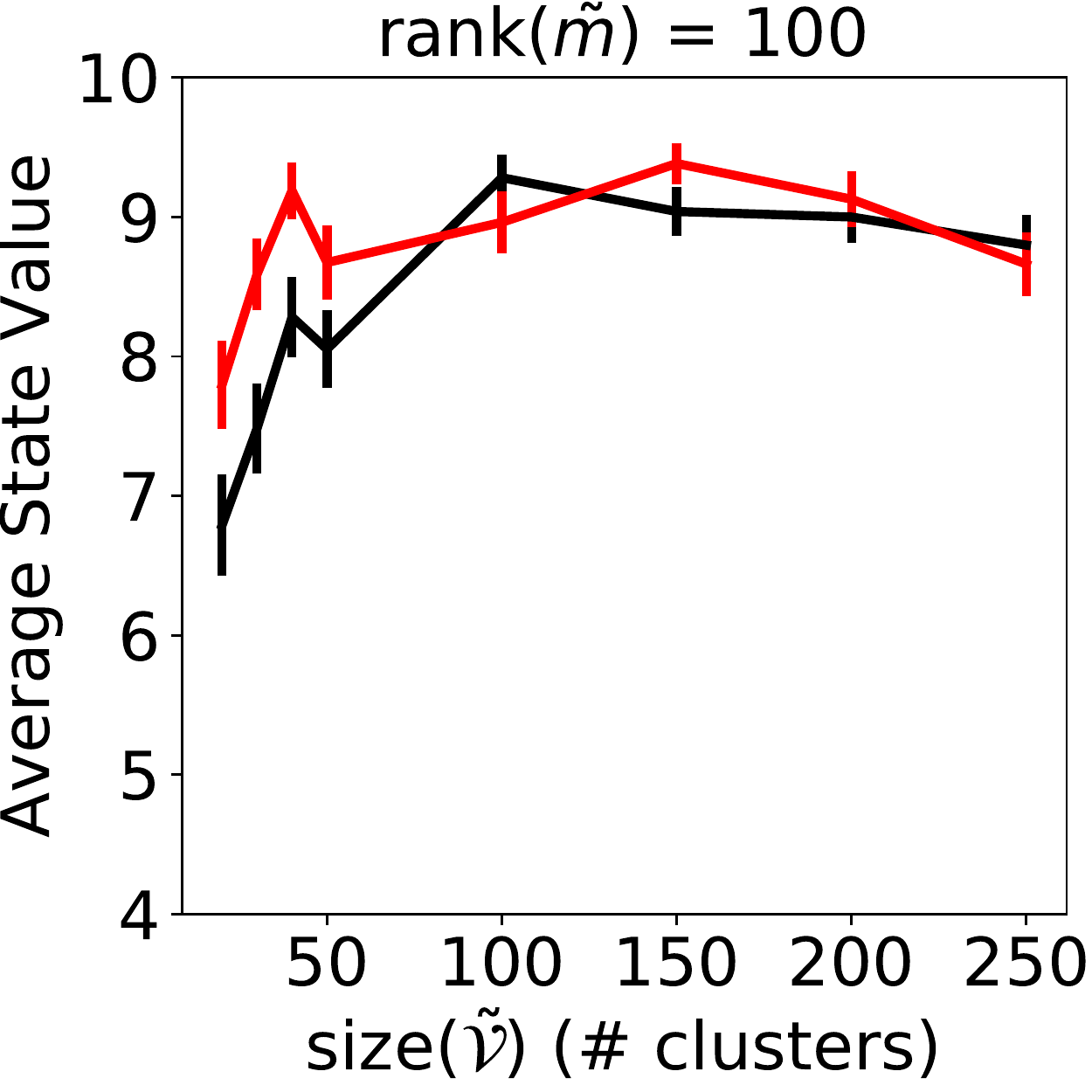}
 }
\subfigure[Catch (fixed $\tilde{m}$)]{
\includegraphics[scale=0.25]{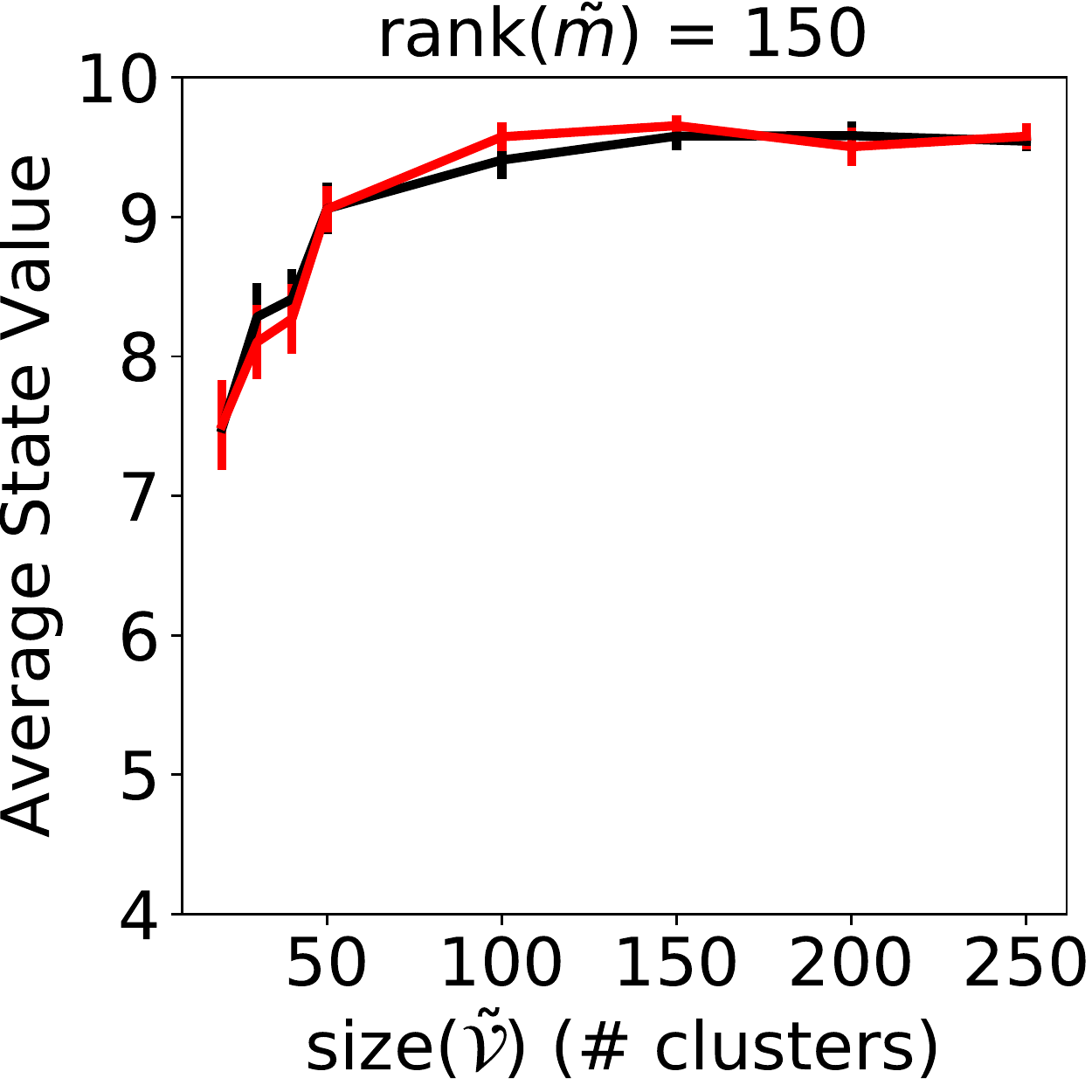}
}
\subfigure[Catch (fixed $\tilde{m}$)]{
\includegraphics[scale=0.25]{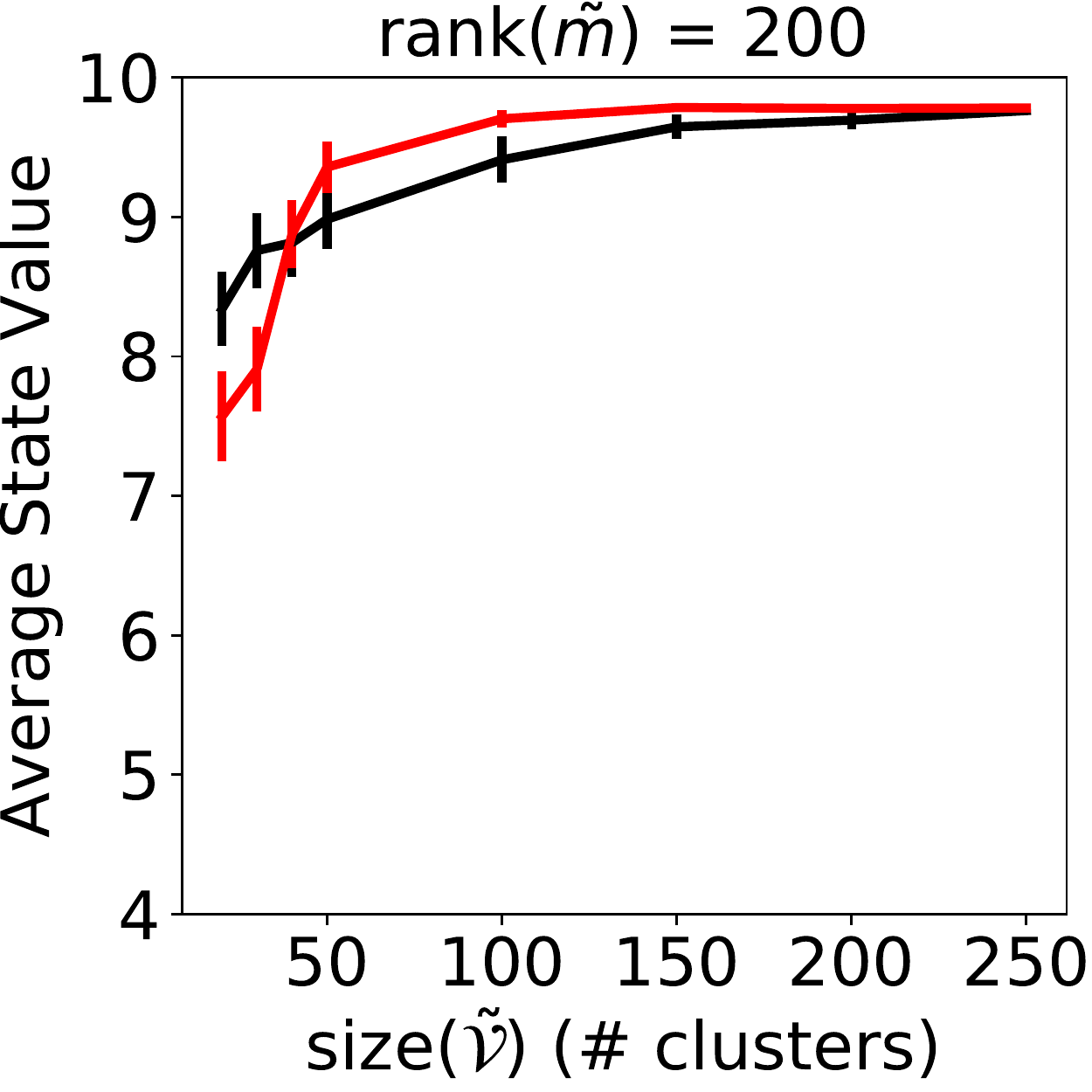}
 }
\subfigure[Catch (fixed $\tilde{m}$)]{
\includegraphics[scale=0.25]{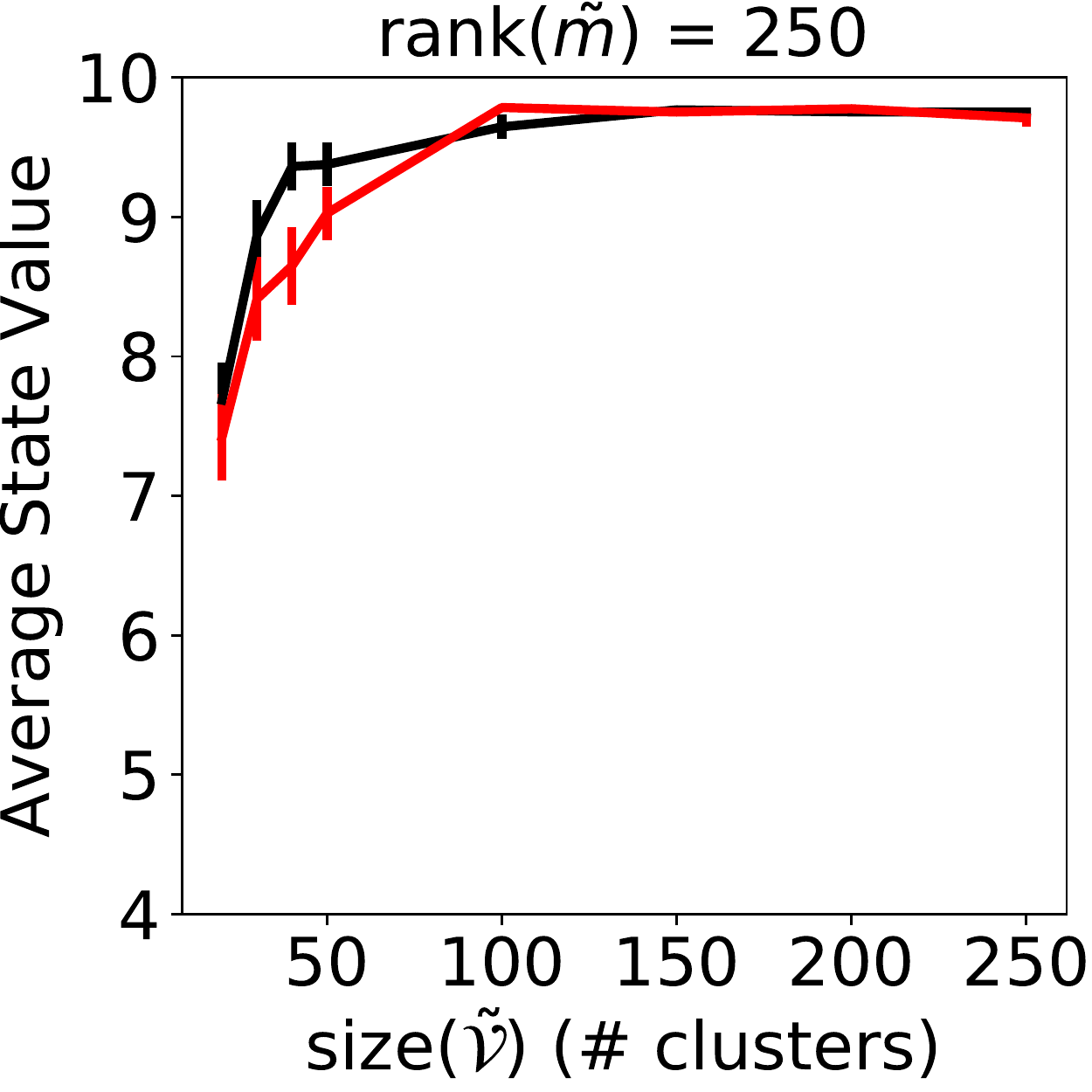}
}
\caption{All Catch results with fixed $\tilde{m}$ and $\span(\V) \approx \AV$.}
\end{figure}

\begin{figure}[H]
\centering
\subfigure[Catch (fixed $\V$)]{
\includegraphics[scale=0.25]{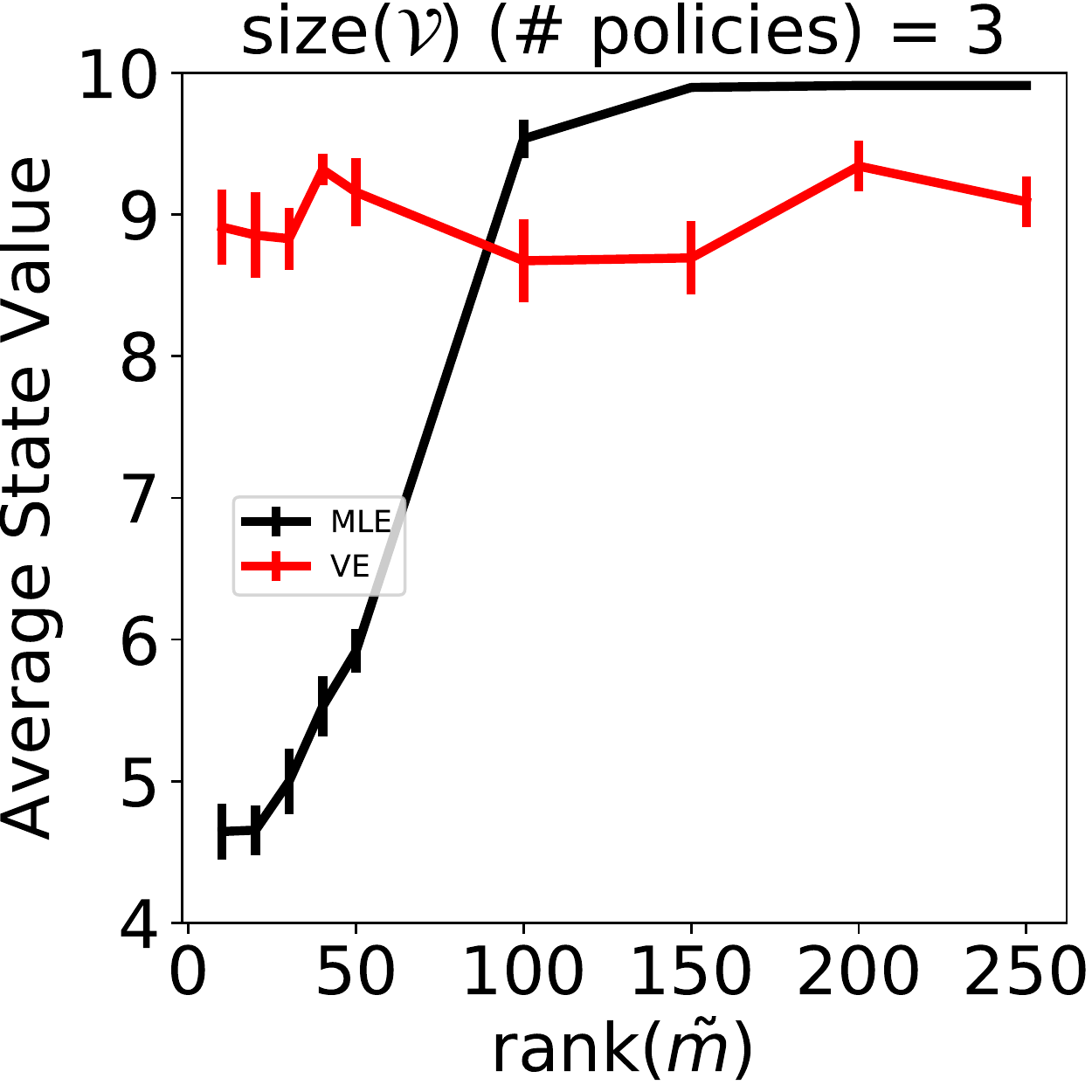}
 }
\subfigure[Catch (fixed $\V$)]{
\includegraphics[scale=0.25]{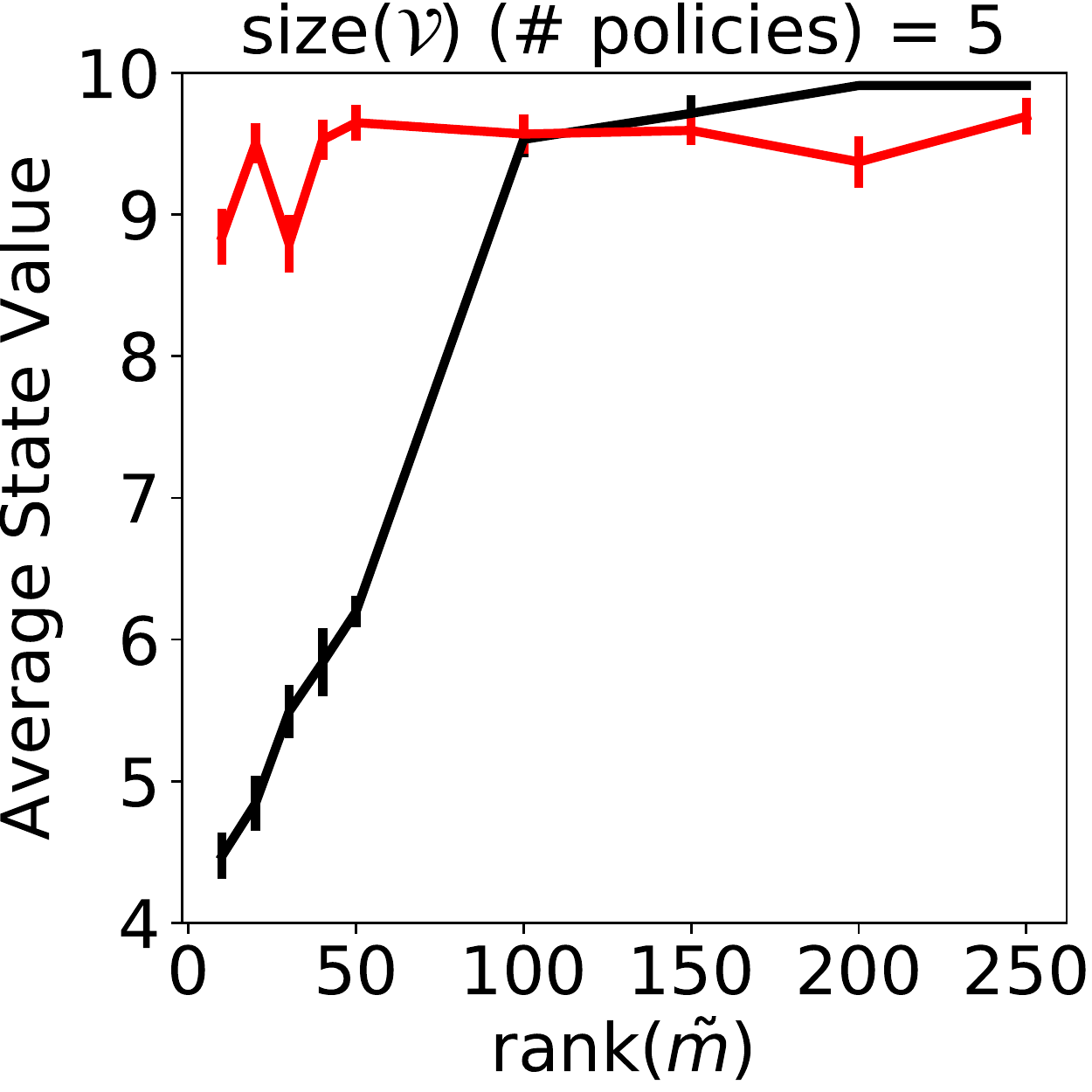}
 }
\subfigure[Catch (fixed $\V$)]{
\includegraphics[scale=0.25]{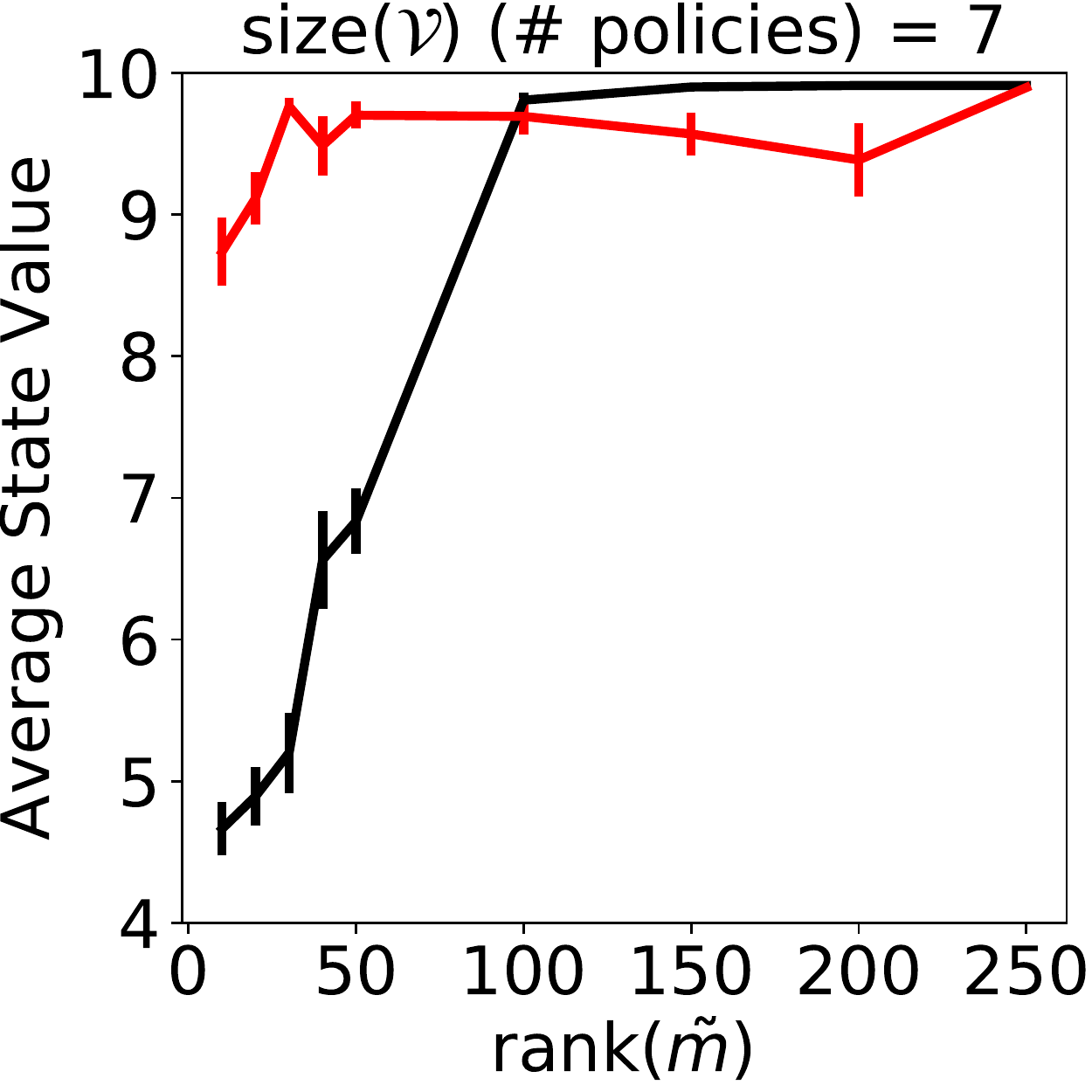}
 }
\subfigure[\textbf{Catch (fixed $\boldsymbol{\V}$)}]{
\includegraphics[scale=0.25]{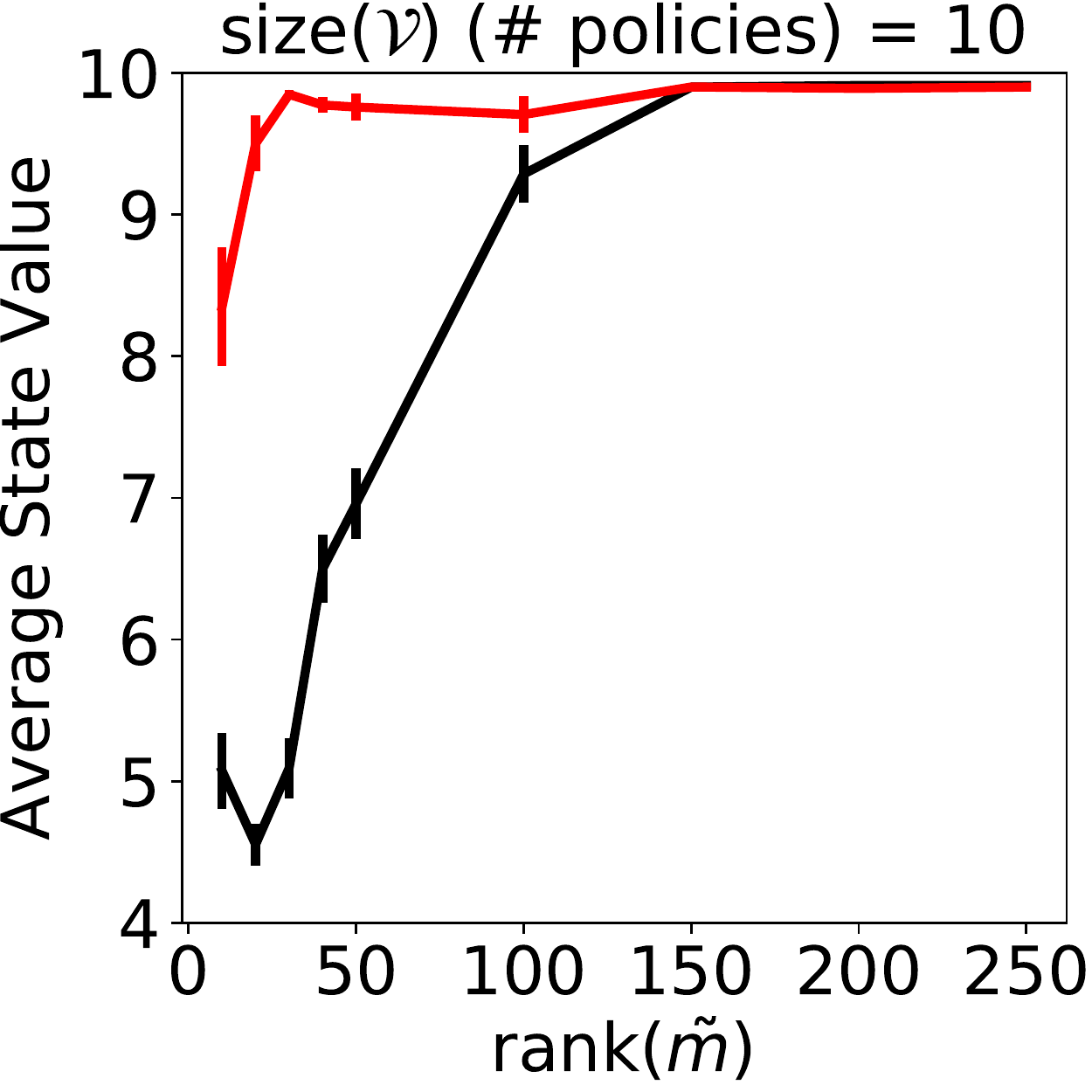}
}
\subfigure[Catch (fixed $\V$)]{
\includegraphics[scale=0.25]{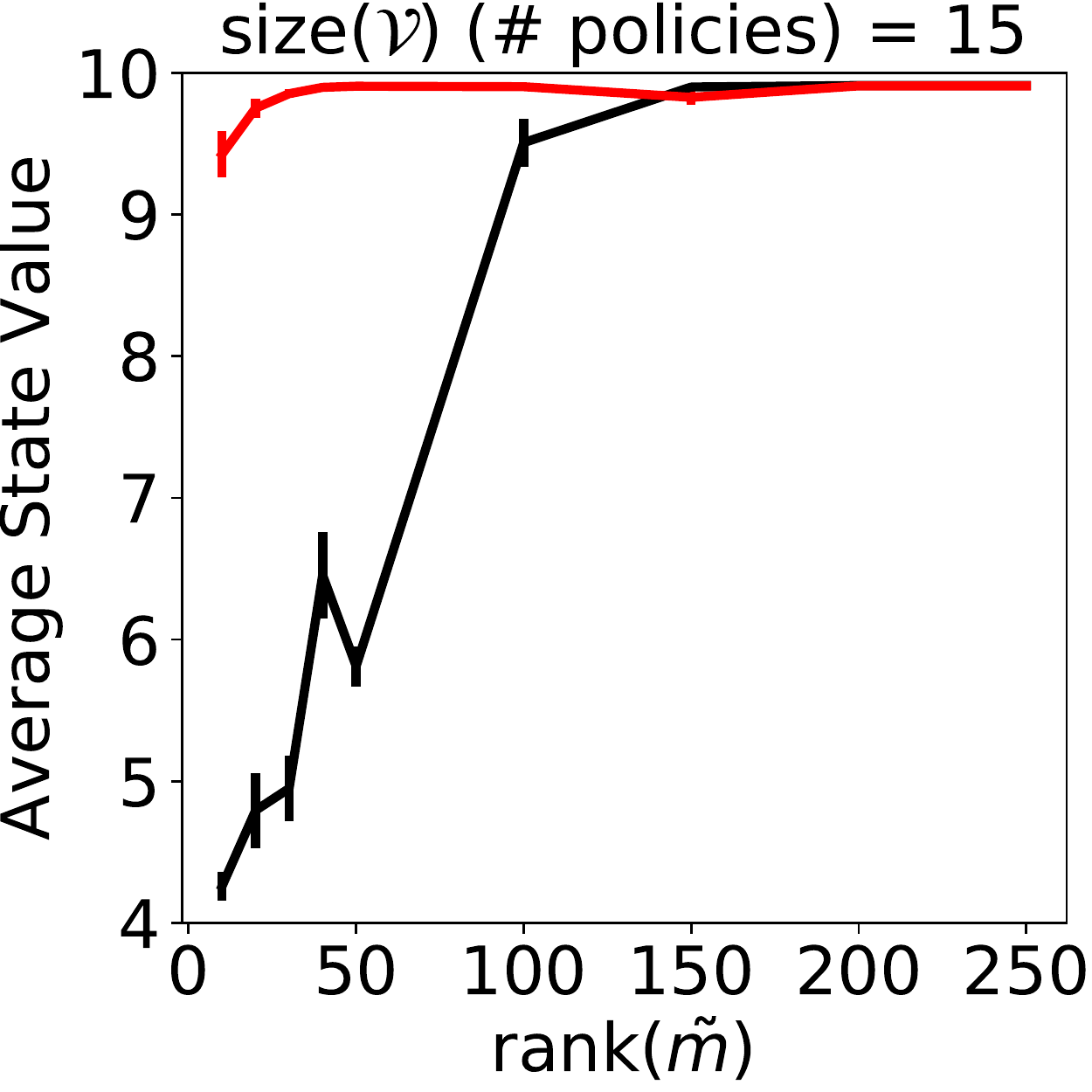}
}
\subfigure[Catch (fixed $\V$)]{
\includegraphics[scale=0.25]{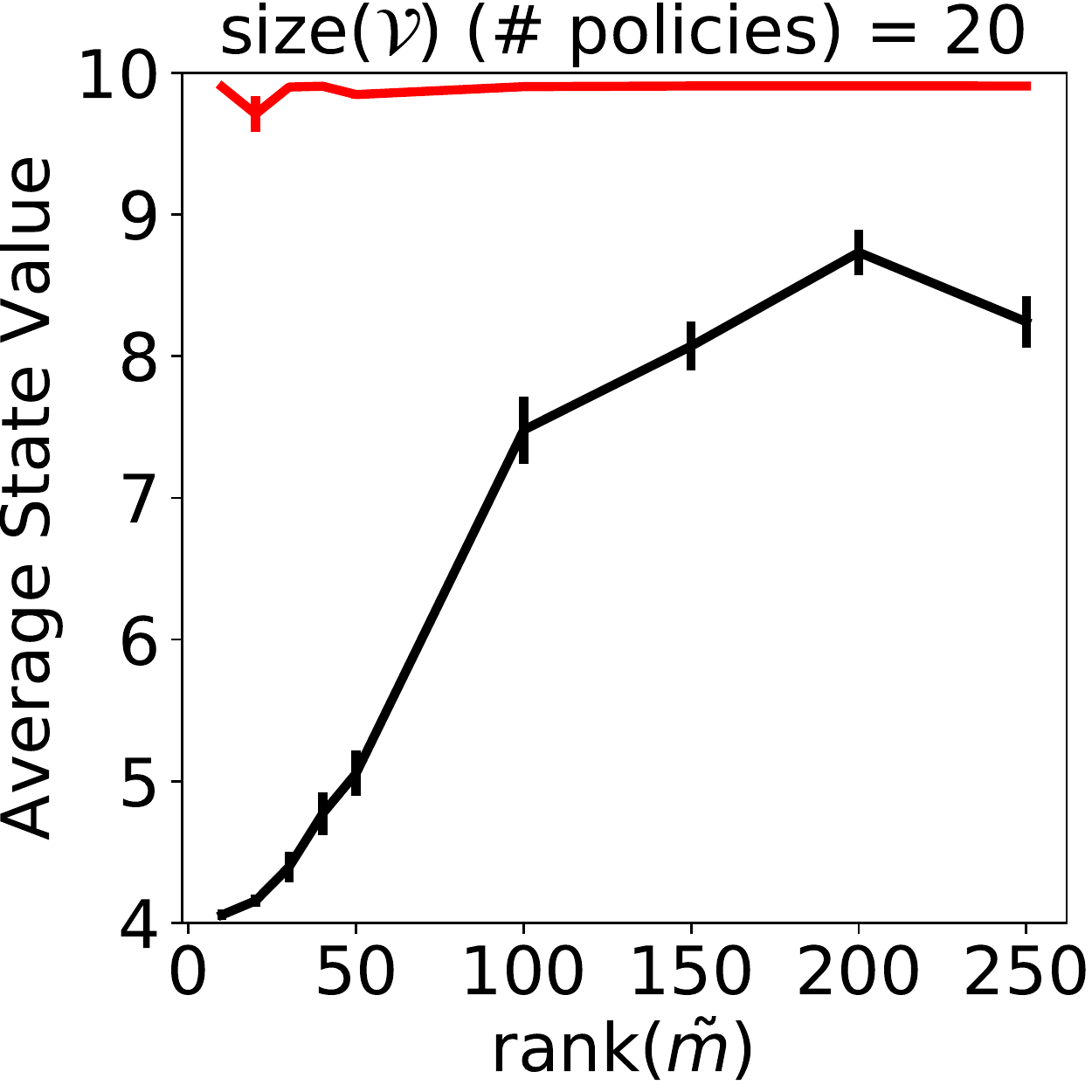}
}
\subfigure[Catch (fixed $\V$)]{
\includegraphics[scale=0.25]{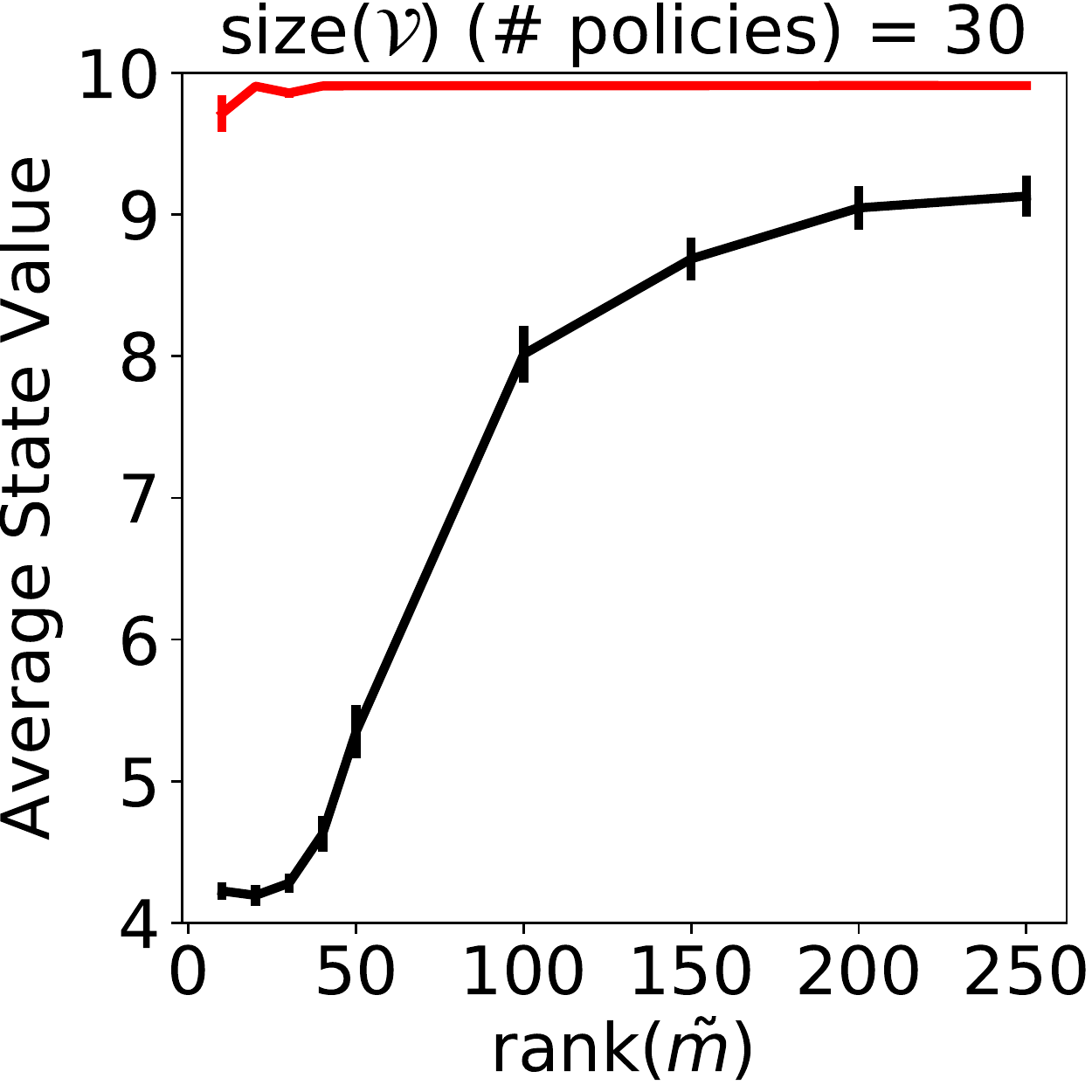}
}
\subfigure[Catch (fixed $\V$)]{
\includegraphics[scale=0.25]{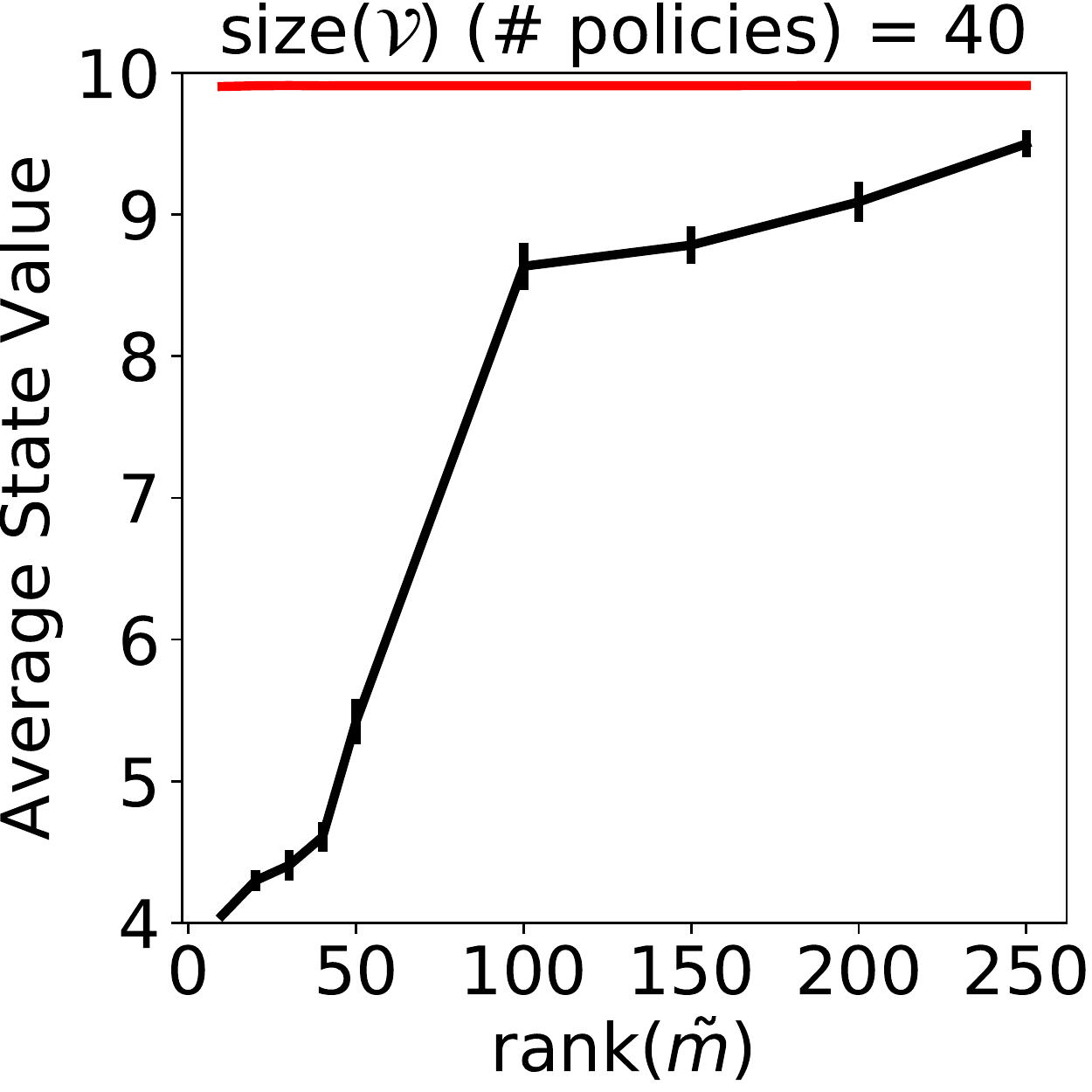}
}
\caption{All Catch results with fixed $\V$ and $\V = \{ v_{\pi_1}, \ldots, v_{\pi_n} \}$.}
\end{figure}

\begin{figure}[H]
\centering
\subfigure[Catch (fixed $\tilde{m}$)]{
\includegraphics[scale=0.25]{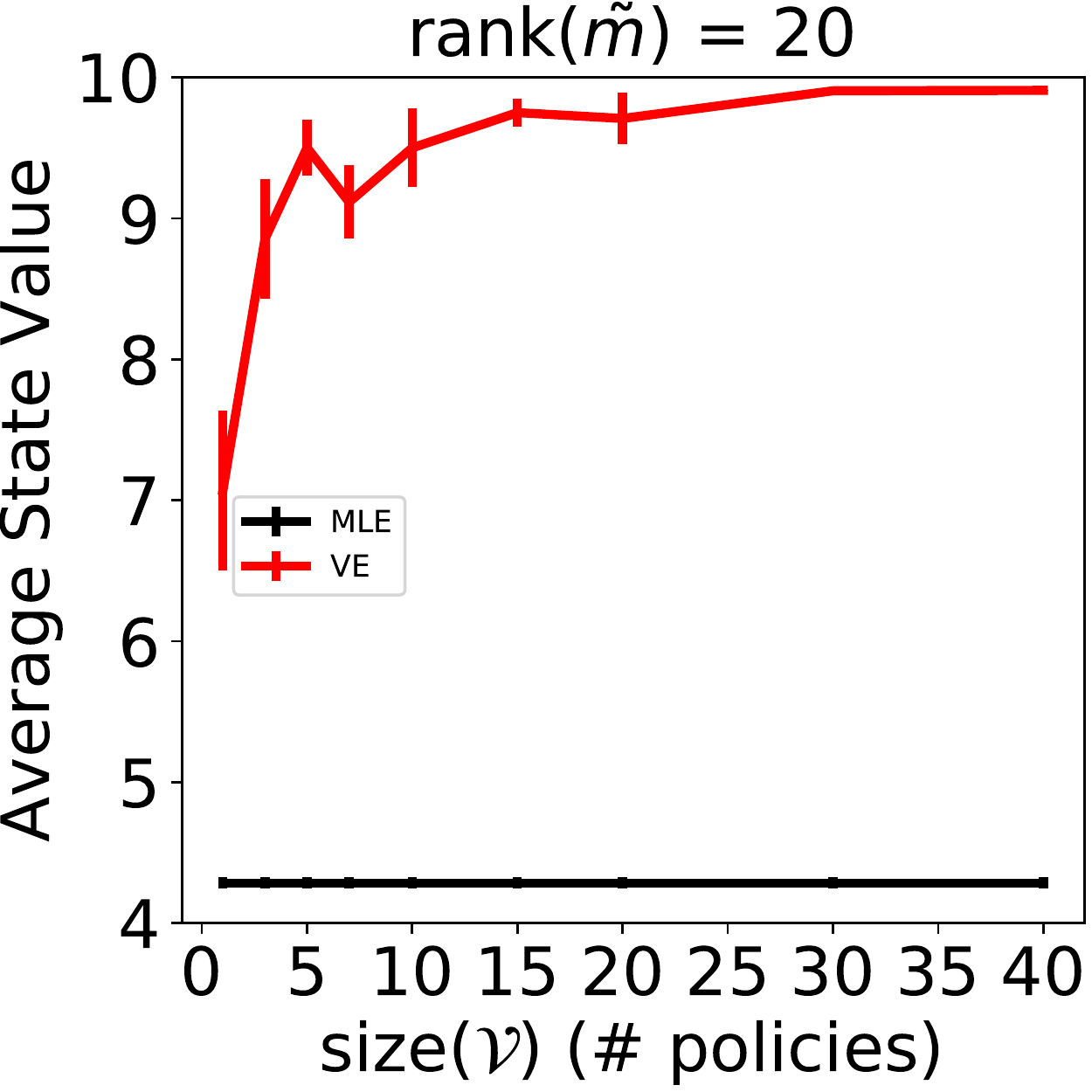}
}
\subfigure[Catch (fixed $\tilde{m}$)]{
\includegraphics[scale=0.25]{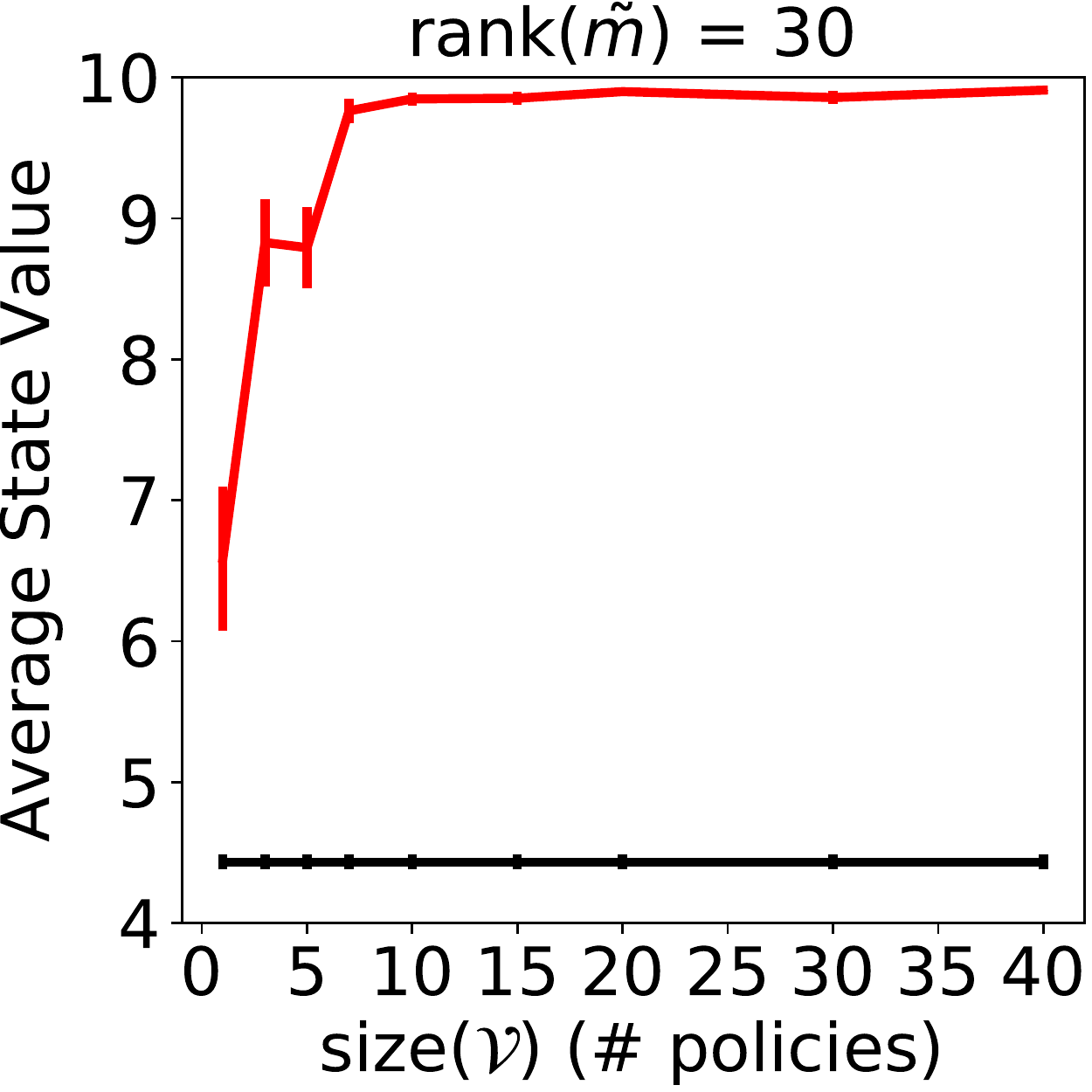}
}
\subfigure[Catch (fixed $\tilde{m}$)]{
\includegraphics[scale=0.25]{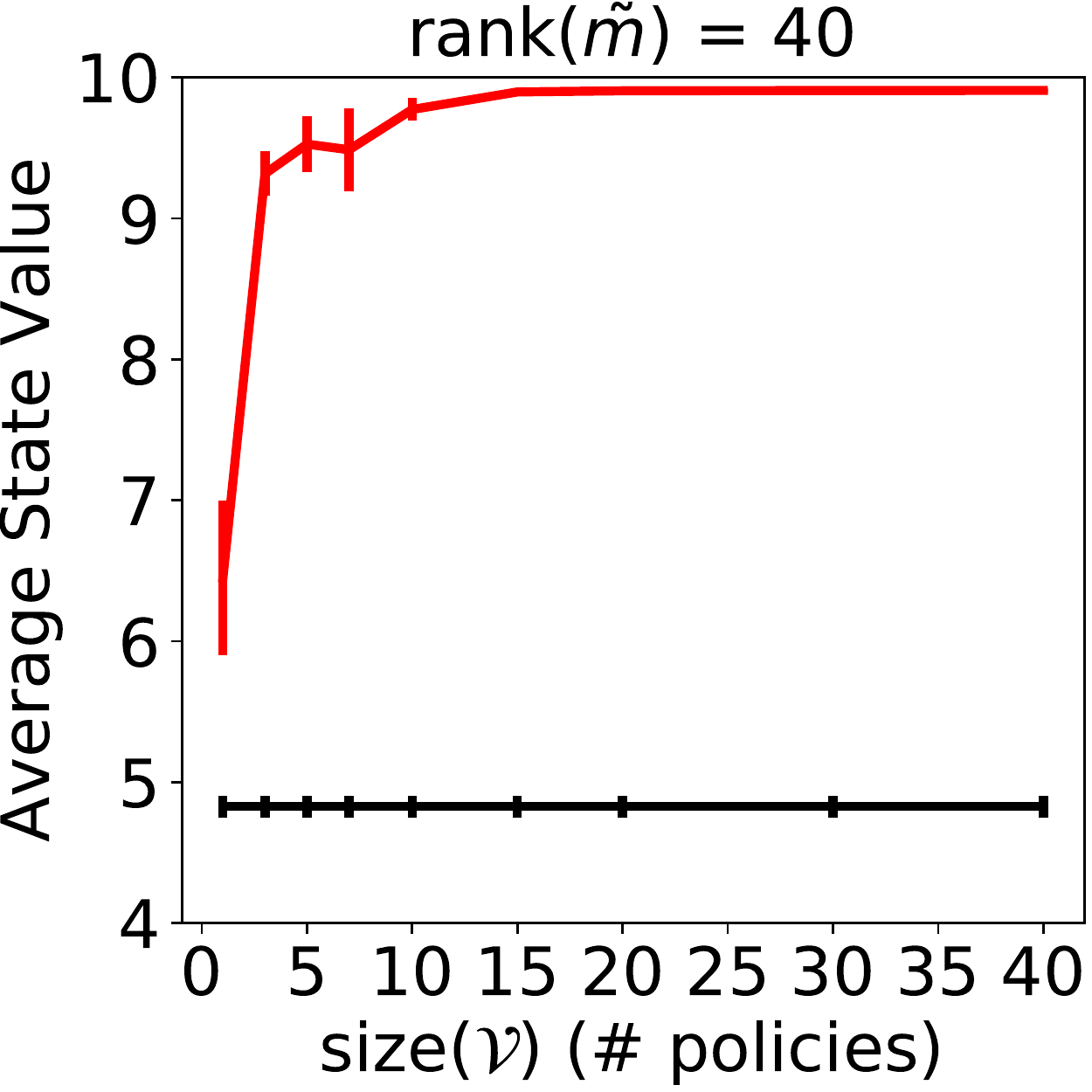}
}
\subfigure[\textbf{Catch (fixed $\boldsymbol{\tilde{m}}$)}]{
\includegraphics[scale=0.25]{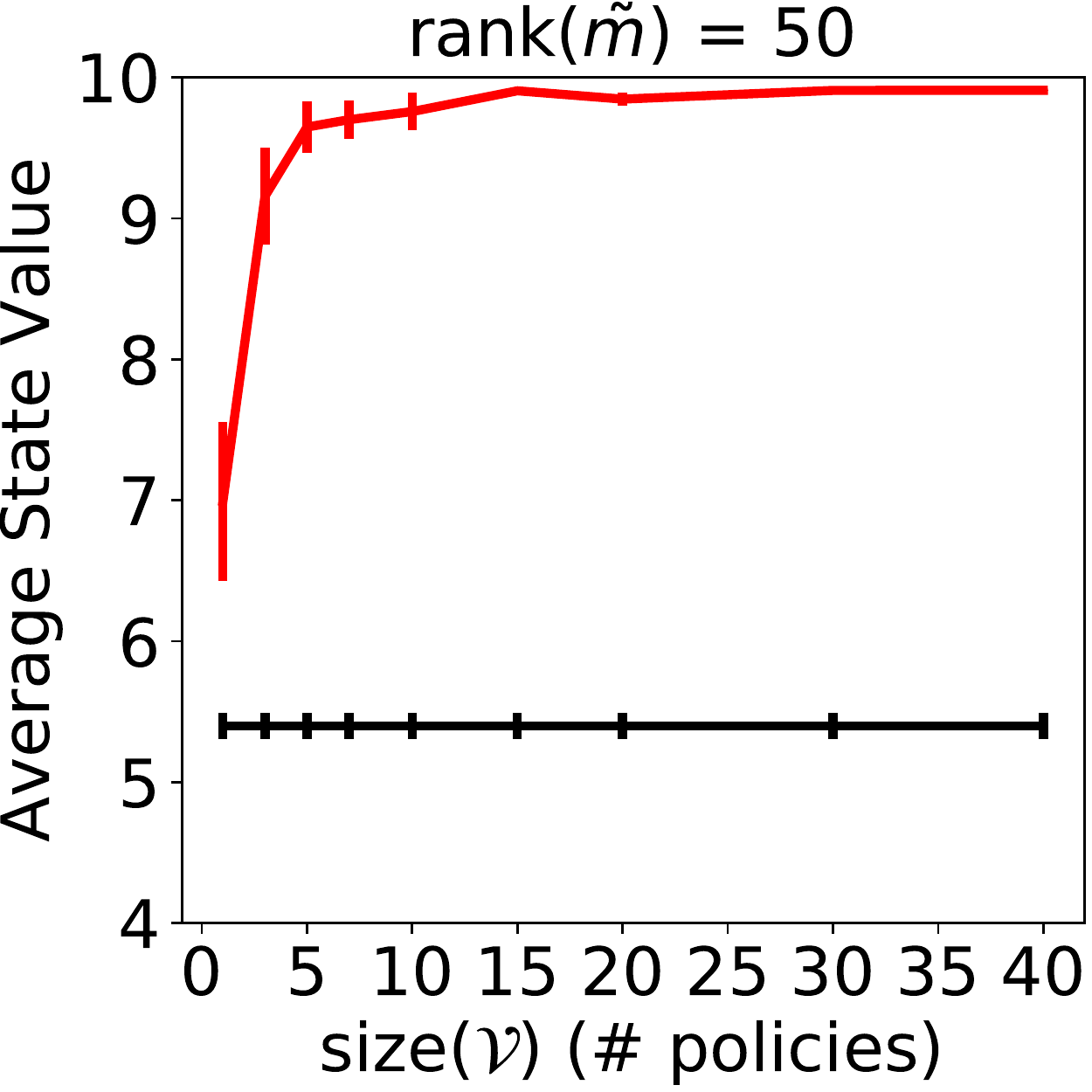}
}
\subfigure[Catch (fixed $\tilde{m}$)]{
\includegraphics[scale=0.25]{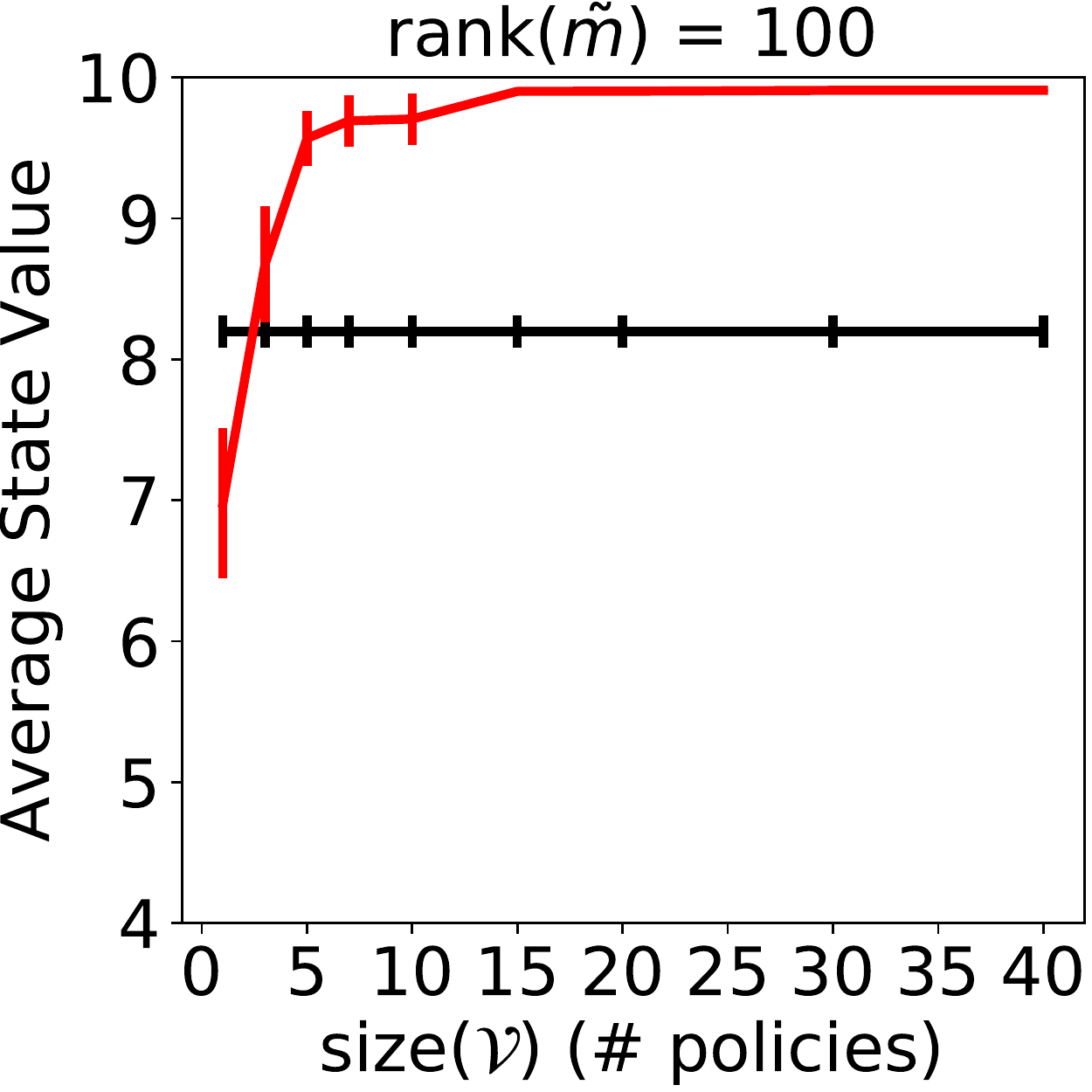}
}
\subfigure[Catch (fixed $\tilde{m}$)]{
\includegraphics[scale=0.25]{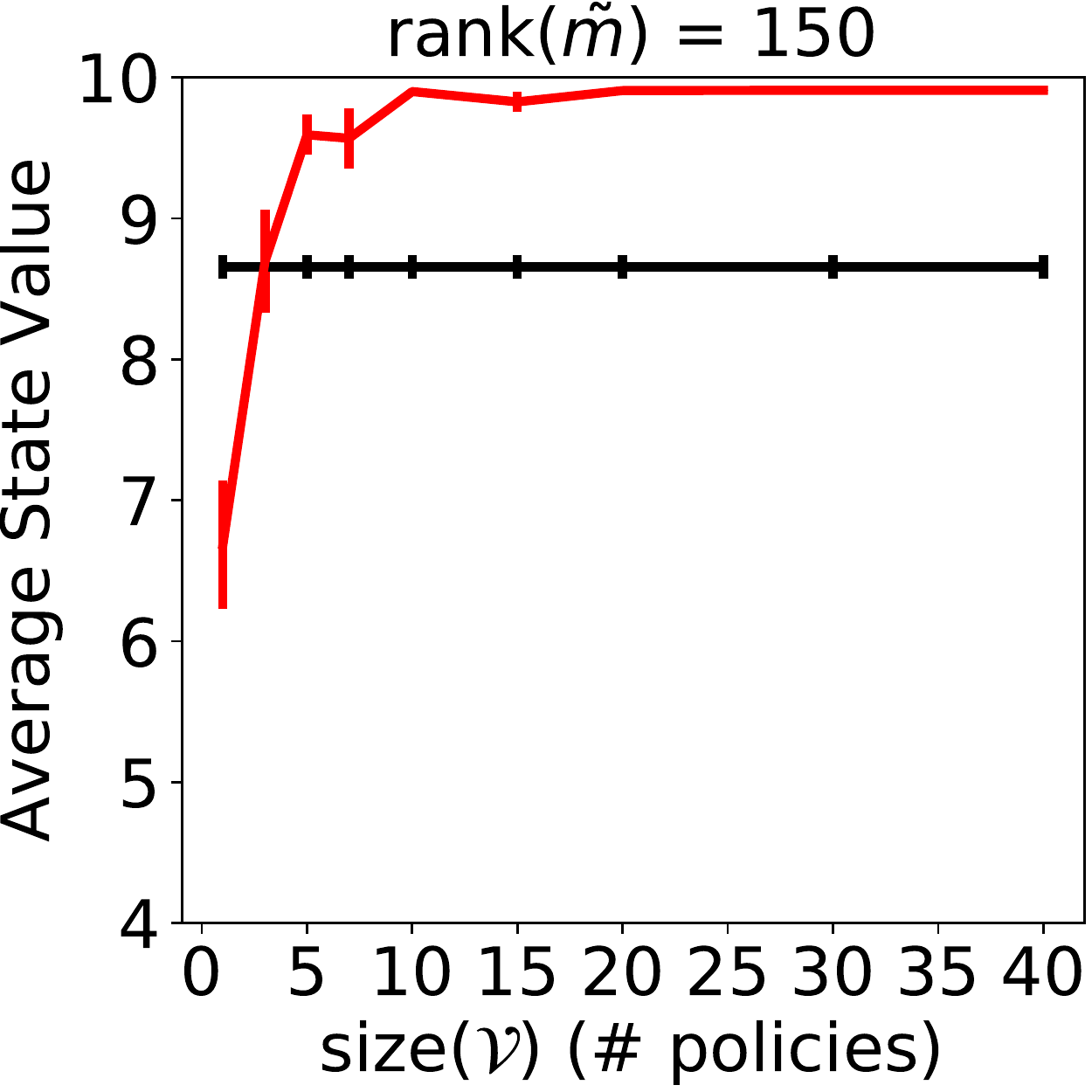}
}
\subfigure[Catch (fixed $\tilde{m}$)]{
\includegraphics[scale=0.25]{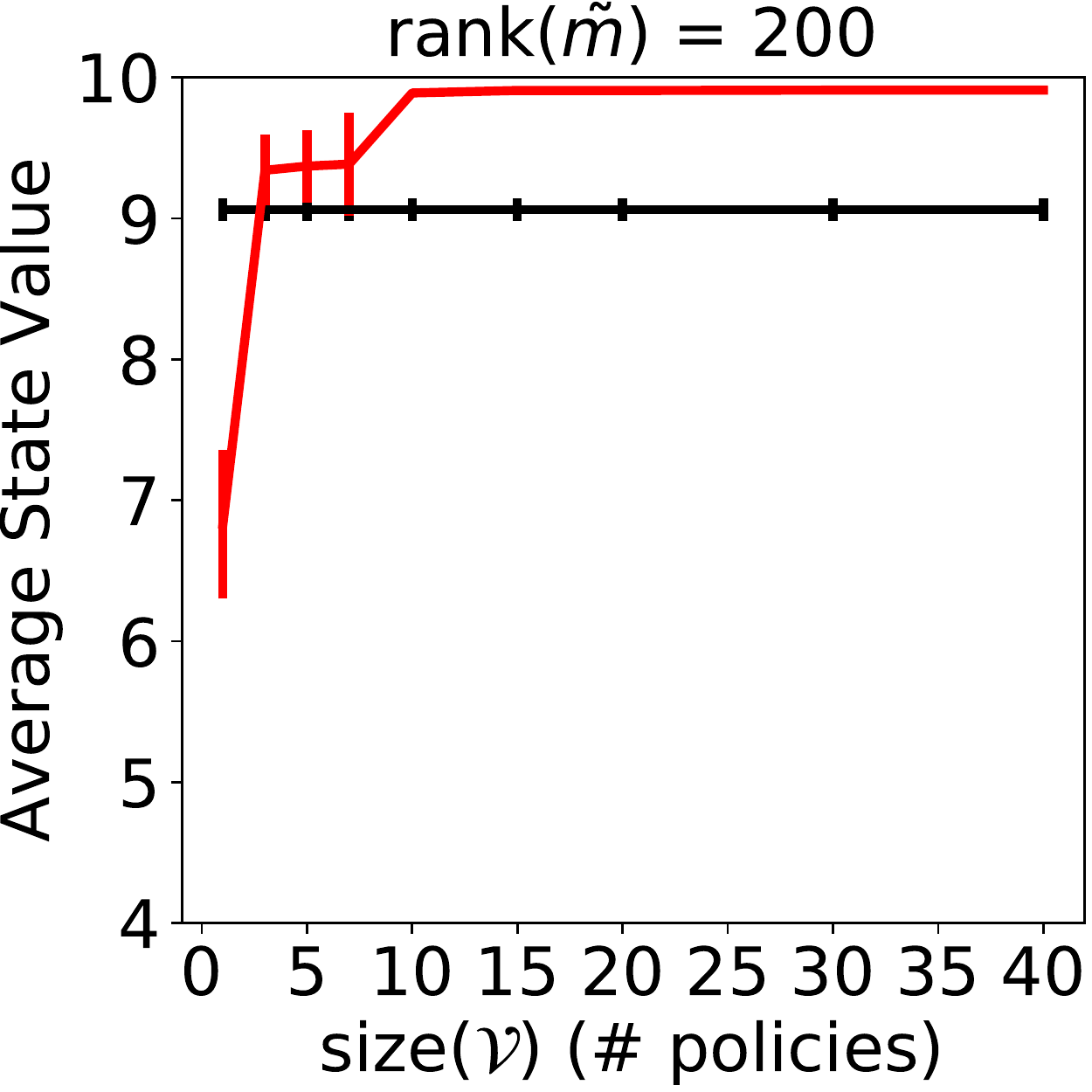}
}
\subfigure[Catch (fixed $\tilde{m}$)]{
\includegraphics[scale=0.25]{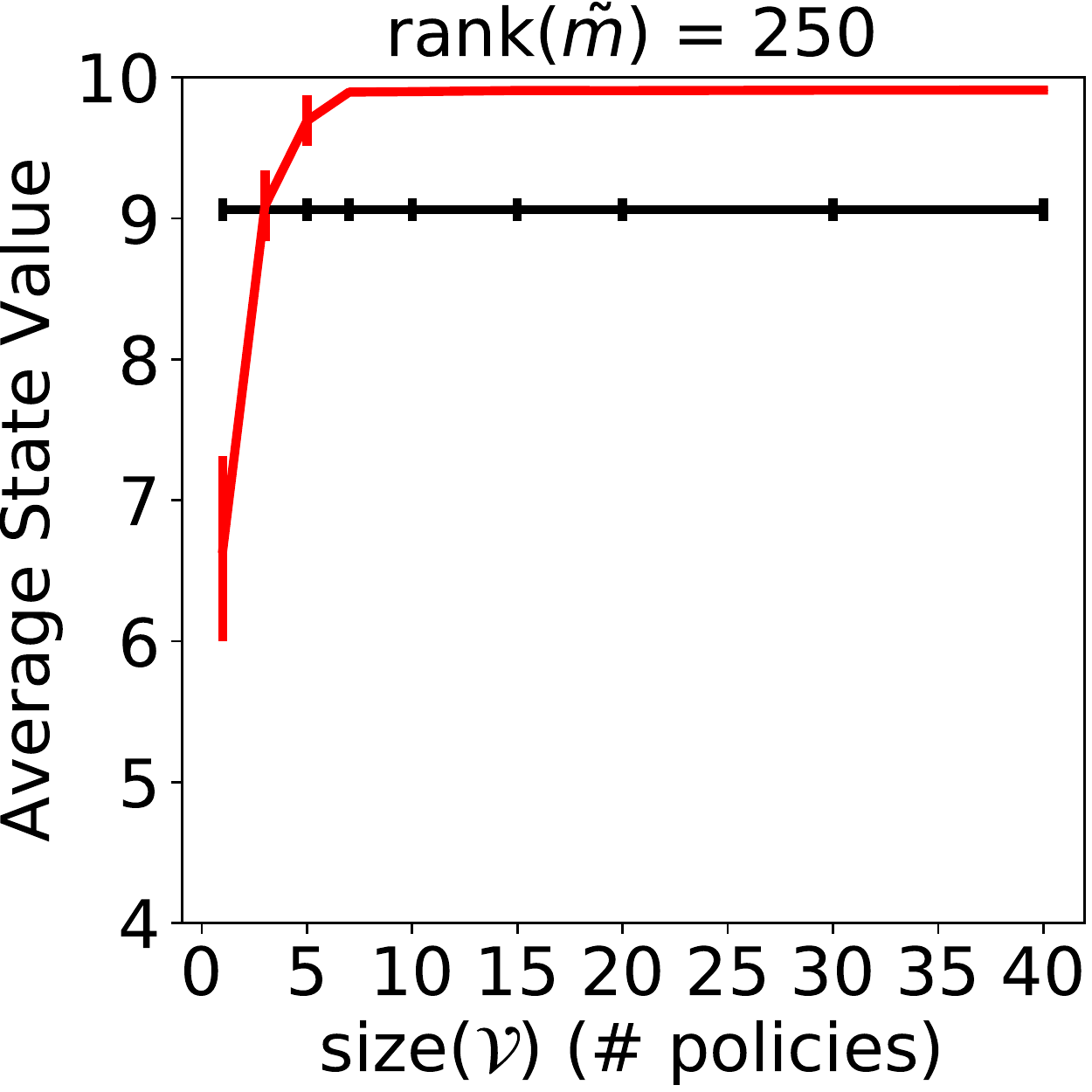}
}
\caption{All Catch results with fixed $\tilde{m}$ and $\V = \{ v_{\pi_1}, \ldots, v_{\pi_n} \}$.}
\end{figure}

\begin{figure}[H]
\centering
\subfigure[Four Rooms (fixed $\V$)]{
\includegraphics[scale=0.25]{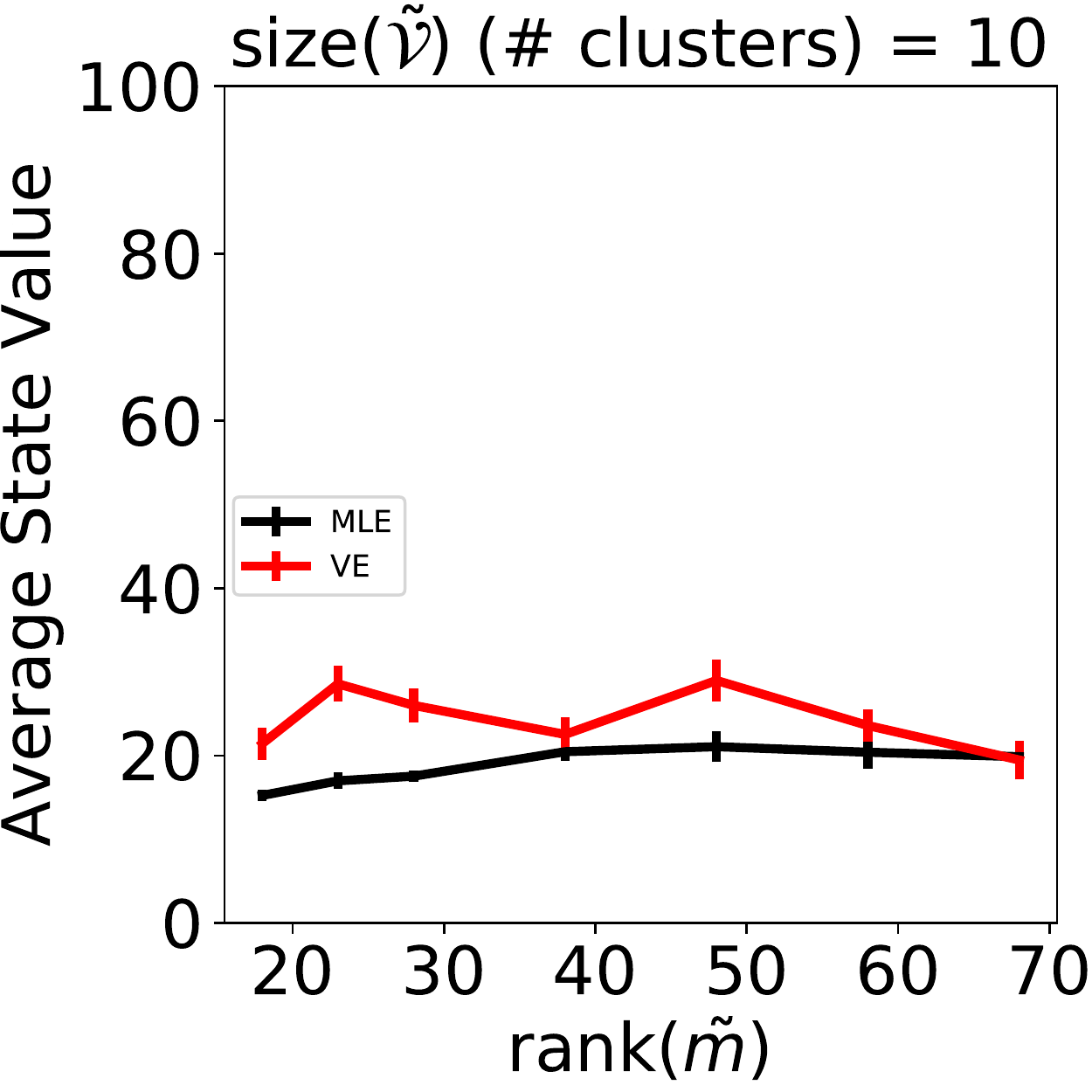}
}
\subfigure[Four Rooms (fixed $\V$)]{
\includegraphics[scale=0.25]{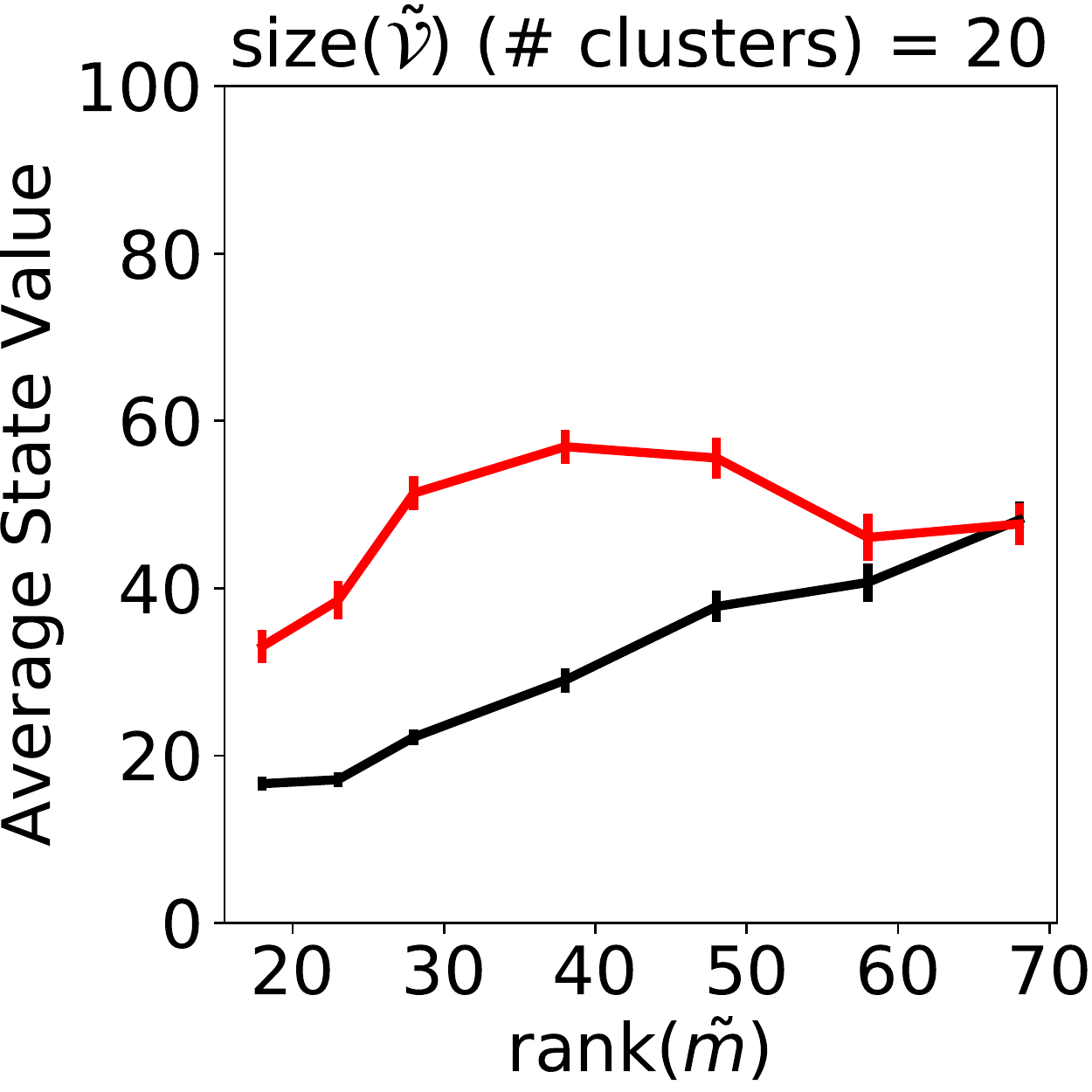}
}
\subfigure[Four Rooms (fixed $\V$)]{
\includegraphics[scale=0.25]{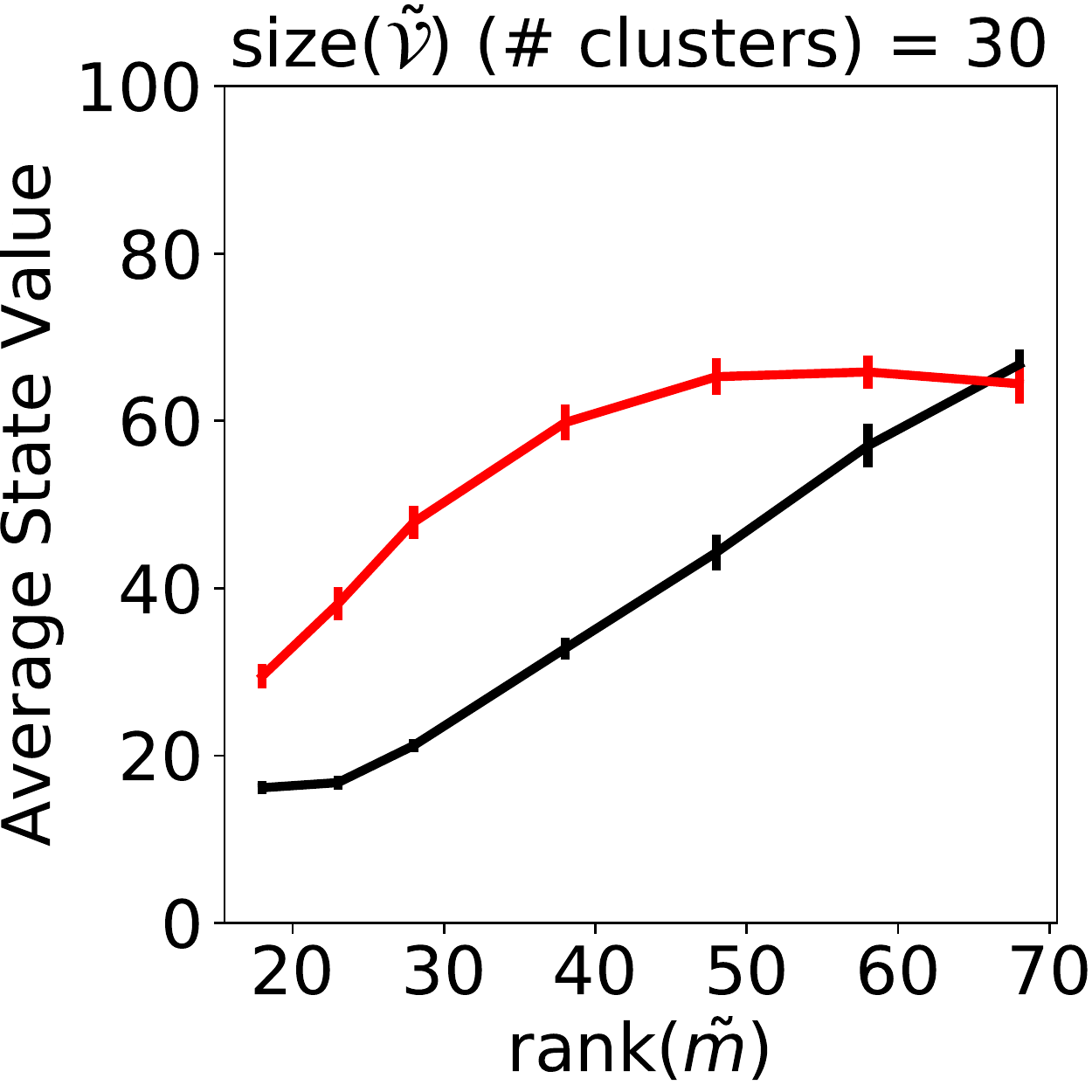}
}

\subfigure[Four Rooms (fixed $\V$)]{
\includegraphics[scale=0.25]{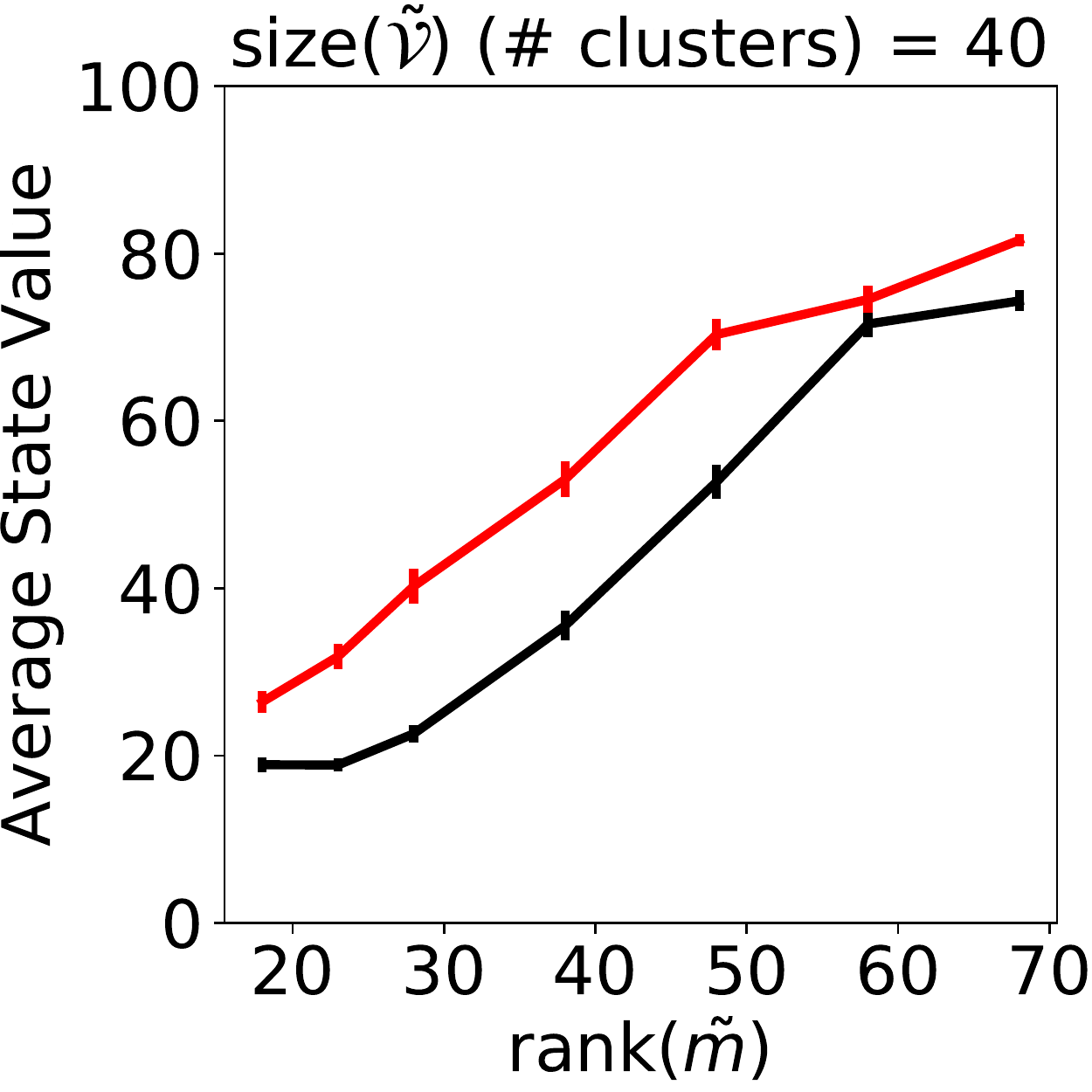}
}
\subfigure[Four Rooms (fixed $\V$)]{
\includegraphics[scale=0.25]{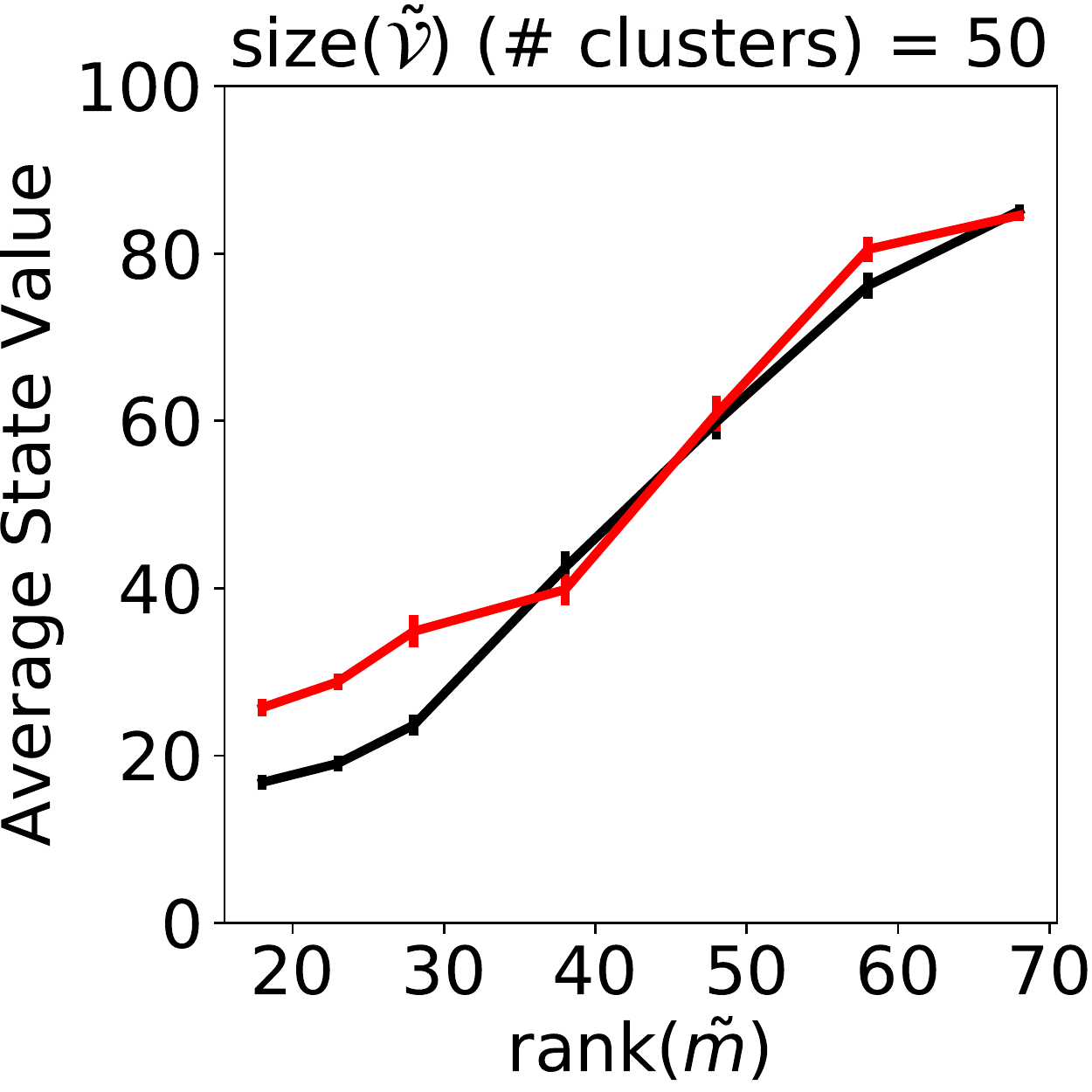}
}
\subfigure[Four Rooms (fixed $\V$)]{
\includegraphics[scale=0.25]{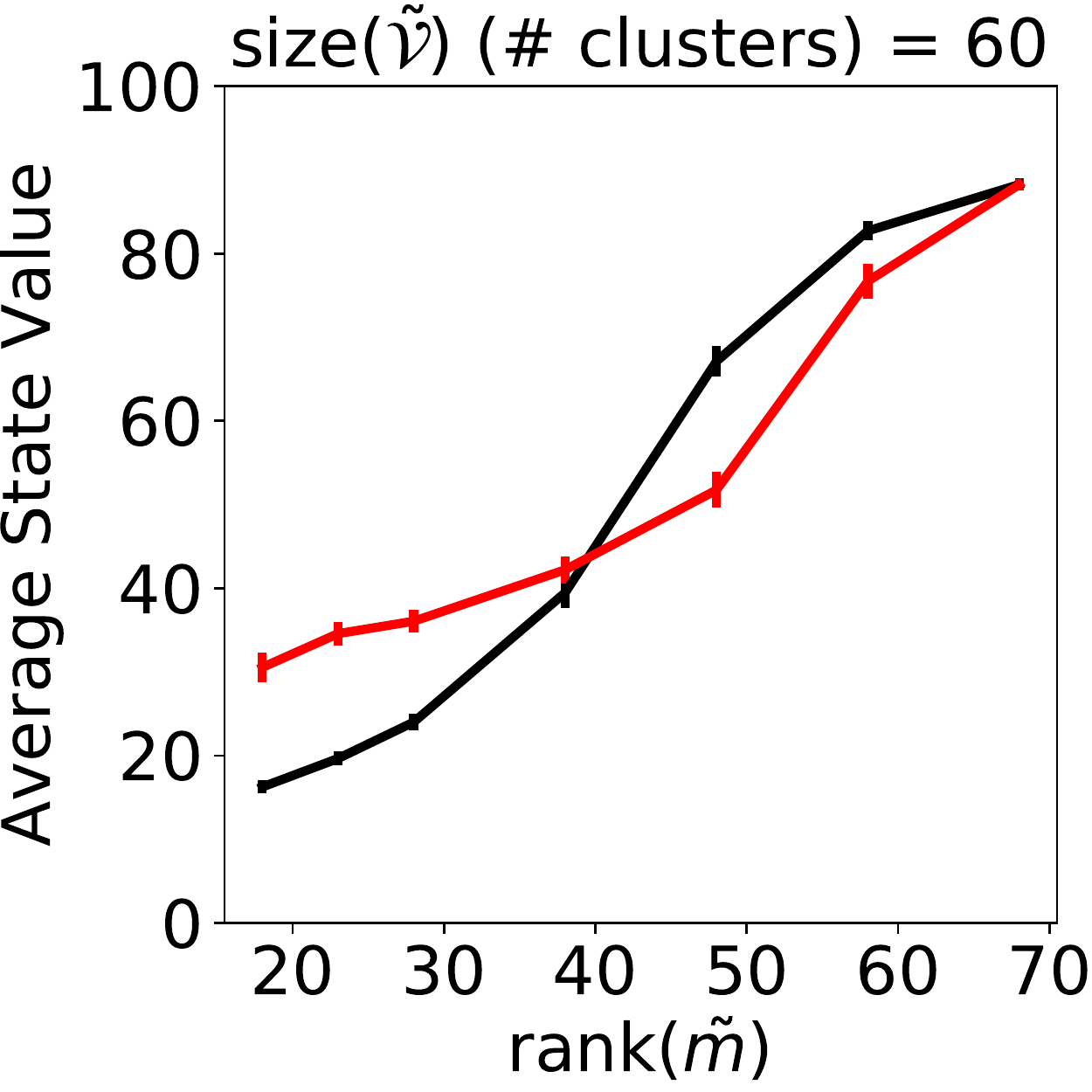}
}
\caption{All Four Rooms results with fixed $\V$ and $\V = \AV$.}
\end{figure}

\begin{figure}[H]
\centering
\subfigure[Four Rooms (fixed $\mt$)]{
\includegraphics[scale=0.25]{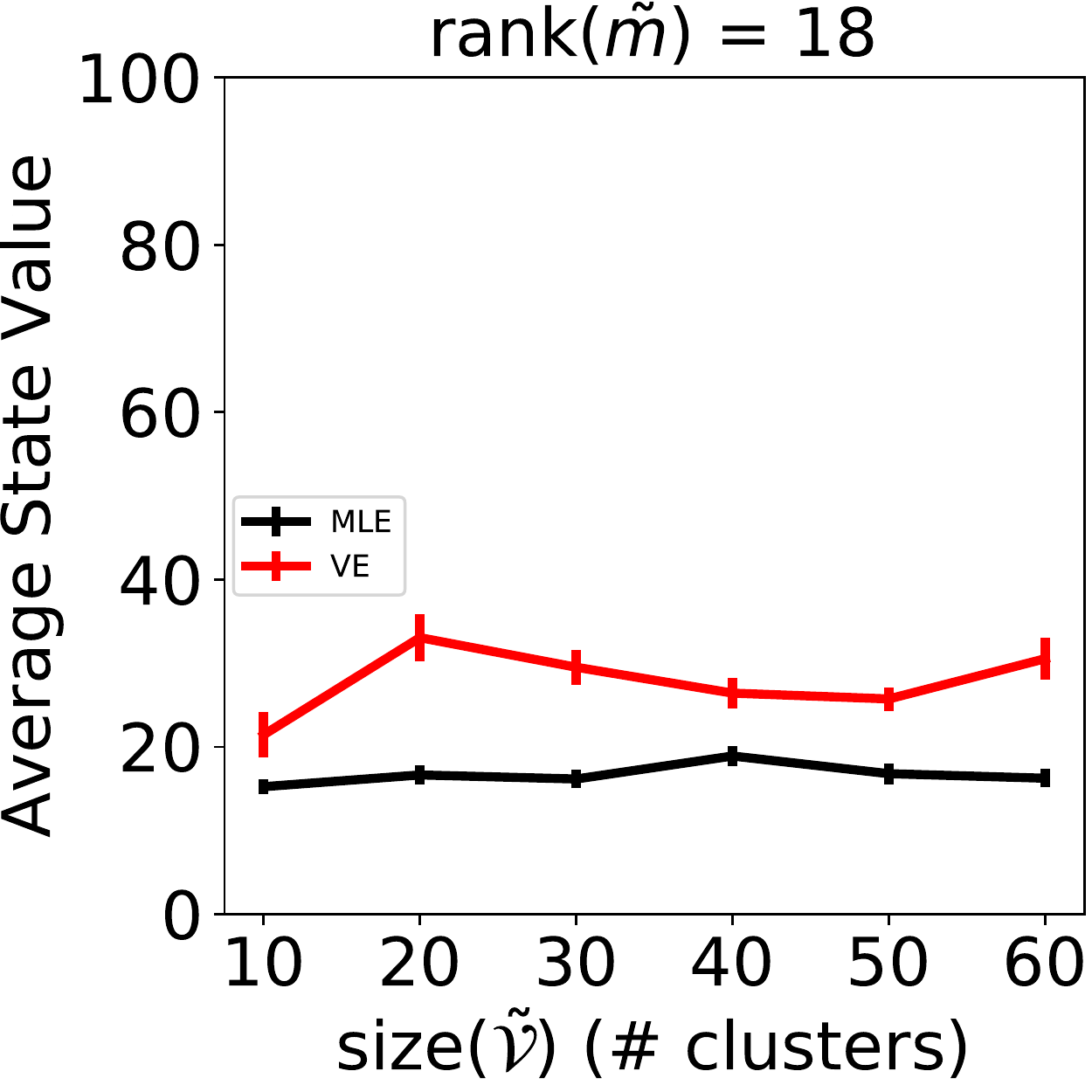}
}
\subfigure[Four Rooms (fixed $\mt$)]{
\includegraphics[scale=0.25]{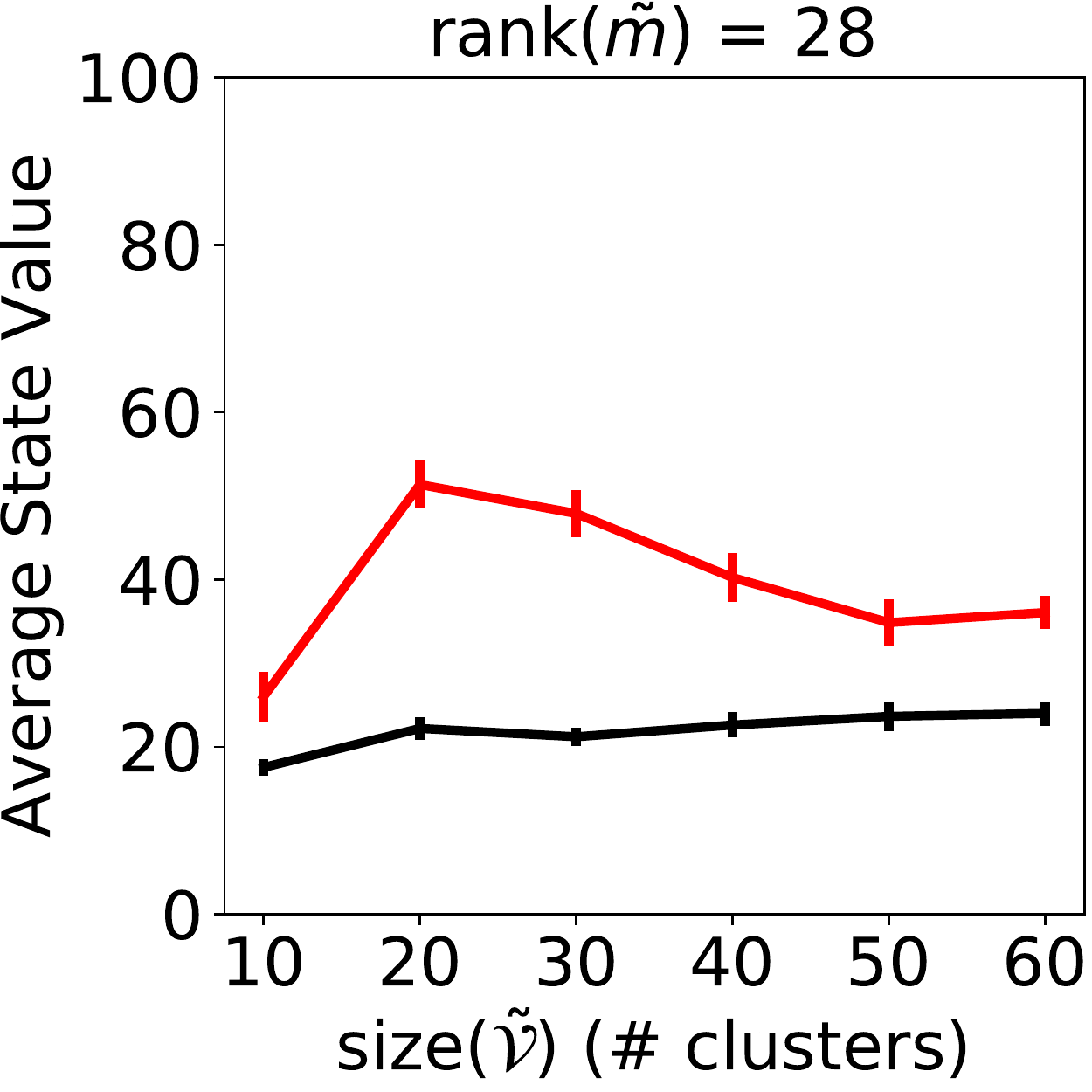}
}
\subfigure[Four Rooms (fixed $\mt$)]{
\includegraphics[scale=0.25]{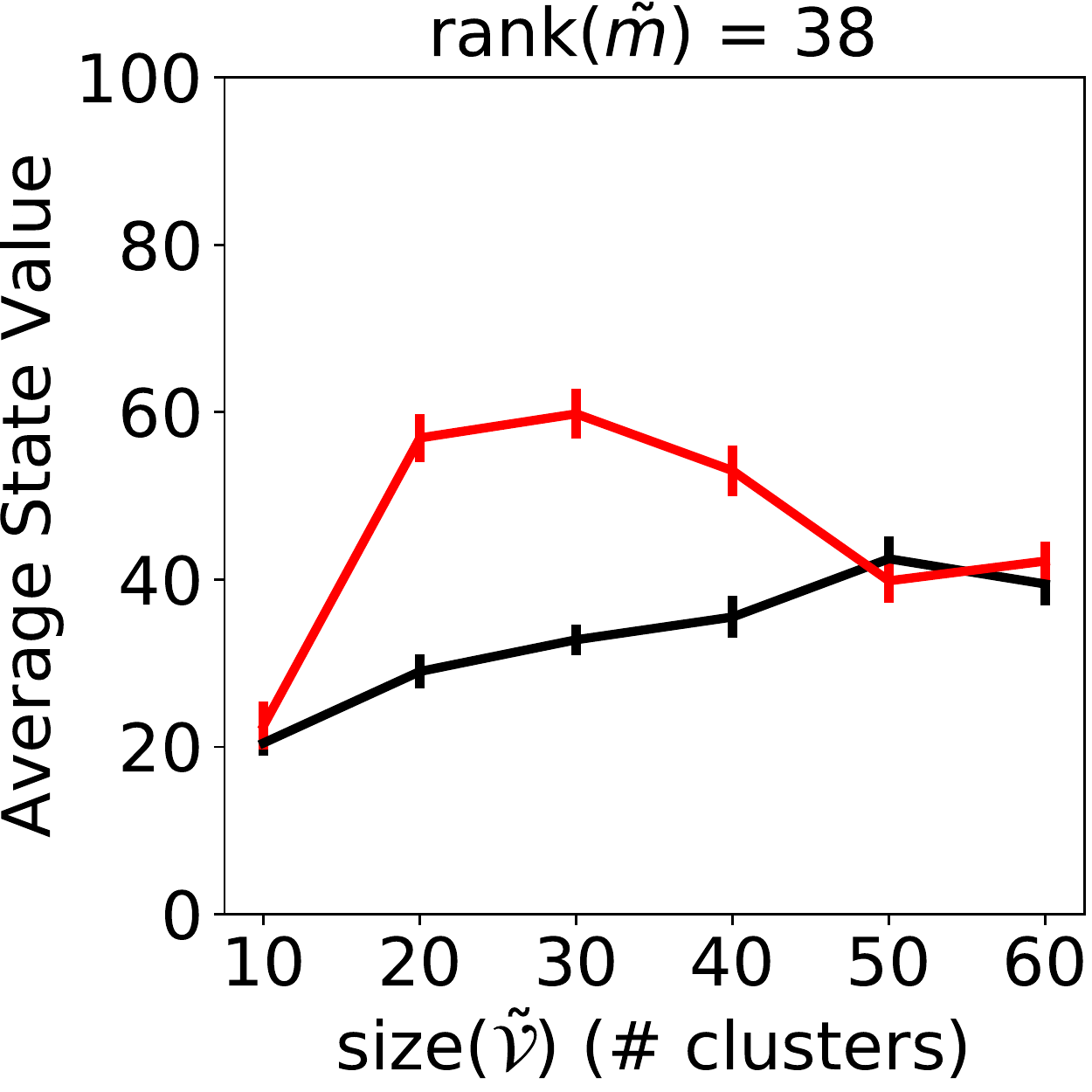}
}

\subfigure[Four Rooms (fixed $\mt$)]{
\includegraphics[scale=0.25]{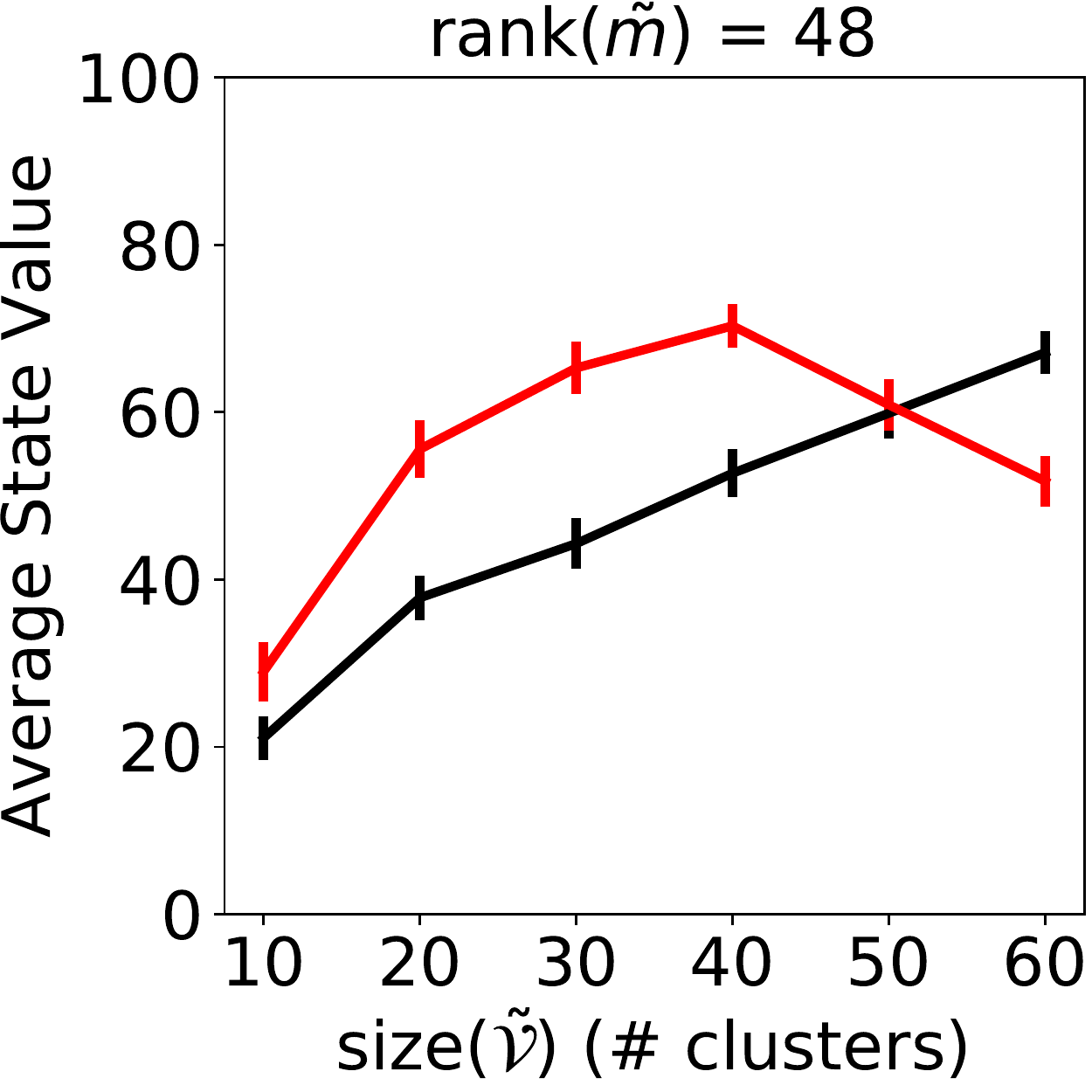}
}
\subfigure[Four Rooms (fixed $\mt$)]{
\includegraphics[scale=0.25]{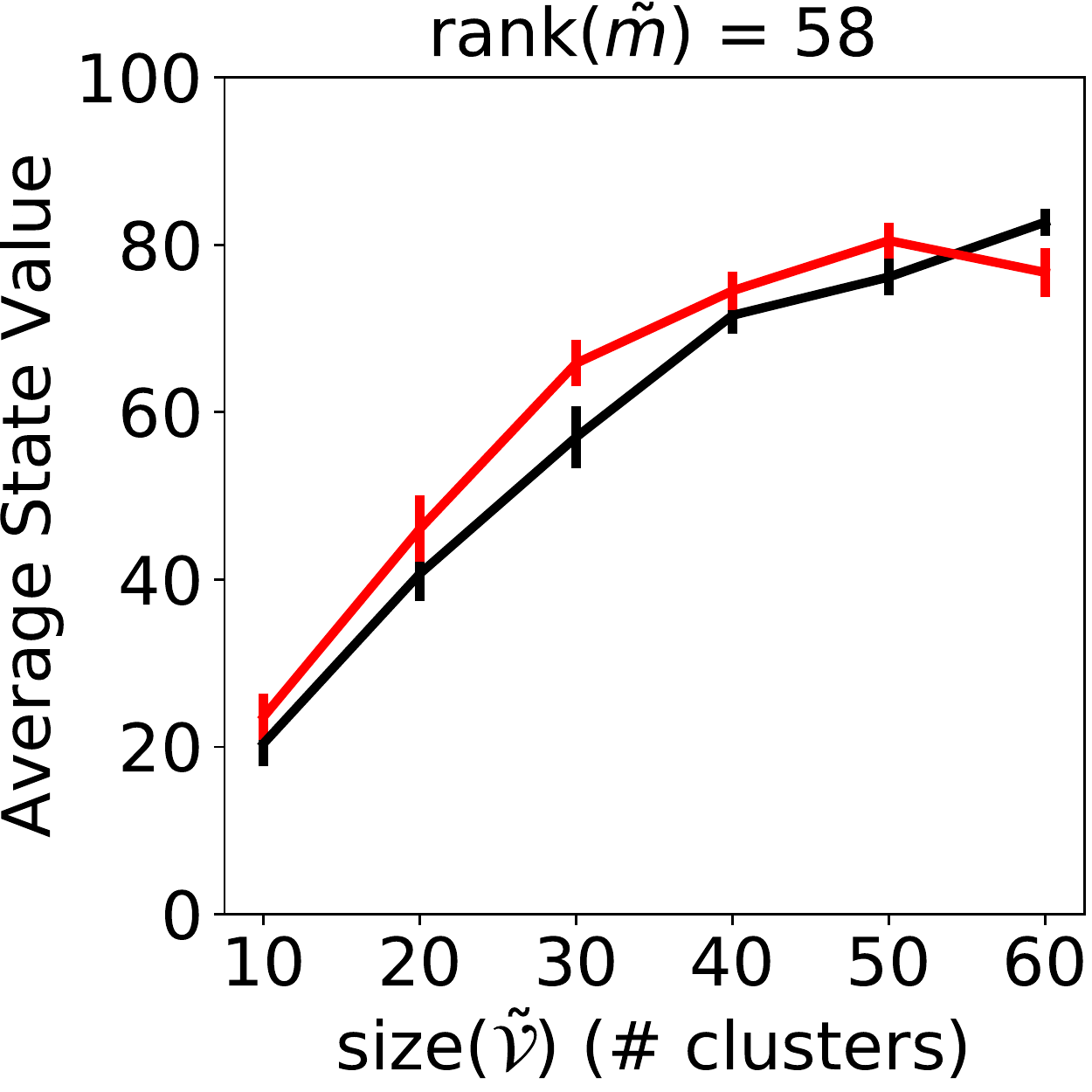}
}
\subfigure[Four Rooms (fixed $\mt$)]{
\includegraphics[scale=0.25]{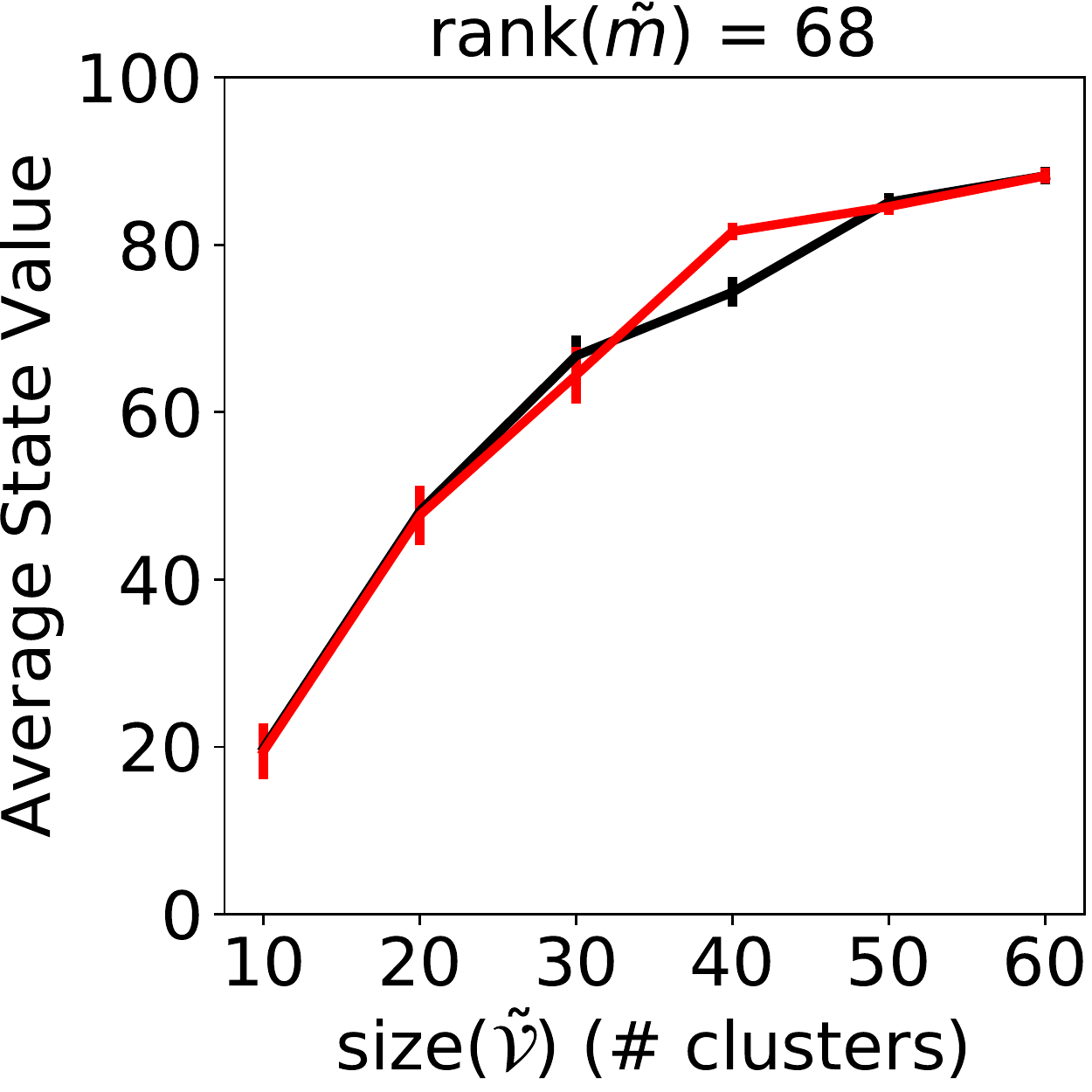}
}
\caption{All Four Rooms results with fixed $\mt$ and $\V = \AV$.}
\end{figure}

\begin{figure}[H]
\centering
\subfigure[Four Rooms (fixed $\V$)]{
\includegraphics[scale=0.25]{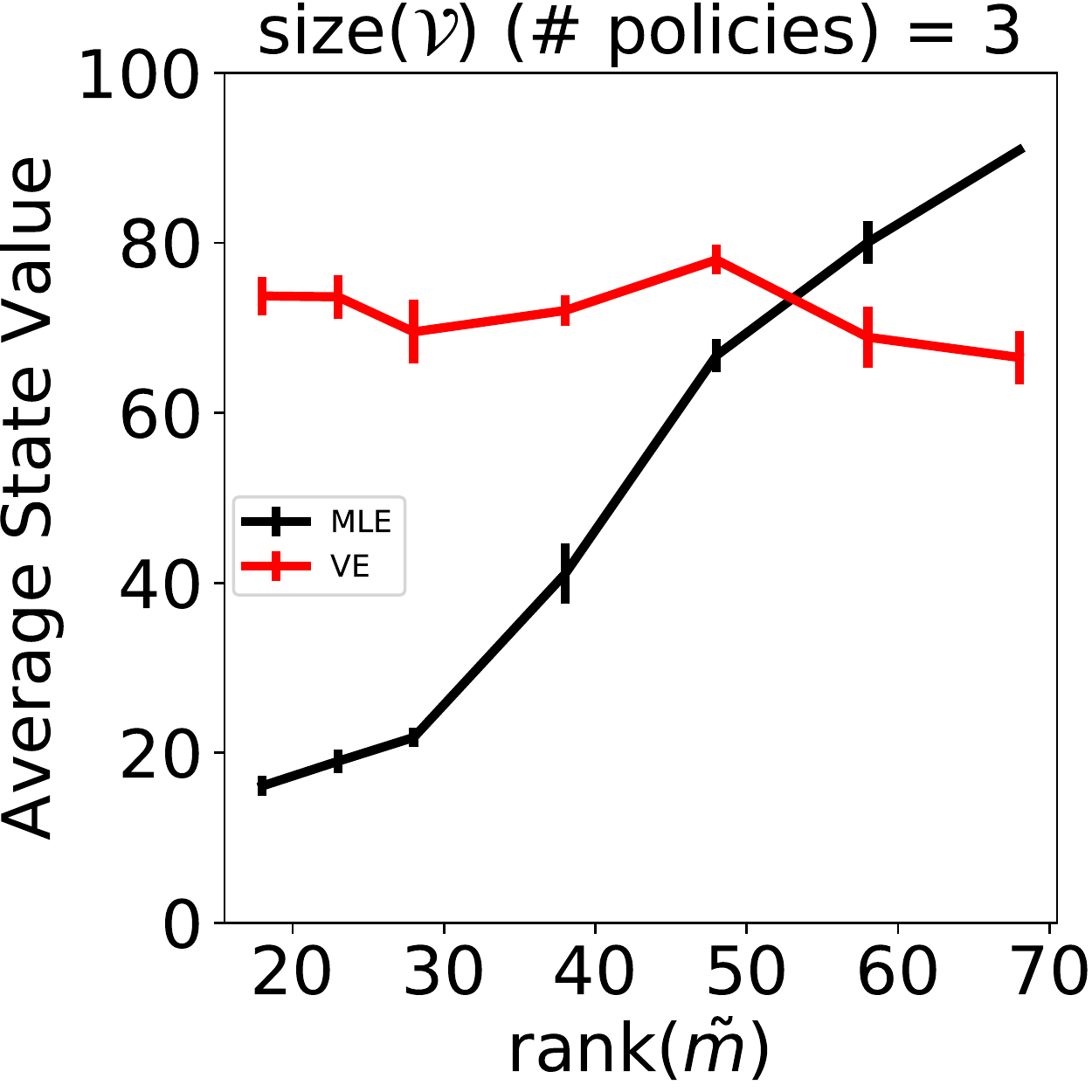}
}
\subfigure[Four Rooms (fixed $\V$)]{
\includegraphics[scale=0.25]{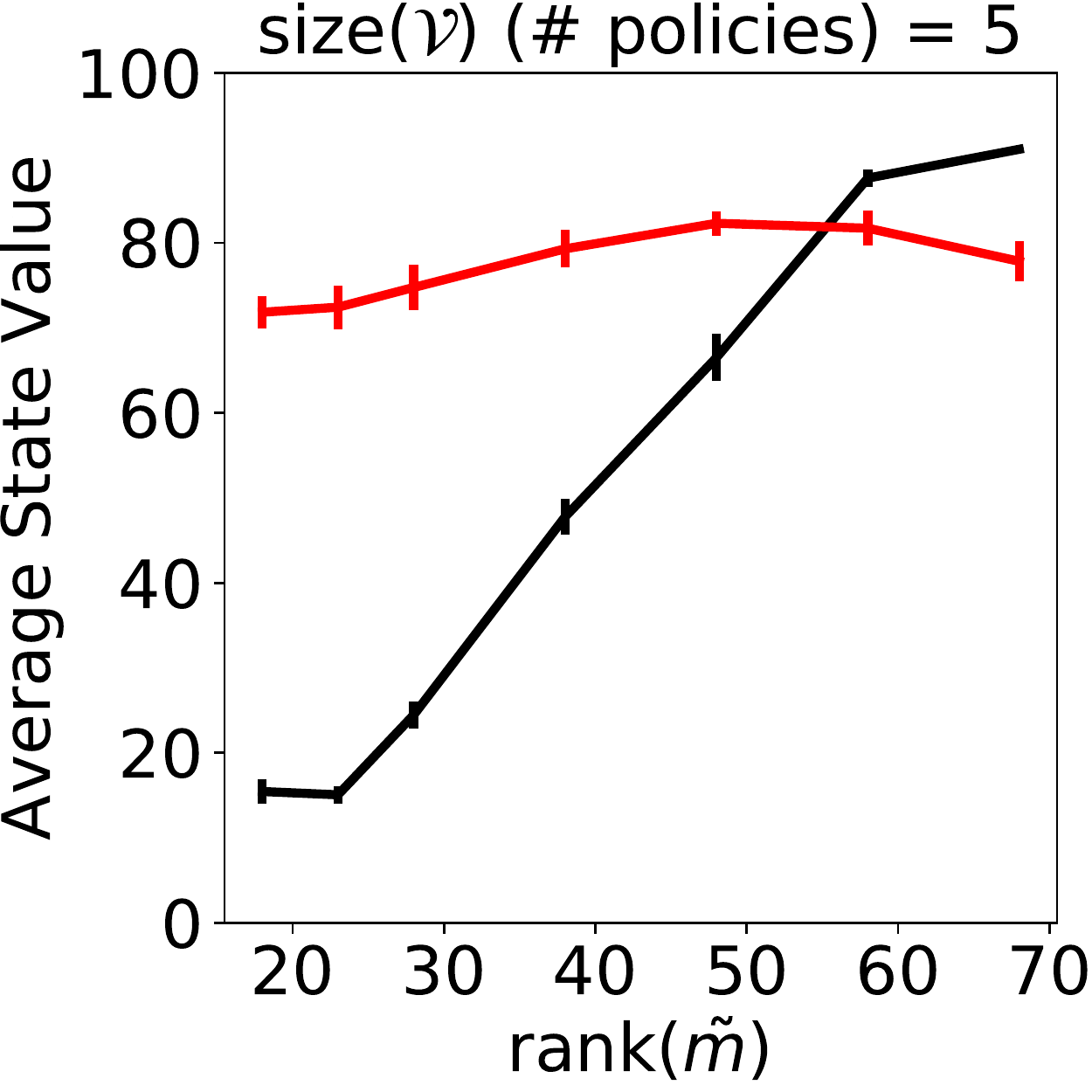}
}
\subfigure[Four Rooms (fixed $\V$)]{
\includegraphics[scale=0.25]{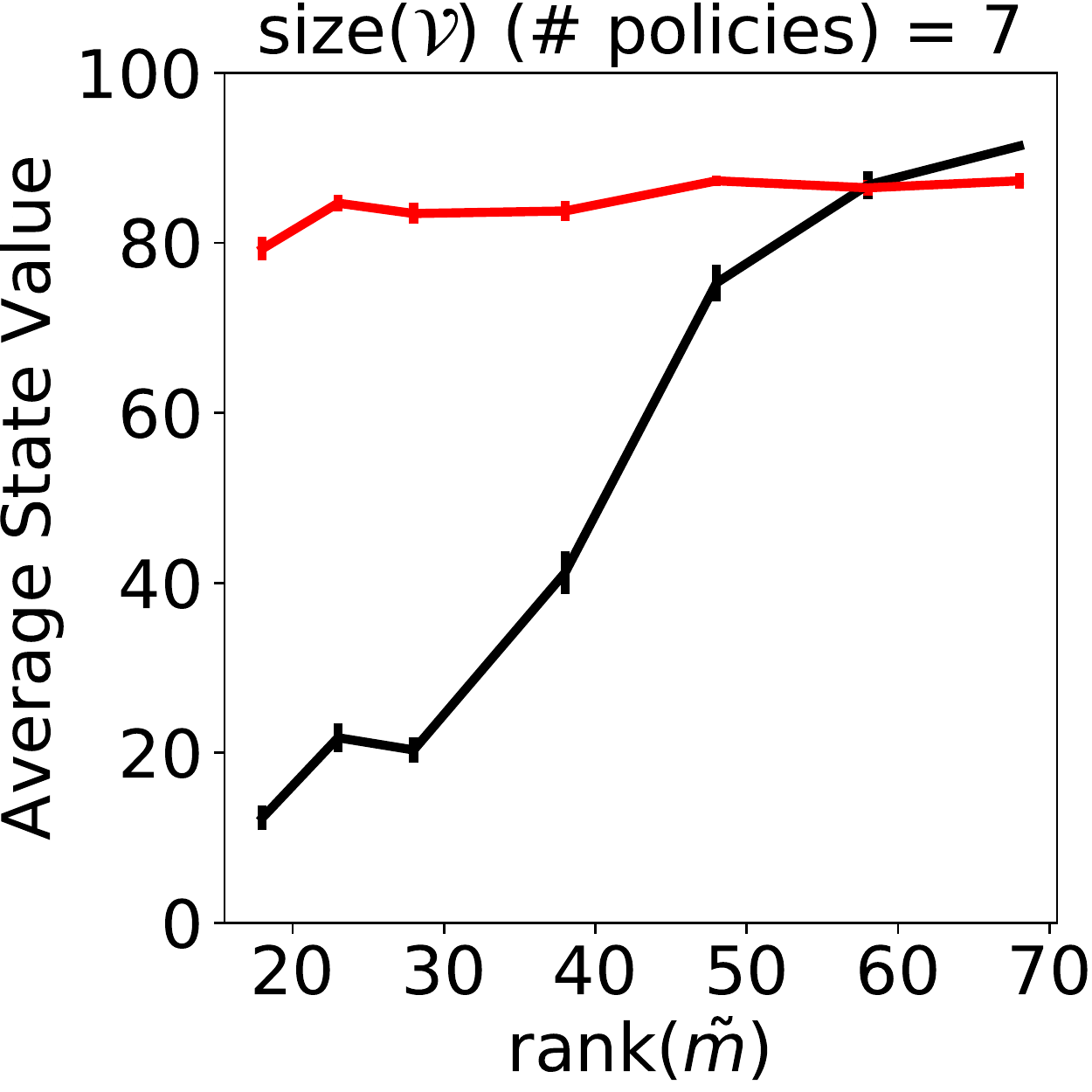}
}
\subfigure[\textbf{Four Rooms (fixed $\boldsymbol{\V}$)}]{
\includegraphics[scale=0.25]{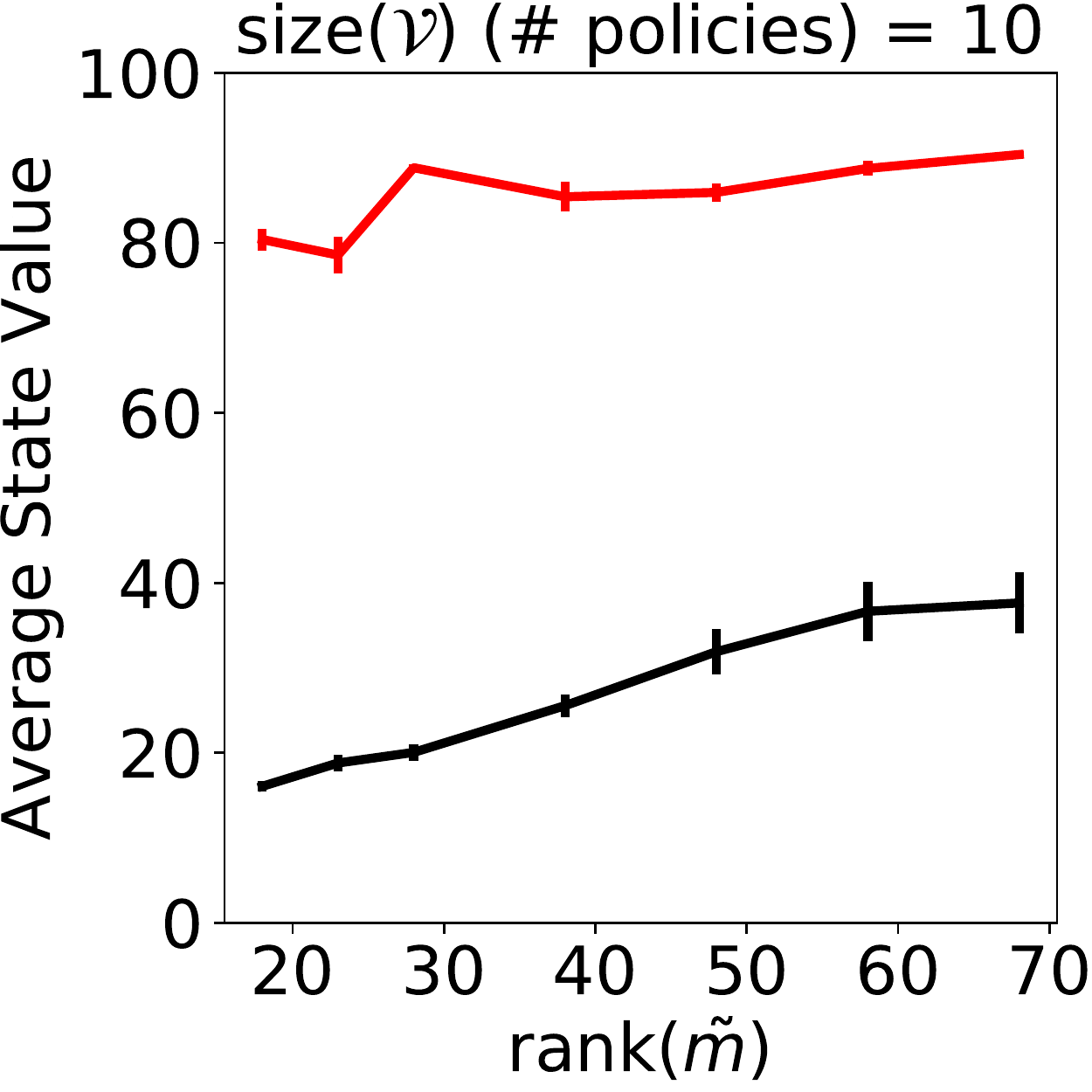}
}
\subfigure[Four Rooms (fixed $\V$)]{
\includegraphics[scale=0.25]{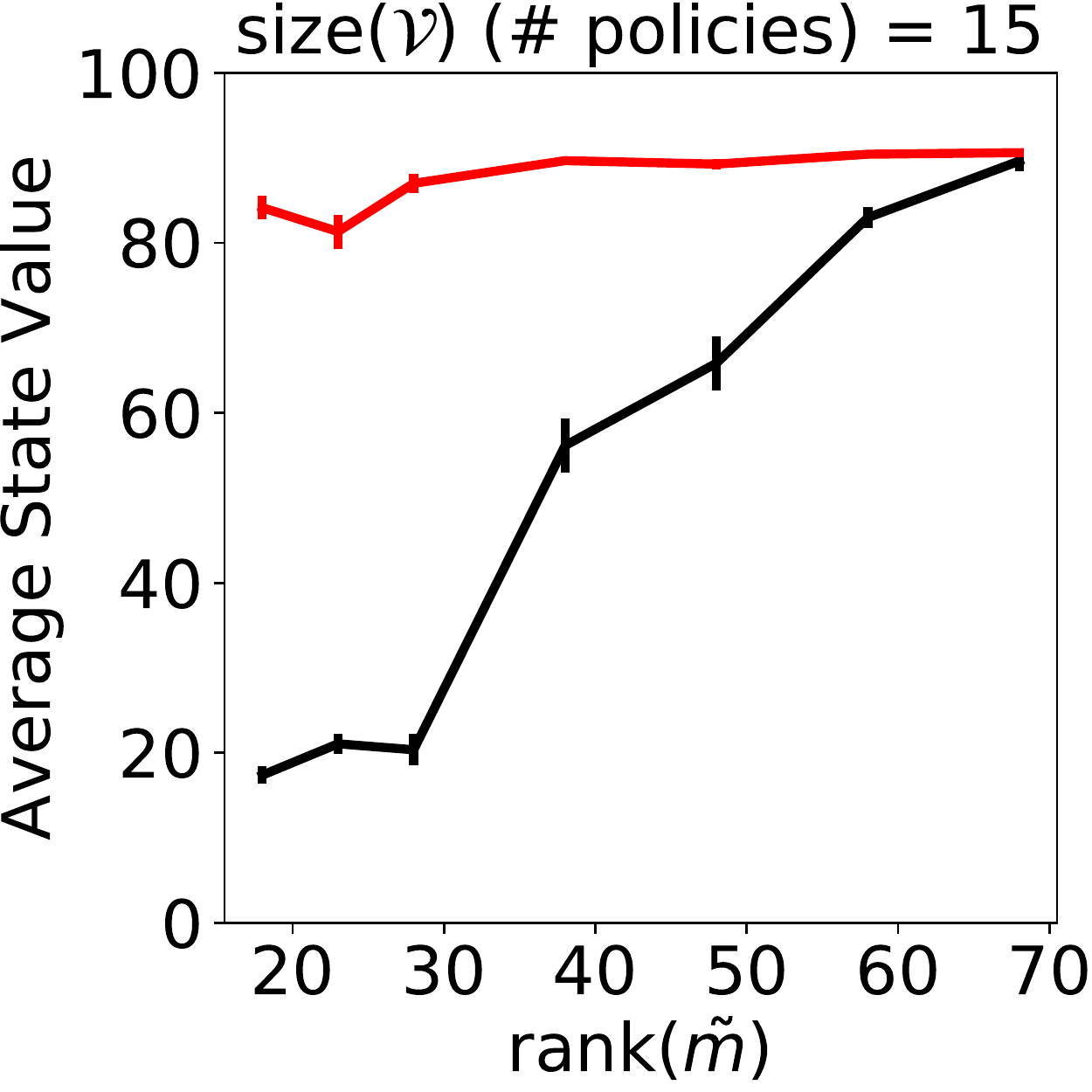}
}
\subfigure[Four Rooms (fixed $\V$)]{
\includegraphics[scale=0.25]{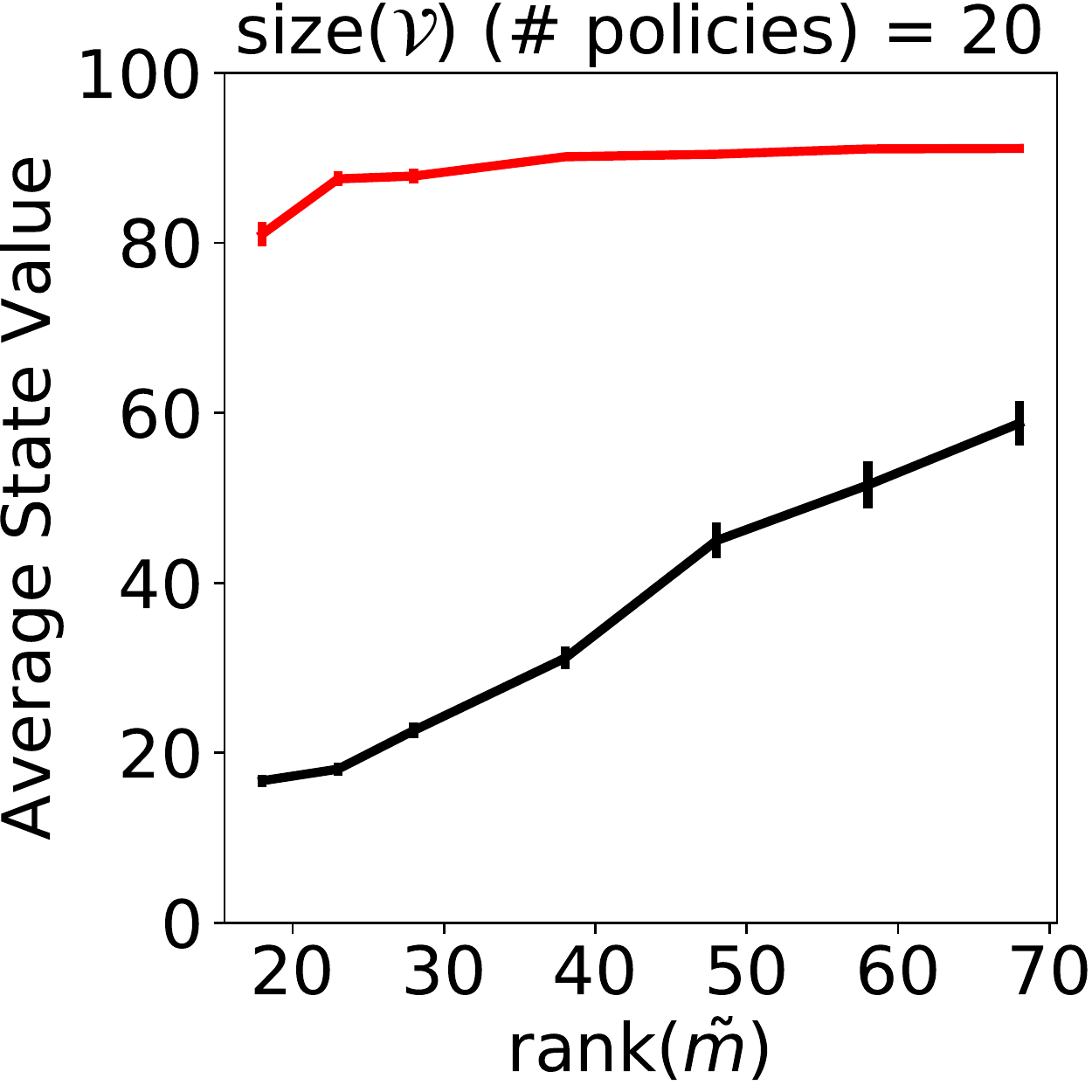}
}
\subfigure[Four Rooms (fixed $\V$)]{
\includegraphics[scale=0.25]{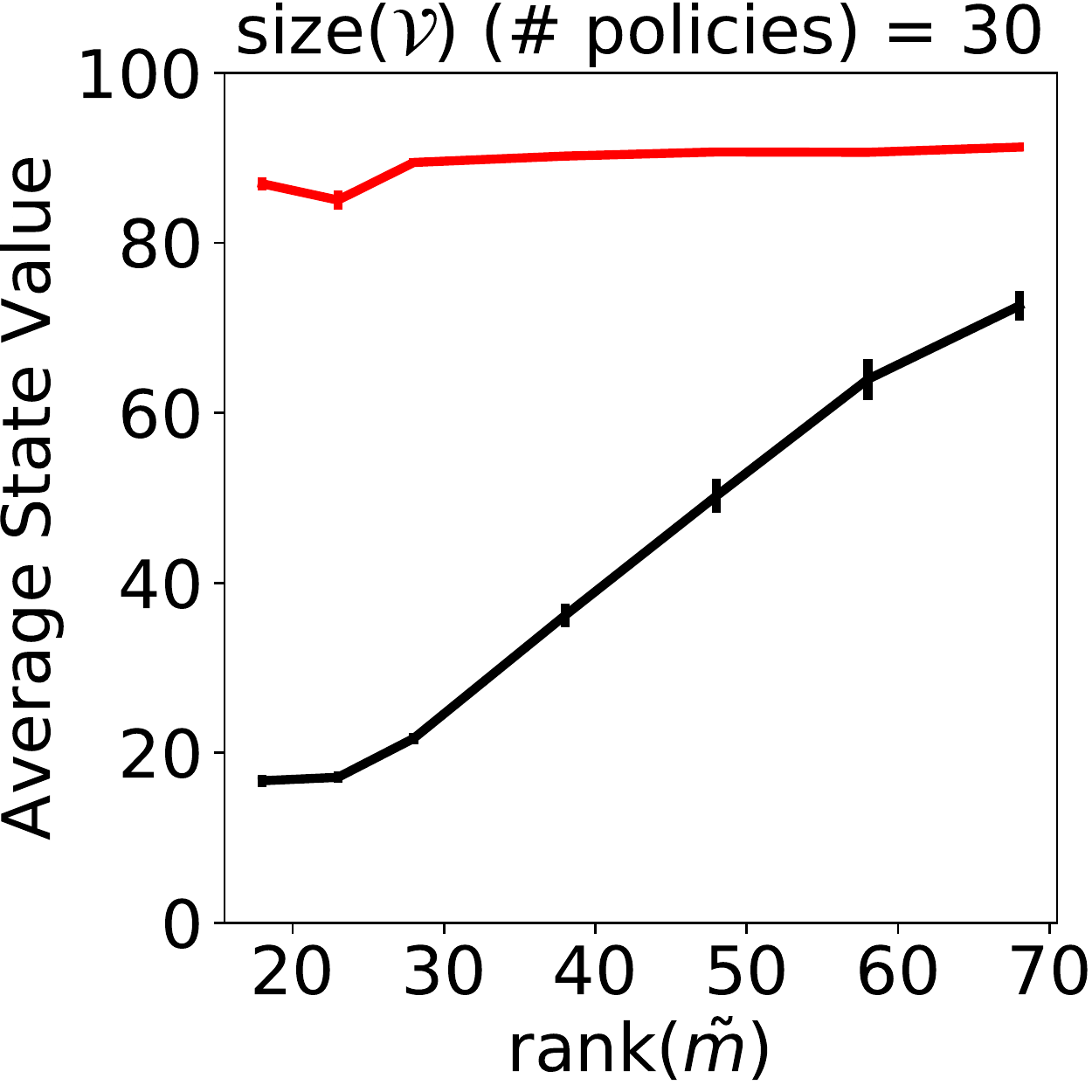}
}
\subfigure[Four Rooms (fixed $\V$)]{
\includegraphics[scale=0.25]{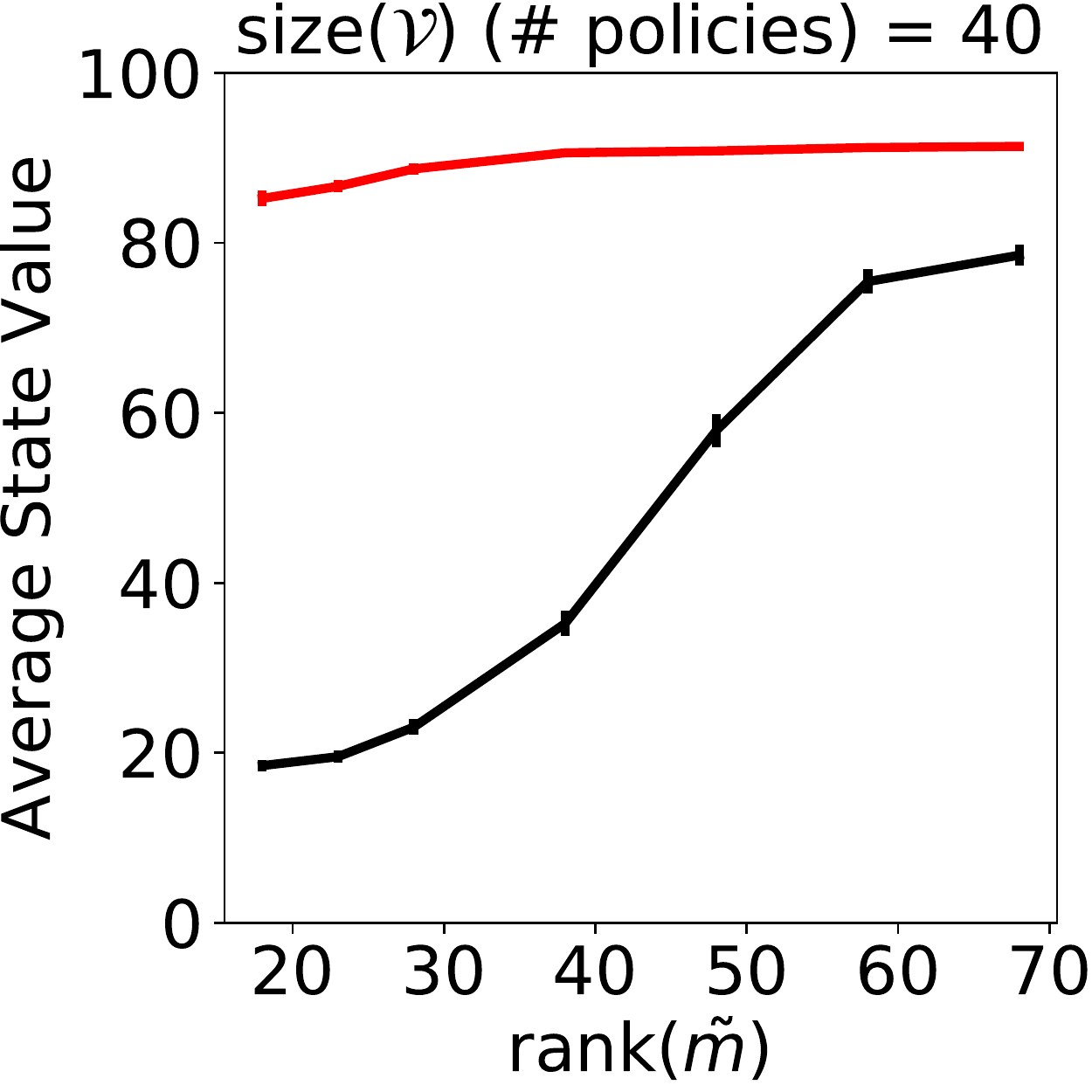}
}
\caption{All Four Rooms results with fixed $\V$ and $\V = \{v_{\pi_1}, \ldots, v_{\pi_n}\}$.}
\end{figure}

\begin{figure}[H]
\centering
\subfigure[Four Rooms (fixed $\mt$)]{
\includegraphics[scale=0.25]{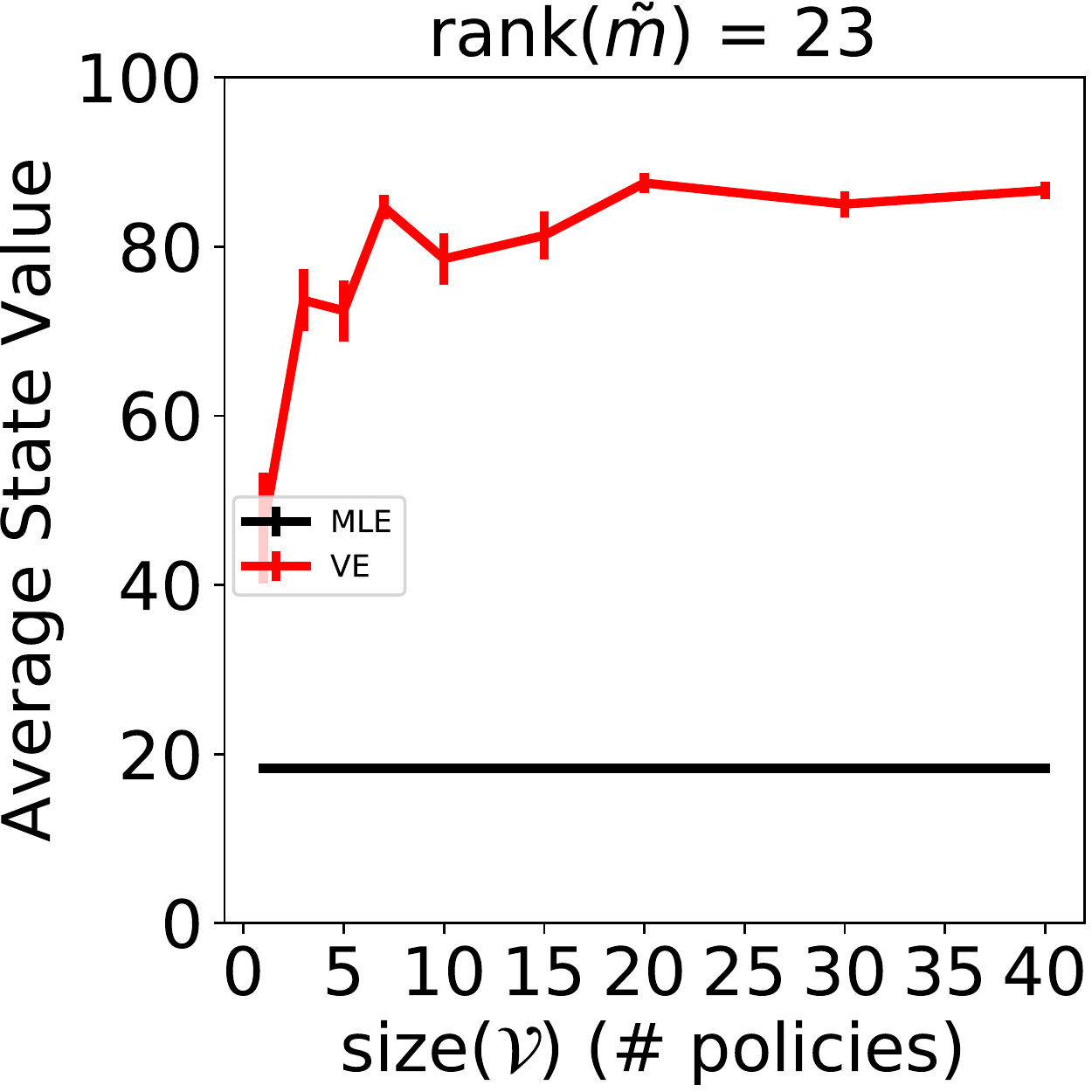}
}
\subfigure[Four Rooms (fixed $\mt$)]{
\includegraphics[scale=0.25]{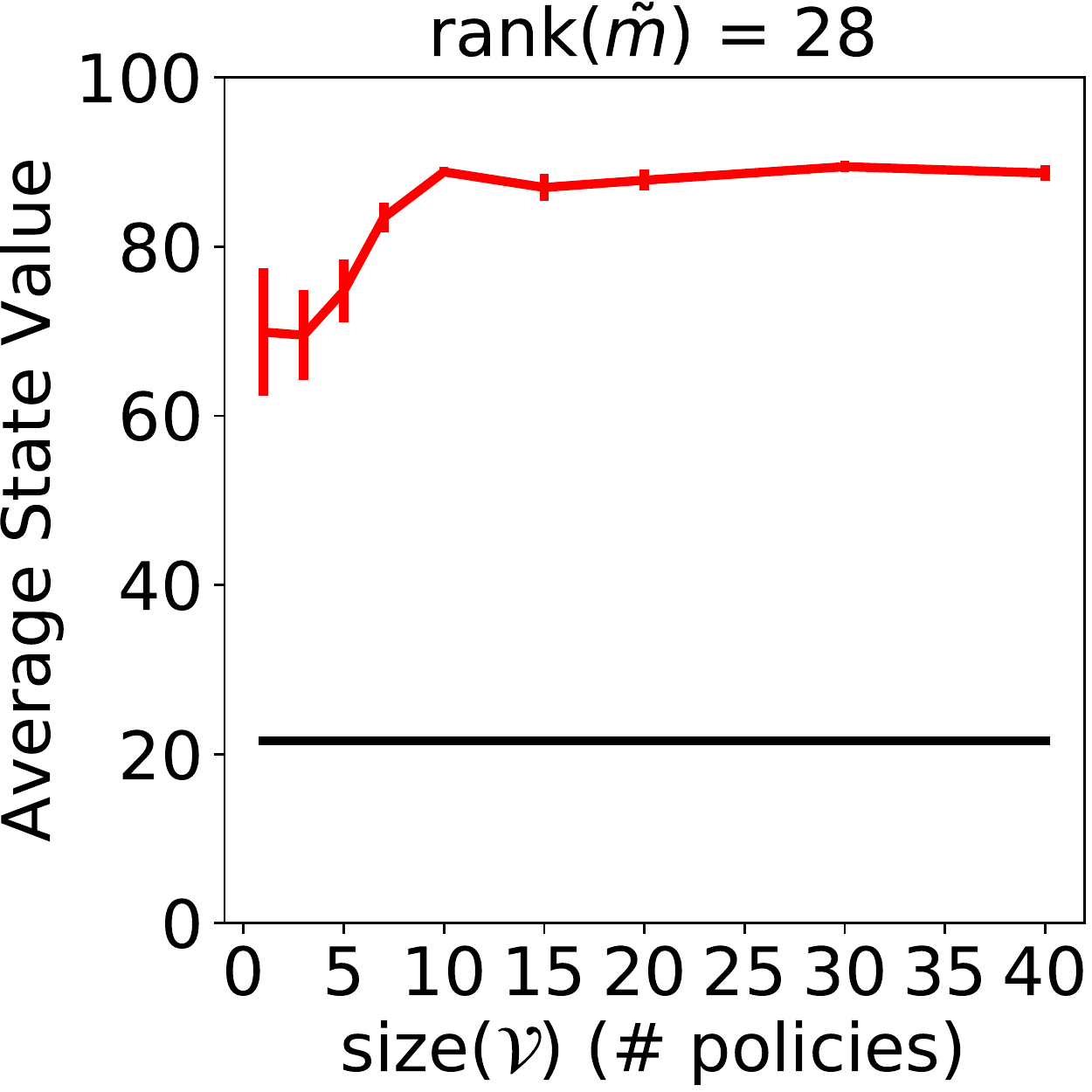}
}
\subfigure[\textbf{Four Rooms (fixed $\boldsymbol{\mt}$)}]{
\includegraphics[scale=0.25]{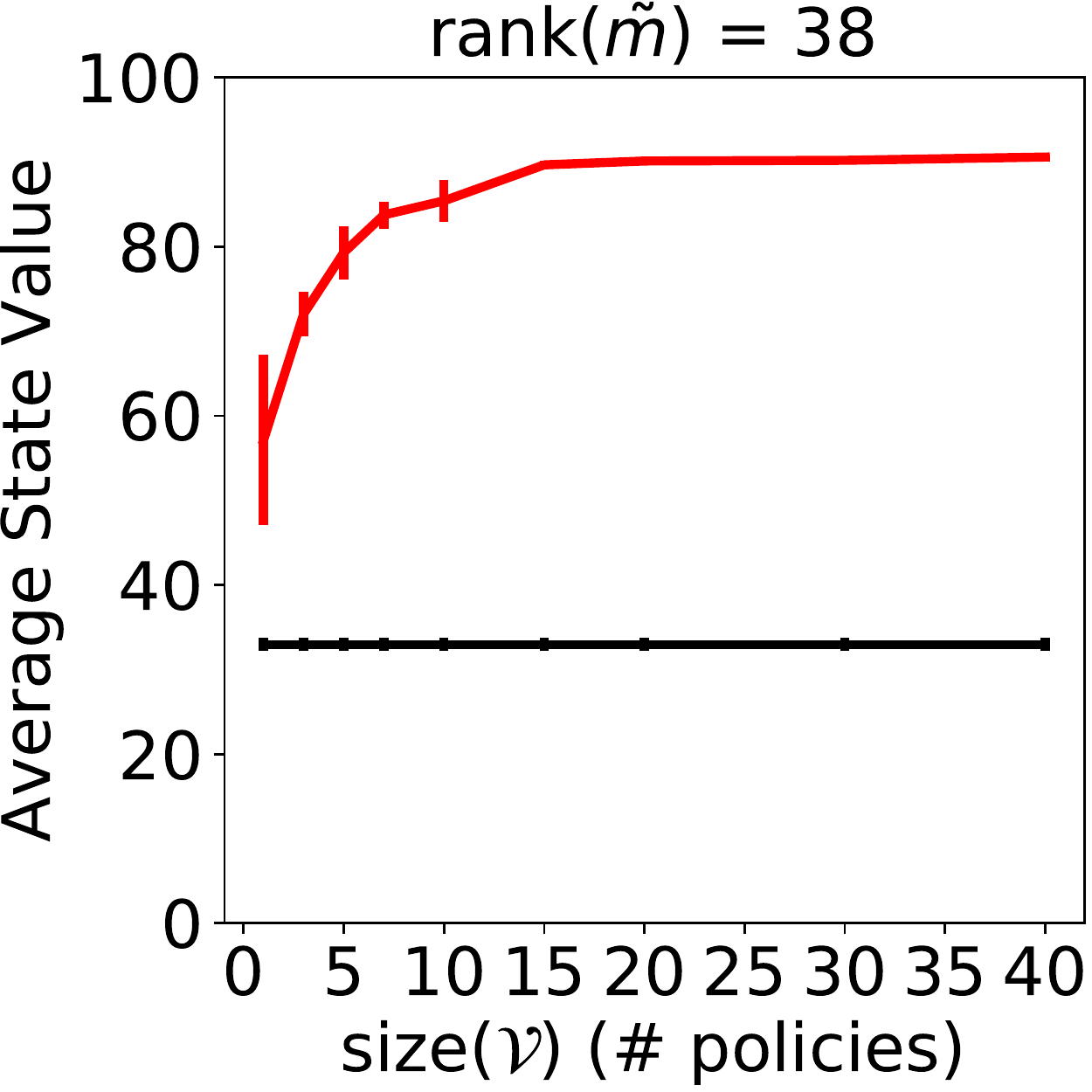}
}

\subfigure[Four Rooms (fixed $\mt$)]{
\includegraphics[scale=0.25]{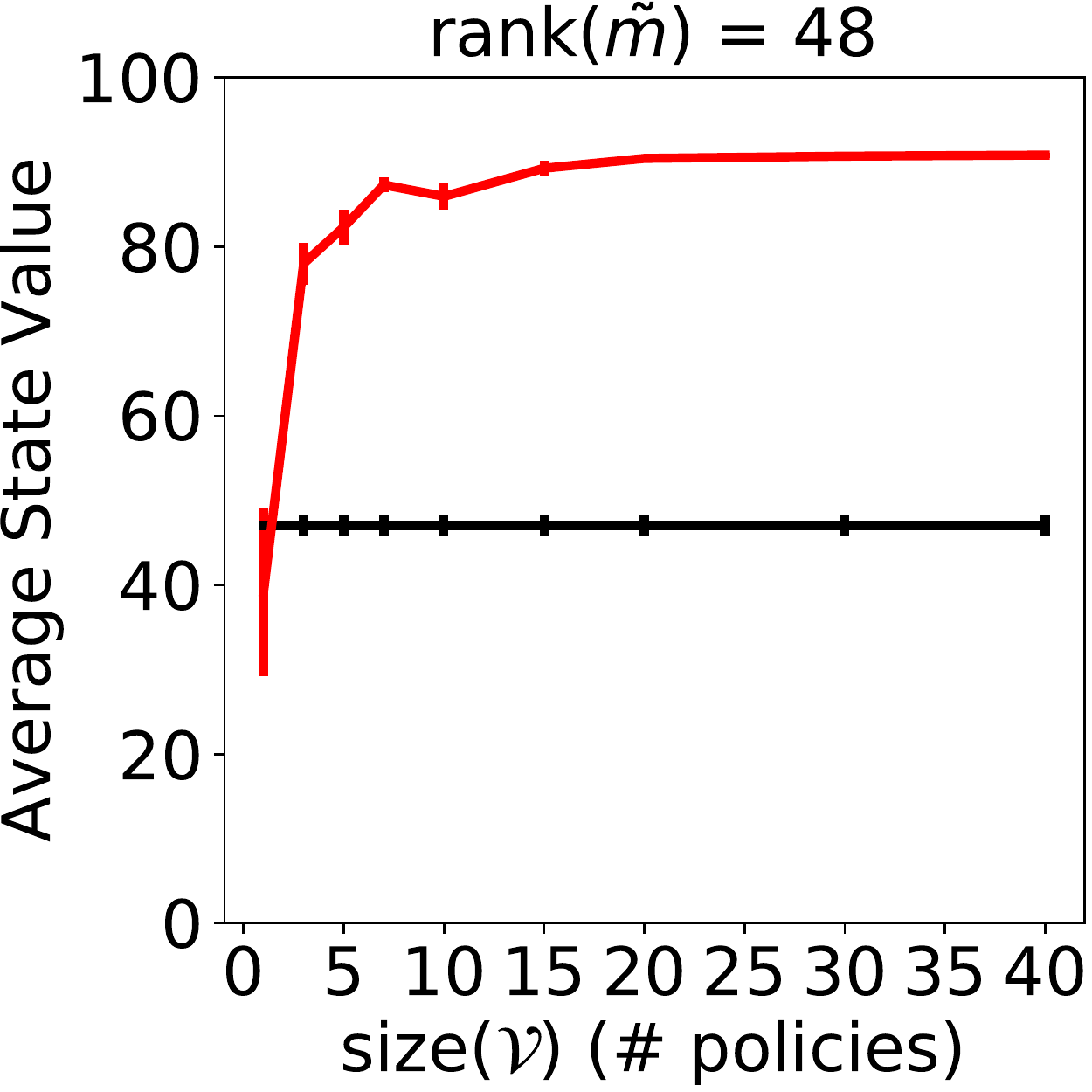}
}
\subfigure[Four Rooms (fixed $\mt$)]{
\includegraphics[scale=0.25]{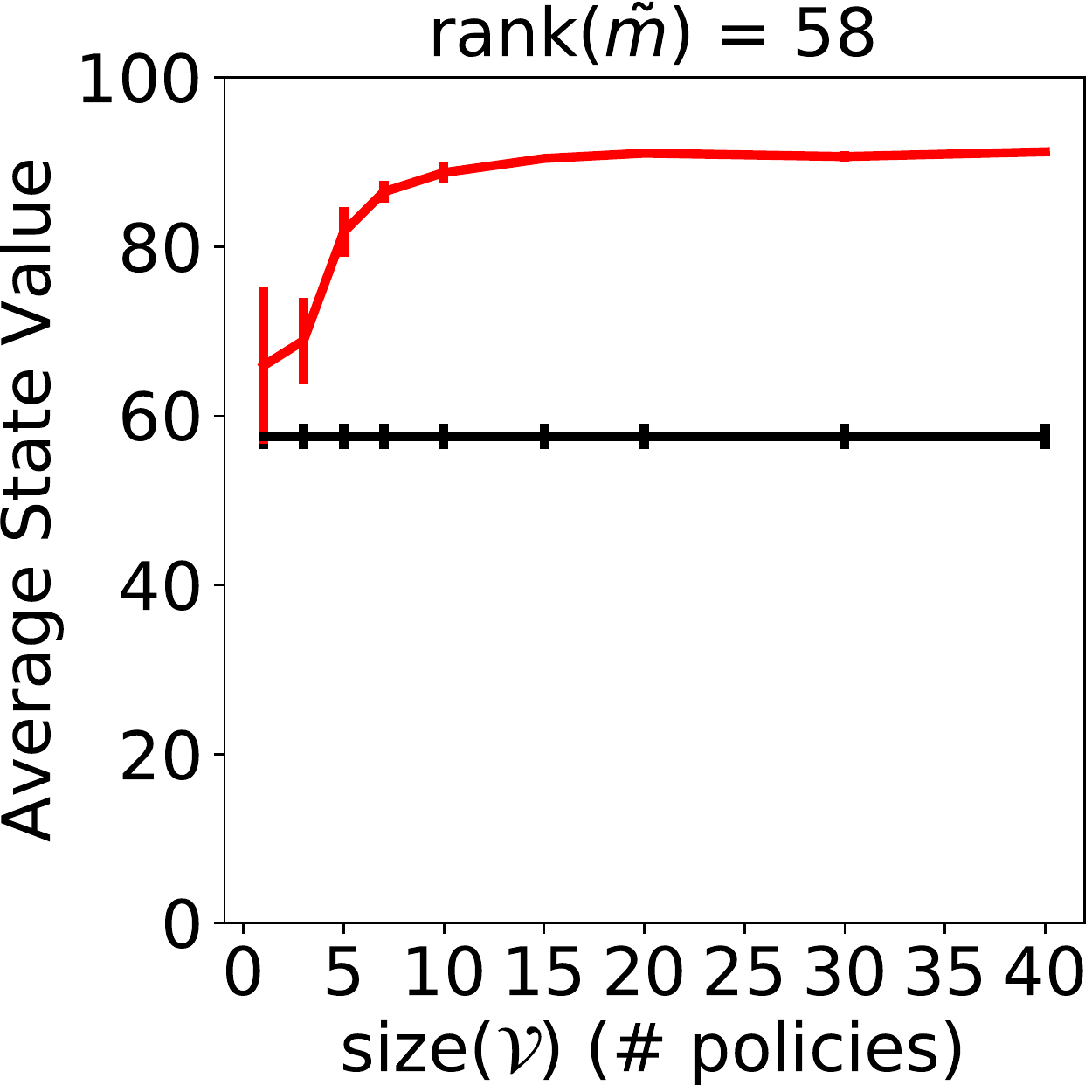}
}
\subfigure[Four Rooms (fixed $\mt$)]{
\includegraphics[scale=0.25]{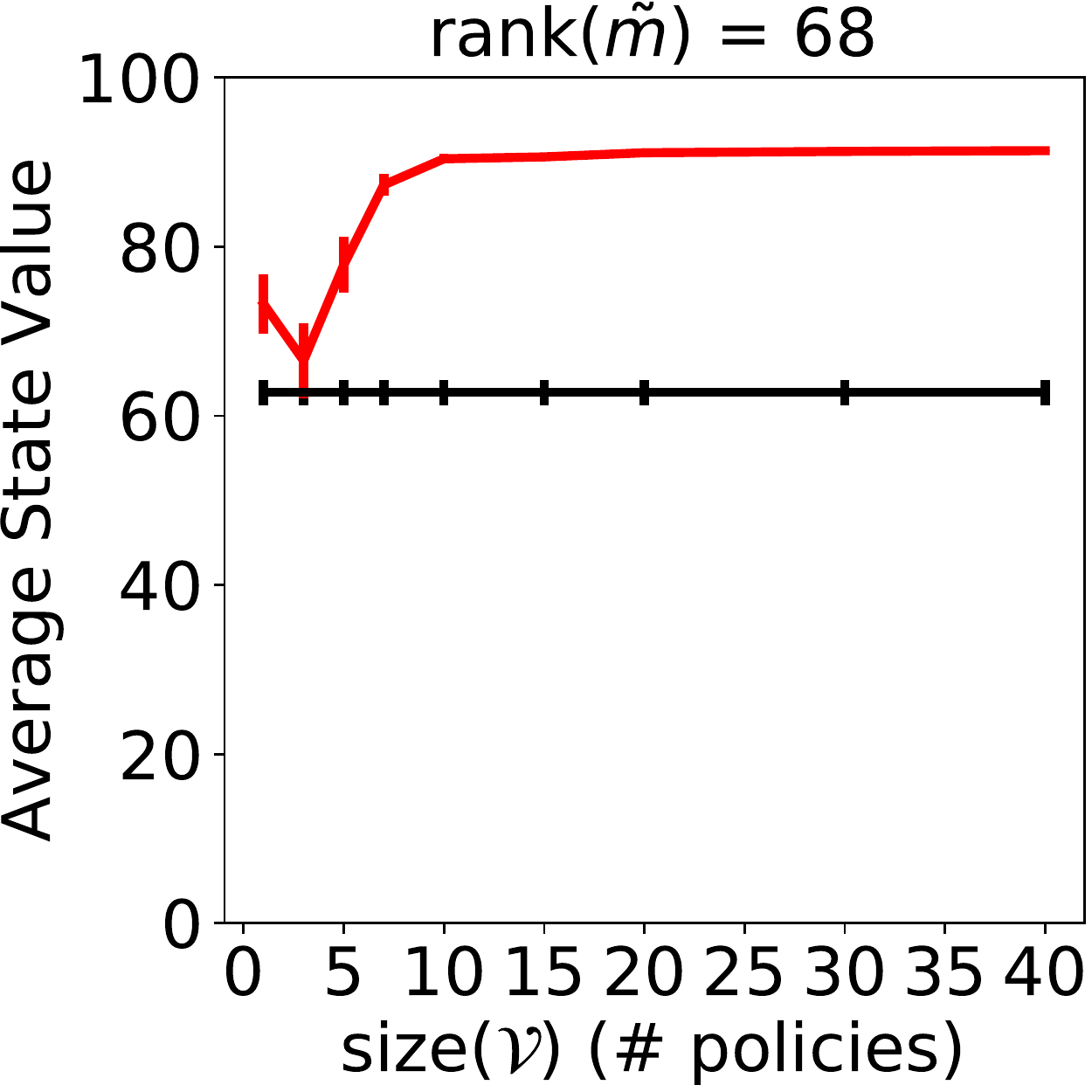}
}
\caption{All Four Rooms results with fixed $\mt$ and $\V = \{v_{\pi_1}, \ldots, v_{\pi_n}\}$.}
\end{figure}

\begin{figure}[H]
\centering
\subfigure[Cart-pole (fixed $\V$)]{
\includegraphics[scale=0.25]{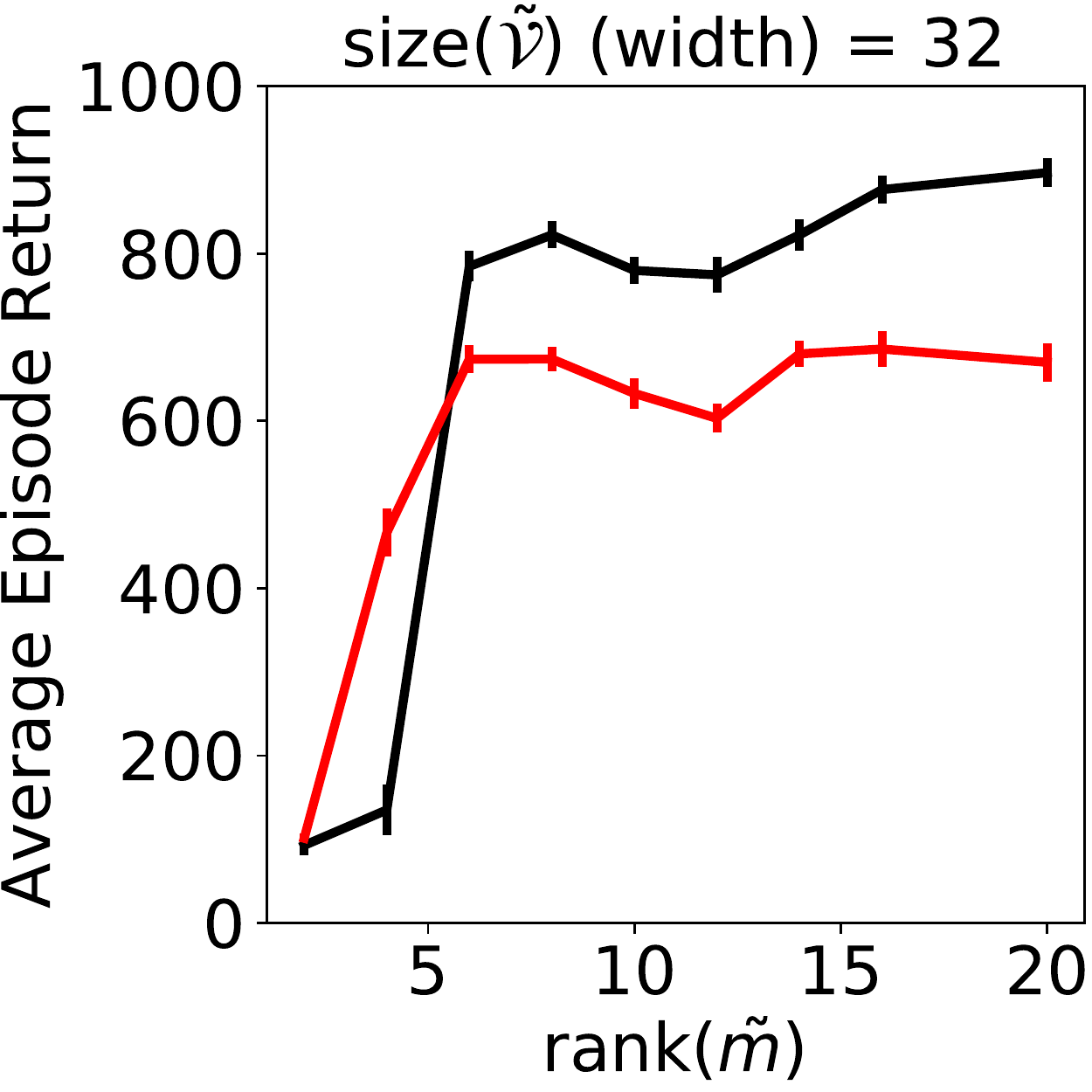}
}
\subfigure[Cart-pole (fixed $\V$)]{
\includegraphics[scale=0.25]{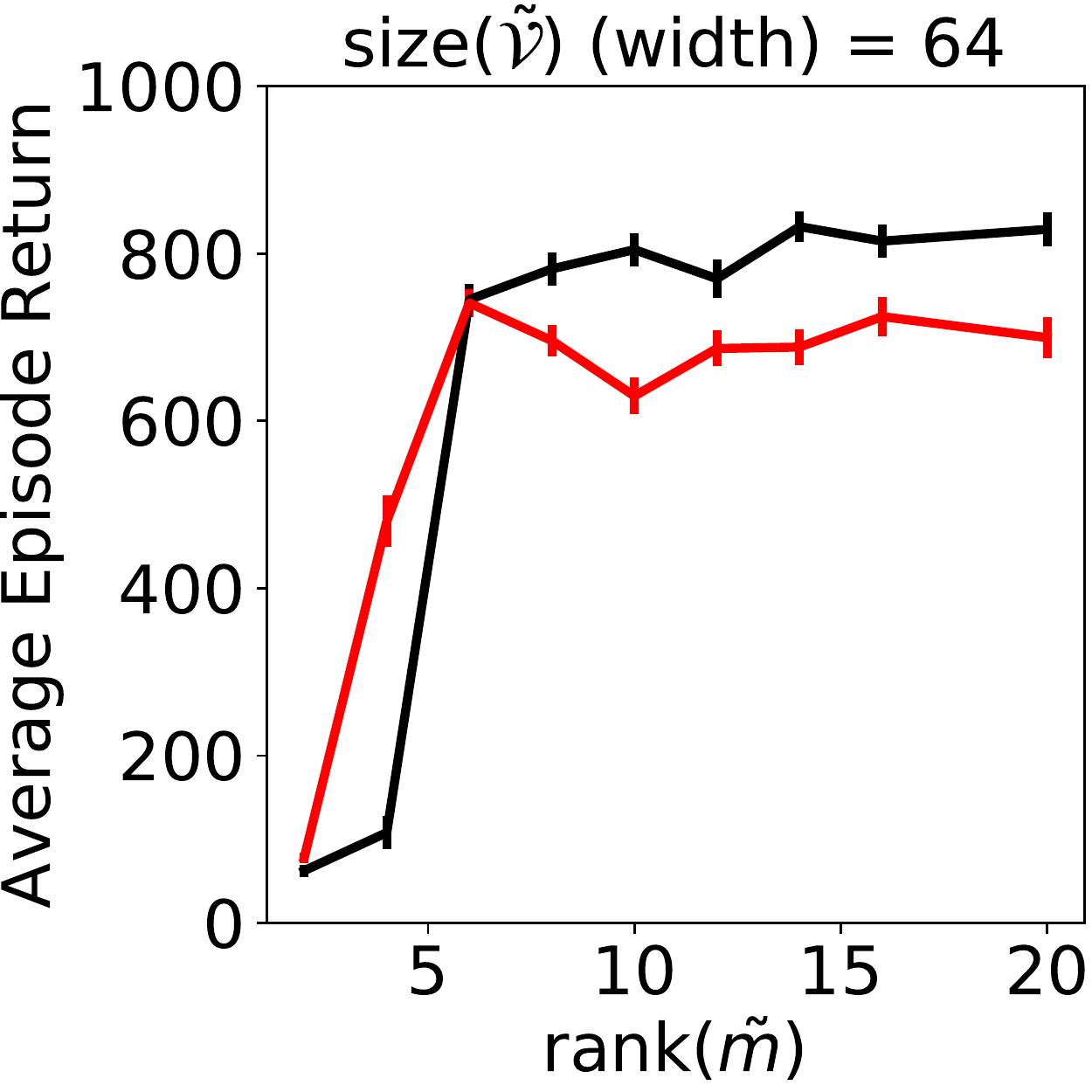}
}
\subfigure[\textbf{Cart-pole (fixed $\boldsymbol{\V}$)}]{
\includegraphics[scale=0.25]{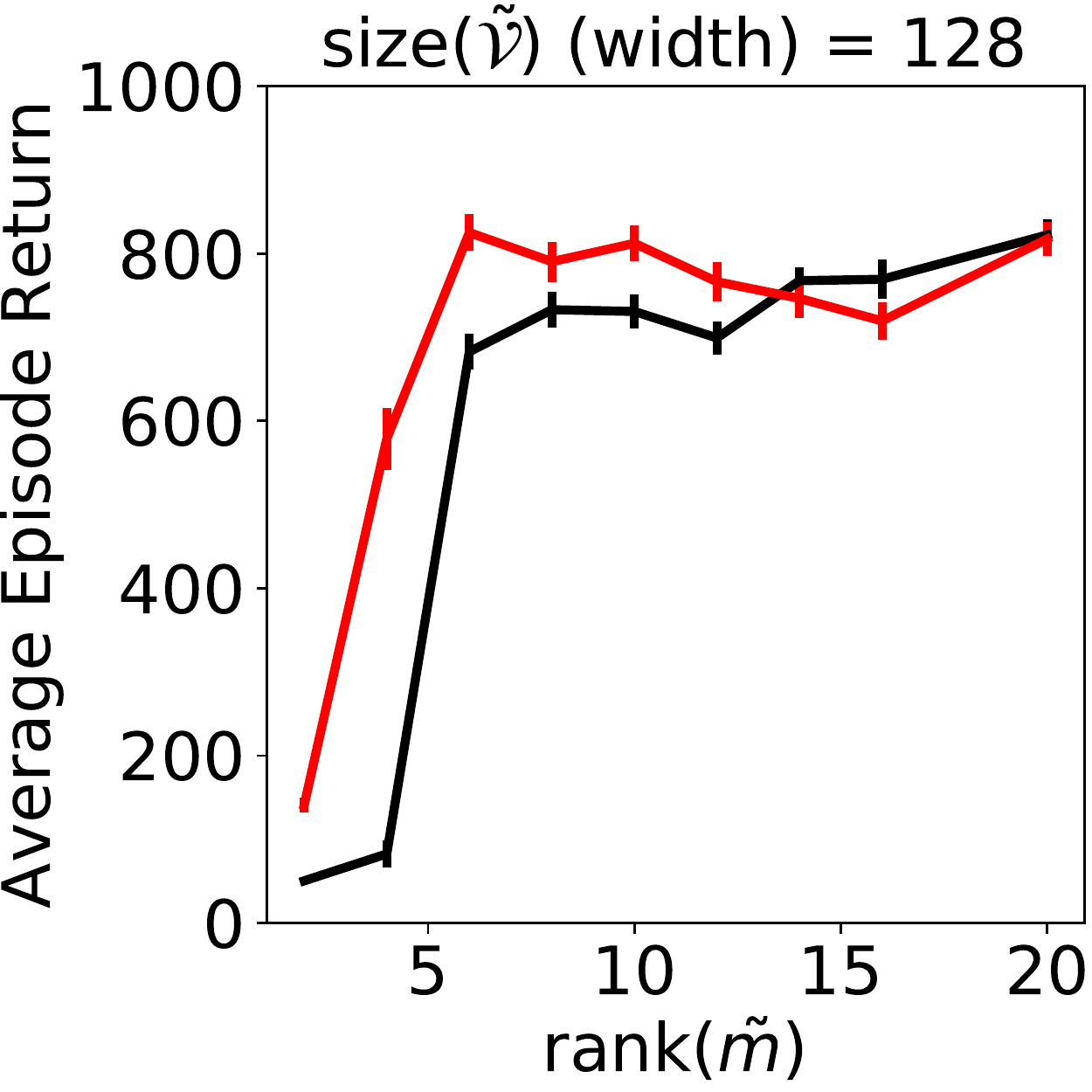}
}

\subfigure[Cart-pole (fixed $\V$)]{
\includegraphics[scale=0.25]{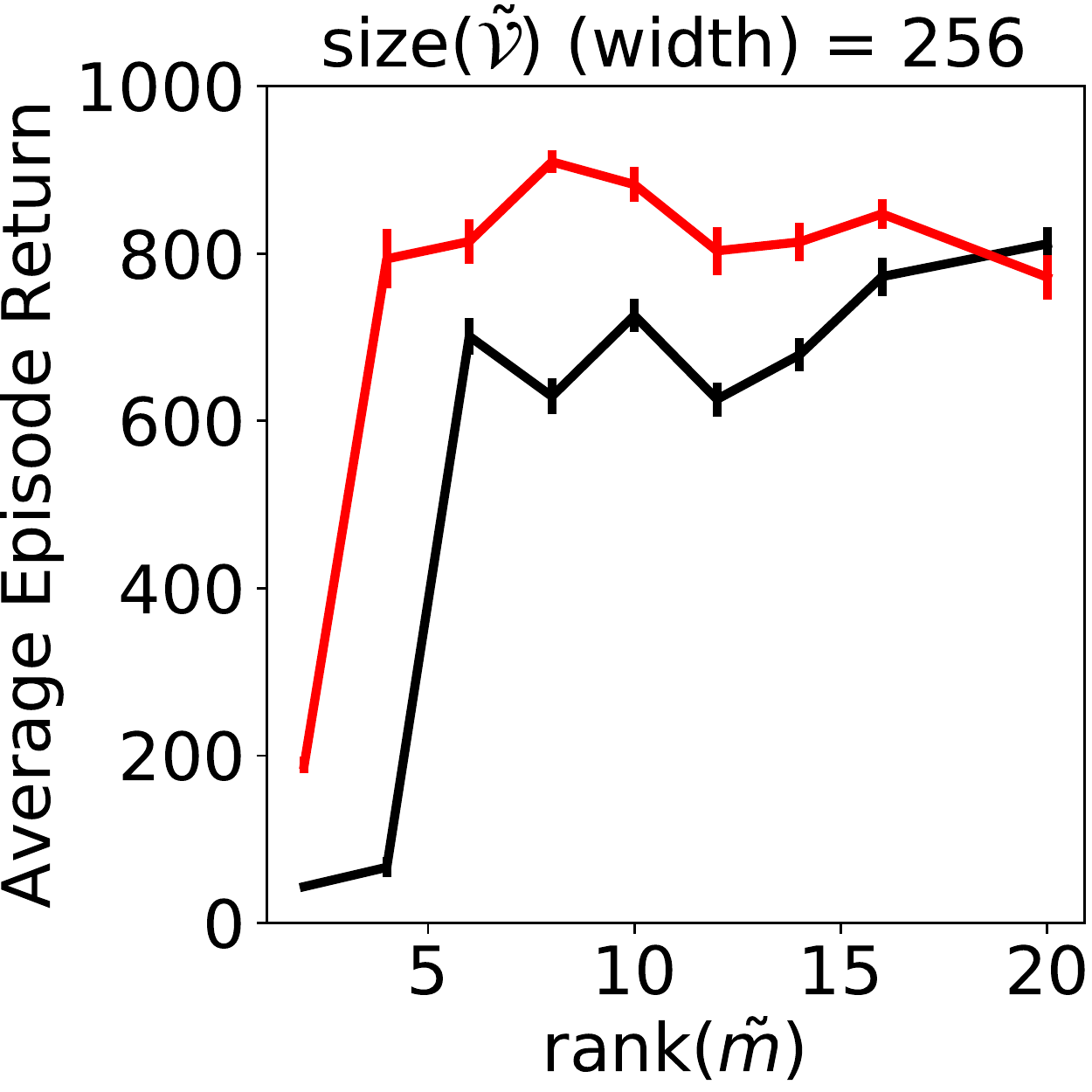}
}
\subfigure[Cart-pole (fixed $\V$)]{
\includegraphics[scale=0.25]{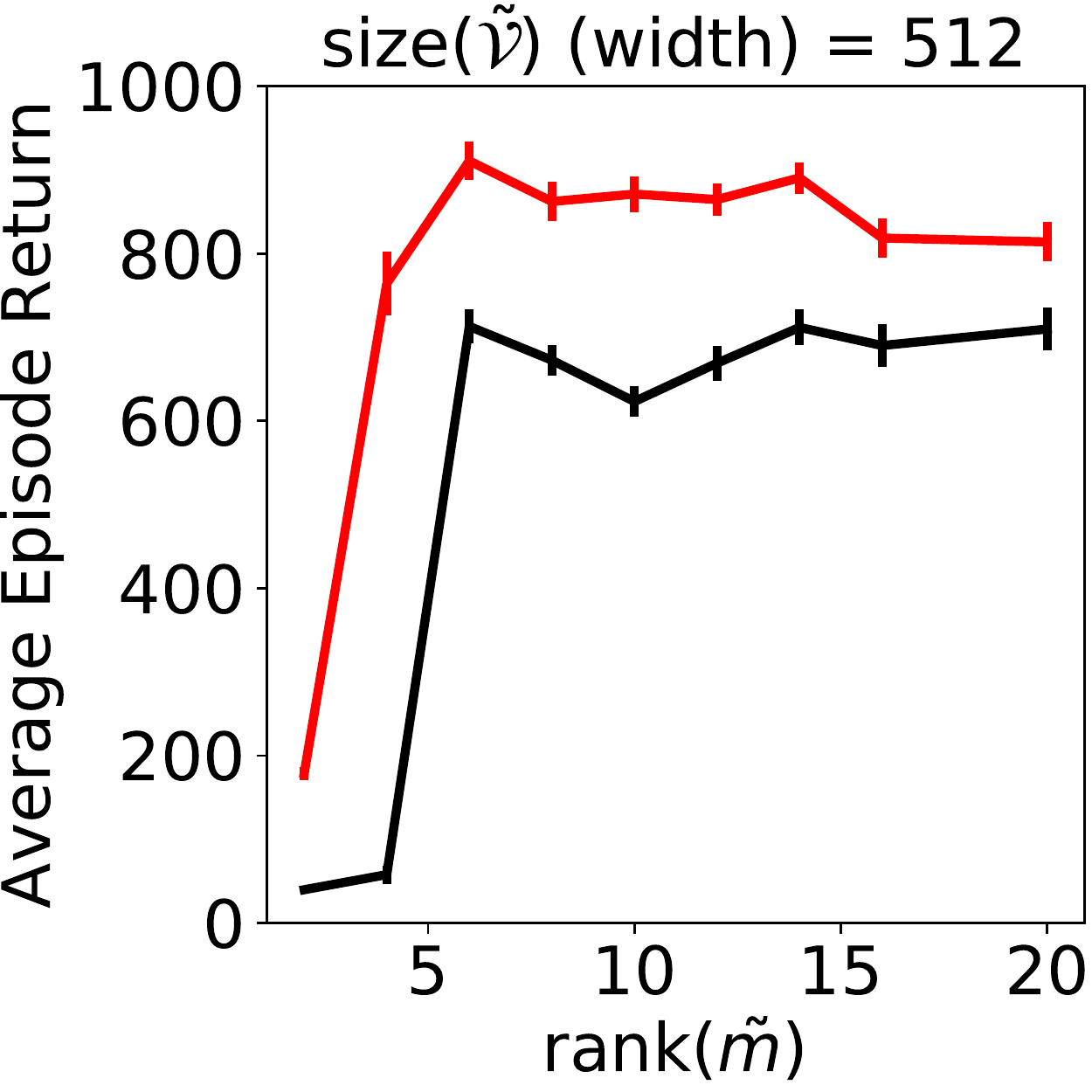}
}
\subfigure[Cart-pole (fixed $\V$)]{
\includegraphics[scale=0.25]{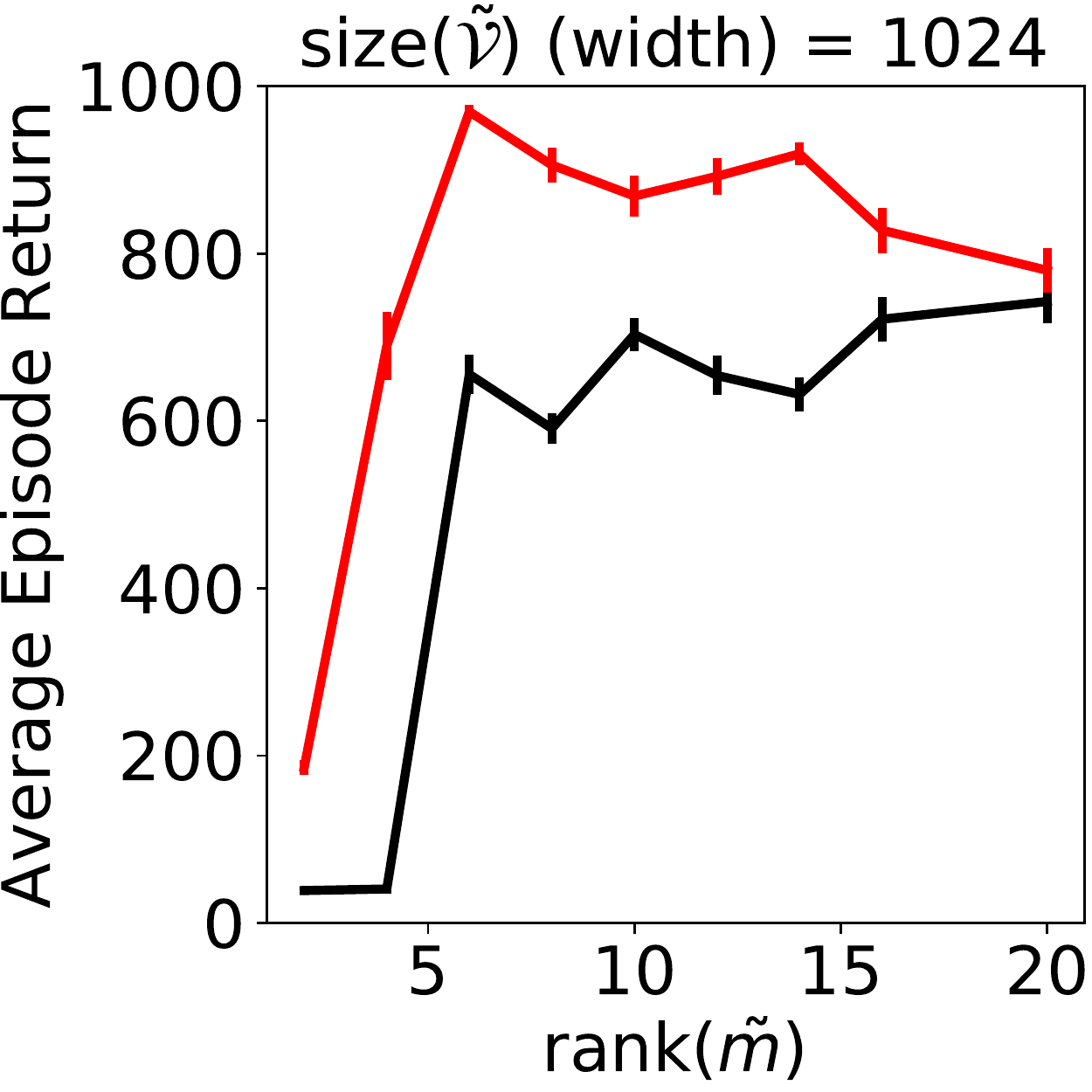}
}
\caption{All Cart-pole results results with fixed $\V$ and $\span(\V) \approx \AV$.}
\end{figure}

\begin{figure}[htp]
\centering
\subfigure[Cart-pole (fixed $\mt$)]{
\includegraphics[scale=0.25]{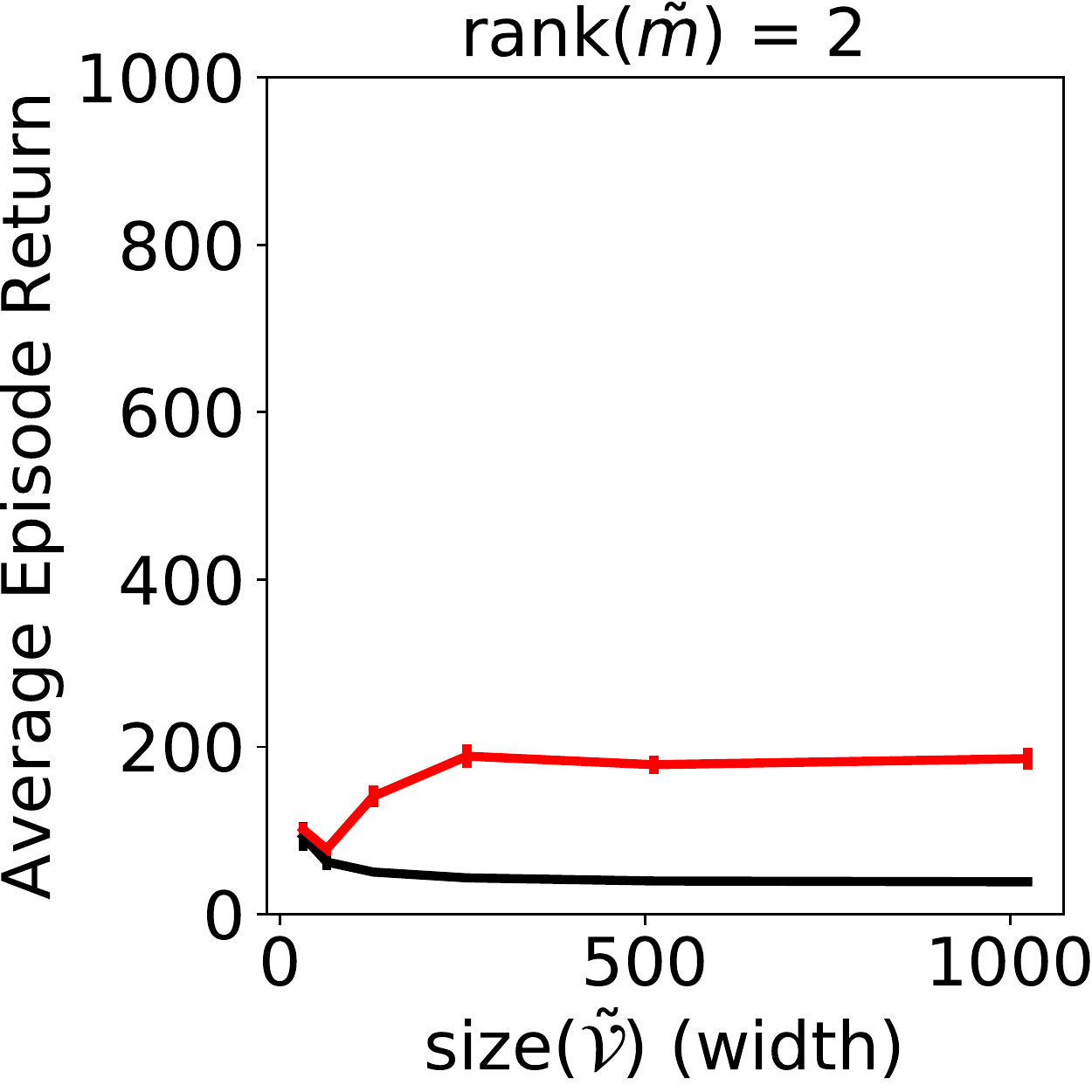}
}
\subfigure[Cart-pole (fixed $\mt$)]{
\includegraphics[scale=0.25]{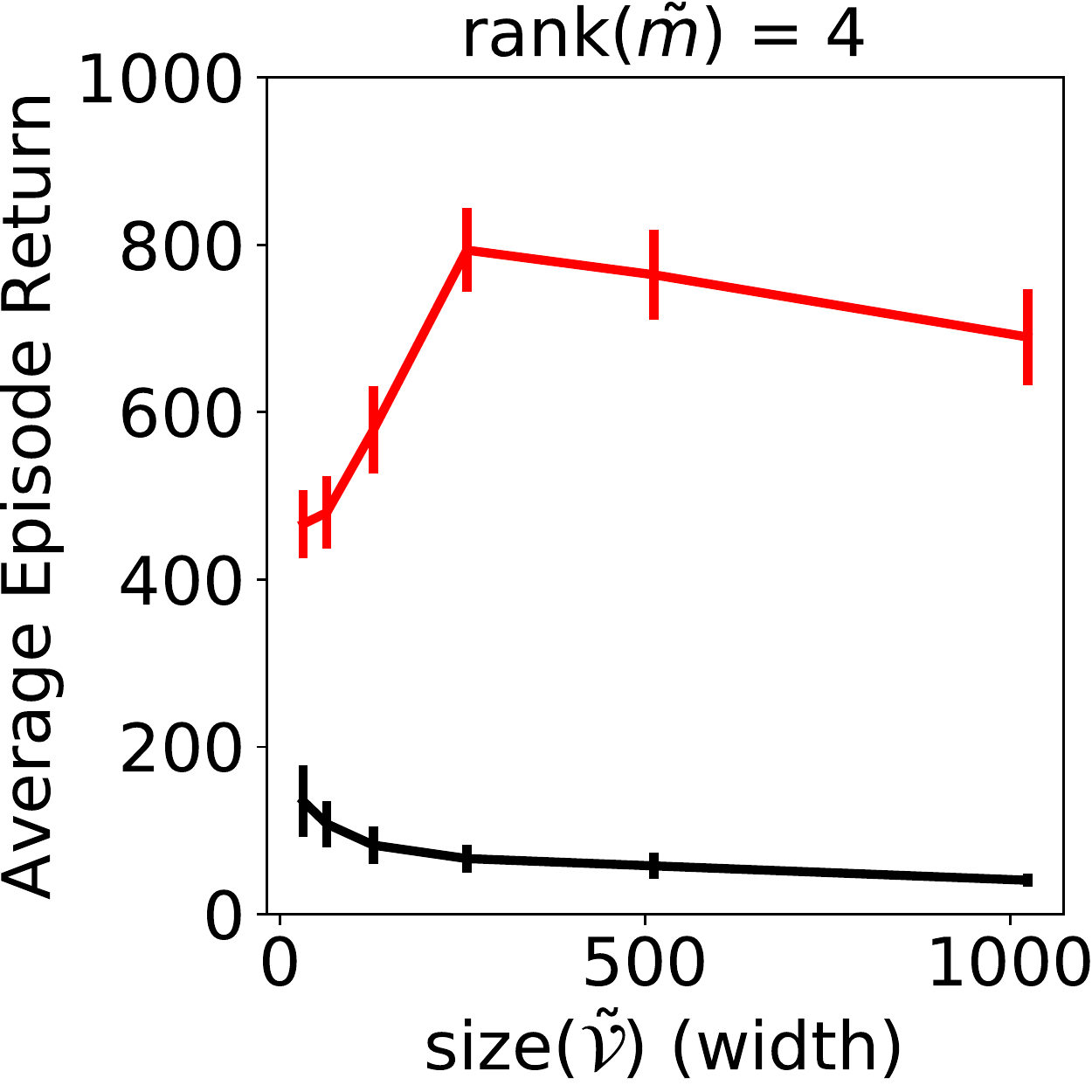}
}
\subfigure[Cart-pole (fixed $\mt$)]{
\includegraphics[scale=0.25]{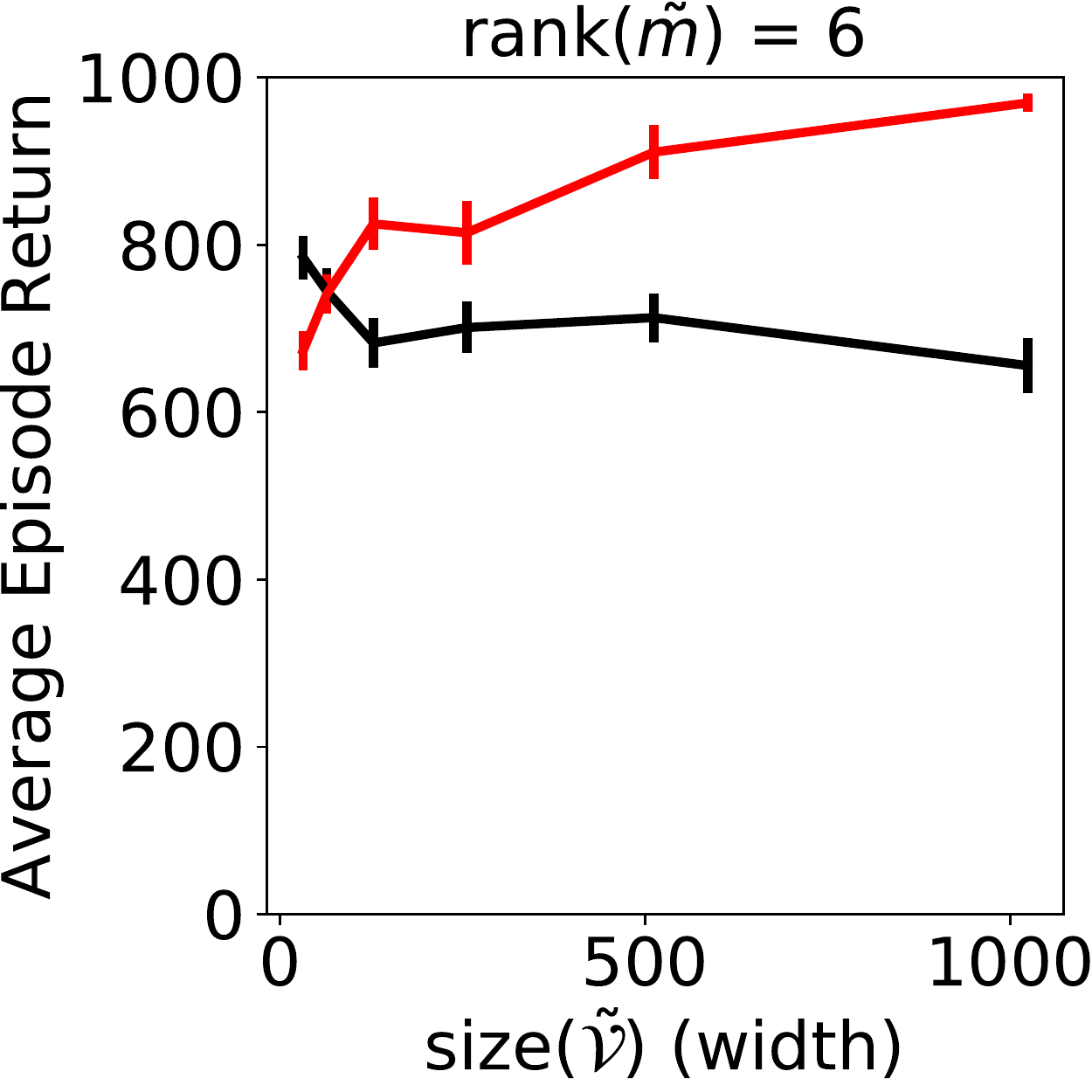}
}
\subfigure[Cart-pole (fixed $\mt$)]{
\includegraphics[scale=0.25]{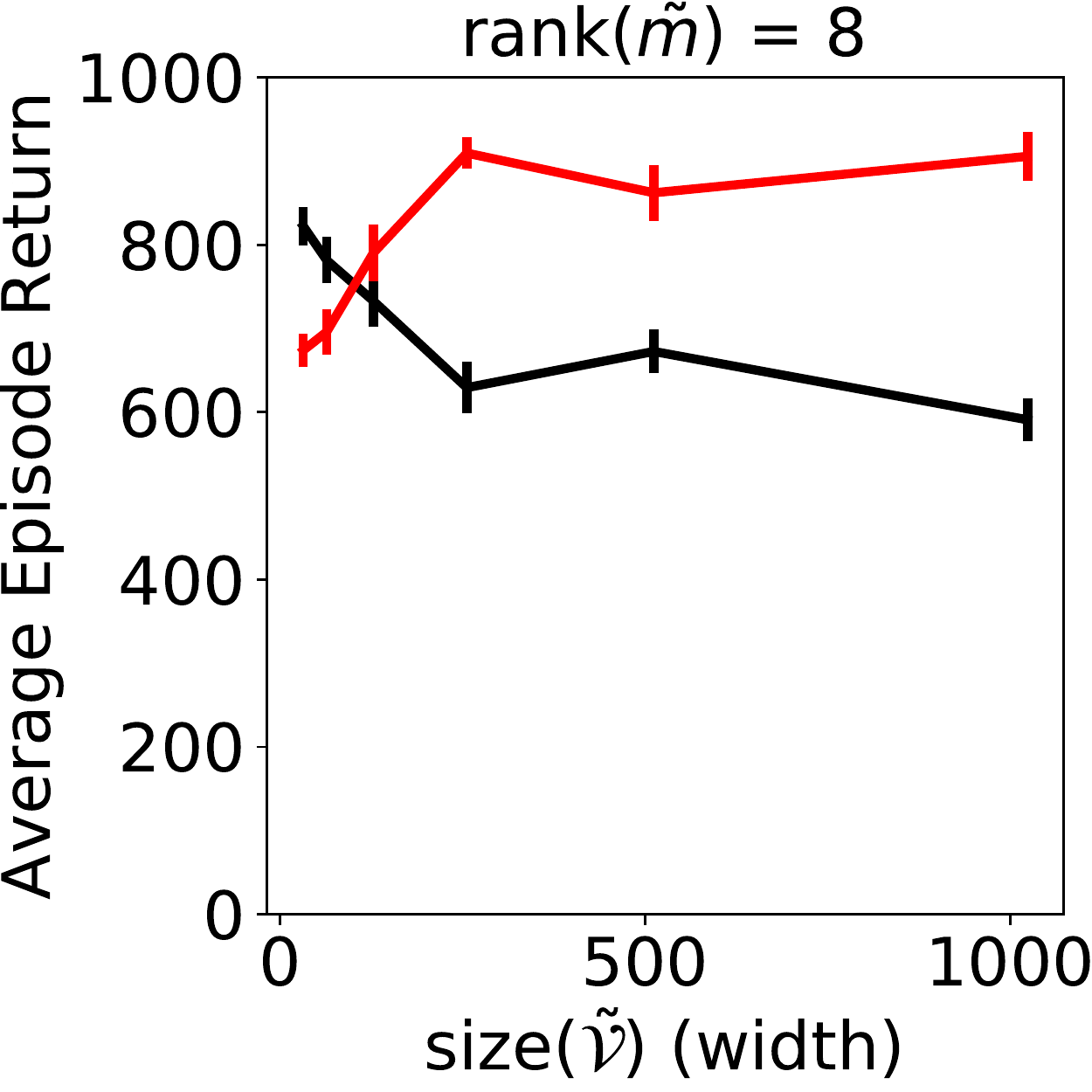}
}

\subfigure[\textbf{Cart-pole (fixed $\boldsymbol{\mt}$)}]{
\includegraphics[scale=0.25]{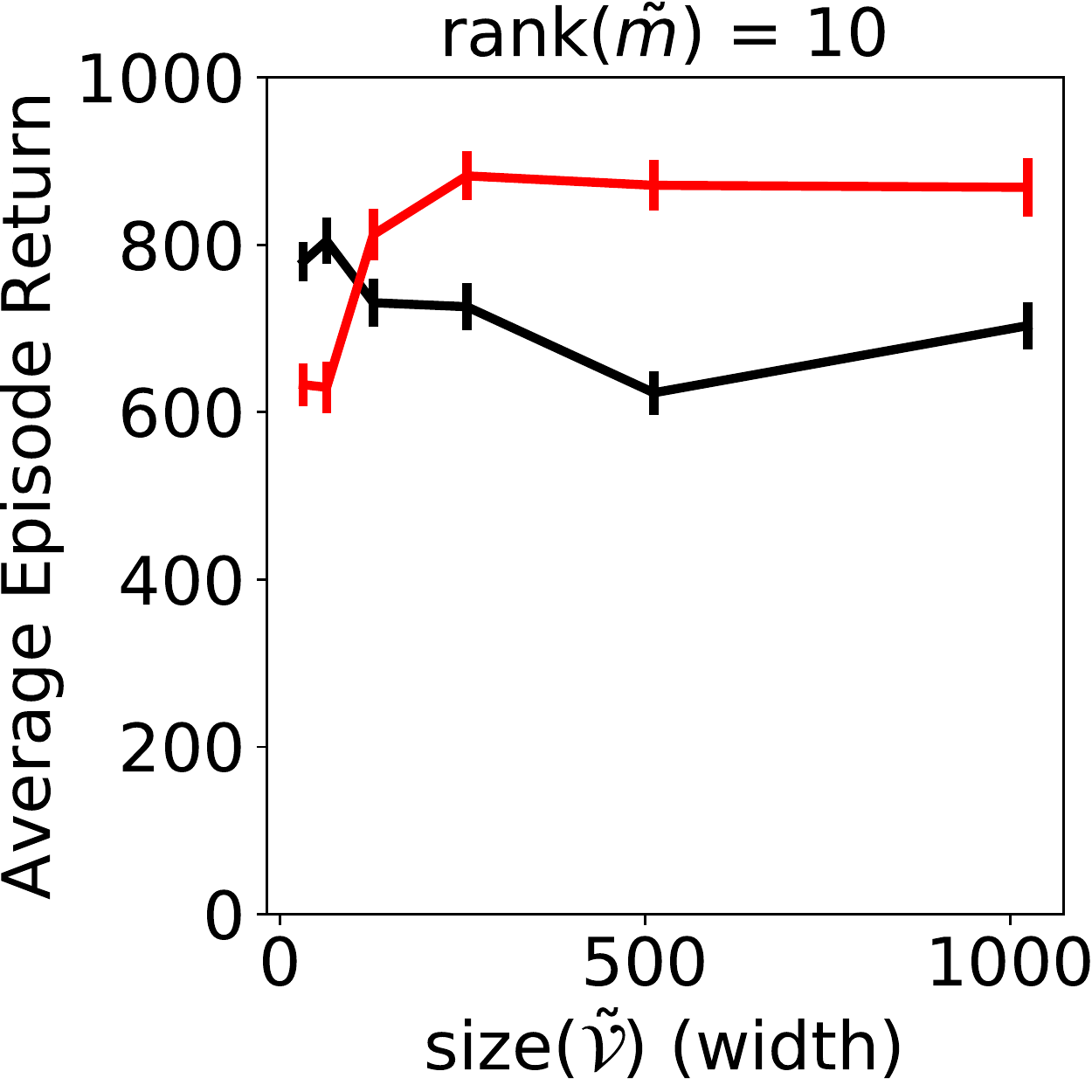}
}
\subfigure[Cart-pole (fixed $\mt$)]{
\includegraphics[scale=0.25]{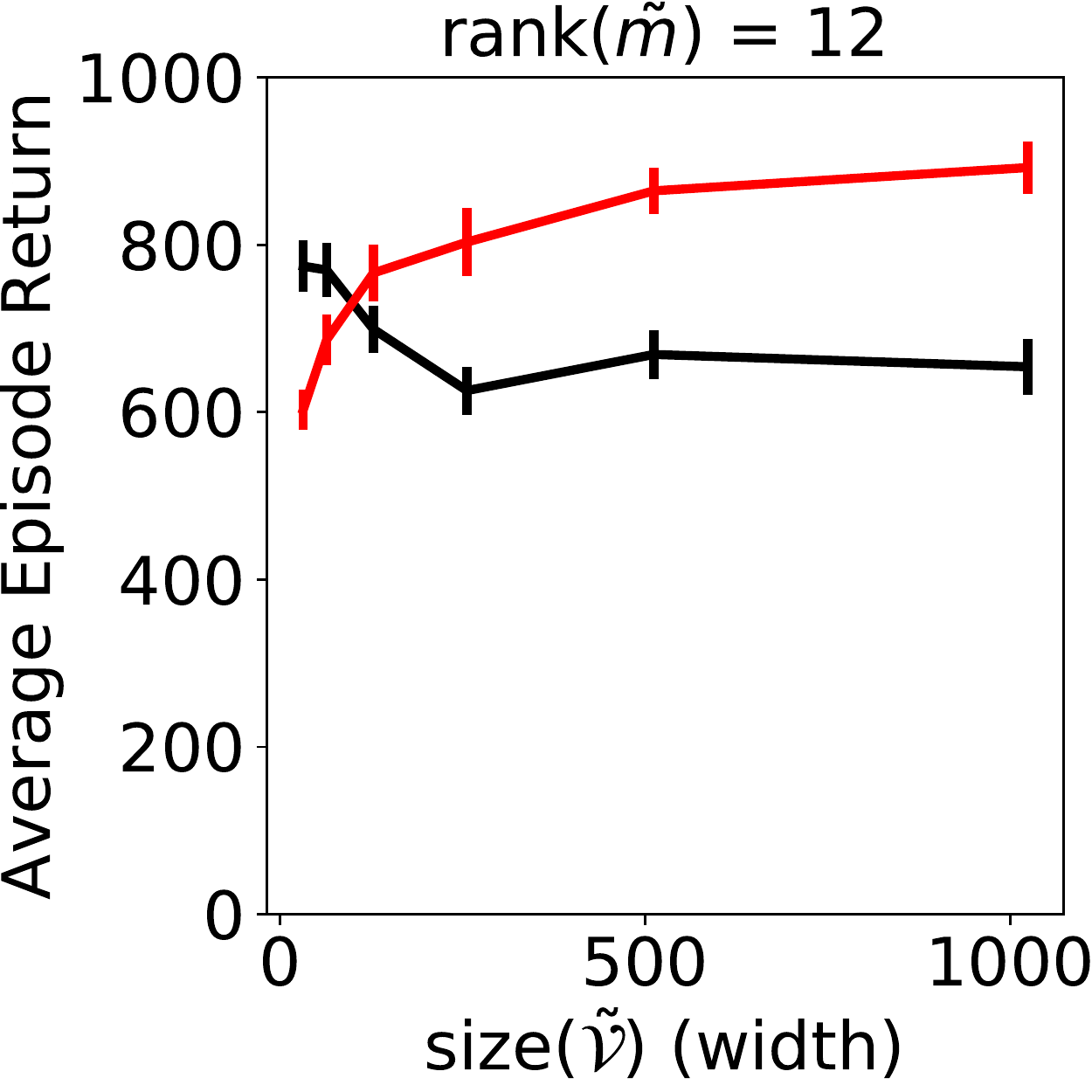}
}
\subfigure[Cart-pole (fixed $\mt$)]{
\includegraphics[scale=0.25]{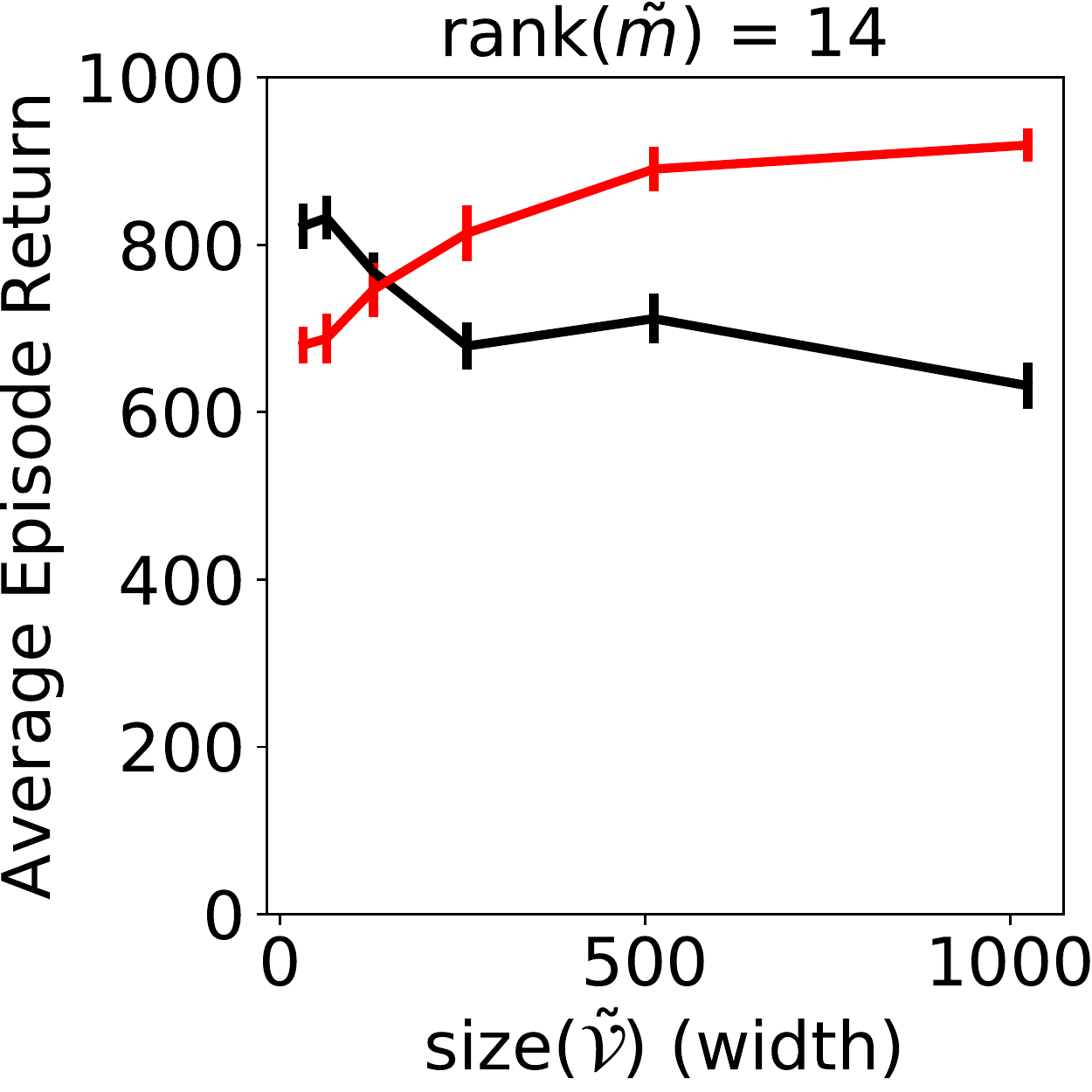}
}
\subfigure[Cart-pole (fixed $\mt$)]{
\includegraphics[scale=0.25]{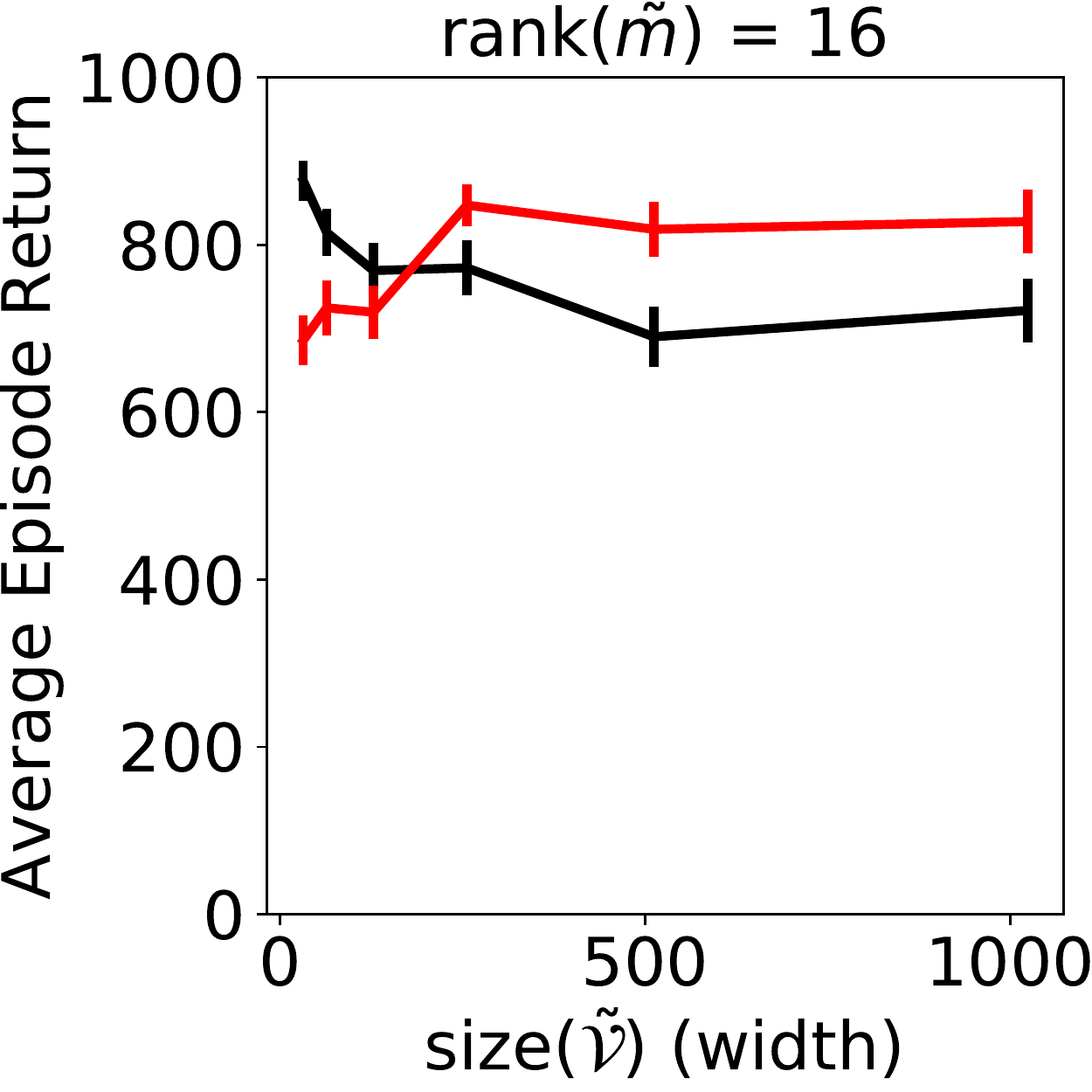}
}
\caption{All Cart-pole results results with fixed $\mt$ and $\span(\V) \approx \AV$.}
\end{figure}

\subsubsection{Hyperparameters}
\label{sec:hyperparams}
Table~\ref{tab:hyper} provides a list detailing the different hyperparameters used throughout our pipeline. 

\begin{table}
\begin{center}
 \begin{tabularx}{\textwidth}{
    >{\hsize=0.6\hsize}X  
    >{\hsize=0.2\hsize}X
    >{\hsize=1.2\hsize}X}
 \textbf{Hyperparameter} & \textbf{Value} & \textbf{Description} \\
 \hline
 minibatch size & 32 & Number of samples passed at a time during a training step of any learning method. \\
 model learning rate & 5e-5 & Learning rate used to train all models. \\
 \# model samples & 1,000,000 & Number of transitions sampled by a random policy in the Data Collection phase. \\
 model depth & 2 & Number of hidden layers in the model architecture. \\
 model width & 256 & Number of units per hidden layer. \\ 
 model activation & tanh & Activation function following a hidden layer. \\
 model learning max steps & 1,000,000 & Maximum number of training iterations. \\
 $\gamma$ & 0.99 & Discount factor used across environments. \\
 LSTD samples / policy & 10,000 & Number of samples collected for each phase of policy evaluation using LSTD. \\
 \# policy iteration steps & 40 & Number of steps of policy iteration in the policy construction phase, when applicable. \\
 DQN learning rate & 5e-4 & Learning rate for DQN. \\
 DQN \# environment steps & 2,500,000 & Number of environment steps that DQN learns over. \\
 DQN learning frequency & 4 & A learning update is applied after this many environment steps. \\ 
 DQN depth & 1 & Number of hidden layers in the DQN. \\
 DQN activation & tanh & Activation function following a hidden layer. \\ 
 DQN target update & 2000 & Number of environment steps before the target network in the DQN is updated. \\
 Tabular \# eval episodes & 20 & Number of episodes to average performance over to assess a policy in the tabular setting. \\
 DQN \# eval episodes & 100 & Number of episodes to average DQN policy performance over at the end of training. \\
 DQN $\epsilon$ & 0.05 & Chance of picking a random action during training. \\
 Optimizer & Adam & Optimizer used for all learning operations. Default Adam parameters were used. \\ 
 \hline
\end{tabularx}
\end{center}
\caption{List of hyperparameters used in the experiments. \label{tab:hyper}}
\end{table}

\end{document}